\tikzset{node/.style = {circle, draw=black,
                        minimum width=0.25cm, minimum height=0.25cm,
                        text centered, fill=white,
                        line width=0.25mm},		 
		 rect/.style = {rectangle, rounded corners, draw=black,
                        minimum width=1cm, minimum height=1cm,
                        text centered, fill=white,
                        line width=0.25mm},
         note/.style = {rectangle,
                        minimum width=1cm, minimum height=1cm,
                        text centered}}
\DeclarePairedDelimiterX{\infdivx}[2]{(}{)}{#1\;\delimsize\|\;#2}
\DeclarePairedDelimiterX{\condexp}[2]{[}{]}{#1\;\delimsize\vert\;#2}
\newtheorem{theorem}{Theorem}[section]
\newtheorem{lemma}[theorem]{Lemma}
\newtheorem{experimentmessage}{Message}
\theoremstyle{definition}
\newtheorem{definition}{Definition}[section]
\newtheorem{example}[definition]{Example}
\newcommand{\qedclub}{\hfill$\clubsuit$}
\begin{document}
%
\title{PAC-Bayes Bounds for Bandit Problems: A Survey and Experimental Comparison}
%
%
%
%

\author{Hamish Flynn,
David Reeb,
Melih Kandemir,
Jan Peters
\IEEEcompsocitemizethanks{\IEEEcompsocthanksitem H. Flynn and D. Reeb are with the Bosch Center for Artificial Intelligence, Renningen, Germany.\protect\\
E-mail: hamish.flynn@de.bosch.com
\IEEEcompsocthanksitem M. Kandemir is with the Department of Mathematics and Computer Science (IMADA) at the University of Southern Denmark, Odense, Denmark.
\IEEEcompsocthanksitem J. Peters is with the Intelligent Autonomous Systems group at Technische Universit\"at Darmstadt, Germany.}}
\IEEEtitleabstractindextext{%
\begin{abstract}
PAC-Bayes has recently re-emerged as an effective theory with which one can derive principled learning algorithms with tight performance guarantees. However, applications of PAC-Bayes to bandit problems are relatively rare, which is a great misfortune. Many decision-making problems in healthcare, finance and natural sciences can be modelled as bandit problems. In many of these applications, principled algorithms with strong performance guarantees would be very much appreciated. This survey provides an overview of PAC-Bayes bounds for bandit problems and an experimental comparison of these bounds. On the one hand, we found that PAC-Bayes bounds are a useful tool for designing offline bandit algorithms with performance guarantees. In our experiments, a PAC-Bayesian offline contextual bandit algorithm was able to learn randomised neural network polices with competitive expected reward and non-vacuous performance guarantees. On the other hand, the PAC-Bayesian online bandit algorithms that we tested had loose cumulative regret bounds. We conclude by discussing some topics for future work on PAC-Bayesian bandit algorithms.
\end{abstract}

}

\maketitle

\IEEEdisplaynontitleabstractindextext

%
\IEEEpeerreviewmaketitle

\IEEEraisesectionheading{\section{Introduction}\label{sec:introduction}}

\IEEEPARstart{I}{s} it possible to know that a machine learning system will perform well before it is tested on new data? Within the field of statistical learning theory, there are several frameworks that can provide high probability bounds on the performance of a machine learning algorithm in a number of different learning problems. A relatively rare combination of framework and learning problem is the application of the PAC-Bayes framework to bandit problems.

The PAC-Bayes framework has recently grown in popularity for several possible reasons. First, it has emerged as one of the few ways to provide tight error bounds for deep neural networks \cite{dziugaite2017computing}, \cite{dziugaite2018data}, \cite{letarte2019binary}, \cite{rivasplata2019pac}, \cite{dziugaite2021data}, \cite{perez2021tighter}, \cite{perez2021learning}, \cite{perez2021progress}. Second, learning algorithms derived from PAC-Bayes bounds have performed competitively with traditional algorithms \cite{ambroladze2007tighter}, \cite{germain2009pac}, \cite{thiemann2017strongly}, \cite{reeb2018learning}. Third, PAC-Bayes bounds can motivate principled learning strategies, such as large margin classification \cite{boser1992margin}, \cite{herbrich2000margin}, \cite{langford2002margin}, \cite{mcallester2003margin} and preference for flat minima \cite{hochreiter1997flat}, \cite{dziugaite2017computing}, \cite{yang2019fast}, \cite{tsuzuku2020flat}. However, most PAC-Bayes bounds and algorithms are designed for supervised learning problems. Applications of PAC-Bayes to bandit problems are relatively under-explored. This survey provides an overview and an experimental comparison of PAC-Bayes bounds for bandit problems.

\textbf{PAC-Bayes} bounds \cite{shawe1997pac}, \cite{mcallester1998some} are Probably Approximately Correct (PAC) \cite{valiant1984pac} performance bounds for Bayesian learning algorithms. A PAC bound states that, with high probability (\textit{probably}), the error-rate of the hypothesis returned by a learning algorithm is upper bounded. If this upper bound on the error-rate is small, then the learning algorithm is \textit{approximately correct}. When PAC bounds are applied to Bayesian learning algorithms, the result is called a PAC-Bayes bound. In fact, PAC-Bayes bounds apply to any learning algorithm that returns a probability distribution over a hypothesis class.

\textbf{Bandits.} Bandit problems, first introduced by Thompson \cite{thompson1933bandits} and later formalised by Robbins \cite{robbins1952bandits}, are models of decision-making with uncertainty. There is a set of actions, and each action is associated with a reward distribution. A bandit algorithm must learn to choose the actions with the highest expected reward. The uncertainty comes from the fact that the reward distribution for each action is unknown and must be estimated based on previously observed actions and rewards. Bandit problems are frequently encountered in real-world problems, including clinical trials \cite{durand2018contextual}, \cite{bastani2020online}, dynamic pricing \cite{misra2019dynamic}, \cite{mueller2019dynamic} and recommendation systems \cite{mary2015recommender}, to name just a few.

\textbf{Motivation.}  At the time of writing, there is neither a detailed overview of PAC-Bayes bounds for bandit problems nor an experimental comparison of these bounds. It is therefore difficult to know which PAC-Bayes bandit bounds give the best guarantees or how tight the best bounds are. There are two main reasons why we believe that now is the right time to review PAC-Bayesian approaches to bandits. First, PAC-Bayes bounds have recently been used to design effective offline bandit algorithms with performance guarantees \cite{london2019bayesian}. Second, as we have mentioned, PAC-Bayes has been growing in popularity due to numerous successful applications to deep learning. In parallel, there has been growing interest in bandit algorithms that use deep neural network function approximation. We believe that it is worth investigating whether PAC-Bayes would be a useful tool for studying these deep bandit algorithms.

\begin{figure*}
\centering

\begin{tikzpicture}[rect/.style={rectangle, draw=black, fill=black!5, very thick, minimum height=7mm, minimum width=5mm}, scale=1.0]
\node (bdt)[rect] at (0.5,1.4){PAC-Bayes Bandit Bounds};

\node (rew)[rect] at (-4,0){Reward Bounds};
\node (reg)[rect] at (4.9,0){Cumulative Regret Bounds};

\node (rew_is)[rect] at (-8.3,-1.4){$r^{\mathrm{IS}}$};
\node (rew_cis)[rect] at (-2.9,-1.4){$r^{\mathrm{CIS}}$};
\node (rew_wis)[rect] at (1,-1.4){$r^{\mathrm{WIS}}$};
\node (reg_is)[rect] at (4.9,-1.4){$\Delta^{\mathrm{IS}}$};

\node (rew_is_ha)[rect] at (-10,-2.8){Hoeffding};
\node (rew_is_be)[rect] at (-8.05,-2.8){Bernstein};
\node (rew_is_kl)[rect] at (-6.5,-2.8){$kl$};
\node (rew_cis_ha)[rect] at (-4.5,-2.8){Hoeffding};
\node (rew_cis_be)[rect] at (-2.55,-2.8){Bernstein};
\node (rew_cis_kl)[rect] at (-1,-2.8){$kl$};
\node (rew_wis_es)[rect] at (1,-2.8){Efron-Stein};
\node (reg_is_ha)[rect] at (3.25,-2.8){Hoeffding};
\node (reg_is_be)[rect] at (5.2,-2.8){Bernstein};
\node (reg_is_kl)[rect] at (6.75,-2.8){$kl$};
  
\draw[->, line width=0.5mm](bdt.south) to (rew.north);
\draw[->, line width=0.5mm](bdt.south) to (reg.north);

\draw[->, line width=0.5mm](rew.south) to (rew_is.north);
\draw[->, line width=0.5mm](rew.south) to (rew_cis.north);
\draw[->, line width=0.5mm](rew.south) to (rew_wis.north);
\draw[->, line width=0.5mm](reg.south) to (reg_is.north);

\draw[->, line width=0.5mm](rew_is.south) to (rew_is_ha.north);
\draw[->, line width=0.5mm](rew_is.south) to (rew_is_be.north);
\draw[->, line width=0.5mm](rew_is.south) to (rew_is_kl.north);
\draw[->, line width=0.5mm](rew_cis.south) to (rew_cis_ha.north);
\draw[->, line width=0.5mm](rew_cis.south) to (rew_cis_be.north);
\draw[->, line width=0.5mm](rew_cis.south) to (rew_cis_kl.north);
\draw[->, line width=0.5mm](rew_wis.south) to (rew_wis_es.north);
\draw[->, line width=0.5mm](reg_is.south) to (reg_is_ha.north);
\draw[->, line width=0.5mm](reg_is.south) to (reg_is_be.north);
\draw[->, line width=0.5mm](reg_is.south) to (reg_is_kl.north);

\end{tikzpicture}

\caption{A taxonomy of existing PAC-Bayes bandit bounds. The bounds are first separated into lower bounds on reward and upper bounds on cumulative regret. At the next level, the bounds are categorised by the empirical reward/regret estimate that they use. The reward estimates $r^{\mathrm{IS}}$, $r^{\mathrm{CIS}}$, and $r^{\mathrm{WIS}}$, and the regret estimate $\Delta^{\mathrm{IS}}$, are defined in Section \ref{sec:rew_bounds}, Appendix \ref{sec:app_wis_estimate} and Section \ref{sec:reg_bounds}. Finally, the bounds are divided according to the concentration inequality that they use in their proofs. $kl$ is the Binary KL divergence, defined in Section \ref{sec:rew_bounds}.}
\label{fig:taxonomy}
\end{figure*}
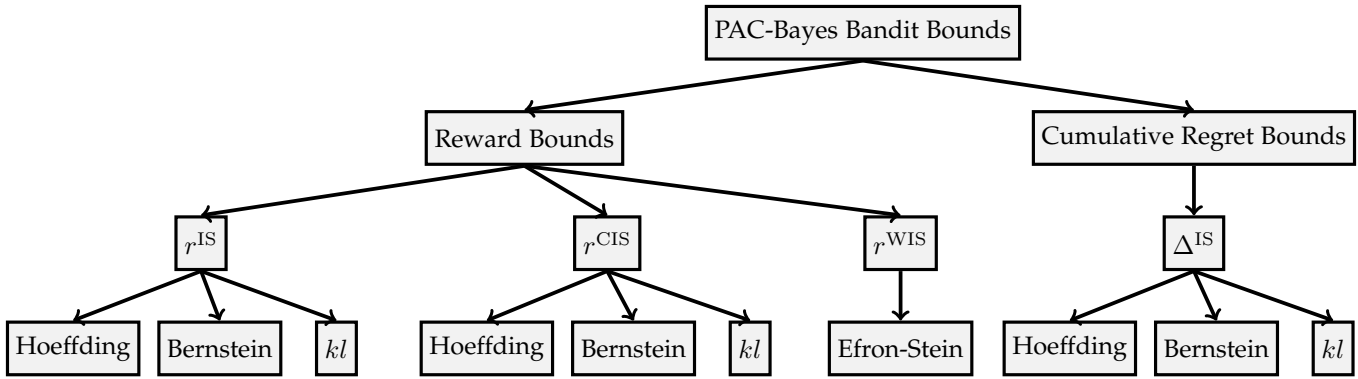

\textbf{Scope.} The scope of this survey is determined by the selection of PAC-Bayesian approaches to bandits that can be found in the literature. Consequently, we focus on policy search algorithms that directly learn a policy from data using reward estimates based on importance sampling. We found that there were no model-based PAC-Bayesian bandit algorithms, which first model the reward function and then use this model to learn a policy, so we do not cover these approaches. However, we discuss the compatibility of PAC-Bayes with other approaches to bandits in Sec. \ref{sec:beyond_policy_search}.

We cover offline and online variants of both multi-armed and contextual bandit problems. We consider two types of PAC-Bayes bounds: one for offline bandits and one for online bandits. For offline bandits, we consider lower bounds on the expected reward of a policy learned from historical data. For online bandits, we consider upper bounds on the cumulative regret suffered by playing a sequence of policies. The bounds considered in this survey are categorised further in Fig. \ref{fig:taxonomy}.

We only consider stationary, stochastic bandit problems, where the rewards are sampled from fixed distributions. We do not cover extensions such as restless bandits \cite{whittle1988restless} or adversarial bandits \cite{auer2002exp3}. We also do not cover bandit problems with additional structural assumptions, such as linear bandits \cite{auer2002using}.

\textbf{Findings.} We compared the values of the bounds, as well as the performance of bandit algorithms motivated by the bounds. On the one hand, we found that some of the PAC-Bayes lower bounds on the expected reward are surprisingly tight, particularly when data-dependent priors are used. Moreover, we found that directly optimising PAC-Bayes reward bounds can yield effective offline bandit algorithms. PAC-Bayes appears to be a useful tool for designing offline bandit algorithms with performance guarantees. On the other hand, we found that the few existing PAC-Bayes cumulative regret bounds are all loose, and that the algorithms motivated by these bounds are noticeably worse than state-of-the-art methods. The reason for this is that both the bounds and algorithms rely on loose upper bounds on the variance of importance sampling-based reward estimates.

\textbf{Related work.} PAC-Bayes bounds have been the subject of several tutorials \cite{mcallester2013pac}, \cite{vanerven2014mini}, \cite{laviolette2017tutorial}, \cite{alquier2021user}, surveys \cite{guedj2019primer} and monographs \cite{catoni2007pac}. McAllester \cite{mcallester2013pac} describes 3 different types of PAC-Bayes bounds and presents a new application of PAC-Bayes bounds to dropout. Van Erven \cite{vanerven2014mini} describes the relationship between PAC-Bayes bounds and some classical concentration inequalities. Laviolette \cite{laviolette2017tutorial} describes the history of PAC-Bayes bounds as well as some recent developments. Alquier \cite{alquier2021user} gives an overview of PAC-Bayes bounds for supervised learning and an introduction to localised bounds, fast-rate bounds and bounds for non i.i.d. data and unbounded losses. Guedj \cite{guedj2019primer} surveys the PAC-Bayes framework, its links to Bayesian methods, and some theoretical and algorithmic developments. Catoni \cite{catoni2007pac} provides a rich analysis of supervised classification using PAC-Bayes bounds. There have been a few experimental comparisons of some PAC-Bayes bounds \cite{foong2021tight}, \cite{perez2021tighter} in supervised learning problems. There are several books \cite{bubeck2012bandits}, \cite{slivkins2019bandits}, \cite{lattimore2020bandits} about bandit algorithms and their performance guarantees. However, none of these resources on bandits cover PAC-Bayes.

\textbf{Paper Overview.} First, we formally describe the online and offline variants of multi-armed and contextual bandit problems in Sec. \ref{sec:prob_state}. In Sec. \ref{sec:pacb_ex}, we describe the PAC-Bayesian approach to the bandit problems introduced in Sec. \ref{sec:prob_state}. We then provide a structured overview of PAC-Bayes bounds for bandit problems and some techniques for achieving the tightest bound values. Sec. \ref{sec:rew_bounds} reviews PAC-Bayes lower bounds on the expected reward, Sec. \ref{sec:reg_bounds} reviews PAC-Bayes upper bounds on the cumulative regret, and Sec. \ref{sec:optimising_bounds} reviews techniques for optimising PAC-Bayes bandit bounds with respect to the prior and other parameters. In Sec. \ref{sec:comparison}, we compare the PAC-Bayes bandit bounds in several experiments. Finally, in Sec. \ref{sec:conclusion}, we discuss our findings and comment on some open problems.

\textbf{Contributions.} Our first contribution is a comprehensive overview of existing PAC-Bayes bounds for bandit problems. Our second contribution is an experimental comparison of PAC-Bayes bounds and algorithms for bandit problems. We also provide a slightly tighter version of the Efron-Stein PAC-Bayes bound by Kuzborskij and Szepesv{\'a}ri \cite{kuzborskij2019efron}, which holds under slightly weaker conditions.

\section{Problem Formulation}
\label{sec:prob_state}

The goal of all the bandit problems we consider is to select the best policy $\pi$ from a set of policies $\Pi$, which we call the policy class. In this paper, we are interested in bandit algorithms that return a probability distribution over the policy class rather than a single policy. $\mathcal{P}(\Pi)$ denotes the set of all probability distributions over the policy class.

The choice of policy is informed by data. We use $\mathcal{Z}$ to denote the observation space. A bandit algorithm observes or collects a data set of observations $D_n = \{z_i\}_{i=1}^{n}$. Each $z_i$ is drawn from a distribution $P_i$ over $\mathcal{Z}$. In bandit problems, we may have non-identically distributed data, where $P_i \neq P_j$ for $i \neq j$. We may also have dependent data, where $z_i$ is drawn from $P_i = P(\cdot|z_1, \dots, z_{i-1})$. Usually, we will make $\mathcal{Z}$ more explicit. In the simplest case, we observe pairs of actions and rewards, so $\mathcal{Z} = \mathcal{A} \times \mathcal{R}$ where $\mathcal{A}$ is a set of actions and $\mathcal{R}$ is a set values that the rewards can take.

\subsection{Policy Search for Multi-Armed Bandits}
\label{sec:multi_bandit_prob}

A multi-armed bandit (MAB) problem is a tuple $\langle \mathcal{A}, \mathcal{R}, P_{R}\rangle$. $\mathcal{A}$ is a set of actions (or arms), $\mathcal{R}$ is a set of values that the rewards can take and $P_{R}(\cdot|a)$ is a distribution over rewards conditioned on the action $a$. $\mathcal{A}$ and $\mathcal{R}$ are known, but $P_{R}$ is unknown. Throughout this paper, we assume that the rewards are bounded between 0 and 1, so $\mathcal{R} \subseteq [0, 1]$.

A bandit algorithm selects actions through a policy $\pi$. In a MAB problem, a policy is a (possibly degenerate) probability distribution over the set of actions $\mathcal{A}$. $\pi(a)$ denotes the probability of selecting action $a$ under the policy $\pi$. In the offline MAB problem, an algorithm is given a data set $D_n = \{(a_i, r_i)\}_{i=1}^{n}$. We let $D_{i-1} = \{(a_j, r_j)\}_{j=1}^{i-1}$ denote the first $i-1$ elements of $D_n$. Each action $a_i$ is sampled from a behaviour policy $b(\cdot|D_{i-1})$ and each reward $r_i$ is sampled from the reward distribution, given $a_i$. Some of the PAC-Bayes bounds we will encounter hold only when the data set $D_n$ consists of i.i.d. samples. For these bounds to hold, we require that the entire data set is drawn using a fixed behaviour policy $b(\cdot)$. We always assume that the behaviour policies are known. The expected reward for a policy $\pi$ is defined as:
\begin{equation}
R(\pi) = \mathop{\mathbb{E}}_{a \sim \pi(\cdot), r \sim P_R(\cdot|a)}\left[r\right].\label{eqn:exp_rew}
\end{equation}

For a probability distribution $\rho \in \mathcal{P}(\Pi)$ over the policy class, the expected reward is $R(\rho) = \mathbb{E}_{\pi \sim \rho}\left[R(\pi)\right]$. Given a policy class $\Pi$ and a data set $D_n$, the goal of policy search in the offline MAB problem is to return a distribution $\rho^* \in \mathcal{P}(\Pi)$ that maximises the expected reward:
\begin{equation*}
\rho^{*} \in \argmax_{\rho \in \mathcal{P}(\Pi)}\left\{R(\rho)\right\}.
\end{equation*}

In the online MAB problem, an algorithm must learn and act simultaneously. Policy search in the online MAB problem proceeds in rounds. At round $i$, the algorithm selects a distribution $\rho_i \in \mathcal{P}(\Pi)$ to be played. A policy $\pi_i$ is drawn from $\rho_i$ and then an action $a_i$ is drawn from the policy $\pi_i$. The algorithm observes a reward $r_i$ drawn from the reward distribution $P_R(\cdot|a_i)$. To guide the selection of $\rho_i$ at each round, the algorithm can use the action-reward pairs gathered from previous rounds. In other words, the choice of $\rho_i$ can depend on the data $D_{i-1}$.

The goal of policy search in the online MAB problem is to select a sequence of distributions (over the policy class) $\rho_1, \dots, \rho_n$ that minimises the cumulative regret. For a sequence of policies $\pi_1, \dots, \pi_n$, the regret for round $i$ and the cumulative regret are defined as:
\begin{equation*}
\Delta(\pi_{i}) = R(\pi^*) - R(\pi_i), \qquad \Delta(\pi_{1:n}) = \sum_{i=1}^{n}\Delta(\pi_i),
\end{equation*}

where $\pi^* \in \argmax_{\pi \in \Pi}\left\{R(\pi)\right\}$ is an optimal policy. The per-round regret and cumulative regret for a sequence of distributions $\rho_1, \dots, \rho_n$ are defined as:
\begin{equation}
\Delta(\rho_{i}) = R(\pi^*) - R(\rho_i), \qquad \Delta(\rho_{1:n}) = \sum_{i=1}^{n}\Delta(\rho_i).
\label{eqn:mab_regret}
\end{equation}

The goal of minimising cumulative regret brings about a dilemma known as the exploration-exploitation trade-off. To achieve low cumulative regret, an algorithm must try out lots of policies to identify which ones have the highest expected reward. However, while it identifies which policies are the best, it must also limit the number of times it selects a sub-optimal policy.

\begin{example}[Clinical trial]
There are two flu treatments and we are given the results of a clinical trial where 100 patients have been randomly given either treatment A or treatment B. We want to decide which treatment is better. This can be modelled as an offline multi-armed bandit problem, where the actions are the treatment types and the rewards are the outcomes of the treatments. A PAC-Bayes reward bound could give a lower bound on the success rate of each treatment.

If we wanted to assign treatments to patients sequentially, with the goal of handing out the better treatment as often as possible, this could be modelled as an online bandit problem. A PAC-Bayes cumulative regret bound could tell us (before handing out any treatments) an upper bound on the gap between the optimal expected number of successful treatments and the expected number of successful treatments of our allocation strategy.
\end{example}

\subsection{Policy Search for Contextual Bandits}
\label{sec:contextual_bandit_prob}

A contextual bandit (CB) problem is a tuple $\langle \mathcal{S}, \mathcal{A}, \mathcal{R}, P_{S}, P_{R}\rangle$. $\mathcal{S}$ is a set of states (or contexts), $\mathcal{A}$ is a set of actions, $\mathcal{R}$ is a set of values that the rewards can take, $P_S(\cdot)$ is a distribution over the set of states and $P_{R}(\cdot|s, a)$ is a distribution over rewards conditioned on the state $s$ and the action $a$. $\mathcal{S}$, $\mathcal{A}$ and $\mathcal{R}$ are known, but $P_{S}$ and $P_{R}$ are unknown. As in the MAB problem, we assume that $\mathcal{R} \subseteq [0, 1]$ throughout this paper.

In a CB problem, a policy is a function that maps states to probability distributions over the set of actions $\mathcal{A}$. $\pi(a|s)$ denotes the probability of selecting action $a$, given the state $s$, under the policy $\pi$. The expected reward for a policy $\pi$ is defined as:
\begin{equation*}
R(\pi) = \mathop{\mathbb{E}}_{s \sim P_S(\cdot), a \sim \pi(\cdot|s), r \sim P_R(\cdot|s, a)}\left[r\right].
\end{equation*}

As before, the expected reward for a distribution $\rho \in \mathcal{P}$ over the policy class is $R(\rho) = \mathbb{E}_{\pi \sim \rho}\left[R(\pi)\right]$. The distinction between offline and online CB problems is very similar to the MAB case. In the offline CB problem, a data set $D_n = \{(s_i, a_i, r_i)\}_{i=1}^{n}$ of state-action-reward triples is available. The states $s_i$ are sampled from the state distribution $P_S$, the actions $a_i$ are sampled from behaviour policies $b(\cdot|s_i, D_{i-1})$ and the rewards $r_i$ are sampled from the reward distribution $P_{R}(\cdot|s_i, a_i)$. Whenever we require an i.i.d. data set, we will assume each action $a_i$ is drawn from the same, fixed behaviour policy $b(\cdot|s_i)$. Given a policy class $\Pi$ and a data set $D_n$, the goal of policy search in the offline CB problem is to return a distribution $\rho^* \in \mathcal{P}(\Pi)$ that maximises the expected reward.

In round $i$ of the online CB problem, an algorithm selects a distribution $\rho_i$. A state $s_i$ is drawn from $P_{S}$, a policy $\pi_i$ is drawn from $\rho_i$, an action $a_i$ is drawn from $\pi_i(\cdot|s_i)$, and then a reward $r_i$ is drawn from $P_R(\cdot|s_i, a_i)$. The choice of $\rho_i$ can depend on the data $D_{i-1}$. The goal is to select a sequence of distributions $\rho_1, \dots, \rho_n$ that minimises the cumulative regret. Per-round regret and cumulative regret are defined in the same way as in (\ref{eqn:mab_regret}).

\section{PAC-Bayes Bounds for Bandits}
\label{sec:pacb_ex}

A PAC-Bayesian approach to the policy search problems described in Sec. \ref{sec:prob_state} proceeds as follows. First, we fix a reference distribution or prior $\mu \in \mathcal{P}(\Pi)$ over the policy class $\Pi$. Then we observe data $D_n$, either a batch of historical data or the data collected in previous rounds, which helps us to learn another distribution $\rho \in \mathcal{P}(\Pi)$, which we will call a posterior distribution.

In the context of these policy search problems, a PAC-Bayes bound is an upper bound on either the difference between $R(\rho)$ and an empirical estimate of the reward of $\rho$ or the difference between $\Delta(\rho)$ and an empirical estimate of the regret of $\rho$, which holds uniformly over all posteriors $\rho$. One of the empirical reward estimates we consider is the importance sampling (IS) estimate
\begin{equation}
r^{\mathrm{IS}}(\pi, D_n) = \frac{1}{n}\sum_{i=1}^{n}\frac{\pi(a_i)}{b(a_i|D_{i-1})}r_i.\label{eqn:is_est}
\end{equation}

The IS estimate is an average of the observed rewards weighted by the importance weights $\pi(a_i)/b(a_i|D_{i-1})$. When upper bounding the difference between $R(\rho)$ and $r^{\mathrm{IS}}(\rho, D_n) = \mathop{\mathbb{E}}_{\pi \sim \rho}\left[r^{\mathrm{IS}}(\pi, D_n)\right]$, we face challenges that are not present in typical PAC-Bayesian learning settings. For example, the data $D_n$ are often not independent or identically distributed. This challenge can be dealt with using martingale techniques.

\subsection{PAC-Bayes and Martingales}

Martingales are a fundamental tool for modelling sequences of (possibly dependent) random variables. Martingales are sequences of random variables, for which at any point in the sequence, the conditional expectation of the present value is equal to the previous value. We will use the following basic definition of a martingale.

\begin{definition}[Martingale]
A sequence of random variables $(M_n|n \in \mathbb{N})$ is a martingale with respect to another sequence of random variables $(X_n|n \in \mathbb{N})$ if, for every $n \in \mathbb{N}$, $M_n$ is fully determined by $X_1, \dots, X_n$ (i.e. $M_n$ conditioned on $X_1, \dots, X_n$ is non-random) and
\begin{equation*}
\mathbb{E}[|M_n|] < \infty, \quad \mathbb{E}[M_n|X_1, \dots, X_{n-1}] = M_{n-1}.
\end{equation*}
\label{def:mart}
\end{definition}
\vspace{-0.4cm}

We call the property involving the conditional expectation the martingale property. If, instead of the martingale property, a sequence $(Z_n|n \in \mathbb{N})$ satisfies $\mathbb{E}[Z_n|X_1, \dots, X_{n-1}] = 0$ for all $n \in \mathbb{N}$, then we call $(Z_n|n \in \mathbb{N})$ a martingale difference sequence. Martingales often arise naturally in bandit problems, particularly when using importance sampling-based reward estimates. For example, for any $\pi \in \Pi$, the sequence of random variables
\begin{equation}
M_n^{\mathrm{IS}}(\pi) := n\left(r^{\mathrm{IS}}(\pi, D_n) - R(\pi)\right),\label{eqn:is_mart_ex}
\end{equation}
is a martingale with respect to $\{(a_i, r_i)\}_{i=1}^{n}$ (see Lem. \ref{lem:xis_mart}). Several authors \cite{seldin2012mart, balsubramani2015pac, wang2015pac, haddouche2022supermartingales, chugg2023unified} have proposed general purpose PAC-Bayes bounds for martingales, which provide upper (or lower) bounds on martingale mixtures $\mathbb{E}_{\pi \sim \rho}[M_n(\pi)]$ that hold uniformly over all posteriors $\rho \in \mathcal{P}(\Pi)$, where $(M_n(\pi)| n \in \mathbb{N})$ is a martingale for every $\pi \in \Pi$. These general purpose bounds can be applied to the martingale in (\ref{eqn:is_mart_ex}) to obtain PAC-Bayes bounds on the difference between $r^{\mathrm{IS}}(\rho, D_n)$ and $R(\rho)$. For the interested reader, we list these general purpose PAC-Bayes bounds for martingales in App. \ref{app:general_martingale_bounds}.

We will briefly mention that martingale techniques also allow one to derive time-uniform PAC-Bayes bounds, which are PAC-Bayes bounds that hold with high probability for all rounds $n \geq 1$ simultaneously. In bandit problems, time-uniform bounds are useful for constructing cumultive regret bounds, since we can simply add a time-uniform bound on the regret for each round. Recently, Haddouche and Guedj \cite{haddouche2022supermartingales} proved several time-uniform PAC-Bayes bounds and Chugg et al. \cite{chugg2023unified} proposed a unified framework for deriving time-uniform PAC-Bayes bounds.

\subsection{A Unified PAC-Bayes Bound}
\label{sec:unified_pb_bound}

Using one the canonical assumptions introduced in Section 10.2 of \cite{pena2009self}, we state and prove a unified PAC-Bayes bound, from which all the PAC-Bayes bounds in this survey can be derived as special cases. This unified bound allows us to describe some common features of the PAC-Bayes bounds in this survey. We say that a collection $((A(\pi), B(\pi)), \pi \in \Pi)$ of pairs of random variables indexed by a set $\Pi$, and an interval $\Lambda \subseteq \mathbb{R}$ satisfies the canonical assumption if, for all $\pi \in \Pi$ and all $\lambda \in \Lambda$, $A(\pi) \in \mathbb{R}$, $B(\pi) > 0$, and
\begin{equation}
\mathop{\mathbb{E}}\left[\exp\left(\lambda A(\pi) - \frac{\lambda^2}{2}B(\pi)^2\right)\right] \leq 1.\label{eqn:canon_pair}
\end{equation}

In the general purpose PAC-Bayes bounds for martingales in App. \ref{app:general_martingale_bounds}, $A(\pi)$ is typically a martingale $M_n(\pi)$ at a fixed step $n$. In the PAC-Bayes reward bounds in Sec. \ref{sec:rew_bounds} and regret bounds in Sec. \ref{sec:reg_bounds}, $A(\pi)$ is some form of difference between a reward (or regret) estimate for a policy $\pi$ and the expected reward (or regret) of a policy $\pi$. $B(\pi)$ is typically either a constant or a variance term, which measures some form of variance of $A(\pi)$. Under this canonical assumption, one can obtain an upper bound on $\mathbb{E}_{\pi \sim \rho}\left[A(\pi)\right]$, which holds uniformly over all posteriors $\rho \in \mathcal{P}(\Pi)$. We give a proof of Thm. \ref{thm:canon} shortly after the statement.

\begin{theorem}[Unified PAC-Bayes Bound]
For any $\delta \in (0, 1]$, any $\lambda \in \Lambda \cap (0, \infty)$ and any reference distribution $\mu \in \mathcal{P}(\Pi)$, with probability at least $1-\delta$ (over the randomness of $A(\pi)$ and $B(\pi)$ for each $\pi \in \Pi$), for all $\rho \in \mathcal{P}(\Pi)$ simultaneously:
\begin{equation*}
\mathop{\mathbb{E}}_{\pi \sim \rho}\left[A(\pi)\right] \leq \frac{\lambda}{2}\mathop{\mathbb{E}}_{\pi \sim \rho}\left[B(\pi)^2\right] + \frac{D_{\mathrm{KL}}(\rho||\mu) + \mathrm{ln}(1/\delta)}{\lambda}.
\end{equation*}
\label{thm:canon}
\end{theorem}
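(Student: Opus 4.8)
The plan is to follow the classical PAC-Bayes recipe built on the Donsker--Varadhan change-of-measure inequality together with Markov's inequality, using the canonical assumption (\ref{eqn:canon_pair}) exactly once. First I would introduce the random variable
\[
\xi := \mathop{\mathbb{E}}_{\pi \sim \mu}\left[\exp\left(\lambda A(\pi) - \frac{\lambda^2}{2}B(\pi)^2\right)\right],
\]
where the outer integral is taken under the fixed reference distribution $\mu$, so that the randomness of $\xi$ stems solely from $A$ and $B$. Because $\mu$ is chosen before the data are observed and is therefore independent of the $A(\pi)$ and $B(\pi)$, I can move the expectation over the randomness of $A$ and $B$ inside the $\mu$-integral by Tonelli's theorem (the integrand is nonnegative), obtaining $\mathbb{E}[\xi] = \mathop{\mathbb{E}}_{\pi \sim \mu}[\mathbb{E}[\exp(\lambda A(\pi) - \frac{\lambda^2}{2}B(\pi)^2)]] \leq 1$, where the final inequality is exactly (\ref{eqn:canon_pair}) applied pointwise in $\pi$.

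Second, since $\xi \geq 0$ and $\mathbb{E}[\xi] \leq 1$, Markov's inequality gives $\mathbb{P}(\xi \geq 1/\delta) \leq \delta$, so with probability at least $1-\delta$ we have $\ln \xi \leq \ln(1/\delta)$. The crucial point is that this event concerns $\xi$ alone and makes no reference to any posterior $\rho$; this is precisely what will ultimately yield the guarantee uniform over all $\rho$.

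Third, on this high-probability event I would apply the Donsker--Varadhan variational inequality: for every $\rho \in \mathcal{P}(\Pi)$,
\[
\mathop{\mathbb{E}}_{\pi \sim \rho}\left[\lambda A(\pi) - \frac{\lambda^2}{2}B(\pi)^2\right] \leq D_{\mathrm{KL}}(\rho||\mu) + \ln \xi \leq D_{\mathrm{KL}}(\rho||\mu) + \ln(1/\delta).
\]
Since the variational inequality holds deterministically (given $\xi$) for every $\rho$, while the bound $\ln \xi \leq \ln(1/\delta)$ holds on a single event of probability at least $1-\delta$, the resulting inequality holds for all $\rho$ simultaneously. Invoking linearity of the inner expectation and dividing through by $\lambda > 0$ then produces the stated bound.

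The step I expect to require the most care is the interchange in the first paragraph. The whole argument hinges on $\mu$ being a genuine prior, fixed independently of the random variables $A(\pi)$ and $B(\pi)$, so that after moving the data-expectation inside the $\mu$-integral the canonical assumption can be applied for each fixed $\pi$. Were the reference distribution permitted to depend on the data, this swap would be invalid and the bound would not follow; I would therefore state the required independence explicitly. The remaining ingredients, Markov's inequality and the Donsker--Varadhan inequality, are entirely standard.
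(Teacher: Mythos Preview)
Your proposal is correct and follows essentially the same approach as the paper: integrate the canonical assumption against $\mu$, swap expectations via Tonelli (using that $\mu$ is data-independent), then combine Markov's inequality with the Donsker--Varadhan change-of-measure inequality. The only cosmetic difference is ordering: the paper first rewrites $\ln \xi$ as $\sup_{\rho}\{\mathbb{E}_{\pi\sim\rho}[h(\pi)] - D_{\mathrm{KL}}(\rho\|\mu)\}$ via Lemma~\ref{lem:donsk} and then applies Markov, whereas you apply Markov to $\xi$ first and then invoke the variational inequality for each $\rho$; since Lemma~\ref{lem:donsk} gives an equality, the two orderings are equivalent.
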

\vspace{-0.4cm}

This bound states that, with a specified probability (at least $1 - \delta$), $\mathbb{E}_{\pi \sim \rho}\left[A(\pi)\right]$ is upper bounded by $\mathbb{E}_{\pi \sim \rho}\left[B(\pi)^2\right]$ plus the KL divergence between $\rho$ and $\mu$, and a confidence penalty $\mathrm{ln}(1/\delta)$. The bound is only valid if the parameter $\lambda$ and the prior $\mu$ are both chosen before observing any data. Though we call $\mu$ the prior and $\rho$ the posterior, $\rho$ and $\mu$ are not a Bayesian prior and posterior, i.e. they are not necessarily related to each other via Bayes' rule. In a PAC-Bayes bound, the KL divergence $D_{\mathrm{KL}}(\rho||\mu)$ measures the complexity of the distribution $\rho$ against the distribution $\mu$.

The proof of Thm. \ref{thm:canon} requires the following tool, which is now standard in the PAC-Bayesian literature.

\begin{lemma}[Change of Measure Inequality \cite{donsker1975asymptotic, catoni2004statistical}]
For any measurable function $h: \Pi \to \mathbb{R}$ and any probability distribution $\mu \in \mathcal{P}(\Pi)$, such that $\mathbb{E}_{\pi \sim \mu}[e^{h(\pi)}] < \infty$, we have
\begin{equation*}
\sup_{\rho \in \mathcal{P}(\Pi)}\left\{\mathop{\mathbb{E}}_{\pi \sim \rho}\left[h(\pi)\right] - D_{\mathrm{KL}}(\rho||\mu)\right\} = \mathrm{ln}\left(\mathop{\mathbb{E}}_{\pi \sim \mu}\left[e^{h(\pi)}\right]\right).
\end{equation*}
Furthermore, when $\rho$ has the density function
\begin{equation*}
\rho(\pi) = \frac{\mu(\pi)e^{h(\pi)}}{\mathop{\mathbb{E}}_{\pi^{\prime} \sim \mu}\left[e^{h(\pi^{\prime})}\right]},
\end{equation*}
we have
\begin{equation*}
\mathop{\mathbb{E}}_{\pi \sim \rho}\left[h(\pi)\right] = D_{\mathrm{KL}}(\rho||\mu) + \mathrm{ln}\left(\mathop{\mathbb{E}}_{\pi \sim \mu}\left[e^{h(\pi)}\right]\right).
\end{equation*}
\label{lem:donsk}
\end{lemma}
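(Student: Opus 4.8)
The plan is to prove both claims simultaneously by introducing the \emph{Gibbs distribution} $\rho^{*}$ given by the density in the statement, namely $\rho^{*}(\pi) = \mu(\pi)e^{h(\pi)}/Z$ with normaliser $Z := \mathbb{E}_{\pi \sim \mu}[e^{h(\pi)}]$. Since $h$ takes values in $\mathbb{R}$, we have $e^{h(\pi)} > 0$ pointwise, and by hypothesis $Z < \infty$, so $\rho^{*}$ is a well-defined probability distribution that is absolutely continuous with respect to $\mu$. The entire argument then rests on a single algebraic identity: for any $\rho$ absolutely continuous with respect to $\mu$,
\[
\mathop{\mathbb{E}}_{\pi \sim \rho}[h(\pi)] - D_{\mathrm{KL}}(\rho \| \mu) = \ln Z - D_{\mathrm{KL}}(\rho \| \rho^{*}).
\]

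To derive this identity, I would first take logarithms in the definition of $\rho^{*}$ to write $h(\pi) = \ln\bigl(\rho^{*}(\pi)/\mu(\pi)\bigr) + \ln Z$ (valid $\mu$-almost everywhere). Substituting this into $\mathbb{E}_{\pi \sim \rho}[h(\pi)]$ and expanding $D_{\mathrm{KL}}(\rho \| \mu) = \mathbb{E}_{\pi \sim \rho}[\ln(\rho(\pi)/\mu(\pi))]$, the $\ln\mu(\pi)$ terms cancel, while the $\ln\rho^{*}(\pi)$ and $\ln\rho(\pi)$ terms recombine into $-\mathbb{E}_{\pi \sim \rho}[\ln(\rho(\pi)/\rho^{*}(\pi))] = -D_{\mathrm{KL}}(\rho \| \rho^{*})$, yielding exactly the right-hand side above. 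Because $D_{\mathrm{KL}}(\rho \| \rho^{*}) \geq 0$, with equality if and only if $\rho = \rho^{*}$, the left-hand side is bounded above by $\ln Z$ for every admissible $\rho$ and attains this value precisely at $\rho = \rho^{*}$. Taking the supremum over $\rho$ therefore establishes the variational formula, and evaluating the identity at $\rho = \rho^{*}$ (where $D_{\mathrm{KL}}(\rho^{*} \| \rho^{*}) = 0$) rearranges into the second displayed equality of the statement.

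It remains to justify restricting the supremum to distributions $\rho \ll \mu$: this is harmless, since if $\rho$ is not absolutely continuous with respect to $\mu$ then $D_{\mathrm{KL}}(\rho \| \mu) = +\infty$, so such $\rho$ drive the objective to $-\infty$ and cannot compete for the supremum, which is in any case achieved by $\rho^{*} \ll \mu$. The main obstacle I anticipate is not the algebra, which is a one-line cancellation, but the \emph{measure-theoretic bookkeeping}: ensuring every expectation and every Radon--Nikodym-type ratio $\rho/\mu$, $\rho^{*}/\mu$, $\rho/\rho^{*}$ is well-defined, and ruling out an $\infty - \infty$ indeterminacy when $\mathbb{E}_{\pi \sim \rho}[h(\pi)]$ fails to be finite (so that the identity is read with the convention that the objective equals $-\infty$ whenever its value is not well-posed). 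For the bounded-reward applications of this survey these integrability conditions hold automatically, so I would keep the treatment of them brief and rely on the non-negativity and strict definiteness of $D_{\mathrm{KL}}(\cdot \| \rho^{*})$ to pin down both the optimal value and the uniqueness of the maximiser $\rho^{*}$.
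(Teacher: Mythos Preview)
The paper does not actually prove this lemma; it states the result and attributes it to the cited references \cite{donsker1975asymptotic, catoni2004statistical}, treating it as a standard tool from the PAC-Bayesian literature. Your argument via the identity $\mathbb{E}_{\pi\sim\rho}[h(\pi)] - D_{\mathrm{KL}}(\rho\|\mu) = \ln Z - D_{\mathrm{KL}}(\rho\|\rho^{*})$ and the non-negativity of $D_{\mathrm{KL}}(\cdot\|\rho^{*})$ is exactly the standard proof of the Donsker--Varadhan variational formula, and it is correct; there is nothing in the paper to compare it against beyond the citations.
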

\vspace{-0.4cm}

This inequality allows for bounds that hold with high probability for all $\rho \in \mathcal{P}(\pi)$ simultaneously, and gives rise to the KL divergence penalty in the bound in Thm. \ref{thm:canon}.

\begin{proof}[Proof of Theorem \ref{thm:canon}]
Before observing the random draw of $A(\pi)$ and $B(\pi)$, we fix our choices of $\lambda \in \Lambda \cap (0, \infty)$, $\mu \in \mathcal{P}(\Pi)$ and $\delta \in (0, 1]$. Then, we start from Equation \ref{eqn:canon_pair} and integrate both sides with respect to $\pi$, which gives
\begin{equation*}
\mathop{\mathbb{E}}_{\pi \sim \mu}\mathop{\mathbb{E}}\left[e^{\lambda A(\pi) - \frac{\lambda^2}{2}B(\pi)^2}\right] \leq 1.
\end{equation*}
Since $\mu$ does not depend on $A(\pi)$ and $B(\pi)$, we can use Tonelli's theorem to swap the order of the expectations. We then have
\begin{equation*}
\mathop{\mathbb{E}}\mathop{\mathbb{E}}_{\pi \sim \mu}\left[e^{\lambda A(\pi) - \frac{\lambda^2}{2}B(\pi)^2}\right] \leq 1.
\end{equation*}
Then, we use the change of measure inequality with $h(\pi) =  \lambda A(\pi) - \frac{\lambda^2}{2}B(\pi)^2$, which gives
\begin{equation*}
\mathop{\mathbb{E}}\left[e^{\sup_{\rho \in \mathcal{P}(\Pi)}\left\{\mathop{\mathbb{E}}_{\pi \sim \rho}\left[\lambda A(\pi) - \frac{\lambda^2}{2}B(\pi)^2\right] - D_{\mathrm{KL}}(\rho||\mu)\right\}}\right] \leq 1.
\end{equation*}
Using Markov's inequality, we have that for our fixed choices of $\lambda$, $\mu$ and $\delta$, with probability at least $1 - \delta$
\begin{equation*}
e^{\sup_{\rho \in \mathcal{P}(\Pi)}\left\{\mathop{\mathbb{E}}_{\pi \sim \rho}\left[\lambda A(\pi) - \frac{\lambda^2}{2}B(\pi)^2\right] - D_{\mathrm{KL}}(\rho||\mu)\right\}} \leq \frac{1}{\delta}.
\end{equation*}
By taking the logarithm of both sides and then rearranging terms, we recover the statement of the theorem.
\end{proof}

From here, one can continue by optimising the bound with respect to the prior $\mu$, and its paramter $\lambda$, as well as any parameters of the estimator. This is the subject of Sec. \ref{sec:optimising_bounds}.

\subsection{Offline Bandit Example}
\label{sec:pac_bayes_bandit_ex}

We continue our introduction to PAC-Bayes bounds for bandits with an example. We present a PAC-Bayes bound for the expected reward $R(\rho)$ in the MAB setting, which was originally proposed by Seldin et al. \cite{seldin2011mab}. Then, we present an offline bandit algorithm that is motivated by this bound. In Sec. \ref{sec:multi_bandit_prob}, we stated that the goal of the offline policy search problem is to choose a distribution $\rho^* \in \mathcal{P}(\Pi)$ that maximises the expected reward, i.e.
\begin{equation*}
\rho^* \in \argmax_{\rho \in \mathcal{P}(\Pi)}\left\{\mathop{\mathbb{E}}_{\pi \sim \rho}\left[R(\pi)\right]\right\}.
\end{equation*}

Since the reward distribution $P_R$ is unknown, we cannot directly maximise $R(\pi)$. However, $R(\pi)$ can be estimated from historical data $D_n$ by the importance sampling estimate $r^{\mathrm{IS}}(\pi, D_n)$. We will assume that for all the behaviour policies $b(\cdot|D_{0}), \dots, b(\cdot|D_{n-1})$, the importance weights $\pi(a)/b(a|D_{i-1})$ are uniformly (over $\pi$, $a$ and $D_{n}$) bounded above by $1/\epsilon_n$. We can maximise $r^{\mathrm{IS}}(\rho, D_n) = \mathop{\mathbb{E}}_{\pi \sim \rho}\left[r^{\mathrm{IS}}(\pi, D_n)\right]$ with respect to $\rho$. However, if the estimate $r^{\mathrm{IS}}(\rho, D_n)$ greatly overestimates the expected reward $R(\rho)$ for even a single choice of $\rho$, simply maximising the reward estimate may result in overfitting. When can we guarantee that $r^{\mathrm{IS}}(\rho, D_n)$ does not greatly overestimate $R(\rho)$? PAC-Bayes bounds can provide an answer.

\begin{theorem}[PAC-Bayes Hoeffding-Azuma bound for $r^{\mathrm{IS}}$ \cite{seldin2011mab}]
For any $\lambda > 0$, any $\delta \in (0, 1)$ and any probability distribution $\mu \in \mathcal{P}(\Pi)$, with probability at least $1 - \delta$ (over the sampling of $D_n$), for all distributions $\rho \in \mathcal{P}(\Pi)$ simultaneously:
\begin{equation*}
R(\rho) \geq r^{\mathrm{IS}}(\rho, D_n) - \frac{\lambda}{8n\epsilon_n^2} - \frac{D_{\mathrm{KL}}(\rho||\mu) + \mathrm{ln}(1/\delta)}{\lambda}.
\end{equation*}
\label{thm:ex_bound}
\end{theorem}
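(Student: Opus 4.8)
The plan is to obtain this bound as a direct special case of the Unified PAC-Bayes Bound (Thm.~\ref{thm:canon}). The first task is to guess the pair $(A(\pi), B(\pi))$ that both satisfies the canonical assumption (\ref{eqn:canon_pair}) and reproduces the stated inequality once Thm.~\ref{thm:canon} is applied. Matching the left-hand side $\mathbb{E}_{\pi \sim \rho}[A(\pi)]$ against $r^{\mathrm{IS}}(\rho, D_n) - R(\rho)$ and the variance term $\frac{\lambda}{2}\mathbb{E}_{\pi \sim \rho}[B(\pi)^2]$ against $\frac{\lambda}{8n\epsilon_n^2}$ points to the choices
\[
A(\pi) = r^{\mathrm{IS}}(\pi, D_n) - R(\pi), \qquad B(\pi)^2 = \frac{1}{4n\epsilon_n^2},
\]
where $B(\pi)$ is a constant. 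With these, $\mathbb{E}_{\pi \sim \rho}[A(\pi)] = r^{\mathrm{IS}}(\rho, D_n) - R(\rho)$ by linearity, so rearranging the conclusion of Thm.~\ref{thm:canon} yields exactly the claimed lower bound on $R(\rho)$.

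The substance of the proof is therefore to verify the canonical assumption (\ref{eqn:canon_pair}), namely $\mathbb{E}[\exp(\lambda A(\pi) - \frac{\lambda^2}{8n\epsilon_n^2})] \leq 1$ for each fixed $\pi$ and every $\lambda > 0$. I would establish this by a Hoeffding-Azuma martingale argument. Writing $A(\pi) = \frac{1}{n}\sum_{i=1}^n X_i(\pi)$ with
\[
X_i(\pi) = \frac{\pi(a_i)}{b(a_i|D_{i-1})}r_i - R(\pi),
\]
the sequence $(X_i(\pi))$ is a martingale difference sequence with respect to $\{(a_i, r_i)\}$: conditioning on $D_{i-1}$, the action $a_i$ is drawn from $b(\cdot|D_{i-1})$ and the reward from $P_R(\cdot|a_i)$, so the importance weight cancels the behaviour policy and $\mathbb{E}[X_i(\pi)|D_{i-1}] = \sum_{a}\pi(a)\,\mathbb{E}[r|a] - R(\pi) = 0$. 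This is the same unbiasedness that makes (\ref{eqn:is_mart_ex}) a martingale (cf.\ Lem.~\ref{lem:xis_mart}). Furthermore, conditionally on $D_{i-1}$, the quantity $\frac{\pi(a_i)}{b(a_i|D_{i-1})}r_i$ lies in $[0, 1/\epsilon_n]$, since the importance weights are bounded above by $1/\epsilon_n$ and $r_i \in [0,1]$; hence $X_i(\pi)$ ranges over an interval of width $1/\epsilon_n$.

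With these two facts, Hoeffding's lemma gives the conditional moment generating function bound
\[
\mathbb{E}\!\left[\exp\!\left(\tfrac{\lambda}{n} X_i(\pi)\right)\,\Big|\,D_{i-1}\right] \leq \exp\!\left(\frac{\lambda^2}{8 n^2 \epsilon_n^2}\right).
\]
I would then introduce the supermartingale $Y_i = \exp\!\big(\frac{\lambda}{n}\sum_{j \leq i} X_j(\pi) - i\frac{\lambda^2}{8n^2\epsilon_n^2}\big)$ and use the tower property together with the display above to show $\mathbb{E}[Y_i|D_{i-1}] \leq Y_{i-1}$, so that $\mathbb{E}[Y_n] \leq \cdots \leq \mathbb{E}[Y_0] = 1$. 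Since $\frac{\lambda}{n}\sum_{i=1}^n X_i(\pi) = \lambda A(\pi)$ and $n \cdot \frac{\lambda^2}{8n^2\epsilon_n^2} = \frac{\lambda^2}{8n\epsilon_n^2} = \frac{\lambda^2}{2}B(\pi)^2$, the inequality $\mathbb{E}[Y_n]\leq 1$ is precisely the canonical assumption. Invoking Thm.~\ref{thm:canon} and rearranging then finishes the argument. I expect the main obstacle to be the bookkeeping of the $1/n$ scaling so that the constant $B(\pi)^2 = 1/(4n\epsilon_n^2)$ emerges correctly, and the careful justification that the conditional range bound $[0,1/\epsilon_n]$ holds uniformly in $i$ from the stated assumption on the importance weights; the unbiasedness and the iterated supermartingale step are then routine.
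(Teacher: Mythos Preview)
Your proposal is correct and follows essentially the same route as the paper: the paper states that Thm.~\ref{thm:ex_bound} is obtained from Thm.~\ref{thm:canon} with exactly your choices $A(\pi) = r^{\mathrm{IS}}(\pi, D_n) - R(\pi)$ and $B(\pi)^2 = \tfrac{1}{4n\epsilon_n^2}$, and verifies the canonical assumption by applying the Hoeffding--Azuma inequality to the martingale $M_n^{\mathrm{IS}}(\pi)$ of~(\ref{eqn:is_mart_ex}) (whose differences are precisely your $X_i(\pi)$, cf.\ Lem.~\ref{lem:xis_mart}). Your writeup simply unpacks that Hoeffding--Azuma step in more detail via Hoeffding's lemma and the tower property, which is the standard proof of the inequality the paper cites.
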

\vspace{-0.4cm}

This bound can be derived from Thm. \ref{thm:canon} with $A(\pi) = r^{\mathrm{IS}}(\pi, D_n) - R(\pi)$ and $B(\pi)^2 = \frac{1}{4n\epsilon_n^2}$. The canonical assumption in (\ref{eqn:canon_pair}) can be verified for this $A(\pi)$ and $B(\pi)$ by applying the classical Hoeffding-Azuma inequality \cite{azuma1967weighted} to the martingale in (\ref{eqn:is_mart_ex}). Thm. \ref{thm:ex_bound} states that if $\rho$ is close to the prior $\mu$ (as measured by the KL divergence) and $r^{\mathrm{IS}}(\rho, D_n)$ is high, then with high probability it is guaranteed that $R(\rho)$ is also high. We can define an offline policy search algorithm that returns the distribution $\widehat{\rho}$ that maximises the lower bound in Thm. \ref{thm:ex_bound}, and hence has the best performance guarantee. The resulting optimisation problem is
\begin{equation}
\widehat{\rho} \in \argmax_{\rho \in \mathcal{P}(\Pi)}\left\{\mathop{\mathbb{E}}_{\pi \sim \rho}[r^{\mathrm{IS}}(\pi, D_n)] - \frac{D_{\mathrm{KL}}(\rho||\mu)}{\lambda}\right\}.\label{eqn:bound_opt}
\end{equation}

The change of measure inequality in Lem. \ref{lem:donsk} shows that the optimisation problem in (\ref{eqn:bound_opt}) has a closed-form solution: $\widehat{\rho}(\pi) \propto \mu(\pi)e^{\lambda r^{\mathrm{IS}}(\pi, D_n)}$. When the policy class $\Pi$ is finite, the normalisation constant of $\widehat{\rho}$ can be calculated by summing over all $\pi \in \Pi$. When $\Pi$ is infinite, one can design algorithms that approximate $\widehat{\rho}$ with variational inference \cite{wainwright2008vi}, \cite{blei2017vi} or algorithms that sample from $\widehat{\rho}$ using Monte Carlo methods \cite{andrieu2003mcmc}, \cite{bardenet2017mcmc}. Of course, if $\Pi$ is a complicated (e.g. high-dimensional) policy class, then approximating or sampling from $\widehat{\rho}$ may be challenging. However, these challenges are beyond the scope of this survey.

\subsection{Relation To Existing Methods}

The basic algorithm in (\ref{eqn:bound_opt}) can provide a new perspective on some well-known principles for policy search.

\begin{example}[Relative Entropy Regularisaton \cite{kakade2001natural}, \cite{peters2003reinforcement}, \cite{bagnell2003covariant}]
Let the policy class be the set of all deterministic policies. Then $\Pi = \mathcal{A}$ and both $\rho$ and $\mu$ are now individual stochastic policies. Suppose there is a single behaviour policy $b$, and set $\mu = b$. The optimisation problem in (\ref{eqn:bound_opt}) becomes:
\begin{equation*}
\widehat{\rho} \in \argmax_{\rho \in \mathcal{P}(\mathcal{A})}\left\{\mathop{\mathbb{E}}_{a \sim \rho}[r^{\mathrm{IS}}(a, D_n)] - \frac{D_{\mathrm{KL}}(\rho||b)}{\lambda}\right\}.
\end{equation*}

This motivates maximising the IS reward estimate subject to a penalty on the relative entropy between $\rho$ and the behaviour policy $b$. Relative Entropy Policy Search \cite{peters2010reps}, Trust Region Policy Optimization \cite{schulman2015trpo} and Proximal Policy Optimization \cite{schulman2017ppo} are all based upon this principle of relative entropy regularisation. \qedclub
\label{ex:relative_entropy}
\end{example}

\begin{example}[Maximum Entropy \cite{ziebart2010maximum}]
Let $\Pi = \mathcal{A}$. This time, choose the prior $\mu$ to be a uniform distribution over $\mathcal{A}$. The KL divergence between $\rho$ and a uniform distribution is equal a constant minus the Shannon entropy $H(\rho)$ of $\rho$. Therefore, the optimisation problem in (\ref{eqn:bound_opt}) becomes:
\begin{equation*}
\widehat{\rho} \in \argmax_{\rho \in \mathcal{P}(\mathcal{A})}\left\{\mathop{\mathbb{E}}_{a \sim \rho}[r^{\mathrm{IS}}(a, D_n)] + \frac{H(\rho)}{\lambda}\right\}.
\end{equation*}

This motivates maximisation of a weighted sum of the reward estimate and the entropy of $\rho$, or alternatively, choosing the policy $\rho$ with the highest entropy subject to a constraint that the reward estimate is sufficiently high. This is essentially equivalent to a classical strategy known as Boltzmann exploration \cite{kaelbling1996survey}. In addition, several modern deep reinforcement learning algorithms, such as Soft Q-learning \cite{haarnoja2017soft} and Soft Actor-Critic \cite{haarnoja2018soft}, follow the maximum entropy principle.\qedclub
\end{example}

\section{PAC-Bayes Reward Bounds}
\label{sec:rew_bounds}

In this section, we give an overview of PAC-Bayes bounds for the expected reward, organised by the reward estimate used in the bound.

\subsection{Importance Sampling}
\label{sec:rew_is_estimate}

We have already encountered the importance sampling (IS) estimate, which was defined in (\ref{eqn:is_est}). We remind the reader of the assumption that the importance weights $\pi(a)/b(a|D_{i-1})$ are uniformly bounded above by $1/\epsilon_n$ for every $i = 1, \dots, n$. This can be achieved by constraining the behaviour policies and/or the policy class $\Pi$.

One of the most well-known PAC-Bayes bounds is the PAC-Bayes $kl$ bound, which was proposed by Seeger \cite{seeger2002pac} and improved by Maurer \cite{maurer2004note}. The binary KL divergence is defined as:
\begin{equation*}
kl(p||q) := p \; \mathrm{ln}\left(\frac{p}{q}\right) + (1 - p)\mathrm{ln}\left(\frac{1 - p}{1 - q}\right).
\end{equation*}

This is the KL divergence between a Bernoulli distribution with parameter $p$ and a Bernoulli distribution with parameter $q$, and is defined for $p, q \in [0, 1]$ (although it is infinite if $q=0$ or $q=1$). Seldin et al. \cite{seldin2011mab} derived the PAC-Bayes $kl$ bound for the IS estimate:
\begin{theorem}[PAC-Bayes $kl$ bound for $r^{\mathrm{IS}}$\cite{seldin2011mab}]
For any $\delta \in (0, 1)$ and any probability distribution $\mu \in \mathcal{P}(\Pi)$, with probability at least $1 - \delta$, for all distributions $\rho \in \mathcal{P}(\Pi)$ simultaneously:
\begin{equation*}
kl\infdivx*{\epsilon_n r^{\mathrm{IS}}(\rho, D_n)}{\epsilon_n R(\rho)} \leq \frac{D_{\mathrm{KL}}(\rho||\mu) + \mathrm{ln}(2\sqrt{n}/\delta)}{n}.
\end{equation*}
\label{thm:pac_bayes_kl}
\end{theorem}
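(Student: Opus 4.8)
The plan is to follow the classical Seeger--Maurer route for the PAC-Bayes $kl$ bound, applied to a suitably rescaled version of the per-sample importance-weighted rewards, working in the i.i.d. regime of a single fixed behaviour policy $b(\cdot)$ (this is the setting that the $2\sqrt{n}$ factor signals). For a fixed policy $\pi$, I would set $Y_i(\pi) := \epsilon_n \frac{\pi(a_i)}{b(a_i)} r_i$. The uniform importance-weight bound $\pi(a)/b(a) \leq 1/\epsilon_n$ together with $r_i \in [0,1]$ guarantees $Y_i(\pi) \in [0,1]$, while a one-line importance-sampling calculation gives $\mathbb{E}[Y_i(\pi)] = \epsilon_n R(\pi)$ and $\frac{1}{n}\sum_i Y_i(\pi) = \epsilon_n r^{\mathrm{IS}}(\pi, D_n)$. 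This reduces the claim to controlling the deviation of the empirical mean of i.i.d. $[0,1]$-valued variables from their true mean $p(\pi) := \epsilon_n R(\pi)$.

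The crux is the per-policy moment inequality $\mathbb{E}[\exp(n \cdot kl\infdivx*{\epsilon_n r^{\mathrm{IS}}(\pi,D_n)}{\epsilon_n R(\pi)})] \leq 2\sqrt{n}$, which is exactly Maurer's refinement \cite{maurer2004note} of Seeger's lemma for the empirical mean of i.i.d. $[0,1]$ random variables. I expect this to be the main obstacle: the tight constant $2\sqrt{n}$ does not follow from a generic sub-Gaussian argument, but rests on comparing $\mathbb{E}[e^{n\, kl(\hat{p}\|p)}]$ with the corresponding binomial quantity and carefully bounding the resulting sum, so I would cite Maurer rather than reprove it. (If one instead wanted the non-i.i.d. martingale version of the statement, this is precisely where a supermartingale construction based on the inequality $e^{\lambda x} \leq 1 - x + x e^{\lambda}$ on $[0,1]$, followed by a method-of-mixtures argument, would be required.)

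With the moment bound in hand, the remainder is the standard PAC-Bayes assembly mirroring the proof of Thm. \ref{thm:canon}. I would integrate the per-policy inequality against the data-free prior $\mu$ and swap the two expectations by Tonelli's theorem to obtain $\mathbb{E}[\mathbb{E}_{\pi\sim\mu}[e^{h(\pi)}]] \leq 2\sqrt{n}$, where $h(\pi) := n \cdot kl\infdivx*{\epsilon_n r^{\mathrm{IS}}(\pi,D_n)}{\epsilon_n R(\pi)}$. Markov's inequality then yields, with probability at least $1-\delta$, the estimate $\ln \mathbb{E}_{\pi\sim\mu}[e^{h(\pi)}] \leq \ln(2\sqrt{n}/\delta)$, and applying the change-of-measure inequality (Lem. \ref{lem:donsk}) with this $h$ gives, simultaneously for all $\rho$, the inequality $\mathbb{E}_{\pi\sim\rho}[h(\pi)] \leq D_{\mathrm{KL}}(\rho\|\mu) + \ln(2\sqrt{n}/\delta)$. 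The final step converts the expected $kl$ into a $kl$ of expectations: since the binary KL divergence is jointly convex in its two arguments, Jensen's inequality and the linearity identities $\epsilon_n r^{\mathrm{IS}}(\rho,D_n) = \mathbb{E}_{\pi\sim\rho}[\epsilon_n r^{\mathrm{IS}}(\pi,D_n)]$ and $\epsilon_n R(\rho) = \mathbb{E}_{\pi\sim\rho}[\epsilon_n R(\pi)]$ give $kl\infdivx*{\epsilon_n r^{\mathrm{IS}}(\rho,D_n)}{\epsilon_n R(\rho)} \leq \mathbb{E}_{\pi\sim\rho}[h(\pi)]/n$. Chaining this through the previous display and dividing by $n$ recovers the claimed bound.
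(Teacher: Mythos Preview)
Your argument is correct for the i.i.d.\ case (single fixed behaviour policy), but the theorem as stated in the paper is meant to hold in the more general setting where the behaviour policies $b(\cdot|D_{i-1})$ may depend on the history; the paper explicitly remarks that ``the bound in Thm.~\ref{thm:pac_bayes_kl} holds even when the behaviour policies are dependent on previous observations.'' Your proof, as written, does not cover this: Maurer's moment inequality $\mathbb{E}[\exp(n\,kl(\hat p\|p))]\leq 2\sqrt{n}$ is for i.i.d.\ $[0,1]$-valued random variables, and your $Y_i(\pi)$ are not i.i.d.\ once $b$ is data-dependent.

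The paper's route to the general case is not the supermartingale construction you sketch in your parenthetical, but rather a \emph{comparison inequality} (Lemma~1 of \cite{seldin2012mart}). That lemma shows that for a sequence $(Z_i(\pi))$ with $Z_i(\pi)\in[0,1]$ and constant conditional mean $\mathbb{E}[Z_i(\pi)\mid D_{i-1}]=\epsilon_n R(\pi)$, expectations of convex functions of the empirical average are dominated by the corresponding expectations for i.i.d.\ Bernoulli random variables with the same mean. Since $e^{n\,kl(\cdot\|p)}$ is convex on $[0,1]$, this reduces the general martingale-like case to the Bernoulli case, where Maurer's $2\sqrt{n}$ bound applies directly. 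Everything downstream of the moment inequality --- Tonelli, Markov, Lemma~\ref{lem:donsk}, and the Jensen step using joint convexity of $kl$ --- is exactly as you describe. So the gap is localised: you need either to restrict the statement to the fixed-behaviour-policy setting, or to invoke the comparison inequality before applying Maurer.
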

\vspace{-0.4cm}

The original PAC-Bayes $kl$ bound holds only for i.i.d. data, yet the bound in Thm. \ref{thm:pac_bayes_kl} holds even when the behaviour policies are dependent on previous observations. Seldin et al. \cite{seldin2011mab} achieve this extra generality by using a comparison inequality (Lem. 1 of \cite{seldin2012mart}) that bounds expectations of convex functions of certain martingale-like sequences by expectations of the same functions of independent Bernoulli random variables. In this form, the PAC-Bayes $kl$ bound is not so useful; we would prefer a lower bound on $R(\rho)$. Following Seeger \cite{seeger2002pac}, the lower inverse of the binary KL divergence can be defined as:
\begin{equation*}
kl^{-1}(p, b) := \min\{q \in [0, 1] : kl(p||q) \leq b\}.
\end{equation*}

With this definition, the PAC-Bayes $kl$ bound for the $r^{\mathrm{IS}}$ estimate can be rewritten as:
\begin{equation}
R(\rho) \geq \frac{1}{\epsilon_n}kl^{-1}\left(\epsilon_n r^{\mathrm{IS}}(\rho, D_n), \frac{D_{\mathrm{KL}}(\rho||\mu) + \mathrm{ln}(2\sqrt{n}/\delta)}{n}\right).\label{eqn:pac_bayes_kl_inv}
\end{equation}

We refer to this bound as the PAC-Bayes $kl^{-1}$ bound. This is the tightest possible lower bound on $R(\rho)$ that can be derived from the PAC-Bayes $kl$ bound. From the definition of $kl^{-1}$, it is apparent that this bound is never vacuous (less than 0). Unfortunately, $kl^{-1}$ has no closed-form solution. However, it can be calculated numerically to arbitrary accuracy using, for example, the bisection method. Instead of inverting the binary KL divergence, one can use Pinsker's inequality \cite{pinsker1964information}:
\begin{equation*}
|p - q| \leq \sqrt{kl(p||q)/2}.
\end{equation*}

We can then obtain a (looser) high probability lower bound on the expected reward:
\begin{equation}
R(\rho) \geq r^{\mathrm{IS}}(\rho, D_n) - \frac{1}{\epsilon_n}\sqrt{\frac{D_{\mathrm{KL}}(\rho||\mu) + \mathrm{ln}(2\sqrt{n}/\delta)}{2n}}.\label{eqn:pac_bayes_pinsker}
\end{equation}

We refer to this bound as the PAC-Bayes Pinsker bound. Several authors \cite{mcallester2003margin}, \cite{tolstikhin2013pac}, \cite{thiemann2017strongly}, \cite{rivasplata2019pac} have used tighter (than Pinsker's inequality) bounds on the binary KL divergence to obtain better, more explicit PAC-Bayes bounds from the PAC-Bayes $kl$ bound. Similar bounds for the IS reward estimate can be obtained by combining the same techniques with Thm. \ref{thm:pac_bayes_kl}.

Seldin et al. \cite{seldin2012bern} provide a PAC-Bayes bound for the IS estimate that depends on the variance of the reward estimate. The (conditional) average variance of the IS estimate for the policy $\pi$ is defined as:
\begin{equation*}
V^{\mathrm{IS}}(\pi, D_n) = \frac{1}{n}\sum_{i=1}^{n}\mathop{\mathbb{E}}_{\substack{a_i^{\prime} \sim b(\cdot|D_{i-1}) \\ r_i^{\prime} \sim p_R(\cdot|a_i^{\prime})}}\left[\left(\frac{\pi(a_i^{\prime})}{b(a_i^{\prime}|D_{i-1})}r_i^{\prime} - R(\pi)\right)^2\right].
\end{equation*}

This is the average variance of the IS estimate given the observed sequence of behaviour policies. We write $V^{\mathrm{IS}}(\rho, D_n) = \mathbb{E}_{\pi \sim \rho}\left[V^{\mathrm{IS}}(\pi, D_n)\right]$. The bound is derived by using Bernstein's inequality for martingales instead of the Hoeffding-Azuma inequality.

\begin{theorem}[PAC-Bayes Bernstein Bound for $r^{\mathrm{IS}}$ \cite{seldin2012bern}]
For any $\lambda \in [0, n\epsilon_n]$, any $\delta \in (0, 1)$ and any probability distribution $\mu \in \mathcal{P}(\Pi)$, with probability at least $1 - \delta$, for all distributions $\rho \in \mathcal{P}(\Pi)$ simultaneously:
\begin{align*}
R(\rho) &\geq r^{\mathrm{IS}}(\rho, D_n) - \frac{\lambda(e-2)V^{\mathrm{IS}}(\rho, D_n)}{n}\\
&- \frac{D_{\mathrm{KL}}(\rho||\mu) + \mathrm{ln}(1/\delta)}{\lambda}.
\end{align*}
\label{thm:is_bernstein}
\end{theorem}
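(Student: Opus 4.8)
The plan is to obtain this bound as another special case of the Unified PAC-Bayes Bound (Thm.~\ref{thm:canon}), in direct parallel with the derivation of the Hoeffding--Azuma bound (Thm.~\ref{thm:ex_bound}); the only change is to replace the Hoeffding moment-generating-function (MGF) estimate with a Bernstein-type one, which is what brings the variance term $V^{\mathrm{IS}}$ into the bound. Concretely, I would take $\Lambda = [0, n\epsilon_n]$, set
\[
A(\pi) = r^{\mathrm{IS}}(\pi, D_n) - R(\pi), \qquad B(\pi)^2 = \frac{2(e-2)V^{\mathrm{IS}}(\pi, D_n)}{n},
\]
and verify the canonical assumption~(\ref{eqn:canon_pair}) for this pair. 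Since $r^{\mathrm{IS}}(\rho, D_n) = \mathbb{E}_{\pi\sim\rho}[r^{\mathrm{IS}}(\pi,D_n)]$ and $V^{\mathrm{IS}}(\rho, D_n) = \mathbb{E}_{\pi\sim\rho}[V^{\mathrm{IS}}(\pi,D_n)]$, once the assumption is verified Thm.~\ref{thm:canon} gives $r^{\mathrm{IS}}(\rho,D_n) - R(\rho) \leq \frac{\lambda}{2}\mathbb{E}_{\pi\sim\rho}[B(\pi)^2] + (D_{\mathrm{KL}}(\rho||\mu) + \mathrm{ln}(1/\delta))/\lambda$, and rearranging together with $\frac{\lambda}{2}\cdot\frac{2(e-2)V^{\mathrm{IS}}(\rho,D_n)}{n} = \frac{\lambda(e-2)V^{\mathrm{IS}}(\rho,D_n)}{n}$ yields exactly the claimed lower bound on $R(\rho)$.

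The core of the argument is verifying~(\ref{eqn:canon_pair}) for this choice. I would write $r^{\mathrm{IS}}(\pi,D_n) - R(\pi) = \frac{1}{n}\sum_{i=1}^{n}\xi_i$ with $\xi_i = \frac{\pi(a_i)}{b(a_i|D_{i-1})}r_i - R(\pi)$, which is a martingale difference sequence with respect to the action--reward pairs (this underlies the martingale in~(\ref{eqn:is_mart_ex}); see Lem.~\ref{lem:xis_mart}). Because the importance weights are bounded above by $1/\epsilon_n$ and rewards lie in $[0,1]$, each $\xi_i \leq 1/\epsilon_n$ almost surely, while its conditional second moment $\mathbb{E}[\xi_i^2 \mid D_{i-1}]$ averages over $i$ to exactly $V^{\mathrm{IS}}(\pi, D_n)$. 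Setting $\lambda' = \lambda/n$, the key ingredient is the Bernstein MGF estimate: for a conditionally centred variable $X \leq c$ with $0 \leq \lambda' c \leq 1$,
\[
\mathop{\mathbb{E}}\left[e^{\lambda' X}\mid D_{i-1}\right] \leq \exp\!\left((e-2)(\lambda')^2\,\mathbb{E}[X^2\mid D_{i-1}]\right),
\]
which follows from $e^x \leq 1 + x + (e-2)x^2$ for $x \leq 1$ (the centring kills the linear term). Here $c = 1/\epsilon_n$, so the restriction $\lambda' c \leq 1$ reads $\lambda \leq n\epsilon_n$, which is precisely the hypothesis $\lambda \in [0, n\epsilon_n]$.

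With the per-step estimate in hand, I would form the partial sums $S_k = \sum_{i=1}^{k}\bigl(\lambda'\xi_i - (e-2)(\lambda')^2\,\mathbb{E}[\xi_i^2\mid D_{i-1}]\bigr)$ and show that $e^{S_k}$ is a supermartingale: conditioning on $D_{k-1}$, the $D_{k-1}$-measurable variance factor cancels the Bernstein bound, giving $\mathbb{E}[e^{S_k}\mid D_{k-1}] \leq e^{S_{k-1}}$. Telescoping from $S_0 = 0$ gives $\mathbb{E}[e^{S_n}] \leq 1$. A short computation shows $(e-2)(\lambda')^2\sum_{i=1}^{n}\mathbb{E}[\xi_i^2\mid D_{i-1}] = \frac{(e-2)\lambda^2}{n}V^{\mathrm{IS}}(\pi,D_n) = \frac{\lambda^2}{2}B(\pi)^2$ and $\lambda'\sum_{i=1}^{n}\xi_i = \lambda A(\pi)$, so $S_n = \lambda A(\pi) - \frac{\lambda^2}{2}B(\pi)^2$ and $\mathbb{E}[e^{S_n}]\leq 1$ is exactly~(\ref{eqn:canon_pair}).

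The main obstacle is establishing the Bernstein MGF estimate with the sharp constant $e-2$ and pinning down the exact range of validity, since this is the sole source of the constraint $\lambda \leq n\epsilon_n$ and must be tracked carefully; the elementary inequality $e^x \leq 1 + x + (e-2)x^2$ for $x \leq 1$ does the work, but one must check that $\lambda' X \leq \lambda' c \leq 1$ so that it applies to every realisation of $\xi_i$. The remaining steps---the supermartingale telescoping, the matching of constants, and the substitution into Thm.~\ref{thm:canon}---are routine bookkeeping.
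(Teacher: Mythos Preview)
Your proposal is correct and follows essentially the same route as the paper: verify the canonical assumption (\ref{eqn:canon_pair}) for the IS martingale difference sequence via Bernstein's MGF inequality for martingales (the paper invokes this as Lemma~\ref{lem:bernstein}, citing \cite{beygelzimer2011contextual}, rather than deriving it from $e^x\le 1+x+(e-2)x^2$ as you do), then apply Thm.~\ref{thm:canon} with exactly the $A(\pi)$ and $B(\pi)^2$ you specify. The only cosmetic difference is that the paper uses a symmetric two-sided bound $|X_i^{\mathrm{IS}}(\pi)|\le 1/\epsilon_n$ to fit the form of Lemma~\ref{lem:bernstein}, whereas you work with the one-sided upper bound $\xi_i\le 1/\epsilon_n$; both yield the same constraint $\lambda\le n\epsilon_n$.
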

\vspace{-0.4cm}

Seldin et al. \cite{seldin2012bern} show that the variance for any policy $\pi$ satisfies $V^{\mathrm{IS}}(\pi, D_n) \leq 1/\epsilon_n$. This bound on the variance leads to the following lower bound:
\begin{equation}
R(\rho) \geq r^{\mathrm{IS}}(\rho, D_n) - \frac{\lambda(e-2)}{n\epsilon_n} - \frac{D_{\mathrm{KL}}(\rho||\mu) + \mathrm{ln}(1/\delta)}{\lambda}.\label{eqn:pac_bayes_bern_eps}
\end{equation}

The PAC-Bayes bounds for the IS estimate can be compared by examining their rates in $n$ and $\epsilon_n$. The rate at which they degrade as $\epsilon_n$ approaches 0 becomes particularly important as the action set $\mathcal{A}$ grows. For example, if $\mathcal{A} = \{1, \dots, K\}$ and the behaviour policies are all uniform, then $\epsilon_n \leq 1/K$. Alternatively, if $\mathcal{A}$ is a bounded subset of $\mathbb{R}^d$ and the behaviour policies are all uniform, then $\epsilon_n \leq \mathcal{O}(1/\mathrm{vol}(\mathcal{A}))$. These examples suggest that if a PAC-Bayes bound degrades rapidly as $\epsilon_n$ decreases, then the bound may be loose when $\mathcal{A}$ is large.

The Pinsker bound in (\ref{eqn:pac_bayes_pinsker}) has a rate of $\mathcal{O}(\frac{1}{\epsilon_n\sqrt{n}})$, ignoring the $\mathrm{ln}(\sqrt{n})$ term. For the PAC-Bayes Hoeffding-Azuma bound in Thm. \ref{thm:ex_bound}, it can be shown that the optimal value of $\lambda$ is proportional to $\epsilon_n\sqrt{n}$. With this choice of $\lambda$, this bound also has a rate of $\mathcal{O}(\frac{1}{\epsilon_n\sqrt{n}})$. The PAC-Bayes Bernstein bound in (\ref{eqn:pac_bayes_bern_eps}) has an improved rate in $\epsilon_n$. For a suitable choice of $\lambda$, this bound has a rate of $\mathcal{O}(\frac{1}{\sqrt{\epsilon_n n}})$. A Taylor expansion reveals that the PAC-Bayes $kl^{-1}$ bound in (\ref{eqn:pac_bayes_kl_inv}) decays approximately exponentially in $\frac{1}{n\epsilon_n}$ as $\frac{r^{\mathrm{IS}}(\rho, D_n)}{e}\exp\frac{-D_{\mathrm{KL}}(\rho||\mu) - \mathrm{ln}(2\sqrt{n}/\delta)}{n\epsilon_n r^{\mathrm{IS}}(\rho, D_n)}$. Based on these rates, we can expect the PAC-Bayes Bernstein and PAC-Bayes $kl^{-1}$ bounds to scale better to large action sets.

Finally, we discuss PAC-Bayes bounds for the IS estimate in the contextual bandit setting. In the CB setting, the IS estimate is defined as:
\begin{equation}
r^{\mathrm{IS}}(\pi, D_n) = \frac{1}{n}\sum_{i=1}^{n}\frac{\pi(a_i|s_i)}{b(a_i|s_i, D_{i-1})}r_i.\label{eqn:cb_is}
\end{equation}

We still require that $1/\epsilon_n$ is a uniform bound on the importance weights, though now the importance weights are $\pi(s|a)/b(a|s, D_{i-1})$. As in the MAB setting, one can construct martingales containing the IS estimate that are compatible with the Hoeffding-Azuma, Bernstein and Seldin et al.'s comparison inequality \cite{seldin2012mart}. Therefore, the same PAC-Bayes Hoeffding-Azuma, PAC-Bayes $kl$ and PAC-Bayes Bernstein bounds as in Theorems \ref{thm:ex_bound}, \ref{thm:pac_bayes_kl} and \ref{thm:is_bernstein} can be derived. In the CB versions of these bounds, $\epsilon_n$ and the reward estimate $r^{\mathrm{IS}}(\pi, D_n)$ are just defined slightly differently. A PAC-Bayes Bernstein bound for the IS estimate in the CB setting was first derived by Seldin et al. \cite{seldin2011cb}.

\subsection{Clipped Importance Sampling}
\label{sec:rew_cis_estimate}

In Section \ref{sec:rew_is_estimate}, we saw that all the PAC-Bayes bounds for the IS reward estimate degrade as the uniform bound $1/\epsilon_n$ on the importance weights increases. One way to ensure that $1/\epsilon_n$ is never too large is to clip the importance weights. Clipped (or truncated) importance sampling was first proposed by Ionides \cite{ionides2008trunc}. The clipped importance sampling (CIS) reward estimate for MAB problems is defined as:
\begin{equation}
r^{\mathrm{CIS}}(\pi, D_n) = \frac{1}{n}\sum_{i=1}^{n}\mathrm{min}\left(\frac{\pi(a_i)}{b(a_i|D_{i-1})}, \frac{1}{\tau}\right)r_i.
\label{eqn:mab_cis_est}
\end{equation}

By definition, the clipped importance weights are bounded above by $1/\tau$. However, clipping the importance weights introduces bias. Let $R^{\mathrm{CIS}}(\pi) = \mathbb{E}_{D_n}[r^{\mathrm{CIS}}(\pi, D_n)]$ denote the expected value of the CIS estimate, and let $R^{\mathrm{CIS}}(\rho) = \mathbb{E}_{\pi \sim \rho}[R^{\mathrm{CIS}}(\pi)]$. It can be shown (see Lemma \ref{lem:cis_bias}) that the CIS estimate is biased to underestimate the expected reward, i.e. $R^{\mathrm{CIS}}(\rho) \leq R(\rho)$.

Therefore, any lower bound on $R^{\mathrm{CIS}}(\rho)$ is also a lower bound on $R(\rho)$, which means the we can essentially ignore the bias of the CIS estimate if we only require a lower bound on the expected reward. It is possible to derive PAC-Bayes Hoeffding-Azuma, Bernstein and $kl$ bounds for the CIS estimate that are almost the same as those for the IS estimate, except that $\epsilon_n$ is replaced by $\tau$. Under the assumption that there is a single, fixed behaviour policy, Wang et al. \cite{wang2019pac} have proved PAC-Bayes Pinsker, Hoeffding-Azuma and Bernstein bounds for the CIS risk (one minus reward) estimate.

In Appendix \ref{sec:cis_ha_proof}, we prove the following PAC-Bayes Hoeffding-Azuma bound for the CIS reward estimate, which holds in the most general setting where the data are drawn from an arbitrary sequence of behaviour policies.

\begin{theorem}[PAC-Bayes Hoeffding-Azuma bound for $r^{\mathrm{CIS}}$]
For any $\tau \in (0, 1]$, any $\lambda > 0$, any $\delta \in (0, 1)$ and any probability distribution $\mu \in \mathcal{P}(\Pi)$, with probability at least $1 - \delta$, for all distributions $\rho \in \mathcal{P}(\Pi)$ simultaneously:
\begin{equation}
R(\rho) \geq r^{\mathrm{CIS}}(\rho, D_n) - \frac{\lambda}{8n\tau^2} - \frac{D_{\mathrm{KL}}(\rho||\mu) + \mathrm{ln}(1/\delta)}{\lambda}.\label{eqn:cis_pac_bayes_ha}
\end{equation}
\label{thm:cis_pac_bayes_ha}
\end{theorem}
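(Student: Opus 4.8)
The plan is to follow the same template that produced the IS Hoeffding-Azuma bound (Thm.~\ref{thm:ex_bound}): construct a suitable martingale for a fixed $\pi$, verify the canonical assumption \eqref{eqn:canon_pair} via the classical Hoeffding-Azuma inequality, and then invoke the unified bound of Thm.~\ref{thm:canon}. The one genuinely new ingredient, relative to the unbiased IS case, is handling the clipping bias, and that is where I expect the only real subtlety to lie.

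First I would fix $\pi \in \Pi$ and set $X_i = \min(\pi(a_i)/b(a_i|D_{i-1}), 1/\tau)\,r_i$, so that $r^{\mathrm{CIS}}(\pi, D_n) = \frac1n\sum_{i=1}^n X_i$. Because $r_i \in [0,1]$ and the clipped weight lies in $[0, 1/\tau]$, each $X_i$ takes values in $[0, 1/\tau]$. Writing $R_i^{\mathrm{CIS}}(\pi) = \mathbb{E}[X_i \mid D_{i-1}]$ for the per-step conditional mean, the centred increments $Z_i = X_i - R_i^{\mathrm{CIS}}(\pi)$ form a martingale difference sequence with respect to $\{(a_i, r_i)\}_{i=1}^n$, whose partial sums give a martingale analogous to \eqref{eqn:is_mart_ex}. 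Taking $A(\pi) = \frac1n\sum_{i=1}^n Z_i = r^{\mathrm{CIS}}(\pi, D_n) - \frac1n\sum_{i=1}^n R_i^{\mathrm{CIS}}(\pi)$, I would apply Hoeffding's lemma conditionally — each $Z_i$ has conditional mean zero and conditional range at most $1/\tau$ given $D_{i-1}$ — and telescope the resulting conditional moment generating functions by the tower property. This yields $\mathbb{E}[\exp(\lambda A(\pi))] \leq \exp(\lambda^2/(8n\tau^2))$, i.e.\ the canonical assumption \eqref{eqn:canon_pair} holds on $\Lambda = (0, \infty)$ with $B(\pi)^2 = 1/(4n\tau^2)$.

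Applying Thm.~\ref{thm:canon} to this $A$ and $B$ then gives, with probability at least $1-\delta$ and for all $\rho \in \mathcal{P}(\Pi)$ simultaneously,
\begin{equation*}
r^{\mathrm{CIS}}(\rho, D_n) - \mathop{\mathbb{E}}_{\pi\sim\rho}\Bigl[\tfrac1n\textstyle\sum_{i=1}^n R_i^{\mathrm{CIS}}(\pi)\Bigr] \leq \frac{\lambda}{8n\tau^2} + \frac{D_{\mathrm{KL}}(\rho||\mu) + \mathrm{ln}(1/\delta)}{\lambda},
\end{equation*}
since $\frac{\lambda}{2}B(\pi)^2 = \frac{\lambda}{8n\tau^2}$ exactly matches the leading term of the claim.

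The final step, and the crux of the argument, is to replace the clipped conditional mean by the true reward. Here I would use the clipping bias (Lem.~\ref{lem:cis_bias}): since $\min(\pi(a), b(a|D_{i-1})/\tau) \leq \pi(a)$ and the rewards are non-negative, each conditional mean satisfies $R_i^{\mathrm{CIS}}(\pi) \leq R(\pi)$, whence $\frac1n\sum_i R_i^{\mathrm{CIS}}(\pi) \leq R(\pi)$ and $\mathbb{E}_{\pi\sim\rho}[\frac1n\sum_i R_i^{\mathrm{CIS}}(\pi)] \leq R(\rho)$. Substituting this into the display only enlarges its left-hand side to $r^{\mathrm{CIS}}(\rho, D_n) - R(\rho)$, and rearranging produces the stated bound. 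The main thing to get right is the direction of this bias inequality: the martingale is naturally centred at the clipped conditional means rather than at $R(\pi)$, and it is precisely because clipping \emph{underestimates} the reward that the substitution preserves — rather than reverses — the inequality needed for a valid lower bound on $R(\rho)$. This is exactly the mechanism that lets $\tau$ play the role of $\epsilon_n$ from the IS bound while dispensing with the uniform importance-weight assumption.
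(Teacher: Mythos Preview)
Your proposal is correct. The paper takes a slightly different but equivalent route: instead of centering at the clipped conditional means $R_i^{\mathrm{CIS}}(\pi)$ to form a genuine martingale difference sequence and then applying the bias inequality $R_i^{\mathrm{CIS}}(\pi)\le R(\pi)$ at the end, it centers directly at $R(\pi)$, so that the increments $Y_i^{\mathrm{CIS}}(\pi)=\min(\pi(a_i)/b(a_i|D_{i-1}),1/\tau)\,r_i-R(\pi)$ form a \emph{supermartingale} difference sequence (conditional mean $\le 0$, by the same clipping-bias observation you use), and then proves and applies a one-sided Hoeffding--Azuma lemma for supermartingales. Your decomposition has the minor advantage of avoiding that auxiliary supermartingale lemma, since the concentration step uses a standard martingale and the bias is handled deterministically afterwards; the paper's version has the complementary advantage that the target $R(\pi)$ appears from the outset, so no intermediate quantity needs to be eliminated. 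Both routes yield exactly the same constants.
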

\vspace{-0.4cm}

Like $\lambda$, the clipping parameter $\tau$ must be independent of the data $D_n$. We don't claim that this bound is new, since it is essentially a corollary of the PAC-Bayes Hoeffding-Azuma bound for martingales by Seldin et al. \cite{seldin2012mart}. A PAC-Bayes Bernstein bound for the CIS estimate can also be drived in the general case where the data are drawn from an arbitrary sequence of behaviour policies. We define the average variance of the CIS estimate as:
\begin{equation}
V^{\mathrm{CIS}}(\pi, D_n) = \frac{1}{n}\sum_{i=1}^{n}\mathop{\mathbb{E}}_{\substack{a_i^{\prime} \sim b(\cdot|D_{i-1}) \\ r_i^{\prime} \sim p_R(\cdot|a_i^{\prime})}}\left[\left(X_i^{\mathrm{CIS}}(\pi)\right)^2\right],\label{eqn:cis_var_def}
\end{equation}

where $X_i^{\mathrm{CIS}}(\pi) = \min\left(\frac{\pi(a_i^{\prime})}{b(a_i^{\prime}|D_{i-1})}, \frac{1}{\tau}\right)r_i^{\prime} - \mathbb{E}_{a_i^{\prime} \sim b(\cdot|D_{i-1}), r_i^{\prime} \sim p_R(\cdot|a_i^{\prime})}\left[\min\left(\frac{\pi(a_i^{\prime})}{b(a_i^{\prime}|D_{i-1})}, \frac{1}{\tau}\right)r_i^{\prime}\right]$. One can show (see Lemma \ref{lem:cis_var_mab}) that the average variance of the CIS estimate satisfies $V^{\mathrm{CIS}}(\pi, D_n) \leq 1/\tau$. In Appendix \ref{sec:cis_bern_proof}, we prove the following PAC-Bayes Bernstein bound for the CIS estimate.

\begin{theorem}[PAC-Bayes Bernstein Bound for $r^{\mathrm{CIS}}$]
For any $\tau \in (0, 1]$, any $\lambda \in [0, n\tau]$, any $\delta \in (0, 1)$ and any probability distribution $\mu \in \mathcal{P}(\Pi)$, with probability at least $1 - \delta$, for all distributions $\rho \in \mathcal{P}(\Pi)$ simultaneously:
\begin{align*}
R(\rho) &\geq r^{\mathrm{CIS}}(\rho, D_n) - \frac{\lambda(e-2)V^{\mathrm{CIS}}(\rho, D_n)}{n}\\
&- \frac{D_{\mathrm{KL}}(\rho||\mu) + \mathrm{ln}(1/\delta)}{\lambda}.
\end{align*}
\label{thm:cis_pac_bayes_bernstein}
\end{theorem}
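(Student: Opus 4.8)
The plan is to obtain this bound as a special case of the Unified PAC-Bayes Bound (Thm.~\ref{thm:canon}), exactly as Thm.~\ref{thm:is_bernstein} is derived for the unclipped estimate, the only additional ingredient being the clipping bias. I would apply Thm.~\ref{thm:canon} with
\[
A(\pi) = r^{\mathrm{CIS}}(\pi, D_n) - \frac{1}{n}\sum_{i=1}^n C_i(\pi), \qquad B(\pi)^2 = \frac{2(e-2)V^{\mathrm{CIS}}(\pi, D_n)}{n},
\]
where $C_i(\pi) = \mathbb{E}[\min(\frac{\pi(a_i)}{b(a_i|D_{i-1})}, \frac{1}{\tau})r_i \mid D_{i-1}]$ is the one-step conditional mean of the clipped weighted reward. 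With these choices, $\frac{\lambda}{2}\mathbb{E}_{\pi\sim\rho}[B(\pi)^2]$ becomes exactly the variance term $\frac{\lambda(e-2)V^{\mathrm{CIS}}(\rho,D_n)}{n}$ in the statement, and $\lambda A(\pi)$ reproduces the centred martingale sum.

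The first step is to verify the canonical assumption (\ref{eqn:canon_pair}) for this pair. Writing $X_i^{\mathrm{CIS}}(\pi)$ for the centred increments evaluated at the realised $(a_i, r_i)$ (whose conditional second moments are the $\sigma_i^2(\pi) = \mathbb{E}[(X_i^{\mathrm{CIS}}(\pi))^2 \mid D_{i-1}]$ appearing in (\ref{eqn:cis_var_def})), the sequence $(X_i^{\mathrm{CIS}}(\pi))_{i\le n}$ is a martingale difference sequence with respect to $\{(a_j, r_j)\}_{j\le i}$, and it is bounded by $|X_i^{\mathrm{CIS}}(\pi)| \le 1/\tau$ because the clipped weight times the reward lies in $[0, 1/\tau]$. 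Setting $\gamma = \lambda/n$, the exponent $\lambda A(\pi) - \frac{\lambda^2}{2}B(\pi)^2$ equals $\sum_{i=1}^n(\gamma X_i^{\mathrm{CIS}}(\pi) - (e-2)\gamma^2\sigma_i^2(\pi))$. The restriction $\lambda \in [0, n\tau]$ guarantees $|\gamma X_i^{\mathrm{CIS}}(\pi)| \le 1$, so the elementary inequality $e^x \le 1 + x + (e-2)x^2$ (valid for $|x|\le 1$) gives the conditional one-step estimate $\mathbb{E}[e^{\gamma X_i^{\mathrm{CIS}}(\pi)} \mid D_{i-1}] \le e^{(e-2)\gamma^2\sigma_i^2(\pi)}$. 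Peeling off the increments one at a time from $i = n$ down to $i = 1$, using that $\sigma_i^2(\pi)$ is $D_{i-1}$-measurable together with the tower property, shows the full exponential moment is at most $1$, which is precisely (\ref{eqn:canon_pair}).

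The second step converts the conclusion of Thm.~\ref{thm:canon} into the claimed lower bound on $R(\rho)$. Because $\min(x, 1/\tau) \le x$ and rewards are non-negative, each conditional mean obeys $C_i(\pi) \le \mathbb{E}[\frac{\pi(a_i)}{b(a_i|D_{i-1})}r_i \mid D_{i-1}] = R(\pi)$ \emph{surely}; this is the per-step form of the clipping-bias inequality (Lem.~\ref{lem:cis_bias}). Hence $A(\pi) \ge r^{\mathrm{CIS}}(\pi, D_n) - R(\pi)$ pointwise, so $\mathbb{E}_{\pi\sim\rho}[A(\pi)] \ge r^{\mathrm{CIS}}(\rho, D_n) - R(\rho)$. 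Chaining this with the inequality from Thm.~\ref{thm:canon} and rearranging yields the statement. Note that Lem.~\ref{lem:cis_var_mab}, giving $V^{\mathrm{CIS}} \le 1/\tau$, is \emph{not} needed here, since the theorem keeps $V^{\mathrm{CIS}}$ explicit; that lemma is only used to obtain the variance-free corollary.

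The main obstacle is the canonical-assumption verification, where the martingale bookkeeping must be handled with care: one centres the increments by the \emph{conditional} means $C_i(\pi)$ rather than by $R(\pi)$ (which is what makes $(X_i^{\mathrm{CIS}}(\pi))$ a genuine martingale difference sequence under adaptive behaviour policies), confirms the $1/\tau$ bound on the increments, and checks that the admissible range $\lambda \le n\tau$ is exactly what the moment inequality $e^x \le 1 + x + (e-2)x^2$ demands. Everything after that is routine rearrangement, with the clipping bias absorbed harmlessly because it only strengthens the lower bound.
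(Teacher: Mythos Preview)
Your proposal is correct and follows essentially the same route as the paper: you verify that $(X_i^{\mathrm{CIS}}(\pi))_i$ is a bounded martingale difference sequence, establish the canonical assumption via the one-step Bernstein moment bound (the paper packages this as Lem.~\ref{lem:bernstein}), apply the unified PAC-Bayes template, and then use the per-step clipping-bias inequality $C_i(\pi)\le R(\pi)$ to pass from the centred martingale to $r^{\mathrm{CIS}}(\rho,D_n)-R(\rho)$. Your presentation is in fact slightly more explicit than the paper's (making the rescaling $\gamma=\lambda/n$ and the role of the conditional means $C_i$ visible), but the argument is the same.
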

\vspace{-0.4cm}

Applying the variance bound $V^{\mathrm{CIS}}(\pi, D_n) \leq 1/\tau$ gives the following high probability lower bound:
\begin{equation}
R(\rho) \geq r^{\mathrm{CIS}}(\rho, D_n) - \frac{\lambda(e-2)}{n\tau} - \frac{D_{\mathrm{KL}}(\rho||\mu) + \mathrm{ln}(1/\delta)}{\lambda}.\label{eqn:cis_pac_bayes_bernstein}
\end{equation}

This bound is essentially a corollary of the PAC-Bayes Bernstein bound for martingales by Seldin et al. \cite{seldin2012mart}. A PAC-Bayes $kl$ bound for the CIS estimate has so far only been proven for the case where the data are all drawn from a fixed behaviour policy. In this case, the CIS estimate is a sum of independent random variables bounded in $[0, 1/\tau]$. Therefore, one can apply Seeger's original PAC-Bayes $kl$ bound \cite{seeger2002pac} to the CIS estimate, scaled by a factor of $\tau$.

\begin{theorem}[PAC-Bayes $kl$ bound for $r^{\mathrm{CIS}}$]
If the data set $D_n$ is drawn from a single, fixed behaviour policy, then for any $\tau \in (0, 1]$, any $\delta \in (0, 1)$ and any probability distribution $\mu \in \mathcal{P}(\Pi)$, with probability at least $1 - \delta$, for all distributions $\rho \in \mathcal{P}(\Pi)$ simultaneously:
\begin{equation*}
kl\infdivx*{\tau r^{\mathrm{CIS}}(\rho, D_n)}{\tau R^{\mathrm{CIS}}(\rho)} \leq \frac{D_{\mathrm{KL}}(\rho||\mu) + \mathrm{ln}(2\sqrt{n}/\delta)}{n}.
\end{equation*}
\label{thm:cis_pac_bayes_kl}
\end{theorem}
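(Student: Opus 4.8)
The plan is to reduce the statement to the classical PAC-Bayes $kl$ bound for bounded i.i.d.\ losses of Seeger \cite{seeger2002pac} and Maurer \cite{maurer2004note}. The crucial observation is that, under the single fixed behaviour policy assumption, the pairs $(a_i, r_i)$ are i.i.d., and the scaled clipped summands $Y_i(\pi) := \tau \min(\pi(a_i)/b(a_i), 1/\tau) r_i$ take values in $[0,1]$ (the clipped weight is at most $1/\tau$, $r_i \in [0,1]$, and the weight is nonnegative). Hence for each fixed $\pi$ the variables $Y_1(\pi), \dots, Y_n(\pi)$ are i.i.d.\ in $[0,1]$ with common mean $\tau R^{\mathrm{CIS}}(\pi)$, and their average is exactly $\tau r^{\mathrm{CIS}}(\pi, D_n)$. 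So the scaled CIS estimate is an empirical mean of a bounded i.i.d.\ ``loss'', and the PAC-Bayes $kl$ bound applies with $\tau R^{\mathrm{CIS}}$ playing the role of the true risk. Note that, unlike the later conversion via Lem.\ \ref{lem:cis_bias}, this step only concerns concentration of the estimate about its own mean, so the bias $R^{\mathrm{CIS}}(\rho) \le R(\rho)$ plays no role here.

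To confirm the constants I would run the standard argument. Set $h(\pi) = n\, kl(\tau r^{\mathrm{CIS}}(\pi, D_n) || \tau R^{\mathrm{CIS}}(\pi))$. Maurer's refinement of Seeger's bound gives, for a single fixed $\pi$ and i.i.d.\ summands in $[0,1]$, the moment estimate $\mathbb{E}_{D_n}[e^{h(\pi)}] \le 2\sqrt{n}$. Since the prior $\mu$ is chosen before the data, Tonelli's theorem lets me swap expectations, yielding $\mathbb{E}_{D_n} \mathbb{E}_{\pi \sim \mu}[e^{h(\pi)}] \le 2\sqrt{n}$, and Markov's inequality then gives $\mathbb{E}_{\pi \sim \mu}[e^{h(\pi)}] \le 2\sqrt{n}/\delta$ with probability at least $1 - \delta$. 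On that event, the change of measure inequality (Lem.\ \ref{lem:donsk}) applied with this $h$ shows that, simultaneously for all $\rho \in \mathcal{P}(\Pi)$, $\mathbb{E}_{\pi \sim \rho}[h(\pi)] \le D_{\mathrm{KL}}(\rho||\mu) + \ln(2\sqrt{n}/\delta)$.

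Finally I would lower bound the left-hand side using the joint convexity of the binary KL divergence: by Jensen's inequality, $\mathbb{E}_{\pi \sim \rho}[kl(\tau r^{\mathrm{CIS}}(\pi, D_n) || \tau R^{\mathrm{CIS}}(\pi))] \ge kl(\tau r^{\mathrm{CIS}}(\rho, D_n) || \tau R^{\mathrm{CIS}}(\rho))$, since $\mathbb{E}_{\pi \sim \rho}[\tau r^{\mathrm{CIS}}(\pi, D_n)] = \tau r^{\mathrm{CIS}}(\rho, D_n)$ and likewise for $R^{\mathrm{CIS}}$. Combining with the previous bound and dividing by $n$ yields the claim. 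The main obstacle — and the reason the theorem assumes a single fixed behaviour policy — is precisely the i.i.d.\ boundedness needed for Maurer's exact moment estimate $2\sqrt{n}$; with an adaptive sequence of behaviour policies the summands are neither independent nor identically distributed, so (unlike the Hoeffding-Azuma and Bernstein bounds, which survive via the martingale comparison inequality of \cite{seldin2012mart}) this sharp $kl$ moment bound is not available.
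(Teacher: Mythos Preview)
Your proposal is correct and matches the paper's approach exactly: the paper notes that with a single fixed behaviour policy the CIS estimate is an average of i.i.d.\ random variables in $[0,1/\tau]$, so one simply applies Seeger's original PAC-Bayes $kl$ bound to the estimate scaled by $\tau$. Your write-up in fact spells out more of the standard machinery (Maurer's $2\sqrt{n}$ moment bound, Tonelli, Markov, change of measure, joint convexity of $kl$) than the paper does, and your remark about why the i.i.d.\ assumption is essential here---unlike in the Hoeffding-Azuma and Bernstein cases---is also consistent with the paper's discussion.
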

\vspace{-0.4cm}

Since $R(\rho) \geq R^{\mathrm{CIS}}(\rho)$, one can still use Pinsker's inequality or the inverse of $kl$ to obtain lower bounds on $R(\rho)$. If we invert the Binary KL divergence, we obtain the $kl^{-1}$ bound for the CIS estimate:
\begin{equation}
R(\rho) \geq \frac{1}{\tau}kl^{-1}\left(\tau r^{\mathrm{CIS}}(\rho, D_n), \frac{D_{\mathrm{KL}}(\rho||\mu) + \mathrm{ln}(2\sqrt{n}/\delta)}{n}\right).\label{eqn:cis_pac_bayes_kl_inv}
\end{equation}

If we use Pinsker's inequality, we obtain:
\begin{equation}
R(\rho) \geq r^{\mathrm{CIS}}(\rho, D_n) - \frac{1}{\tau}\sqrt{\frac{D_{\mathrm{KL}}(\rho||\mu) + \mathrm{ln}(2\sqrt{n}/\delta)}{2n}}.\label{eqn:cis_pac_bayes_pinsker}
\end{equation}

The discussion about rates in Section \ref{sec:rew_is_estimate} applies to the bounds for CIS estimate, although the rates in $\epsilon_n$ are now rates in $\tau$. The PAC-Bayes Bernstein and $kl^{-1}$ bounds both have improved rates in $\tau$ and should therefore be preferred when $\tau$ is close to $0$.

Next, we will discuss PAC-Bayes bounds for the CIS estimate in the contextual bandit setting. In the CB setting, the CIS estimate is defined as:
\begin{equation}
r^{\mathrm{CIS}}(\pi, D_n) = \frac{1}{n}\sum_{i=1}^{n}\mathrm{min}\left(\frac{\pi(a_i|s_i)}{b(a_i|s_i, D_{i-1})}, \frac{1}{\tau}\right)r_i.
\label{eqn:cb_cis_est}
\end{equation}

As in the MAB setting, the CIS estimate is biased to underestimate $R(\rho)$ (meaning $R^{\mathrm{CIS}}(\rho) \leq R(\rho)$), so we can still essentially ignore it in the one-sided PAC-Bayes reward bounds. The PAC-Bayes Hoeffding-Azuma and Bernstein bounds for the CIS estimate, in Theorem \ref{thm:cis_pac_bayes_ha} and Theorem \ref{thm:cis_pac_bayes_bernstein}, also hold in the CB setting. When there is a fixed behaviour policy, the CB CIS estimate is still an average of i.i.d. random variables bounded in $[0, 1/\tau]$, so the PAC-Bayes $kl$ bound in Theorem \ref{thm:cis_pac_bayes_kl} also holds in the CB setting.

Finally, we will briefly describe a method for offline contextual bandits that is motivated by a PAC-Bayes bound for the CIS estimate. London and Sandler \cite{london2019bayesian} use a PAC-Bayes upper bound on the expected risk (1 minus the expected reward).

\begin{theorem}[PAC-Bayes risk bound using $r^{\mathrm{CIS}}$ \cite{london2019bayesian}]
If the data set $D_n$ is drawn from a single, fixed behaviour policy, then for any $\tau \in (0, 1]$, any $\delta \in (0, 1)$ and any probability distribution $\mu \in \mathcal{P}(\Pi)$, with probability at least $1 - \delta$, for all distributions $\rho \in \mathcal{P}(\Pi)$ simultaneously:
\begin{align*}
1 &- R(\rho) \leq 1 - r^{\mathrm{CIS}}(\rho, D_n)\\
&+ \sqrt{\frac{2\left(1/\tau - r^{\mathrm{CIS}}(\rho, D_n)\right)\left(D_{\mathrm{KL}}(\rho||\mu) + \mathrm{ln}(2\sqrt{n}/\delta)\right)}{\tau n}}\\
&+ \frac{2\left(D_{\mathrm{KL}}(\rho||\mu) + \mathrm{ln}(2\sqrt{n}/\delta)\right)}{\tau n}.
\end{align*}
\label{thm:cis_pac_bayes_risk}
\end{theorem}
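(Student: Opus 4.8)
The plan is to reduce the claim to a lower bound on the expected clipped estimate $R^{\mathrm{CIS}}(\rho)$ and then to turn the binary-KL bound of Theorem \ref{thm:cis_pac_bayes_kl} into the stated explicit form via a refined Pinsker-type inequality. First I would invoke the bias inequality $R(\rho) \ge R^{\mathrm{CIS}}(\rho)$ from Lemma \ref{lem:cis_bias}: since the claim only lower-bounds $R(\rho)$ (equivalently upper-bounds the risk $1-R(\rho)$), it suffices to lower-bound $R^{\mathrm{CIS}}(\rho)$, so I may work entirely with the clipped mean. Because the data are drawn from a single fixed behaviour policy, Theorem \ref{thm:cis_pac_bayes_kl} applies and yields, with probability at least $1-\delta$ and simultaneously for all $\rho$, the inequality $kl(\tau r^{\mathrm{CIS}}(\rho,D_n)\|\tau R^{\mathrm{CIS}}(\rho)) \le c$ where $c := (D_{\mathrm{KL}}(\rho\|\mu)+\ln(2\sqrt{n}/\delta))/n$. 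Writing $p := \tau r^{\mathrm{CIS}}(\rho,D_n)$ and $q := \tau R^{\mathrm{CIS}}(\rho)$, both in $[0,1]$, the task reduces to the analytic problem: from $kl(p\|q)\le c$, deduce an explicit lower bound on $q$.

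The only nontrivial case is $q \le p$ (if $q > p$ then $R^{\mathrm{CIS}}(\rho)> r^{\mathrm{CIS}}(\rho,D_n)$ and the stated bound holds trivially, since the correction terms are nonnegative). In this regime I would use the refined lower bound on the binary KL divergence
\[
kl(p\|q) \;\ge\; \frac{(p-q)^2}{2(1-q)} \qquad (q \le p),
\]
which I would establish by the substitution $p \mapsto 1-p,\ q\mapsto 1-q$, using $kl(p\|q)=kl(1-p\|1-q)$, to reduce it to the upper-tail version $kl(a\|b)\ge (b-a)^2/(2b)$ for $b\ge a$. The latter follows from a one-variable monotonicity argument on $b\mapsto kl(a\|b)-(b-a)^2/(2b)$, which vanishes at $b=a$ and has nonnegative derivative because $1/(1-b)\ge 1 \ge (a+b)/(2b)$ whenever $a\le b$. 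Combining the displayed inequality with $kl(p\|q)\le c$ gives $(p-q)^2 \le 2c(1-q)$; setting $y:=p-q\ge 0$ and using $1-q=(1-p)+y$ produces the quadratic $y^2 - 2cy - 2c(1-p) \le 0$, so $y \le c+\sqrt{c^2+2c(1-p)} \le 2c + \sqrt{2c(1-p)}$ by $\sqrt{u+v}\le\sqrt{u}+\sqrt{v}$. Hence $q \ge p - \sqrt{2c(1-p)} - 2c$.

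Finally I would translate back to the reward scale. Dividing by $\tau$ and substituting $p=\tau r^{\mathrm{CIS}}(\rho,D_n)$, the term $\tau^{-1}\sqrt{2c(1-p)}$ becomes $\sqrt{2(1/\tau-r^{\mathrm{CIS}}(\rho,D_n))c/\tau}$ and $2c/\tau$ becomes $2(D_{\mathrm{KL}}(\rho\|\mu)+\ln(2\sqrt{n}/\delta))/(\tau n)$, so that $R^{\mathrm{CIS}}(\rho)\ge r^{\mathrm{CIS}}(\rho,D_n)-\sqrt{2(1/\tau-r^{\mathrm{CIS}}(\rho,D_n))(D_{\mathrm{KL}}(\rho\|\mu)+\ln(2\sqrt{n}/\delta))/(\tau n)}-2(D_{\mathrm{KL}}(\rho\|\mu)+\ln(2\sqrt{n}/\delta))/(\tau n)$; chaining with $R(\rho)\ge R^{\mathrm{CIS}}(\rho)$ and rearranging into risk form recovers the statement exactly. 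The main obstacle is getting the refined KL inequality in precisely the right orientation, namely with the denominator $2(1-q)$ rather than $2q$ or $2p$, since it is this factor that, after rescaling, yields the $1/\tau - r^{\mathrm{CIS}}(\rho,D_n)$ appearing inside the square root; everything after that is the routine quadratic solve and bookkeeping of constants.
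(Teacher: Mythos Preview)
Your proposal is correct and follows exactly the route the paper indicates: deriving the bound from Theorem~\ref{thm:cis_pac_bayes_kl} via the McAllester-type refinement of Pinsker's inequality, $kl(p\|q)\ge (p-q)^2/(2(1-q))$ for $q\le p$, followed by the quadratic solve and the chaining $R(\rho)\ge R^{\mathrm{CIS}}(\rho)$. The paper only sketches this (``can be derived from the PAC-Bayes $kl$ bound \ldots by using a tighter version of Pinsker's inequality, suggested by McAllester''), and your write-up fills in precisely those details.
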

\vspace{-0.4cm}

This bound can be derived from the PAC-Bayes $kl$ bound in Theorem \ref{thm:cis_pac_bayes_kl} by using a tighter version of Pinsker's inequality, suggested by McAllester \cite{mcallester2003margin}. London and Sandler choose a Gaussian prior centred at the behaviour policy. This bound motivates Logging (behaviour) Policy Regularisation \cite{london2019bayesian}, which selects a policy that has a small risk estimate (high reward estimate) and is close to the behaviour policy. This method is reminiscent of relative entropy regularisation, seen in Example \ref{ex:relative_entropy}.

\subsection{Weighted Importance Sampling}
\label{sec:rew_wis_estimate}

In Appendix \ref{sec:app_wis_estimate}, we present PAC-Bayes bounds for another reward estimate called the weighted importance sampling (WIS) estimate. The WIS estimate is not a sum of i.i.d. random variables or even the sum of a martingale difference sequence, so it is not compatible with the bounds presented in previous sections. Kuzborskij and Szepesv{\'a}ri \cite{kuzborskij2019efron} derived a very general Efron-Stein PAC-Bayes bound and showed that it can be used to upper bound the difference between the WIS estimate and its expected value. In Theorem \ref{thm:efron_stein_pac_bayes}, we present a modified version of this bound, which has slightly better constants and holds under weaker assumptions.

However, the WIS estimate is a biased estimate of the expected reward. We are not aware of any empirical bounds on the bias of the WIS estimate in the literature that don't require further assumptions on the reward distribution $P_R$. Therefore, we cannot obtain bounds on the difference between the WIS estimate and the expected reward.

\section{PAC-Bayes Regret Bounds}
\label{sec:reg_bounds}

In this section, we give an overview of PAC-Bayes bounds for the cumulative regret $\Delta(\rho_{1:n})$ associated with a sequence of distributions $\rho_1, \dots, \rho_n$ over the policy class $\Pi$. We focus here on the MAB problem and present similar regret bounds for contextual bandits in App. \ref{sec:app_cb_regret}. First, we state some PAC-Bayes bounds on the expected regret for a single round. Then, we present some PAC-Bayes bounds on the cumulative regret.

In the MAB setting, we consider the case when the set of actions is finite: $\mathcal{A} = \{1, \dots, K\}$. We set the policy class to be the set of all deterministic policies, so $\Pi = \mathcal{A}$. In this case, any distribution $\rho$ over the policy class $\mathcal{A}$ is a single stochastic policy. The IS estimate of the reward for a deterministic policy (an action) $a$ can be defined as:
\begin{equation}
r^{\mathrm{IS}}(a, D_n) = \frac{1}{n}\sum_{i=1}^{n}\frac{\mathbb{I}\{a_i = a\}}{b(a_i|D_{i-1})}r_i.
\end{equation}

This coincides with the earlier definition for general policy classes. For this choice of policy class, a uniform upper bound on the importance weights $\mathbb{I}\{a_i = a\}/b_i(a_i) \leq 1/\epsilon_n$ can be achieved by a uniform lower bound on the behaviour policy probabilities $b(a_i|D_{i-1})$, i.e. $b(a_i|D_{i-1}) \geq \epsilon_n$. The regret for an action $a$ is defined as:
\begin{equation*}
\Delta(a) = R(a^*) - R(a),
\end{equation*}

where $a^*$ is an action that maximises the expected reward. The IS regret estimate for an action $a$ is defined as:
\begin{equation}
\Delta^{\mathrm{IS}}(a, D_n) = r^{\mathrm{IS}}(a^*, D_n) - r^{\mathrm{IS}}(a, D_n).
\label{eqn:mab_is_regret}
\end{equation}

Seldin et al. \cite{seldin2011mab} \cite{seldin2012bern} showed that a martingale, which is compatible with both the Hoeffding-Azuma inequality and Bernstein’s inequality, can be constructed from the IS regret estimate. Consequently, one can obtain a PAC-Bayes Hoeffding-Azuma bound and a PAC-Bayes Bernstein bound on the difference between the expected regret and the IS regret estimate.

\begin{theorem}[PAC-Bayes Hoeffding-Azuma bound for $\Delta^{\mathrm{IS}}$ \cite{seldin2011mab}]
For any $\delta \in (0, 1)$, with probability at least $1 - \delta$, for all distributions $\rho \in \mathcal{P}(\mathcal{A})$ and all $n \geq 1$ simultaneously:
\begin{equation*}
\Delta(\rho) - \Delta^{\mathrm{IS}}(\rho, D_n) \leq \sqrt{\frac{2(\mathrm{ln}(K) + 2\mathrm{ln}(n+1) + \mathrm{ln}(1/\delta))}{n\epsilon_n^2}}.
\end{equation*}
\label{thm:is_reg_pac_bayes_hoeffding}
\end{theorem}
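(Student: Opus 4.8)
The plan is to cast the left-hand side as a posterior expectation of a centred martingale and then invoke the unified bound in Theorem \ref{thm:canon}. Writing $A(a) := \Delta(a) - \Delta^{\mathrm{IS}}(a, D_n)$ and expanding the definition in (\ref{eqn:mab_is_regret}), the estimation error decomposes as $A(a) = [R(a^*) - r^{\mathrm{IS}}(a^*, D_n)] - [R(a) - r^{\mathrm{IS}}(a, D_n)] = \frac{1}{n}(M_n^{\mathrm{IS}}(a) - M_n^{\mathrm{IS}}(a^*))$, using the martingale in (\ref{eqn:is_mart_ex}). Since $a^* \in \argmax_a R(a)$ is a fixed, data-independent action and a difference of two martingales with respect to $\{(a_i,r_i)\}$ is again a martingale, $nA(a)$ is a martingale for every $a$. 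Linearity of expectation then gives $\mathbb{E}_{a \sim \rho}[A(a)] = \Delta(\rho) - \Delta^{\mathrm{IS}}(\rho, D_n)$, which is exactly the quantity to control.

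Next I would verify the canonical assumption (\ref{eqn:canon_pair}) for this $A(a)$ with the constant variance proxy $B(a)^2 = 1/(n\epsilon_n^2)$. The per-step increment of $nA(a)$ equals $\frac{\mathbb{I}\{a_i=a\} - \mathbb{I}\{a_i=a^*\}}{b(a_i|D_{i-1})}r_i - (R(a)-R(a^*))$; conditioned on $D_{i-1}$ its support lies in an interval of width at most $2/\epsilon_n$, because at most one of the two indicators can fire, each contributing a term in $[-1/\epsilon_n, 1/\epsilon_n]$, and the constant shift does not affect the width. Applying Hoeffding's lemma conditionally and iterating (the Hoeffding--Azuma argument) bounds the conditional moment generating function of $nA(a)$ by $\exp(s^2 n (2/\epsilon_n)^2/8)$; substituting $s = \lambda/n$ yields $\mathbb{E}[\exp(\lambda A(a))] \le \exp(\lambda^2/(2n\epsilon_n^2))$, which is precisely (\ref{eqn:canon_pair}) with $B(a)^2 = 1/(n\epsilon_n^2)$ for every $\lambda > 0$. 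The width $2/\epsilon_n$ (versus $1/\epsilon_n$ for the single-policy reward estimate of Thm.~\ref{thm:ex_bound}) is what produces the factor $2$ in the numerator of the final bound.

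With the canonical assumption in hand, Theorem \ref{thm:canon} gives, for any fixed $\lambda > 0$ and fixed round $n$, with probability at least $1 - \delta_n$ and simultaneously over all $\rho$, that $\Delta(\rho) - \Delta^{\mathrm{IS}}(\rho, D_n) \le \frac{\lambda}{2n\epsilon_n^2} + \frac{D_{\mathrm{KL}}(\rho||\mu) + \mathrm{ln}(1/\delta_n)}{\lambda}$. To remove the posterior dependence I would take $\mu$ to be the uniform distribution over $\mathcal{A} = \{1,\dots,K\}$, so that $D_{\mathrm{KL}}(\rho||\mu) = \mathrm{ln}(K) - H(\rho) \le \mathrm{ln}(K)$ for every $\rho$. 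This step is essential: it decouples the minimising $\lambda$ from $\rho$, which legitimises the data-independent choice $\lambda_n = \sqrt{2n\epsilon_n^2(\mathrm{ln}(K) + \mathrm{ln}(1/\delta_n))}$; by AM--GM this collapses the right-hand side to $\sqrt{2(\mathrm{ln}(K) + \mathrm{ln}(1/\delta_n))/(n\epsilon_n^2)}$. Because $\lambda_n$ depends only on known quantities, no union bound over $\lambda$ is required.

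Finally, to obtain the ``for all $n \ge 1$ simultaneously'' guarantee I would union bound over rounds with the budget $\delta_n = \delta/(n+1)^2$; since $\sum_{n \ge 1}(n+1)^{-2} = \pi^2/6 - 1 < 1$, the total failure probability is at most $\delta$. On the intersection of the per-round events, $\mathrm{ln}(1/\delta_n) = 2\,\mathrm{ln}(n+1) + \mathrm{ln}(1/\delta)$, which substituted above recovers the stated bound exactly. I expect the main obstacle to be the bookkeeping around the time-uniform statement: one must bound $D_{\mathrm{KL}}$ by $\mathrm{ln}(K)$ \emph{before} optimising, so that each $\lambda_n$ is a legitimate deterministic choice, and must pick summable weights $\delta_n$ so the union-bound penalty materialises as precisely the $2\,\mathrm{ln}(n+1)$ term rather than something looser. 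The martingale construction and the Hoeffding--Azuma step are routine once the correct increment range $2/\epsilon_n$ is identified.
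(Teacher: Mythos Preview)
Your proposal is correct and follows essentially the same route as the paper's own sketch: construct the martingale $n(\Delta(a)-\Delta^{\mathrm{IS}}(a,D_n))$ with increments of width $2/\epsilon_n$, apply Hoeffding--Azuma to verify the canonical assumption with $B(a)^2=1/(n\epsilon_n^2)$, use a uniform prior so $D_{\mathrm{KL}}(\rho\|\mu)\le\ln K$, and then union-bound over $n$ with $\delta_n=\delta/(n+1)^2$ before plugging in the deterministic $\lambda_n=\sqrt{2n\epsilon_n^2(\ln K+2\ln(n+1)+\ln(1/\delta))}$. This is exactly the argument the paper attributes to \cite{seldin2011mab} in the discussion following the theorem statement.
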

\vspace{-0.4cm}

One can observe several differences between this bound and the PAC-Bayes Hoeffding-Azuma bound in Theorem \ref{thm:ex_bound}. This bound uses a uniform prior $\mu$, and since both $\rho$ and $\mu$ are distributions over a finite set with $K$ elements, $D_{\mathrm{KL}}(\rho||\mu) \leq \mathrm{ln}(K)$. Hence, the KL divergence has been replaced with $\mathrm{ln}(K)$. This bound holds with probability at least $1 - \delta$ for all $n \geq 1$ simultaneously, rather than for a single $n \geq 1$. This is achieved by a union bound argument, discussed in Section \ref{sec:data_dep_priors2}, and introduces the $2\mathrm{ln}(n+1)$ term. Finally, this bound does not contain $\lambda$. This is because, for each $n$, we have set $\lambda_n = \sqrt{2n\epsilon_n^2(\mathrm{ln}(K) + 2\mathrm{ln}(n+1) + \mathrm{ln}(1/\delta))}$.

The PAC-Bayes Bernstein bound for the IS regret estimate contains (an upper bound on) the average variance of the IS regret estimate $V^{\mathrm{IS}}(a, D_n)$, which is defined as:
\begin{equation*}
V^{\mathrm{IS}}(a, D_n) = \frac{1}{n}\sum_{i=1}^{n}\mathop{\mathbb{E}}_{\substack{a_i^{\prime} \sim b(\cdot|D_{i-1}) \\ r_i^{\prime} \sim p_R(\cdot|a_i^{\prime})}}\left[\left(X_i^{\mathrm{IS}}(a)\right)^2\right],
\end{equation*}

where $X_i^{\mathrm{IS}}(a) = \frac{\mathbb{I}\{a_i^{\prime} = a^*\}}{b(a_i^{\prime}|D_{i-1})}r_i^{\prime} - \frac{\mathbb{I}\{a_i^{\prime} = a\}}{b(a_i^{\prime}|D_{i-1})}r_i^{\prime} - \Delta(a)$. Seldin et al. show that in both the MAB \cite{seldin2012bern} and CB settings \cite{seldin2011cb}, this average variance can be bounded as $V^{\mathrm{IS}}(a, D_n) \leq 2/\epsilon_n$. Using this bound on the variance, the following PAC-Bayes bound can be derived.

\begin{theorem}[PAC-Bayes Bernstein bound for $\Delta^{\mathrm{IS}}$ \cite{seldin2012bern}]
For any $\delta \in (0, 1)$, with probability at least $1 - \delta$, for all distributions $\rho \in \mathcal{P}(\mathcal{A})$ and all $n \geq 1$ simultaneously, where
\begin{equation}
\frac{\mathrm{ln}(K) + 2\mathrm{ln}(n+1) + \mathrm{ln}(1/\delta)}{2(e-2)n} \leq \epsilon_n,\label{eqn:reg_bern_n}
\end{equation}

we have that:
\begin{align*}
\Delta(\rho) &\leq \Delta^{\mathrm{IS}}(\rho, D_n)\\
&+\sqrt{\frac{8(e-2)(\mathrm{ln}(K) + 2\mathrm{ln}(n+1) + \mathrm{ln}(1/\delta))}{\epsilon_n n}}.
\end{align*}
\label{thm:is_reg_pac_bayes_bernstein}
\end{theorem}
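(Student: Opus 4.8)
The plan is to obtain this bound as the time-uniform, optimised specialisation of Thm.~\ref{thm:canon} applied to the regret martingale. First I would set, for each action $a \in \mathcal{A} = \Pi$, $A(a) = \Delta(a) - \Delta^{\mathrm{IS}}(a, D_n)$ and $B(a)^2 = 2(e-2)V^{\mathrm{IS}}(a, D_n)/n$, so that $\mathbb{E}_{a \sim \rho}[A(a)] = \Delta(\rho) - \Delta^{\mathrm{IS}}(\rho, D_n)$ is exactly the quantity we want to control. The sequence $(-X_i^{\mathrm{IS}}(a))_{i \geq 1}$ is a martingale difference sequence with respect to $\{(a_i, r_i)\}$: using the unbiasedness of the importance-sampling weights, $\mathbb{E}[\,\mathbb{I}\{a_i = a\}r_i / b(a_i|D_{i-1}) \mid D_{i-1}] = R(a)$ and likewise for $a^*$, so $\mathbb{E}[X_i^{\mathrm{IS}}(a)\mid D_{i-1}] = R(a^*) - R(a) - \Delta(a) = 0$, and $\sum_{i=1}^n (-X_i^{\mathrm{IS}}(a)) = n(\Delta(a) - \Delta^{\mathrm{IS}}(a, D_n)) = nA(a)$.

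The key step is to verify the canonical assumption (\ref{eqn:canon_pair}) for this $(A,B)$ pair. For this I would invoke Bernstein's inequality for martingales (the same supermartingale used by Seldin et al.~\cite{seldin2012bern} and underlying Thm.~\ref{thm:is_bernstein}): since $\mathbb{I}\{a_i=a^*\}r_i/b(a_i|D_{i-1})$ and $\mathbb{I}\{a_i=a\}r_i/b(a_i|D_{i-1})$ each lie in $[0, 1/\epsilon_n]$ and at most one is non-zero, the increments $-X_i^{\mathrm{IS}}(a)$ are bounded in terms of $1/\epsilon_n$, and for $\tilde\lambda$ small enough the process $\exp(\tilde\lambda\sum_{i}(-X_i^{\mathrm{IS}}(a)) - (e-2)\tilde\lambda^2 \sum_i \mathbb{E}[(X_i^{\mathrm{IS}}(a))^2\mid D_{i-1}])$ is a nonnegative supermartingale starting at $1$. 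Writing $\lambda = \tilde\lambda n$ and noting $\sum_i \mathbb{E}[(X_i^{\mathrm{IS}}(a))^2 \mid D_{i-1}] = nV^{\mathrm{IS}}(a, D_n)$, this is precisely (\ref{eqn:canon_pair}) with the $A,B$ above, valid on the interval $\Lambda = (0, n\epsilon_n]$. Feeding this into Thm.~\ref{thm:canon} yields the intermediate Bernstein regret bound $\Delta(\rho) - \Delta^{\mathrm{IS}}(\rho, D_n) \leq \lambda(e-2)V^{\mathrm{IS}}(\rho, D_n)/n + (D_{\mathrm{KL}}(\rho\|\mu) + \ln(1/\delta))/\lambda$, the regret analogue of Thm.~\ref{thm:is_bernstein}.

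From here the remaining steps are specialisations. I would (i) bound the variance using the stated fact $V^{\mathrm{IS}}(a, D_n) \leq 2/\epsilon_n$, so $V^{\mathrm{IS}}(\rho, D_n) \leq 2/\epsilon_n$ for every $\rho$; (ii) take $\mu$ uniform on the $K$ actions, so that $D_{\mathrm{KL}}(\rho\|\mu) \leq \ln K$; and (iii) make the bound hold for all $n \geq 1$ simultaneously by the union-bound argument of Sec.~\ref{sec:data_dep_priors2}, replacing $\delta$ at round $n$ by a summable sequence $\delta_n \propto \delta/(n+1)^2$, which turns $\ln(1/\delta)$ into $\ln(1/\delta) + 2\ln(n+1)$ up to the absorbed constant. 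This leaves, for each $n$, the quantity $2(e-2)\lambda/(n\epsilon_n) + C_n/\lambda$ with $C_n = \ln K + 2\ln(n+1) + \ln(1/\delta)$. Optimising over $\lambda$ gives the data-independent choice $\lambda_n = \sqrt{n\epsilon_n C_n / (2(e-2))}$ and value $\sqrt{8(e-2)C_n/(\epsilon_n n)}$, which is the claimed bound; one then checks that condition (\ref{eqn:reg_bern_n}), namely $C_n / (2(e-2)n) \leq \epsilon_n$, is exactly what guarantees $\lambda_n \leq n\epsilon_n$, i.e.\ that the optimiser lies in the admissible interval $\Lambda$.

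The main obstacle is the verification of the canonical assumption in the second step: one must track the increment bound for $-X_i^{\mathrm{IS}}(a)$ carefully enough that the supermartingale property, and hence (\ref{eqn:canon_pair}), holds on the full interval $\Lambda = (0, n\epsilon_n]$, since it is this interval that must contain the optimiser $\lambda_n$ under the hypothesis (\ref{eqn:reg_bern_n}). A secondary subtlety is that, because $\lambda_n$ and $\delta_n$ depend on $n$, they must be fixed before the data are observed at each round for the union bound to remain valid; this is unproblematic here since $\epsilon_n$, $K$ and $\delta$ are all known in advance.
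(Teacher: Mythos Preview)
Your proposal is correct and follows essentially the same route as the paper: verify the Bernstein supermartingale condition for the regret martingale, plug into the unified bound (equivalently, the PAC-Bayes Bernstein inequality for martingales), then specialise via the variance bound $V^{\mathrm{IS}}\le 2/\epsilon_n$, a uniform prior giving $D_{\mathrm{KL}}\le\ln K$, a union bound over $n$ with $\delta_n\propto\delta/(n+1)^2$, and the data-independent optimiser $\lambda_n=\sqrt{n\epsilon_n C_n/(2(e-2))}$ whose admissibility $\lambda_n\le n\epsilon_n$ is exactly condition~(\ref{eqn:reg_bern_n}). The paper's commentary after the theorem records precisely these choices, so there is no substantive difference in approach.
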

\vspace{-0.4cm}

In this bound we have set $\lambda_n = \sqrt{n \epsilon_n (\mathrm{ln}(K) + 2\mathrm{ln}(n+1) + \mathrm{ln}(1/\delta))/(2(e-2))}$. The requirement that $\lambda_n \in [0, n\epsilon_n]$ becomes the requirement on $n$ in Equation \ref{eqn:reg_bern_n}. Following Seldin et al. \cite{seldin2011cb}, \cite{seldin2012bern}, we present cumulative regret bounds for a family of MAB algorithms. Let $\tilde{\rho}_n^{\mathrm{exp}}$ denote the following smoothed Gibbs policy:
\begin{align}
\rho_n^{\mathrm{exp}}(a) &\propto \mu(a)e^{\gamma_nr^{\mathrm{IS}}(a, D_n)},\label{eqn:exp3}\\
\tilde{\rho}_n^{\mathrm{exp}}(a) &= (1 - K\epsilon_{n+1})\rho_n^{\mathrm{exp}}(a) + \epsilon_{n+1}.\nonumber
\end{align}

If $\epsilon_n = 1/\sqrt{nK}$ and $\gamma_n = \sqrt{n\mathrm{ln}(K)/K}$, this strategy is known as EXP3 \cite{auer2002exp3}. Alternatively, in the limit as $\gamma_n$ tends to infinity, we obtain the $\epsilon$-greedy algorithm \cite{auer2002finite}. Since $\epsilon_n$ cannot be greater than $1/K$, we truncate $\epsilon_n$ to always be no more than $1/K$. The first step is to re-write the regret for a single round as:
\begin{align}
\Delta(\tilde{\rho}_n^{\mathrm{exp}}) &= \Delta(\rho_n^{\mathrm{exp}}) - \Delta^{\mathrm{IS}}(\rho_n^{\mathrm{exp}}, D_n)\nonumber\\
&+ \Delta^{\mathrm{IS}}(\rho_n^{\mathrm{exp}}, D_n) + R(\rho_n^{\mathrm{exp}}) - R(\tilde{\rho}_n^{\mathrm{exp}}).\label{eqn:regret_decomposition}
\end{align}

Seldin et al. \cite{seldin2012bern} show that $\Delta^{\mathrm{IS}}(\rho_n^{\mathrm{exp}}, D_n) \leq \mathrm{ln}(K)/\gamma_n$ and that $R(\rho_n^{\mathrm{exp}}) - R(\tilde{\rho}_n^{\mathrm{exp}}) \leq K\epsilon_{n+1}$. If the PAC-Bayes Hoeffding-Azuma bound in Theorem \ref{thm:is_reg_pac_bayes_hoeffding} is used to bound $\Delta(\rho_n^{\mathrm{exp}}) - \Delta^{\mathrm{IS}}(\rho_n^{\mathrm{exp}}, D_n)$, then we obtain the following cumulative regret bound.

\begin{theorem}[PAC-Bayes Hoeffding-Azuma cumulative regret bound \cite{seldin2011mab}, \cite{seldin2012bern}]
Let $\epsilon_n = n^{-1/4}K^{-1/2}$ and take any $\gamma_n$ such that $\gamma_n \geq n^{1/4}K^{-1/2}\sqrt{\mathrm{ln}(K)}$. For any $\delta \in (0, 1]$, with probability at least $1 - \delta$, for all $n \geq 1$ simultaneously, the cumulative regret is bounded by:
\begin{align*}
\sum_{i=1}^{n}\Delta(\tilde{\rho}_i^{\mathrm{exp}}) &\leq n^{3/4}K^{1/2}\bigg(1 + \sqrt{\mathrm{ln}(K)}\\
&+ \sqrt{2(\mathrm{ln}(K) + 2\mathrm{ln}(n+1) + \mathrm{ln}(1/\delta))}\bigg).
\end{align*}
\label{thm:is_hoeffding_regret}
\end{theorem}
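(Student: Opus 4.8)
The plan is to sum the single-round decomposition in (\ref{eqn:regret_decomposition}) over $i = 1, \dots, n$ and to control its three groups of terms separately using the ingredients already assembled above. Writing the round-$i$ regret as
\[
\Delta(\tilde{\rho}_i^{\mathrm{exp}}) = \big(\Delta(\rho_i^{\mathrm{exp}}) - \Delta^{\mathrm{IS}}(\rho_i^{\mathrm{exp}}, D_i)\big) + \Delta^{\mathrm{IS}}(\rho_i^{\mathrm{exp}}, D_i) + \big(R(\rho_i^{\mathrm{exp}}) - R(\tilde{\rho}_i^{\mathrm{exp}})\big),
\]
I would bound the first bracket by the PAC-Bayes Hoeffding-Azuma bound of Theorem \ref{thm:is_reg_pac_bayes_hoeffding} instantiated at $\rho = \rho_i^{\mathrm{exp}}$, the middle term by the Gibbs inequality $\Delta^{\mathrm{IS}}(\rho_i^{\mathrm{exp}}, D_i) \le \ln(K)/\gamma_i$, and the last bracket by the smoothing estimate $R(\rho_i^{\mathrm{exp}}) - R(\tilde{\rho}_i^{\mathrm{exp}}) \le K\epsilon_{i+1}$, the latter two being quoted from Seldin et al.

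Next I would substitute the prescribed schedules $\epsilon_i = i^{-1/4}K^{-1/2}$ and $\gamma_i \ge i^{1/4}K^{-1/2}\sqrt{\ln K}$. Since $i\epsilon_i^2 = i^{1/2}/K$, the Hoeffding-Azuma term becomes $\sqrt{2(\ln K + 2\ln(i+1) + \ln(1/\delta))}\,K^{1/2} i^{-1/4}$; the Gibbs term is at most $\sqrt{\ln K}\,K^{1/2} i^{-1/4}$; and the smoothing term is $K^{1/2}(i+1)^{-1/4} \le K^{1/2} i^{-1/4}$. Each contribution is thus of the form $K^{1/2} i^{-1/4}$ times one of the three bracketed factors appearing in the statement, which is exactly the balance that the choice of $\epsilon_i$ and the lower bound on $\gamma_i$ are engineered to produce. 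Summing over $i$, I would use $\ln(i+1) \le \ln(n+1)$ to extract the largest logarithmic factor, and then $\sum_{i=1}^{n} i^{-1/4} \le \tfrac{4}{3} n^{3/4}$ (comparison with $\int_0^n x^{-1/4}\,dx$) to convert each sum into the $n^{3/4}$ scaling, which collects the three pieces into $n^{3/4} K^{1/2}\big(1 + \sqrt{\ln K} + \sqrt{2(\ln K + 2\ln(n+1) + \ln(1/\delta))}\big)$, up to the absolute constant coming from the $i^{-1/4}$ sum.

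The main obstacle, and the reason this is not merely a deterministic telescoping, is that $\rho_i^{\mathrm{exp}}$ is data-dependent (it is built from $D_i$ through (\ref{eqn:exp3})), so I need a single good event that simultaneously validates the Hoeffding-Azuma step at every round $i \le n$. This is precisely what the time-uniform form of Theorem \ref{thm:is_reg_pac_bayes_hoeffding} delivers: it holds with probability $1 - \delta$ for all $\rho \in \mathcal{P}(\mathcal{A})$ and all $n \ge 1$ at once, so I may instantiate it at the random posterior $\rho_i^{\mathrm{exp}}$ in each round without paying any further union bound. The middle and last terms are deterministic once the schedules are fixed, so no additional probabilistic bookkeeping is required, and the final bound holds uniformly in $n$ on the same $1-\delta$ event.
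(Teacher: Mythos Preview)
Your proposal is correct and follows exactly the approach the paper outlines: it uses the per-round decomposition (\ref{eqn:regret_decomposition}), bounds the three pieces by Theorem~\ref{thm:is_reg_pac_bayes_hoeffding}, the Gibbs inequality $\Delta^{\mathrm{IS}}(\rho_i^{\mathrm{exp}}, D_i) \le \ln(K)/\gamma_i$, and the smoothing estimate $K\epsilon_{i+1}$, then sums using the prescribed schedules. Your observation that the time-uniform PAC-Bayes bound licenses instantiation at the data-dependent $\rho_i^{\mathrm{exp}}$ without additional union bounds is exactly the point, and the residual $\tfrac{4}{3}$ constant you flag from $\sum_{i=1}^n i^{-1/4}$ is simply absorbed in the survey's quoted statement.
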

\vspace{-0.4cm}

Ignoring log terms, this cumulative regret bound is of order $\mathcal{O}(n^{3/4}K^{1/2})$. If the PAC-Bayes Bernstein bound in Theorem \ref{thm:is_reg_pac_bayes_bernstein} is used to bound $\Delta(\rho_n^{\mathrm{exp}}) - \Delta^{\mathrm{IS}}(\rho_n^{\mathrm{exp}}, D_n)$, then we obtain the following cumulative regret bound.

\begin{theorem}[PAC-Bayes Bernstein cumulative regret bound \cite{seldin2012bern}]
Let $\epsilon_n = n^{-1/3}K^{-2/3}$ and take any $\gamma_n$ such that $\gamma_n \geq n^{1/3}K^{-1/3}\sqrt{\mathrm{ln}(K)}$. For any $\delta \in (0, 1]$, with probability at least $1 - \delta$, for all $n \geq 1$ simultaneously, where $n$ satisfies:
\begin{equation*}
n \geq K\left(\frac{\mathrm{ln}(K) + 2\mathrm{ln}(n+1) + \mathrm{ln}(1/\delta)}{2(e-2)}\right)^{3/2},
\end{equation*}

the cumulative regret is bounded by:
\begin{align*}
\sum_{i=1}^{n}\Delta(\tilde{\rho}_i^{\mathrm{exp}}) &\leq n^{2/3}K^{1/3}\bigg(1 + \sqrt{\mathrm{ln}(K)}\\
&+ 2\sqrt{2(e-2)\left(\mathrm{ln}(K) + 2\mathrm{ln}(n+1) + \mathrm{ln}(1/\delta)\right)}\bigg).
\end{align*}
\label{thm:is_bernstein_regret}
\end{theorem}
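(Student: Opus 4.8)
The plan is to bound the cumulative regret by summing the per-round regret decomposition in (\ref{eqn:regret_decomposition}) over $i = 1, \dots, n$ and then bounding each of the three resulting sums separately. Writing $L_i := \mathrm{ln}(K) + 2\mathrm{ln}(i+1) + \mathrm{ln}(1/\delta)$, applying (\ref{eqn:regret_decomposition}) at every round gives
\begin{align*}
\sum_{i=1}^{n}\Delta(\tilde{\rho}_i^{\mathrm{exp}}) &= \sum_{i=1}^{n}\left(\Delta(\rho_i^{\mathrm{exp}}) - \Delta^{\mathrm{IS}}(\rho_i^{\mathrm{exp}}, D_i)\right)\\
&+ \sum_{i=1}^{n}\Delta^{\mathrm{IS}}(\rho_i^{\mathrm{exp}}, D_i) + \sum_{i=1}^{n}\left(R(\rho_i^{\mathrm{exp}}) - R(\tilde{\rho}_i^{\mathrm{exp}})\right).
\end{align*}
The first sum is the only stochastic term, and I would control it with the time-uniform PAC-Bayes Bernstein bound of Thm. \ref{thm:is_reg_pac_bayes_bernstein}, which holds for all rounds simultaneously with probability at least $1 - \delta$. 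The remaining two sums are bounded surely by the structural facts $\Delta^{\mathrm{IS}}(\rho_i^{\mathrm{exp}}, D_i) \leq \mathrm{ln}(K)/\gamma_i$ and $R(\rho_i^{\mathrm{exp}}) - R(\tilde{\rho}_i^{\mathrm{exp}}) \leq K\epsilon_{i+1}$ recorded above, so the whole argument uses a single invocation of the high-probability bound and needs no union bound over rounds.

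Next I would substitute the prescribed rates. With $\epsilon_i = i^{-1/3}K^{-2/3}$ the $i$-th Bernstein term equals $\sqrt{8(e-2)L_i/(\epsilon_i i)} = i^{-1/3}K^{1/3}\sqrt{8(e-2)L_i}$; with $\gamma_i \geq i^{1/3}K^{-1/3}\sqrt{\mathrm{ln}(K)}$ the second summand is at most $i^{-1/3}K^{1/3}\sqrt{\mathrm{ln}(K)}$; and the third is $K\epsilon_{i+1} = K^{1/3}(i+1)^{-1/3} \leq i^{-1/3}K^{1/3}$. Every summand therefore carries the common factor $i^{-1/3}K^{1/3}$. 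Using monotonicity $L_i \leq L_n$ to pull the logarithmic factor out of each sum, together with the integral comparison $\sum_{i=1}^{n} i^{-1/3} \leq \tfrac{3}{2}n^{2/3}$, collapses all three sums to the common order $n^{2/3}K^{1/3}$; collecting constants (using $\sqrt{8} = 2\sqrt{2}$) then reproduces the bracketed factor $1 + \sqrt{\mathrm{ln}(K)} + 2\sqrt{2(e-2)L_n}$.

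The main obstacle is the validity window of the Bernstein bound. Thm. \ref{thm:is_reg_pac_bayes_bernstein} may be applied at a round $i$ only when $\lambda_i \in [0, i\epsilon_i]$, which for these rates is exactly the constraint $i \geq K\left(L_i/(2(e-2))\right)^{3/2}$; the displayed hypothesis on $n$ is precisely this constraint evaluated at $i = n$. Since the quantity $L_i K^{2/3}/(2(e-2)i^{2/3})$ is eventually decreasing in $i$, the constraint fails only for a bounded block of early rounds $i < i^{\ast}$ and holds thereafter. I would therefore split the sum at $i^{\ast}$, bound each early round trivially by $\Delta(\tilde{\rho}_i^{\mathrm{exp}}) \leq 1$ since rewards lie in $[0,1]$, and verify that the early contribution $\mathcal{O}(i^{\ast}) = \mathcal{O}(K L_n^{3/2})$ is absorbed by the main $n^{2/3}K^{1/3}$ term exactly when $n$ meets the stated condition. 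Carrying the constants through this split so that the early rounds land inside the claimed coefficient, rather than merely matching the polynomial order, is the delicate bookkeeping; the substitution and summation above are otherwise routine.
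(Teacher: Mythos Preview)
Your proposal follows the same route as the paper: decompose the per-round regret via (\ref{eqn:regret_decomposition}), control the stochastic piece with the time-uniform Bernstein bound of Thm.~\ref{thm:is_reg_pac_bayes_bernstein}, apply the two structural inequalities $\Delta^{\mathrm{IS}}(\rho_i^{\mathrm{exp}}, D_i) \leq \ln(K)/\gamma_i$ and $R(\rho_i^{\mathrm{exp}}) - R(\tilde{\rho}_i^{\mathrm{exp}}) \leq K\epsilon_{i+1}$, substitute the prescribed rates, and sum. The paper offers only this sketch (deferring the detailed derivation to \cite{seldin2012bern}), so your identification of the validity-window split for early rounds and the constant bookkeeping as the remaining delicate steps is accurate and matches what the cited source must handle.
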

\vspace{-0.4cm}

Ignoring log terms, this cumulative regret bound is of order $\mathcal{O}(n^{2/3}K^{1/3})$. The improved scaling with $K$ and $n$ is due to the PAC-Bayes Bernstein bound having improved dependence on $\epsilon_n$. Sadly, this regret bound has a sub-optimal growth rate in $n$. Audibert and Bubeck \cite{audibert2009minimax} have shown that the cumulative regret for EXP3 can be upper bounded by a term of order $\mathcal{O}(\sqrt{nK\mathrm{ln}(K)})$. Moreover, Audibert and Bubeck show that the best possible worst-case regret bound that any algorithm can achieve is $\mathcal{O}(\sqrt{nK})$.

Seldin et al. \cite{seldin2012bern} hypothesise that the PAC-Bayes cumulative regret bound in Theorem \ref{thm:is_reg_pac_bayes_bernstein} can be improved for the EXP3 algorithm with $\epsilon_n = 1/\sqrt{nK}$ and $\gamma_n = \sqrt{n\mathrm{ln}(K)/K}$. They suggest, and verify empirically, that for this choice of $\epsilon_n$ and $\gamma_n$, the bound on the average variance can be tightened to $V^{\mathrm{IS}}(\rho_n^{\mathrm{exp}}, D_n) \leq 2K$. Using this bound on the average variance, and ignoring log terms, the cumulative regret bound in Theorem \ref{thm:is_reg_pac_bayes_bernstein} would become $\mathcal{O}(\sqrt{nK})$.

\section{Optimising PAC-Bayes Bandit Bounds}
\label{sec:optimising_bounds}

\subsection{The Choice of Prior}
\label{sec:priors}

In this section, we give an overview of methods for choosing the prior. We first motivate the utility of "good" priors, using the PAC-Bayes Hoeffding-Azuma bound from Thm. \ref{thm:ex_bound} (shown below) as an example.
\begin{equation*}
R(\rho) \geq r^{\mathrm{IS}}(\rho, D_n) - \frac{\lambda}{8n\epsilon_n^2} - \frac{D_{\mathrm{KL}}(\rho||\mu) + \mathrm{ln}(1/\delta)}{\lambda}.
\end{equation*}

This lower bound is largest when $r^{\mathrm{IS}}(\rho, D_n)$ is large and $D_{\mathrm{KL}}(\rho||\mu)$ is close to 0. To achieve this, $\mu$ must assign high probability to policies where $r^{\mathrm{IS}}(\pi, D_n)$ is large. This motivates priors that either depend on the data set $D_n$ (data-dependent priors) or on the distribution of the data set (distribution-dependent priors). In fact, Dziugaite et al. \cite{dziugaite2021data} have shown that data-dependent priors are sometimes necessary for tight PAC-Bayes bounds. PAC-Bayes bounds with data/distribution-dependent priors are of practical interest because they can yield tighter performance guarantees. They are also of theoretical interest because they can yield bounds with improved rates.

We now detail various approaches for deriving PAC-Bayes bounds with data/distribution-dependent priors. Many of these techniques are compatible with essentially any PAC-Bayes bound. Where this is the case, we apply them to the PAC-Bayes $kl$ bound for the IS estimate as an example, since we will later compare the PAC-Bayes $kl^{-1}$ bound with various priors in our experiments.

\subsubsection{Data-Dependent Priors via Sample Splitting}
\label{sec:data_dep_priors1}

One way to use a data-dependent prior is to split the data set into disjoint subsets $D_n = D_{1:m} \cup D_{m+1:n}$, of size $m$ and $n - m$, for some $m < n$. The first subset is used to learn a prior $\mu_{D_{1:m}}$. A PAC-Bayes bound is then evaluated on the second subset with the learned prior. Since $\mu_{D_{1:m}}$ does not depend on $D_{m+1:n}$, this prior is a valid choice when the bound is evaluated on the second subset. The PAC-Bayes $kl$ bound with this data-dependent prior is:

\begin{theorem}[PAC-Bayes $kl$ Bound with Sample Splitting]
For any $\delta \in (0, 1)$ and any prior $\mu_{D_{1:m}} \in \mathcal{P}(\Pi)$ that may depend on the subset $D_{1:m}$, with probability at least $1 - \delta$, for all $\rho \in \mathcal{P}(\Pi)$ simultaneously:
\begin{align*}
kl\infdivx*{\epsilon_n r^{\mathrm{IS}}(\rho, D_{m+1:n})}{\epsilon_n R(\rho)} &\leq \frac{D_{\mathrm{KL}}(\rho||\mu_{D_{1:m}})}{n-m}\\
&+ \frac{\mathrm{ln}(2\sqrt{n-m}/\delta)}{n-m}.
\end{align*}
\label{thm:pac_bayes_kl_subset_bound}
\end{theorem}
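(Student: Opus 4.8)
The plan is to reduce the claim directly to the PAC-Bayes $kl$ bound of Thm.~\ref{thm:pac_bayes_kl}, applied to the second subset $D_{m+1:n}$ after conditioning on the first subset $D_{1:m}$. The essential observation is that, once $D_{1:m}$ is fixed, the learned prior $\mu_{D_{1:m}}$ becomes an ordinary (non-random) reference distribution that does not depend on the remaining data $D_{m+1:n}$, which is exactly the regime in which Thm.~\ref{thm:pac_bayes_kl} is valid. No new concentration inequality is needed; the work is entirely in setting up the conditioning so that the hypotheses of Thm.~\ref{thm:pac_bayes_kl} are met.

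First I would condition on an arbitrary realisation of $D_{1:m}$, under which $\mu_{D_{1:m}}$ is a fixed element of $\mathcal{P}(\Pi)$. I would then verify that the importance-weighted rewards $\frac{\pi(a_i)}{b(a_i|D_{i-1})}r_i$ for $i = m+1, \dots, n$ still form a martingale difference sequence: for each such $i$, the conditional expectation given the history up to round $i-1$ (which now includes the fixed $D_{1:m}$) equals $R(\pi)$, by the same computation that underlies Lem.~\ref{lem:xis_mart}. In particular, the behaviour policies $b(\cdot|D_{i-1})$ for $i > m$ remain valid adaptive behaviour policies for the relabelled length-$(n-m)$ sequence, and the uniform bound $1/\epsilon_n$ on the importance weights continues to hold for these indices, so the same $\epsilon_n$ appears in the subset bound.

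With these ingredients in place, I would apply Thm.~\ref{thm:pac_bayes_kl} to the $n-m$ data points of $D_{m+1:n}$, using the fixed prior $\mu_{D_{1:m}}$ and confidence level $\delta$. This yields that, conditioned on $D_{1:m}$, with probability at least $1-\delta$ over the draw of $D_{m+1:n}$, the inequality holds simultaneously for all $\rho \in \mathcal{P}(\Pi)$ with $n$ replaced by $n-m$, so that the right-hand side becomes $\frac{D_{\mathrm{KL}}(\rho||\mu_{D_{1:m}}) + \mathrm{ln}(2\sqrt{n-m}/\delta)}{n-m}$, which is exactly the claimed bound. Finally, since the conditional bound holds with probability at least $1-\delta$ for \emph{every} realisation of $D_{1:m}$, I would remove the conditioning by the tower property: writing $E$ for the event that the inequality holds, $\mathbb{P}(E) = \mathbb{E}_{D_{1:m}}[\mathbb{P}(E \mid D_{1:m})] \geq 1-\delta$.

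The main obstacle is the bookkeeping in the conditioning step rather than any substantive calculation. I would have to argue carefully that fixing $D_{1:m}$ leaves the hypotheses of Thm.~\ref{thm:pac_bayes_kl} intact: the behaviour policies for the second block are still previsible with respect to the sub-filtration generated by $D_{m+1:n}$, once $D_{1:m}$ is treated as a fixed offset, so the comparison-inequality machinery of Seldin et al.\ underlying Thm.~\ref{thm:pac_bayes_kl} transfers without change. Once this is established, the result follows by the substitution of $n-m$ for $n$ and the tower-property step above.
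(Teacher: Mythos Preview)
Your proposal is correct and matches the paper's approach exactly. The paper does not give a formal proof of Thm.~\ref{thm:pac_bayes_kl_subset_bound}; it justifies the statement only with the single sentence ``Since $\mu_{D_{1:m}}$ does not depend on $D_{m+1:n}$, this prior is a valid choice when the bound is evaluated on the second subset,'' which is precisely the conditioning-then-tower argument you spell out in detail.
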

\vspace{-0.4cm}

This approach is very flexible, since the data-dependent prior can be learned in any way. We believe that Seeger \cite{seeger2002pac} was the first to use this technique. Subsequently, it has been used by others, such as Catoni \cite{catoni2003pac}, Ambroladze et al. \cite{ambroladze2007tighter}, and Germain et al. \cite{germain2009pac}. Recently, this approach has been used to obtain non-vacuous generalisation bounds for deep neural networks \cite{rivasplata2019pac}, \cite{perez2021tighter}, \cite{perez2021learning}, \cite{perez2021progress}.

\subsubsection{Data-Dependent Priors Selected From a Restricted Set of Priors}
\label{sec:data_dep_priors2}

Another way to use data-dependent priors is to define a set of priors in advance and then derive a modified PAC-Bayes bound that holds with high probability simultaneously for all priors in this set. One can then evaluate the modified PAC-Bayes bound with any prior from this set. The modified bound will contain an extra penalty that we must pay in order for the bound to hold for more than one prior.

Suppose we have a countable set of priors $\{\mu_i\}_{i=1}^{\infty}$ and we want the PAC-Bayes $kl$ bound to hold with probability $1 - \delta$ for all $\mu_i$ simultaneously. We have that for each $i$, with probability at least $1 - \delta_i$:
\begin{equation*}
kl\infdivx*{\epsilon_n r^{\mathrm{IS}}(\rho, D_n)}{\epsilon_n R(\rho)} \leq \frac{D_{\mathrm{KL}}(\rho||\mu_i) + \mathrm{ln}(2\sqrt{n}/\delta_i)}{n}.
\end{equation*}

By the union bound, this bound holds with probability at least $1 - \sum_{i=1}^{\infty}\delta_i$ for all $\rho$ and all $i \in \mathbb{N}$ simultaneously. We can freely choose $\{\delta_i\}_{i=1}^{\infty}$ such that $\sum_{i=1}^{\infty}\delta_i = \delta$. Therefore, at the cost of replacing $\delta$ with $\delta_i$, we can choose the prior in $\{\mu_i\}_{i=1}^{\infty}$ that results in the greatest lower bound.

This technique has previously been used to obtain parametric priors with data-dependent parameters, e.g. Gaussian priors with data-dependent variance \cite{langford2002not}, \cite{dziugaite2017computing}. It can also been derived by using a prior that is a mixture of several priors $\mu = \sum_{i=1}^{\infty}p_i\mu_i$ \cite{ambroladze2007tighter}, \cite{parrado2012pac}. The weights $p_i$ must satisfy $p_i > 0$ and $\sum_i^{\infty}p_i = 1$. This results in the same bound with $\delta_i = p_i\delta$.

A set of priors can also be defined by fixing a learning algorithm and then restricting the choice of prior to be one that is learned from the data using this learning algorithm. If the learning algorithm is stable, meaning the prior it selects is almost unaffected by small changes to the data, then we call the learned prior a stable prior. Dziugaite and Roy \cite{dziugaite2018data} and Rivasplata et al. \cite{rivasplata2020pac} obtain PAC-Bayes bounds with stable priors, where the stability of a prior is characterised by differential privacy.

Let $A: \mathcal{Z}^n \rightsquigarrow \mathcal{P}(\Pi)$ denote a randomised algorithm that maps a data set $D_n \in \mathcal{Z}^n$ to a prior $\mu \in \mathcal{P}(\Pi)$. Also, let the data set $D_n$ consist of $n$ i.i.d. samples.

\begin{definition}[Differential privacy]
A randomised algorithm $A: \mathcal{Z}^n \rightsquigarrow \mathcal{P}(\Pi)$ is $\eta$-\textit{differentially private} if for all pairs $D_n, D_n^{\prime} \in \mathcal{Z}^n$ that differ at only one coordinate, and all measurable subsets $B \subseteq \mathcal{P}(\Pi)$, we have:
\begin{equation*}
\mathbb{P}(A(D_n) \in B) \leq e^{\eta}\mathbb{P}(A(D_n^{\prime}) \in B).
\end{equation*}
\end{definition}

Dziugaite and Roy \cite{dziugaite2018data} show that any PAC-Bayes bound that holds for any data-independent prior $\mu$ with probability at least $1 - \delta^{\prime}$ can be turned into a PAC-Bayes bound that holds for any $\eta$-differentially private prior $\mu_{D_n}$ with probability at least $1 - \delta$ by replacing $D_{\mathrm{KL}}(\rho||\mu)$ with $D_{\mathrm{KL}}(\rho||\mu_{D_n})$ and replacing $\mathrm{ln}(1/\delta^{\prime})$ with $n\eta^2/2 + \eta\sqrt{n\mathrm{ln}(4/\delta)/2} + \mathrm{ln}(2/\delta)$. For example, the PAC-Bayes $kl$ bound becomes:

\begin{theorem}[PAC-Bayes $kl$ Bound with a Differentially Private Prior \cite{dziugaite2018data}]
If the data set $D_n$ is drawn from a single, fixed behaviour policy, then for any $\delta \in (0, 1)$ and any $\eta$-differentially private prior $\mu_{D_n} \in \mathcal{P}(\Pi)$, with probability at least $1 - \delta$, for all $\rho \in \mathcal{P}(\Pi)$ simultaneously:
\begin{align*}
kl\infdivx*{\epsilon_n r^{\mathrm{IS}}(\rho, D_n)}{\epsilon_n R(\rho)} &\leq \frac{D_{\mathrm{KL}}(\rho||\mu_{D_n}) + \mathrm{ln}(4\sqrt{n}/\delta)}{n}\\
&+ \frac{n\eta^2/2 + \eta\sqrt{n\mathrm{ln}(4/\delta)/2}}{n}.
\end{align*}
\label{thm:pac_bayes_kl_dp_bound}
\end{theorem}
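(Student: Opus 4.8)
The plan is to obtain this bound as a direct instance of the general differential-privacy transfer described immediately before the statement, applied to the PAC-Bayes $kl$ bound of Theorem \ref{thm:pac_bayes_kl}. First I would record the starting point: because the data set is assumed to be drawn from a single fixed behaviour policy $b(\cdot)$, the pairs $(a_i, r_i)$ are i.i.d., and each scaled weighted reward $\epsilon_n \frac{\pi(a_i)}{b(a_i)} r_i$ lies in $[0,1]$ with mean $\epsilon_n R(\pi)$. Hence Theorem \ref{thm:pac_bayes_kl} applies in its original i.i.d. form: for any data-independent prior $\mu$ and any $\delta' \in (0,1)$, with probability at least $1 - \delta'$, $kl(\epsilon_n r^{\mathrm{IS}}(\rho, D_n) \| \epsilon_n R(\rho)) \le (D_{\mathrm{KL}}(\rho\|\mu) + \ln(2\sqrt n) + \ln(1/\delta'))/n$ simultaneously over all $\rho$.

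The key observation is that this guarantee has exactly the form required by the transfer result: it holds with failure probability $\delta'$ for every fixed prior, and the confidence term $\ln(1/\delta')$ enters additively and separably from the $\ln(2\sqrt n)$ factor that comes from Maurer's moment bound. I would then invoke the Dziugaite--Roy transformation: since $A$ is $\eta$-differentially private and $D_n$ is i.i.d., one may substitute the data-dependent prior $\mu_{D_n} = A(D_n)$ for $\mu$ provided the term $\ln(1/\delta')$ is replaced by $n\eta^2/2 + \eta\sqrt{n\ln(4/\delta)/2} + \ln(2/\delta)$, with the resulting statement holding at confidence $1-\delta$. Substituting and combining constants via $\ln(2\sqrt n) + \ln(2/\delta) = \ln(4\sqrt n/\delta)$ yields precisely the claimed inequality.

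The heart of the argument---and the step I expect to be the main obstacle---is justifying this substitution, i.e. transferring the per-prior high-probability statement from a fixed prior to the correlated, data-dependent prior $A(D_n)$. The mechanism is a concentration bound on the cumulative privacy loss: the standard PAC-Bayes proof controls the bad event through an exponential-moment functional whose expectation is at most $2\sqrt n$ under the i.i.d. data distribution for each fixed prior, and $\eta$-differential privacy together with the i.i.d. structure ensures that events rare under the decoupled law of $(D_n, A(\tilde D_n))$ remain rare under the coupled law of $(D_n, A(D_n))$. The additive penalty reflects an Azuma-type tail: the term $n\eta^2/2$ aggregates the $n$ per-coordinate privacy-loss increments (each controlled by $\eta$), while $\eta\sqrt{n\ln(4/\delta)/2}$ is the deviation term at confidence level $\delta$. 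I would invoke this privacy-to-max-information transfer as a black box from Dziugaite and Roy \cite{dziugaite2018data}, and restrict my own verification to two routine points: that the fixed-behaviour-policy assumption really does supply the i.i.d. sample the transfer requires, and that the constants combine as stated.
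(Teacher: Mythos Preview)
Your proposal is correct and follows exactly the approach the paper takes: the paper does not give a standalone proof of Theorem~\ref{thm:pac_bayes_kl_dp_bound} but simply presents it as the instance of the Dziugaite--Roy \cite{dziugaite2018data} differential-privacy transfer applied to the PAC-Bayes $kl$ bound of Theorem~\ref{thm:pac_bayes_kl}, with the i.i.d.\ requirement supplied by the fixed behaviour policy and the constants combining via $\ln(2\sqrt{n}) + \ln(2/\delta) = \ln(4\sqrt{n}/\delta)$. Your verification of the separability of $\ln(1/\delta')$ from $\ln(2\sqrt{n})$ and the i.i.d.\ structure is precisely the bookkeeping the paper leaves to the reader.
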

\vspace{-0.4cm}

Since differential privacy is defined only for data sets consiting of i.i.d. samples, this bound only holds when the data are all drawn from a single, fixed behaviour policy (to ensure that the data are i.i.d.).

\subsubsection{Distribution-Dependent Priors}
\label{sec:dist_dep_priors}

The motivation for using a data-dependent prior was that it would assign high probability to policies where $r^{\mathrm{IS}}(\pi, D_n)$ is large. Assuming $r^{\mathrm{IS}}(\pi, D_n)$ is close to $R(\pi)$, we could instead use a prior that assigns high probability to policies where $R(\pi)$ is large, such as $\tilde{\mu}(\pi) \propto \mathrm{exp}(R(\pi))$. This prior is data-independent, but we cannot calculate the KL divergence between $\rho$ and this prior, since $R(\pi)$ is unknown. Lever et al. \cite{lever2010distribution}, \cite{lever2013tighter} provide a method for upper bounding the KL divergence between restricted sets of posteriors and distribution-dependent priors.

We restrict ourselves to empirical Gibbs posteriors $\rho_{\gamma}$ and Gibbs priors $\mu_{\gamma}$ that have density functions of the following form:
\begin{equation*}
\rho_{\gamma}(\pi) \propto \mu(\pi)e^{\gamma r^{\mathrm{IS}}(\pi, D_n)}, \qquad \mu_{\gamma}(\pi) \propto \mu(\pi)e^{\gamma R(\pi)}.
\end{equation*}

$\mu$ is a data-independent reference distribution. Lever et al. \cite{lever2010distribution} show that for this choice of $\rho_{\gamma}$ and $\mu_{\gamma}$:
\begin{align}
D_{\mathrm{KL}}(\rho_{\gamma}||\mu_{\gamma}) &\leq \gamma \left(r^{\mathrm{IS}}(\rho_{\gamma}, D_n) - R(\rho_{\gamma})\right)\label{eqn:lever_kl1}\\
&+ \gamma \left(R(\mu_{\gamma}) - r^{\mathrm{IS}}(\mu_{\gamma}, D_n)\right).\nonumber
\end{align}

Both expected values on the right-hand-side of Equation \ref{eqn:lever_kl1} can be upper bounded using any of the PAC-Bayes bounds for the IS reward estimate, with $\mu_{\gamma}$ as the prior. If the Pinsker bound is used, this results in a quadratic inequality for $D_{\mathrm{KL}}(\rho_{\gamma}||\mu_{\gamma})$, which holds with probability at least $1 - \delta$. The solution of this inequality tells us that with probability at least $1 - \delta$:
\begin{equation*}
D_{\mathrm{KL}}(\rho_{\gamma}||\mu_{\gamma}) \leq \frac{2\gamma}{\epsilon\sqrt{2n}}\sqrt{\mathrm{ln}(2\sqrt{n}/\delta)} + \frac{\gamma^2}{2n\epsilon^2}.
\end{equation*}

See \cite{lever2010distribution} or \cite{seldin2011mab} for a detailed derivation. This upper bound can be substituted into the PAC-Bayes $kl$ bound.

\begin{theorem}[PAC-Bayes $kl$ Lever Bound \cite{lever2010distribution}, \cite{seldin2011mab}]
For any $\gamma > 0$ and $\delta \in (0, 1)$, with probability at least $1 - \delta$:
\begin{align*}
kl\infdivx*{\epsilon_n r^{\mathrm{IS}}(\rho_{\gamma}, D_n)}{\epsilon_n R(\rho_{\gamma})} &\leq \frac{\mathrm{ln}(4\sqrt{n}/\delta)}{n} + \frac{\gamma^2}{2n^2\epsilon_n^2}\\
&+ \frac{\gamma\sqrt{2\mathrm{ln}(4\sqrt{n}/\delta)}}{n\sqrt{n}\epsilon_n}.
\end{align*}
\label{thm:pac_bayes_kl_lever_bound}
\end{theorem}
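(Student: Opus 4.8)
The plan is to combine the PAC-Bayes $kl$ bound of Theorem~\ref{thm:pac_bayes_kl}, instantiated with the distribution-dependent Gibbs prior $\mu_\gamma$, with the upper bound on $D_{\mathrm{KL}}(\rho_\gamma||\mu_\gamma)$ coming from Lever's inequality~(\ref{eqn:lever_kl1}). The observation that makes $\mu_\gamma$ admissible is that, although $\mu_\gamma(\pi) \propto \mu(\pi)e^{\gamma R(\pi)}$ depends on the unknown expected reward $R(\pi)$, it does \emph{not} depend on the sampled data $D_n$; it is therefore a legitimate, data-independent prior. I would split the failure probability as $\delta/2 + \delta/2$, reserving one half for the $kl$ bound itself and one half for the Pinsker bound used to control the KL divergence. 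This split is exactly what upgrades the $\mathrm{ln}(2\sqrt{n}/\cdot)$ terms of the underlying bounds to the $\mathrm{ln}(4\sqrt{n}/\delta)$ terms in the statement.

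First I would instantiate Theorem~\ref{thm:pac_bayes_kl} with prior $\mu_\gamma$ and confidence $\delta/2$, so that with probability at least $1-\delta/2$, simultaneously over all posteriors,
\begin{equation*}
kl\infdivx*{\epsilon_n r^{\mathrm{IS}}(\rho_\gamma, D_n)}{\epsilon_n R(\rho_\gamma)} \leq \frac{D_{\mathrm{KL}}(\rho_\gamma||\mu_\gamma) + \mathrm{ln}(4\sqrt{n}/\delta)}{n}.
\end{equation*}
It then remains to bound $D_{\mathrm{KL}}(\rho_\gamma||\mu_\gamma)$. Starting from~(\ref{eqn:lever_kl1}), I would control each of the two differences on its right-hand side using Pinsker's inequality applied to Theorem~\ref{thm:pac_bayes_kl}, with prior $\mu_\gamma$ and confidence $\delta/2$; this controls $|r^{\mathrm{IS}}(\rho) - R(\rho)|$ in both directions, which is needed since the first difference is of the form $r^{\mathrm{IS}}-R$ and the second of the form $R - r^{\mathrm{IS}}$. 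Because that bound holds uniformly over all posteriors, a \emph{single} $1-\delta/2$ event covers both the choice of posterior $\rho_\gamma$ (contributing $D_{\mathrm{KL}}(\rho_\gamma||\mu_\gamma)$ under the square root) and the choice of posterior $\mu_\gamma$ (whose KL term vanishes, since $D_{\mathrm{KL}}(\mu_\gamma||\mu_\gamma)=0$). Writing $x = D_{\mathrm{KL}}(\rho_\gamma||\mu_\gamma)$, $c = \mathrm{ln}(4\sqrt{n}/\delta)$ and $\alpha = \gamma/(\epsilon_n\sqrt{2n})$, this yields the self-referential inequality $x \leq \alpha\sqrt{x+c} + \alpha\sqrt{c}$.

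I would then solve this inequality. Isolating the square root and squaring gives $x\,(x - 2\alpha\sqrt{c} - \alpha^2) = 0$; since the right-hand side of the inequality is concave in $x$ and strictly positive at $x=0$, the feasible set is the interval up to the positive root, so $x \leq 2\alpha\sqrt{c} + \alpha^2 = \frac{2\gamma}{\epsilon_n\sqrt{2n}}\sqrt{\mathrm{ln}(4\sqrt{n}/\delta)} + \frac{\gamma^2}{2n\epsilon_n^2}$. Finally, I would take a union bound over the two $\delta/2$ events and substitute this bound on $D_{\mathrm{KL}}(\rho_\gamma||\mu_\gamma)$ into the displayed $kl$ inequality; collecting the three resulting terms and using $2/\sqrt{2n} = \sqrt{2}/\sqrt{n}$ reproduces the claimed bound.

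I expect the main obstacle to be bookkeeping rather than any deep idea: correctly arguing that $\mu_\gamma$ is data-independent and hence admissible, noticing that a single Pinsker event simultaneously controls both terms of~(\ref{eqn:lever_kl1}) so that only two events enter the union bound, and tracking how the $\delta/2$ splitting converts every $\mathrm{ln}(2\sqrt{n}/\delta)$ into $\mathrm{ln}(4\sqrt{n}/\delta)$. The only genuinely analytic step, solving the quadratic inequality in $x$, is clean because the positive root factors exactly.
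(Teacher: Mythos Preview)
Your proposal is correct and follows essentially the same route as the paper: instantiate Theorem~\ref{thm:pac_bayes_kl} with the data-independent Gibbs prior $\mu_\gamma$, bound $D_{\mathrm{KL}}(\rho_\gamma\|\mu_\gamma)$ via Lever's inequality~(\ref{eqn:lever_kl1}) together with the two-sided Pinsker consequence of the same $kl$ bound, solve the resulting quadratic self-referential inequality, and combine the two events by a $\delta/2$ union bound. The only cosmetic slip is that ``squaring gives $x\,(x-2\alpha\sqrt{c}-\alpha^2)=0$'' should read $\leq 0$; your concavity argument shows you have the right inequality in mind.
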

\vspace{-0.4cm}

Since the PAC-Bayes Pinsker bound, which was used to upper bound the right-hand side of Equation \ref{eqn:lever_kl1}, holds when the data are drawn from a sequence of dependent behaviour policies, so does the PAC-Bayes $kl$ Lever bound. The best value of $\gamma$ will be large enough for $r^{\mathrm{IS}}(\rho_{\gamma}, D_n)$ to be large, but not so large that the bound is dominated by the $\gamma$-dependent terms. When using the distribution-dependent prior $\mu_{\gamma}$, it is still helpful to have an informative reference distribution $\mu$, since then $\gamma$ can be close to 0 and $\rho_{\gamma}$ will still have high estimated reward. Lever et al.'s method of upper bounding the distribution-dependent KL divergence can be applied more generally to other kinds of Gibbs posteriors and priors. See \cite{lever2010distribution} or \cite{lever2013tighter} for more information.

In the case where the data $D_n$ are i.i.d., Oneto et al. \cite{oneto2016pac} proved a tighter upper bound on $D_{\mathrm{KL}}(\rho_{\gamma}||\mu_{\gamma})$. Oneto et al. \cite{oneto2016pac} also proved another PAC-Bayes bound for empirical Gibbs posteriors. This bound only holds when the data $D_n$ are i.i.d., so when there is a single, fixed behaviour policy. Let $\rho_{\gamma}^{\setminus i}$ denote the leave-one-out Gibbs posterior, which has the density function: $\rho_{\gamma}^{\setminus i}(\pi) \propto \mu(\pi)\exp({\frac{\gamma}{n}\sum_{j=1, j \neq i}^{n}\frac{\pi(a_j)}{b(a_j)}r_j})$. This is the empirical Gibbs posterior with the $i$th datum removed. Following Oneto et al., one can show that any posterior that is symmetric (meaning it does not depend on the order of the training data), and has a certain distribution stability property, satisfies the following bound.

\begin{theorem}[Distribution stability bound \cite{oneto2016pac}]
If the data set $D_n$ is drawn from a single, fixed behaviour policy and if the method for selecting the posterior $\rho$ and leave-one-out posteriors $\rho^{\setminus i}$ from the data set $D_n$ satisfies the distribution stability property:
\begin{equation*}
\max_{a^{\prime}, r^{\prime}}\left\{\left|\mathop{\mathbb{E}}_{\pi \sim \rho}\left[\frac{\pi(a^{\prime})}{b(a^{\prime})}r^{\prime}\right] - \mathop{\mathbb{E}}_{\pi \sim \rho^{\setminus i}}\left[\frac{\pi(a^{\prime})}{b(a^{\prime})}r^{\prime}\right]\right|\right\} \leq \beta,
\end{equation*}

then for all $D_n$, all $i \in \{1, \dots, n\}$, and with $\beta$ that goes to 0 as $\mathcal{O}(1/n)$, then with probability at least $1 - \delta$:
\begin{equation*}
\left|R(\rho) - r^{\mathrm{IS}}(\rho, D_n)\right| \leq 2\beta + \left(4n\beta + \frac{1}{\epsilon_n}\right)\sqrt{\frac{\mathrm{ln}(2/\delta)}{2n}}.
\end{equation*}
\label{thm:pac_bayes_dist_stab}
\end{theorem}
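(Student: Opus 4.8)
The plan is to treat the deviation $g(D_n) := R(\rho) - r^{\mathrm{IS}}(\rho, D_n)$ as a function of the $n$ i.i.d. samples in $D_n$, bound its expectation, show it has bounded differences, and then invoke the bounded-differences (McDiarmid) inequality. The single observation that drives both the expectation and the bounded-differences arguments is that, under a fixed behaviour policy $b$, the true reward admits the importance-sampling representation $R(\pi) = \mathbb{E}_{a', r'}[\frac{\pi(a')}{b(a')}r']$. Thus both $R(\cdot)$ and the empirical summands are expectations of the same importance-weighted quantity $\frac{\pi(a')}{b(a')}r'$ to which the distribution-stability assumption directly applies.

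First I would bound $\lvert \mathbb{E}_{D_n}[g(D_n)] \rvert$. Because the posterior-selection method is symmetric and the data are i.i.d., every summand of $r^{\mathrm{IS}}(\rho, D_n)$ has the same expectation, so $\mathbb{E}[r^{\mathrm{IS}}(\rho, D_n)] = \mathbb{E}[\mathbb{E}_{\pi \sim \rho}[\frac{\pi(a_1)}{b(a_1)}r_1]]$. The stability property lets me replace $\rho$ by the leave-one-out posterior $\rho^{\setminus 1}$ at a cost of $\beta$; since $\rho^{\setminus 1}$ is independent of $(a_1, r_1)$, taking the fresh expectation over that held-out sample converts the importance-weighted reward into $R$, giving $\mathbb{E}[\mathbb{E}_{\pi \sim \rho^{\setminus 1}}[\frac{\pi(a_1)}{b(a_1)}r_1]] = \mathbb{E}[R(\rho^{\setminus 1})]$. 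A second use of stability, now through the IS representation of $R$, shows $\lvert R(\rho) - R(\rho^{\setminus 1}) \rvert \le \beta$. Chaining these estimates yields $\lvert \mathbb{E}_{D_n}[g(D_n)] \rvert \le 2\beta$, which is the first term of the claimed bound.

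Next I would compute the per-coordinate bounded-differences constant. Let $D_n'$ differ from $D_n$ only in coordinate $j$, with associated posterior $\rho'$; crucially the leave-$j$-out posteriors coincide, $\rho^{\setminus j} = \rho'^{\setminus j}$, because deleting the $j$th point yields the same data set. For the reward gap, the IS representation of $R$ together with two applications of stability through the common $\rho^{\setminus j}$ give $\lvert R(\rho) - R(\rho') \rvert \le 2\beta$. For the empirical gap I would split the average: the single changed term $i = j$ is controlled only by the uniform weight bound, contributing at most $\frac{1}{n}\cdot\frac{1}{\epsilon_n}$, whereas each of the $n-1$ unchanged terms has identical data but posteriors $\rho$ and $\rho'$, so stability to the shared $\rho^{\setminus j}$ bounds it by $\frac{1}{n}\cdot 2\beta$. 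Summing these contributions gives $\lvert g(D_n) - g(D_n') \rvert \le 4\beta + \frac{1}{n\epsilon_n}$.

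Finally, the bounded-differences inequality with the per-coordinate constant $c = 4\beta + \frac{1}{n\epsilon_n} = \frac{1}{n}(4n\beta + 1/\epsilon_n)$, applied two-sidedly, gives with probability at least $1-\delta$ a deviation of at most $c\sqrt{n\,\mathrm{ln}(2/\delta)/2} = (4n\beta + 1/\epsilon_n)\sqrt{\mathrm{ln}(2/\delta)/(2n)}$; combining with the $2\beta$ bias recovers the statement (the hypothesis $\beta = \mathcal{O}(1/n)$ is what keeps the $4n\beta$ factor bounded). I expect the main obstacle to be the bookkeeping in the expectation step: one must use symmetry to reduce to a single summand, then exploit the independence of $\rho^{\setminus 1}$ from the held-out point to turn the importance-weighted reward into $R$, applying the stability bound in exactly the right two places so that no uncontrolled data-dependence of $\rho$ leaks through.
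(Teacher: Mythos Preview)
The paper does not give its own proof of this theorem; it is quoted from Oneto et al.\ and only the downstream consequence for the Gibbs posterior is spelled out. Your argument is correct and is precisely the classical Bousquet--Elisseeff stability template that underlies the cited result: bound the bias $\lvert \mathbb{E}[R(\rho)-r^{\mathrm{IS}}(\rho,D_n)]\rvert$ by $2\beta$ via two leave-one-out replacements, establish the per-coordinate bounded-differences constant $4\beta + 1/(n\epsilon_n)$ by routing both posteriors through the common $\rho^{\setminus j}=\rho'^{\setminus j}$, and then apply two-sided McDiarmid. The constants you obtain match the statement exactly, and your identification of $R(\pi)=\mathbb{E}_{a',r'}[\tfrac{\pi(a')}{b(a')}r']$ as the device that lets the single stability hypothesis control both $R$-differences and empirical-summand differences is the right observation.
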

\vspace{-0.4cm}

Oneto et al. show that the Gibbs posterior satisfies the distribution stability property with $\beta \leq \frac{2\gamma}{n\epsilon_n}$. Therefore, using Theorem \ref{thm:pac_bayes_dist_stab}, we have that, if the data $D_n$ are drawn from a single behaviour policy, then the following bound holds for the Gibbs posterior with probability at least $1 - \delta$:
\begin{equation}
\left|R(\rho_{\gamma}) - r^{\mathrm{IS}}(\rho_{\gamma}, D_n)\right| \leq \frac{4\gamma}{n\epsilon_n} + \left(\frac{8\gamma}{\epsilon_n} + \frac{1}{\epsilon_n}\right)\sqrt{\frac{\mathrm{ln}(2/\delta)}{2n}}.\label{eqn:pac_bayes_gibbs_dist_stab}
\end{equation}

Once again, there is a trade-off between setting $\gamma$ large enough for the empirical reward to be high, but not so large that the $\gamma$-dependent penalty terms become too large.

Finally, we present another technique based on algorithmic stability for deriving PAC-Bayes bounds with certain distribution-dependent priors, which is due to Rivasplata et al. \cite{rivasplata2018pac}. Let the data set $D_n \in \mathcal{Z}^n$ consist of $n$ i.i.d. samples. Let $D_n^{(i)}$ be the data set $D_n$, except with it's $i$th element $z_i$ replaced with $z_i^{\prime}$. The hypothesis sensitivity coefficients are defined as:

\begin{definition}[Hypothesis sensitivity coefficients \cite{rivasplata2018pac}]
Consider a learning algorithm $A: \mathcal{Z}^n \to \mathcal{H}$ that maps a data set to hypothesis in a separable Hilbert space $\mathcal{H}$. The hypothesis sensitivity coefficients of $A$ are defined as:
\begin{equation*}
\beta_n = \sup_{i \in [n]}\sup_{z_i, z_i^{\prime}}\left\{\norm{A(D_n) - A(D_n^{(i)})}_{\mathcal{H}}\right\}.
\end{equation*}
\end{definition}

Rivasplata et al. use the posterior $\rho_{A}$ and the distribution-dependent prior $\mu_{A}$, which are defined as:
\begin{equation*}
\rho_{A} = \mathcal{N}(A(D_n), \sigma^2I), \qquad \mu_{A} = \mathcal{N}(\mathbb{E}_{D_n}[A(D_n)], \sigma^2I).
\end{equation*}

The KL divergence between $\rho_{A}$ and $\mu_{A}$ is equal to $\norm{A(D_n) - \mathbb{E}_{D_n}[A(D_n)]}_{\mathcal{H}}^2/(2\sigma^2)$. Rivasplata et al. show that if the algorithm $A$ has hypothesis sensitivity coefficients $\beta_n$, then the output of the algorithm $A(D_n)$ satisfies a concentration inequality, which implies an upper bound on $D_{\mathrm{KL}}(\rho_{A}||\mu_{A})$. With probability at least $1 - \delta$:
\begin{equation*}
\norm{A(D_n) - \mathbb{E}_{D_n}[A(D_n)]}_{\mathcal{H}} \leq \sqrt{n}\beta_n\left(1 + \sqrt{\frac{1}{2}\mathrm{ln}\left(\frac{1}{\delta}\right)}\right).
\end{equation*}

One can then use the union bound to combine any PAC-Bayes bound using the posterior $\rho_{A}$ and prior $\mu_{A}$ with the concentration inequality satisfied by the algorithm $A$. The PAC-Bayes $kl$ bound becomes:

\begin{theorem}[PAC-Bayes $kl$ Hypothesis Sensitivity Bound \cite{rivasplata2018pac}]
If the data set $D_n$ is drawn from a single, fixed behaviour policy, then for any $\delta \in (0, 1)$ and any algorithm $A$ with hypothesis sensitivity coefficients $\beta_n$, with probability at least $1 - \delta$:
\begin{align*}
kl\infdivx*{\epsilon_n r^{\mathrm{IS}}(\rho_A, D_n)}{\epsilon_n R(\rho_A)} &\leq \frac{\mathrm{ln}(4\sqrt{n}/\delta)}{n}\\
&+ \frac{n\beta_n^2\left(1 + \sqrt{\mathrm{ln}(2/\delta)/2}\right)^2}{2\sigma^2}.
\end{align*}
\label{thm:pac_bayes_kl_hypo_sens_bound}
\end{theorem}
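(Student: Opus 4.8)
The plan is to combine the data-independent PAC-Bayes $kl$ bound of Theorem \ref{thm:pac_bayes_kl} with the concentration inequality satisfied by $A(D_n)$, splitting the failure probability as $\delta/2 + \delta/2$ and joining the two events by a union bound. The crucial observation is that, although the posterior $\rho_A = \mathcal{N}(A(D_n), \sigma^2 I)$ depends on the realised sample, the prior $\mu_A = \mathcal{N}(\mathbb{E}_{D_n}[A(D_n)], \sigma^2 I)$ depends only on the data \emph{distribution} and not on the realised $D_n$. Since the data are i.i.d. (a single fixed behaviour policy), this expectation is a fixed object determined before the sample is drawn, so $\mu_A$ is a legitimate data-independent prior to which Theorem \ref{thm:pac_bayes_kl} applies.

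First I would instantiate Theorem \ref{thm:pac_bayes_kl} with $\mu = \mu_A$ at confidence level $1 - \delta/2$. Replacing $\delta$ by $\delta/2$ turns the $\mathrm{ln}(2\sqrt{n}/\delta)$ term into $\mathrm{ln}(4\sqrt{n}/\delta)$, which already supplies the first term of the claimed bound. This holds simultaneously for all $\rho$, and in particular for $\rho = \rho_A$, with probability at least $1 - \delta/2$ over the draw of $D_n$. Next I would evaluate $D_{\mathrm{KL}}(\rho_A || \mu_A)$ in closed form: because $\rho_A$ and $\mu_A$ are Gaussians with identical covariance $\sigma^2 I$, the divergence reduces to the scaled squared distance between their means, namely $\norm{A(D_n) - \mathbb{E}_{D_n}[A(D_n)]}_{\mathcal{H}}^2/(2\sigma^2)$. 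Applying the hypothesis-sensitivity concentration inequality stated just above the theorem, again at confidence $1 - \delta/2$ so that $\mathrm{ln}(1/\delta)$ becomes $\mathrm{ln}(2/\delta)$, bounds the mean distance by $\sqrt{n}\beta_n(1 + \sqrt{\mathrm{ln}(2/\delta)/2})$ with probability at least $1 - \delta/2$, and hence controls $D_{\mathrm{KL}}(\rho_A || \mu_A)$.

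Finally, a union bound guarantees that both events hold simultaneously with probability at least $1 - \delta$. Substituting the resulting bound on $D_{\mathrm{KL}}(\rho_A || \mu_A)$ into the $kl$ inequality from the first step and collecting constants yields the claimed bound. The only conceptual subtlety, and the step I would treat most carefully, is justifying that $\mu_A$ is a valid data-independent prior; everything downstream is a routine Gaussian KL computation and substitution. The i.i.d. assumption is essential both here, so that $\mathbb{E}_{D_n}[A(D_n)]$ is a fixed, data-independent object, and for the validity of the concentration inequality for $A(D_n)$, which rests on a bounded-differences argument driven by the sensitivity coefficients $\beta_n$.
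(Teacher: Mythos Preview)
Your proposal is correct and matches the paper's own approach essentially step for step: the paper explicitly describes the argument as applying the PAC-Bayes $kl$ bound with the distribution-dependent (but data-independent) prior $\mu_A$, computing $D_{\mathrm{KL}}(\rho_A\|\mu_A)=\norm{A(D_n)-\mathbb{E}_{D_n}[A(D_n)]}_{\mathcal{H}}^2/(2\sigma^2)$, invoking the hypothesis-sensitivity concentration inequality, and combining the two events via a union bound with a $\delta/2+\delta/2$ split. Your emphasis on why $\mu_A$ is a valid prior is exactly the point that needs care, and the rest is the routine substitution you describe.
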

\vspace{-0.4cm}

Unlike the previous techniques using distribution-dependent priors, this time there is no explicit dependence on a data-independent reference distribution.

\subsubsection{Data-Dependent Approximations of Distribution-Dependent Priors}
\label{sec:local_priors}

Distribution-dependent Gibbs priors were first used by Catoni \cite{catoni2004statistical}. Catoni proved that the KL divergence between an arbitrary posterior $\rho$ and a distribution-dependent Gibbs prior can be upper bounded by the KL divergence between $\rho$ and an empirical (data-dependent) Gibbs prior. We are not aware of any way to apply this technique to the PAC-Bayes $kl$ bound. Therefore, we apply the technique, described in Section 1.3.4. of \cite{catoni2007pac}, to the PAC-Bayes Bernstein bound.

First, we define the distribution-dependent Gibbs prior $\mu_{\beta R}$ and the data-dependent Gibbs prior $\mu_{\beta r^{\mathrm{IS}}}$ as:
\begin{equation*}
\mu_{\beta R}(\pi) = \frac{\mu(\pi)e^{\beta R(\pi)}}{\mathop{\mathbb{E}}_{\pi \sim \mu}\left[e^{\beta R(\pi)}\right]}, \quad \mu_{\beta r^{\mathrm{IS}}}(\pi) = \frac{\mu(\pi)e^{\beta r^{\mathrm{IS}}(\pi, D_n)}}{\mathop{\mathbb{E}}_{\pi \sim \mu}\left[e^{\beta r^{\mathrm{IS}}(\pi, D_n)}\right]}.
\end{equation*}

Catoni showed that $D_{\mathrm{KL}}(\rho||\mu_{\beta R})$ is related to $D_{\mathrm{KL}}(\rho||\mu_{\beta r^{\mathrm{IS}}})$ in the following way:
\begin{align*}
D_{\mathrm{KL}}(\rho||\mu_{\beta R}) &= D_{\mathrm{KL}}(\rho||\mu_{\beta r^{\mathrm{IS}}}) + \beta\mathop{\mathbb{E}}_{\pi \sim \rho}\left[r^{\mathrm{IS}}(\pi, D_n) - R(\pi)\right]\\
&+ \mathrm{ln}\left(\mathop{\mathbb{E}}_{\pi \sim \mu}\left[e^{\beta R(\pi)}\right]\right) - \mathrm{ln}\left(\mathop{\mathbb{E}}_{\pi \sim \mu}\left[e^{\beta r^{\mathrm{IS}}(\pi, D_n)}\right]\right).
\end{align*}

If we could upper bound $\mathrm{ln}\left(\mathop{\mathbb{E}}_{\pi \sim \mu}\left[\mathrm{exp}(\beta R(\pi))\right]\right) - \mathrm{ln}\left(\mathop{\mathbb{E}}_{\pi \sim \mu}\left[\mathrm{exp}(\beta r^{\mathrm{IS}}(\pi, D_n))\right]\right)$, then we could upper bound the difference between $D_{\mathrm{KL}}(\rho||\mu_{\beta R})$ and $D_{\mathrm{KL}}(\rho||\mu_{\beta r^{\mathrm{IS}}})$. We could then combine the PAC-Bayes Bernstein bound, using the prior $\mu_{\beta R}$, with the upper bound on $D_{\mathrm{KL}}(\rho||\mu_{\beta R})$, to obtain a ``localised" PAC-Bayes Bernstein bound for the IS estimate. In Appendix \ref{sec:local_bern_proof}, we show how this can be done, and that the result is the following bound.

\begin{theorem}[Localised PAC-Bayes Bernstein Bound for $r^{\mathrm{IS}}$ \cite{catoni2007pac}, \cite{seldin2012bern}]
For any $\lambda \in [0, n\epsilon_n]$, any $\beta$ satisfying $0 \leq \beta < \lambda$, any $\delta \in (0, 1)$ and any probability distribution $\mu \in \mathcal{P}(\Pi)$, with probability at least $1 - \delta$, for all distributions $\rho \in \mathcal{P}(\Pi)$ simultaneously:
\begin{align*}
R(\rho) &\geq r^{\mathrm{IS}}(\rho, D_n) - \frac{(\lambda^2 + \beta^2)(e-2)}{(\lambda - \beta)n\epsilon_n}\\
&- \frac{D_{\mathrm{KL}}(\rho||\mu_{\beta r^{\mathrm{IS}}}) + 2\mathrm{ln}(1/\delta)}{\lambda - \beta}.
\end{align*}
\label{thm:local_pac_bayes_bernstein}
\end{theorem}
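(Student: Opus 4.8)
The plan is to run the PAC-Bayes Bernstein argument (Thm.~\ref{thm:is_bernstein}) with the distribution-dependent Gibbs prior $\mu_{\beta R}$ in place of an arbitrary prior, then trade the uncomputable divergence $D_{\mathrm{KL}}(\rho\|\mu_{\beta R})$ for the computable one $D_{\mathrm{KL}}(\rho\|\mu_{\beta r^{\mathrm{IS}}})$ using Catoni's identity stated above, and finally absorb the leftover partition-function ratio into the variance term. The observation that makes $\mu_{\beta R}$ an admissible prior is that $R(\pi)$ is a fixed, data-independent function, so $\mu_{\beta R}$ is chosen before seeing $D_n$, exactly as the bound requires. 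To keep the confidence budget under control I would \emph{not} invoke Thm.~\ref{thm:is_bernstein} as a black box, but work at the level of its exponential-moment inequality and postpone the Markov step.

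Concretely, write $A(\pi)=r^{\mathrm{IS}}(\pi,D_n)-R(\pi)$. The forward Bernstein moment gives $\mathbb{E}[W_1]\leq 1$ with $W_1=\mathbb{E}_{\pi\sim\mu_{\beta R}}[\exp(\lambda A(\pi)-\tfrac{\lambda^2(e-2)}{n}V^{\mathrm{IS}}(\pi,D_n))]$, valid for $\lambda\in[0,n\epsilon_n]$; applying the change-of-measure inequality (Lem.~\ref{lem:donsk}) and the variance bound $V^{\mathrm{IS}}\leq 1/\epsilon_n$ turns $\ln W_1$ into a uniform-over-$\rho$ statement featuring $D_{\mathrm{KL}}(\rho\|\mu_{\beta R})$ and the penalty $\tfrac{\lambda(e-2)}{n\epsilon_n}$. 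Substituting Catoni's identity, the cross term $\tfrac{\beta}{\lambda}\mathbb{E}_{\pi\sim\rho}[r^{\mathrm{IS}}(\pi,D_n)-R(\pi)]$ produces a $\tfrac{\beta}{\lambda}R(\rho)$ contribution on the right; collecting it with $R(\rho)$ on the left multiplies everything by $\tfrac{\lambda}{\lambda-\beta}$ (the hypothesis $\beta<\lambda$ is precisely what keeps this factor positive), which converts $\lambda$ into $\lambda-\beta$ in the denominators and turns $\tfrac{\lambda(e-2)}{n\epsilon_n}$ into $\tfrac{\lambda^2(e-2)}{(\lambda-\beta)n\epsilon_n}$.

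The only genuinely new estimate is the leftover ratio $\ln Z_R-\ln Z_r$, where $Z_R=\mathbb{E}_{\pi\sim\mu}[e^{\beta R(\pi)}]$ and $Z_r=\mathbb{E}_{\pi\sim\mu}[e^{\beta r^{\mathrm{IS}}(\pi,D_n)}]$ are the two normalisers. I would rewrite $Z_r/Z_R=\mathbb{E}_{\pi\sim\mu_{\beta R}}[e^{\beta(r^{\mathrm{IS}}(\pi,D_n)-R(\pi))}]$ and apply Jensen's inequality (concavity of $\ln$) to obtain $\ln Z_R-\ln Z_r\leq \beta\,\mathbb{E}_{\pi\sim\mu_{\beta R}}[R(\pi)-r^{\mathrm{IS}}(\pi,D_n)]$, a deviation of a \emph{single, data-independent} distribution. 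This deviation is then controlled by the reverse Bernstein moment: since $\mu_{\beta R}$ is fixed, $\mathbb{E}[W_2]\leq 1$ with $W_2=\exp(-\beta\,\mathbb{E}_{\pi\sim\mu_{\beta R}}[A(\pi)]-\tfrac{\beta^2(e-2)}{n\epsilon_n})$ (using the lower-tail Bernstein MGF, which is valid since $\beta<\lambda\leq n\epsilon_n$, Jensen to pass the $\mu_{\beta R}$-expectation inside, and $V^{\mathrm{IS}}\leq 1/\epsilon_n$). Hence $\ln Z_R-\ln Z_r\leq \tfrac{\beta^2(e-2)}{n\epsilon_n}+\ln W_2$, and the deterministic part $\tfrac{\beta^2(e-2)}{n\epsilon_n}$ is exactly what upgrades $\lambda^2$ to $\lambda^2+\beta^2$ in the final variance term.

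The hard part will be the confidence bookkeeping, not the algebra. In the assembled inequality the two random quantities enter the numerator only through the sum $\ln W_1+\ln W_2$, both with coefficient $1/\lambda$. A naive union bound over the two events would either cost $1-2\delta$ confidence or replace $\ln(1/\delta)$ by $\ln(2/\delta)$. To obtain exactly $2\ln(1/\delta)$ at confidence $1-\delta$, I would combine the moments with Cauchy--Schwarz: $\mathbb{E}[\sqrt{W_1W_2}]\leq\sqrt{\mathbb{E}[W_1]\,\mathbb{E}[W_2]}\leq 1$, so a single Markov step yields $\sqrt{W_1W_2}\leq 1/\delta$, i.e.\ $\ln W_1+\ln W_2\leq 2\ln(1/\delta)$, with probability at least $1-\delta$. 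Substituting this and multiplying through by $\tfrac{\lambda}{\lambda-\beta}$ recovers the statement, with the $2\ln(1/\delta)$ numerator coming for free from the square-root combination rather than from splitting $\delta$.
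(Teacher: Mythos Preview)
Your proposal is correct and follows essentially the same route as the paper's proof: run the Bernstein exponential moment with the data-independent prior $\mu_{\beta R}$, use Catoni's identity to swap to $D_{\mathrm{KL}}(\rho\|\mu_{\beta r^{\mathrm{IS}}})$, bound the log-partition difference $\ln Z_R-\ln Z_r$ by $-\beta\,\mathbb{E}_{\mu_{\beta R}}[A]$ and control that with the reverse Bernstein moment, and combine the two moments via Cauchy--Schwarz before a single Markov step to get the $2\ln(1/\delta)$ numerator. The only cosmetic difference is that you obtain the bound on $\ln Z_R-\ln Z_r$ by writing $Z_r/Z_R=\mathbb{E}_{\mu_{\beta R}}[e^{\beta A}]$ and applying Jensen, whereas the paper derives the identical inequality from two applications of the change-of-measure equality.
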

\vspace{-0.4cm}

For more information about Catoni's localisation technique and its consequences, see \cite{catoni2007pac}.

Finally, we describe two more techniques for obtaining PAC-Bayes bounds with data-dependent priors that are similar to the localisation technique. The first, by London and Sandler \cite{london2019bayesian}, also uses a data-dependent approximation of a distribution-dependent prior. We require i.i.d. data $D_n = \{z_i\}_{i=1}^{n}$ and we restrict the posterior and prior to be $d$-dimensional Gaussian distributions:
\begin{equation*}
\rho_{\bs{\theta}} = \mathcal{N}(\bs{\theta}, \sigma^2I), \qquad \mu_{\widehat{\bs{\theta}}} = \mathcal{N}(\mathbb{E}_{D}[\widehat{\bs{\theta}}], \sigma^2I).
\end{equation*}

For example, $\rho$ and $\mu$ may be distributions over policy parameters and $\widehat{\bs{\theta}}$ could be an estimate of the parameters of the behaviour policy or the optimal policy. Define $\widehat{\bs{\theta}}$ as:
\begin{equation*}
\widehat{\bs{\theta}} = \argmin_{\bs{\theta}}\left\{\frac{1}{n}\sum_{i=1}^{n}L(\bs{\theta}, z_i) + \lambda\norm{\bs{\theta}}_2^2\right\}.
\end{equation*}

London and Sandler \cite{london2019bayesian} show that if $L(\cdot, z_i)$ is convex and $\beta$-Lipschitz for any $z_i$, then $\widehat{\bs{\theta}}$ satisfies a concentration inequality. With probability at least $1 - \delta$:
\begin{equation*}
||\widehat{\bs{\theta}} - \mathbb{E}_{D}[\widehat{\bs{\theta}}]||_2^2 \leq \frac{\beta}{\lambda}\sqrt{\frac{2\mathrm{ln}(2/\delta)}{n}}.
\end{equation*}

This can be used to upper bound the KL divergence between $\rho_{\bs{\theta}}$ and $\mu_{\widehat{\bs{\theta}}}$ with high probability. The PAC-Bayes $kl$ bound with $\rho_{\bs{\theta}}$ and $\mu_{\widehat{\bs{\theta}}}$ is:

\begin{theorem}[PAC-Bayes $kl$ London and Sandler Bound \cite{london2019bayesian}]
If the data set $D_n$ is drawn from a single, fixed behaviour policy, then for any $\delta \in (0, 1)$, with probability at least $1 - \delta$ and for all $\bs{\theta} \in \mathbb{R}^d$ simultaneously:
\begin{align*}
kl&\infdivx*{\epsilon_n r^{\mathrm{IS}}(\rho_{\bs{\theta}}, D_n)}{\epsilon_n R(\rho_{\bs{\theta}})} \leq \frac{\mathrm{ln}(4\sqrt{n}/\delta)}{n}\\
&+ \frac{\left(||\bs{\theta} - \widehat{\bs{\theta}}||_2 + (\beta/\lambda)\sqrt{2\mathrm{ln}(4/\delta)/n}\right)^2}{2n\sigma^2}.
\end{align*}
\label{thm:pac_bayes_kl_london_bound}
\end{theorem}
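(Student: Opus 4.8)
The plan is to combine the standard PAC-Bayes $kl$ bound (Thm.~\ref{thm:pac_bayes_kl}) with the concentration inequality satisfied by $\widehat{\bs{\theta}}$ through a union bound. The key observation that makes this work is that the prior $\mu_{\widehat{\bs{\theta}}} = \mathcal{N}(\mathbb{E}_D[\widehat{\bs{\theta}}], \sigma^2 I)$ is \emph{data-independent}: its mean is the expectation $\mathbb{E}_D[\widehat{\bs{\theta}}]$, not the data-dependent estimator $\widehat{\bs{\theta}}$ itself. Hence $\mu_{\widehat{\bs{\theta}}}$ is a legitimate prior for Thm.~\ref{thm:pac_bayes_kl}, even though our ultimate intention is to centre things at the computable $\widehat{\bs{\theta}}$.

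First, I would apply Thm.~\ref{thm:pac_bayes_kl} with the prior $\mu_{\widehat{\bs{\theta}}}$ at confidence level $\delta/2$. Because that bound holds simultaneously for every posterior $\rho \in \mathcal{P}(\Pi)$, it holds in particular for all Gaussian posteriors $\rho_{\bs{\theta}} = \mathcal{N}(\bs{\theta}, \sigma^2 I)$ simultaneously, so the ``for all $\bs{\theta}$'' quantifier comes for free. This yields, with probability at least $1 - \delta/2$,
\[
kl\infdivx*{\epsilon_n r^{\mathrm{IS}}(\rho_{\bs{\theta}}, D_n)}{\epsilon_n R(\rho_{\bs{\theta}})} \leq \frac{D_{\mathrm{KL}}(\rho_{\bs{\theta}}||\mu_{\widehat{\bs{\theta}}}) + \mathrm{ln}(4\sqrt{n}/\delta)}{n}.
\]

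Next I would evaluate the KL term explicitly. For two Gaussians sharing the covariance $\sigma^2 I$, the KL divergence collapses to the scaled squared distance of their means, giving $D_{\mathrm{KL}}(\rho_{\bs{\theta}}||\mu_{\widehat{\bs{\theta}}}) = \|\bs{\theta} - \mathbb{E}_D[\widehat{\bs{\theta}}]\|_2^2/(2\sigma^2)$. Since $\mathbb{E}_D[\widehat{\bs{\theta}}]$ is unknown, I would replace it by $\widehat{\bs{\theta}}$ using the triangle inequality, $\|\bs{\theta} - \mathbb{E}_D[\widehat{\bs{\theta}}]\|_2 \leq \|\bs{\theta} - \widehat{\bs{\theta}}\|_2 + \|\widehat{\bs{\theta}} - \mathbb{E}_D[\widehat{\bs{\theta}}]\|_2$, and control the second term with the stated concentration inequality for $\widehat{\bs{\theta}}$ (which needs the i.i.d. assumption, hence the single fixed behaviour policy). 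At confidence $\delta/2$ the latter reads $\|\widehat{\bs{\theta}} - \mathbb{E}_D[\widehat{\bs{\theta}}]\|_2 \leq (\beta/\lambda)\sqrt{2\mathrm{ln}(4/\delta)/n}$ with probability at least $1 - \delta/2$.

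Finally, I would take a union bound over the two failure events, so that with probability at least $1 - \delta$ both inequalities hold simultaneously. On this event, combining the triangle-inequality estimate with the concentration bound and substituting into $D_{\mathrm{KL}}(\rho_{\bs{\theta}}||\mu_{\widehat{\bs{\theta}}})$ reproduces the claimed right-hand side. I do not expect any genuine obstacle here; the proof is a clean assembly of known pieces. The only points needing care are the confidence bookkeeping, where splitting $\delta$ as $\delta/2 + \delta/2$ is exactly what turns the $\mathrm{ln}(2\sqrt{n}/\delta)$ and $\mathrm{ln}(2/\delta)$ of the two ingredients into the $\mathrm{ln}(4\sqrt{n}/\delta)$ and $\mathrm{ln}(4/\delta)$ appearing in the statement, and the initial recognition that $\mu_{\widehat{\bs{\theta}}}$ is an admissible prior precisely because it is centred at the expectation rather than the estimator.
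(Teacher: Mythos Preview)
Your proposal is correct and matches the paper's approach exactly: the paper outlines precisely this argument in the paragraphs preceding the theorem, namely that $\mu_{\widehat{\bs{\theta}}} = \mathcal{N}(\mathbb{E}_D[\widehat{\bs{\theta}}], \sigma^2 I)$ is a valid data-independent prior, that the Gaussian KL reduces to $\|\bs{\theta} - \mathbb{E}_D[\widehat{\bs{\theta}}]\|_2^2/(2\sigma^2)$, and that one combines the PAC-Bayes $kl$ bound with the concentration inequality for $\widehat{\bs{\theta}}$ via a union bound. Your explicit use of the triangle inequality and the $\delta/2 + \delta/2$ split fills in the details the paper leaves implicit.
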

\vspace{-0.4cm}

This is similar to the PAC-Bayes $kl$ Hypothesis Sensitivity bound in Theorem \ref{thm:pac_bayes_kl_hypo_sens_bound}. Here though, the mean of the Gaussian posterior $\rho_{\bs{\theta}}$ is unrestricted and the bound contains a data-dependent penalty term $||\bs{\theta} - \widehat{\bs{\theta}}||_2$.

Rivasplata et al. \cite{rivasplata2020pac} propose another method for deriving PAC-Bayes bounds with data-dependent Gibbs priors. In the proof of the PAC-Bayes Hoeffding-Azuma bound (Theorem \ref{thm:ex_bound}) in Section \ref{sec:pacb_ex}, the first two steps, where Lemma \ref{lem:donsk} and Markov's inequality are used, do not require the prior to be data-independent. If we follow the proof of Theorem \ref{thm:ex_bound}, but stop after the first two steps, we have that for any $\mu$, with probability at least $1 - \delta$ and all $\rho$ simultaneously:
\begin{align}
R(\rho) &\geq r^{\mathrm{IS}}(\rho, D_n) - \frac{D_{\mathrm{KL}}(\rho||\mu) + \mathrm{ln}(1/\delta)}{\lambda}\label{eqn:ex_dgibbs}\\
&- \frac{\mathrm{ln}\left(\mathop{\mathbb{E}}_{D_n}\mathop{\mathbb{E}}_{\pi \sim \mu}\left[e^{\lambda\left(r^{\mathrm{IS}}(\pi, D_n) - R(\pi)\right)}\right]\right)}{\lambda}.\nonumber
\end{align}

If we could upper bound $\mathrm{ln}\left(\mathop{\mathbb{E}}_{D_n}\mathop{\mathbb{E}}_{\pi \sim \mu}\left[\mathrm{exp}\left(\lambda\left(r^{\mathrm{IS}}(\pi, D_n) - R(\pi)\right)\right)\right]\right)$ for a data-dependent $\mu$, then we would obtain a PAC-Bayes bound with a data-dependent prior. Following Rivasplata et al. \cite{rivasplata2020pac}, if the data $D_n$ are i.i.d. and the prior is $\mu_{\beta r^{\mathrm{IS}}}$ then:
\begin{align*}
\mathrm{ln}\left(\mathop{\mathbb{E}}_{D_n}\mathop{\mathbb{E}}_{\pi \sim \mu_{\beta r^{\mathrm{IS}}}}\left[e^{\lambda\left(r^{\mathrm{IS}}(\pi, D_n) - R(\pi)\right)}\right]\right) &\leq \frac{2}{\epsilon_n^2}\left(1 + \frac{2\lambda\beta}{n}\right)\\
&+ \mathrm{ln}\left(1 + e^{\frac{\lambda^2}{2n\epsilon_n^2}}\right).
\end{align*}

Combining this with Equation \ref{eqn:ex_dgibbs}, we obtain:

\begin{theorem}[PAC-Bayes Hoeffding-Azuma Empirical Gibbs Bound\cite{rivasplata2020pac}]
If the data set $D_n$ is drawn from a single, fixed behaviour policy, then for any $\lambda > 0$, any $0 \leq \beta \leq \lambda$, any $\delta \in (0, 1)$ and any probability distribution $\mu \in \mathcal{P}(\Pi)$, with probability at least $1 - \delta$, for all distributions $\rho \in \mathcal{P}(\Pi)$ simultaneously:
\begin{align*}
R(\rho) &\geq r^{\mathrm{IS}}(\rho, D_n) - \frac{2}{\lambda\epsilon_n^2} - \frac{4\beta}{n\epsilon_n^2}\\
&- \frac{D_{\mathrm{KL}}(\rho||\mu_{\beta r^{\mathrm{IS}}}) + \mathrm{ln}((1 + e^{\frac{\lambda^2}{2n\epsilon_n^2}})/\delta)}{\lambda}.
\end{align*}
\label{thm:pac_bayes_ha_emp_gibbs}
\end{theorem}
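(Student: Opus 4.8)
The plan is to take the intermediate inequality (\ref{eqn:ex_dgibbs}) as the starting point and then specialise the prior to the empirical Gibbs prior $\mu_{\beta r^{\mathrm{IS}}}$. The crucial observation, already noted before the statement, is that (\ref{eqn:ex_dgibbs}) is obtained using only the change of measure inequality (Lem. \ref{lem:donsk}), applied pointwise in $D_n$, and a single application of Markov's inequality to the scalar random variable $\mathop{\mathbb{E}}_{\pi \sim \mu}[e^{\lambda(r^{\mathrm{IS}}(\pi, D_n) - R(\pi))}]$. Neither step requires $\mu$ to be fixed before the data, so (\ref{eqn:ex_dgibbs}) remains valid when $\mu = \mu_{\beta r^{\mathrm{IS}}}$ is data-dependent. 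The price is that the exponential-moment term $\mathop{\mathbb{E}}_{D_n}\mathop{\mathbb{E}}_{\pi \sim \mu}[e^{\lambda(r^{\mathrm{IS}}(\pi, D_n) - R(\pi))}]$ can no longer be discharged by the canonical-assumption argument used to prove Thm. \ref{thm:ex_bound}: that argument integrates (\ref{eqn:canon_pair}) against a fixed prior and hence needs data-independence. Instead this term must be bounded directly.

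First I would substitute $\mu = \mu_{\beta r^{\mathrm{IS}}}$ into (\ref{eqn:ex_dgibbs}), keeping the complexity term as $D_{\mathrm{KL}}(\rho||\mu_{\beta r^{\mathrm{IS}}})$ and isolating $\mathop{\mathbb{E}}_{D_n}\mathop{\mathbb{E}}_{\pi \sim \mu_{\beta r^{\mathrm{IS}}}}[e^{\lambda(r^{\mathrm{IS}}(\pi, D_n) - R(\pi))}]$. The main obstacle, and essentially all of the content of the theorem, is to control this expectation. The difficulty is that the data-dependence of the prior enters through the Gibbs normaliser $\mathop{\mathbb{E}}_{\pi' \sim \mu}[e^{\beta r^{\mathrm{IS}}(\pi', D_n)}]$, which couples the inner and outer expectations and blocks a direct use of Tonelli's theorem. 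Following Rivasplata et al. \cite{rivasplata2020pac}, I would write the Gibbs expectation as the ratio $\mathop{\mathbb{E}}_{\pi \sim \mu}[e^{(\beta + \lambda)r^{\mathrm{IS}}(\pi, D_n) - \lambda R(\pi)}] / \mathop{\mathbb{E}}_{\pi \sim \mu}[e^{\beta r^{\mathrm{IS}}(\pi, D_n)}]$ and decouple numerator and denominator, exploiting that the importance weights and rewards are non-negative and $\beta \geq 0$, so the normaliser is at least $1$. After decoupling, the reference distribution $\mu$ is data-independent, so Tonelli applies and a Hoeffding-type moment bound for the i.i.d.\ sum $r^{\mathrm{IS}}(\pi, D_n)$ (whose summands lie in $[0, 1/\epsilon_n]$) can be used. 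This is exactly where the single fixed behaviour policy (the i.i.d.\ assumption) is needed, and it yields the estimate
\begin{equation*}
\mathrm{ln}\mathop{\mathbb{E}}_{D_n}\mathop{\mathbb{E}}_{\pi \sim \mu_{\beta r^{\mathrm{IS}}}}\left[e^{\lambda(r^{\mathrm{IS}}(\pi, D_n) - R(\pi))}\right] \leq \frac{2}{\epsilon_n^2}\left(1 + \frac{2\lambda\beta}{n}\right) + \mathrm{ln}\left(1 + e^{\frac{\lambda^2}{2n\epsilon_n^2}}\right),
\end{equation*}
which I would take from Rivasplata et al.\ as given.

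Finally I would substitute this estimate into (\ref{eqn:ex_dgibbs}) and collect terms. Dividing the first summand on the right-hand side by $\lambda$ produces $\frac{2}{\lambda\epsilon_n^2} + \frac{4\beta}{n\epsilon_n^2}$, and merging the remaining logarithm with $\mathrm{ln}(1/\delta)$ gives $\mathrm{ln}((1 + e^{\lambda^2/(2n\epsilon_n^2)})/\delta)$; since $\lambda > 0$, upper bounding the exponential moment only weakens the lower bound, so the direction of the inequality is preserved. This reproduces the claimed statement exactly. I expect the only genuine difficulty to be the exponential-moment estimate for the data-dependent prior; the substitution and the collection of terms are routine bookkeeping.
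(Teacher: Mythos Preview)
Your proposal is correct and follows essentially the same route as the paper: start from the intermediate bound (\ref{eqn:ex_dgibbs}), which is valid for data-dependent priors, specialise to $\mu = \mu_{\beta r^{\mathrm{IS}}}$, invoke the exponential-moment estimate from Rivasplata et al.\ \cite{rivasplata2020pac}, and then collect terms. The paper does not spell out the decoupling argument for the exponential moment (it simply quotes the inequality), but your sketch of that step is consistent with the cited reference and the final algebraic identification is exactly as in the paper.
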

\vspace{-0.4cm}

This is similar to the localised PAC-Bayes Bernstein bound. However, this bound only holds for i.i.d. data and has worse dependence on $\epsilon_n$.

\subsubsection{Priors Learned From Other Data Sets}

PAC-Bayesian meta learning \cite{pentina2014pac}, \cite{amit2018meta}, \cite{rothfuss2021pacoh}, \cite{rothfuss2021fpacoh}, \cite{liu2021statistical}, \cite{theresa2021transfer}, \cite{meunier2021meta}, \cite{flynn2022pac} is another line of work in which priors are learned from data. These methods use data sets from previous learning tasks to learn a distribution over priors. Flynn et al. \cite{flynn2022pac} propose PAC-Bayes bounds for meta-learning priors over the policy class for multi-armed bandit problems.

\subsection{Optimising Bound Parameters}
\label{sec:bound_params}

Many PAC-Bayes bounds contain parameters that must be set before observing the data, such as $\lambda$ in the PAC-Bayes Bernstein bound in Theorem \ref{thm:is_bernstein}. We would like to be able to choose optimal values of these parameters, however the optimal values are often data-dependent. In Appendix \ref{sec:app_bound_params}, we present methods for approximately optimising PAC-Bayes bounds with respect to their parameters. These methods are very similar to the sample splitting and union bound priors, so we defer this to the Appendix.

\section{Experimental Comparison}
\label{sec:comparison}

In this section, we compare the values and properties of the presented PAC-Bayes bandit bounds. In Section \ref{sec:benchmarks}, we describe the benchmark tasks on which we evaluate the bounds. Then, we discuss insights gained from our experiments. In Section \ref{sec:regret_comparison}, we compare the cumulative regret bounds. In Section \ref{sec:reward_comparison}, we compare the reward bounds.

\subsection{Benchmarks}
\label{sec:benchmarks}

We used three benchmark tasks, one multi-armed bandit problem and two contextual bandit problems.

\subsubsection{MAB Binary Benchmark}

The first benchmark is a multi-armed bandit problem with a finite set of actions $\mathcal{A} = \{1, \dots, K\}$. The rewards are always either 0 or 1, and the reward distribution for action $a_i$ is a Bernoulli distribution with parameter $p_i$. The Bernoulli parameters $p_i$ are drawn uniformly from the interval $[0, 0.8]$ and one action always has $p_i = 0.8$. For the policy class $\Pi$, we use the set of all deterministic policies, so $\Pi = \mathcal{A}$. We report results averaged over several instances of this problem, with different randomly generated Bernoulli parameters.

\subsubsection{CB Binary Linear Benchmark}

The next benchmark is a contextual bandit problem where the optimal policy is a linear function of the state. The set of states is $\mathcal{S} = \mathbb{R}^d$ and the set of actions is $\mathcal{A} = \{1, \dots, K\}$. The state distribution $P_S$ is a standard Gaussian distribution. The rewards are either 0 or 1. When creating an instance of this problem, we sample a linear classifier:
\begin{equation*}
f(s; \theta^*) = \argmax_{a \in \{1, \dots, K\}}\left\{\inner{s}{\theta^*}_a\right\}.
\end{equation*}

The weight matrix $\theta^{*} \in \mathbb{R}^{d \times K}$ is drawn from a standard Gaussian distribution. $\inner{s}{\theta^*}_a$ is the $a$th element of $\inner{s}{\theta^*}$. For a given state $s$ and action $a$, if $a = f(s; \theta^*)$, then the reward is drawn from a Bernoulli distribution with parameter 0.8. Otherwise, the reward is drawn from a Bernoulli distribution with parameter 0.2. The policy class contains all linear softmax policies:
\begin{equation*}
\Pi = \left\{\pi_{\theta}(a|s) = \frac{\mathrm{exp}(\inner{s}{\theta}_a)}{\sum_{a^{\prime}}\mathrm{exp}(\inner{s}{\theta}_{a^{\prime}})}\bigg| \theta \in \mathbb{R}^{d \times K}\right\}.
\end{equation*}

\subsubsection{CB Classification Benchmark}

For the final benchmark task, we turned four classification data sets found on OpenML \cite{vanschoren2013openml} and the UCI Machine Learning Repository \cite{dua2019uci} into contextual bandit problems. The states are the covariates of the classification problem, the actions are predicted class labels and the rewards are 1 if the action matches the true class label and 0 otherwise. In the resulting contextual bandit problems, $\mathcal{S} \subseteq \mathbb{R}^d$, where $d$ is between $7$ and $64$, and $\mathcal{A} = \{1, \dots, K\}$, where $K$ is $10$ or $11$. See Appendix \ref{sec:class_data_sets} for more information about the data sets used. For the policy class, we use multi-layer perceptrons with two hidden layers of 200 units each. The final layer has a softmax activation function and the remaining layers have the Elu activation function \cite{clevert2015elu} with $\alpha = 1$.

\subsection{Regret Bounds}
\label{sec:regret_comparison}

In Section \ref{sec:reg_bounds}, we saw several PAC-Bayes cumulative regret bounds for certain (online) multi-armed bandit algorithms. We now evaluate these bounds and algorithms as well as the PAC-Bayes cumulative regret bound that would be possible if the improved bound on the variance of the IS estimate suggested by Seldin et al. \cite{seldin2012bern} was proven for EXP3.

In the MAB Binary benchmark, with $K = 10$, we compared the multi-armed bandit algorithms described in Equation \ref{eqn:exp3} with several settings of $\gamma_n$ and $\epsilon_n$. Motivated by the PAC-Bayes Hoeffding-Azuma cumulative regret bound, we tested $\epsilon$-greedy with $\epsilon_n = n^{-1/4}K^{-1/2}$ and an EXP3-like algorithm with $\gamma_n = n^{1/4}K^{-1/2}\sqrt{\mathrm{ln}(K)}$ and $\epsilon_n = n^{-1/4}K^{-1/2}$. We call these algorithms HA $\epsilon$-greedy and HA EXP3 repectively. Motivated by the PAC-Bayes Bernstein cumulative regret bound, we tested $\epsilon$-greedy with $\epsilon_n = n^{-1/3}K^{-2/3}$ and an EXP3-like algorithm with $\gamma_n = n^{1/3}K^{-1/3}\sqrt{\mathrm{ln}(K)}$ and $\epsilon_n = n^{-1/3}K^{-2/3}$. We call these algorithms Bern $\epsilon$-greedy and Bern EXP3 repectively. Finally, we run (standard) EXP3 and the UCB1 algorithm \cite{auer2002finite} for comparison.

We evaluate each cumulative regret bound with $\delta = 0.05$. For HA $\epsilon$-greedy and Bern $\epsilon$-greedy, the $\sqrt{\mathrm{ln}(K)}$ term in their cumulative regret bounds (Theorem \ref{thm:is_hoeffding_regret} and Theorem \ref{thm:is_bernstein_regret}) can be removed, so we report different bound values for the $\epsilon$-greedy and EXP3-like variants.

\begin{experimentmessage}
The multi-armed bandit algorithms motivated by the PAC-Bayes Hoeffding-Azuma and Bernstein cumulative regret bounds all performed poorly compared to EXP3 and UCB1.
\end{experimentmessage}

Figure \ref{fig:mab_regret} shows both the actual cumulative regret (left) and the PAC-Bayes bounds on the cumulative regret (right) over 10000 steps. Surprisingly, EXP3 had very similar cumulative regret to UCB1. HA EXP3, HA $\epsilon$-greedy, Bern EXP3 and Bern $\epsilon$-greedy all had much higher cumulative regret. The PAC-Bayes cumulative regret bounds for each of these algorithms were very loose, each being more than a factor of 10 above the actual cumulative regret and worse than the trivial bound at $n=10000$. Note that each of the PAC-Bayes cumulative regret bounds would eventually drop below the trivial bound for large enough $n$. While the hypothetical PAC-Bayes bound for EXP3 is much lower than the other bounds, it is still not really tight. The average cumulative regret for EXP3 at $n=10000$ was roughly 415, whereas the bound was roughly 4500.

\begin{figure}[H]
\centering
\includegraphics[width=1.0\columnwidth]{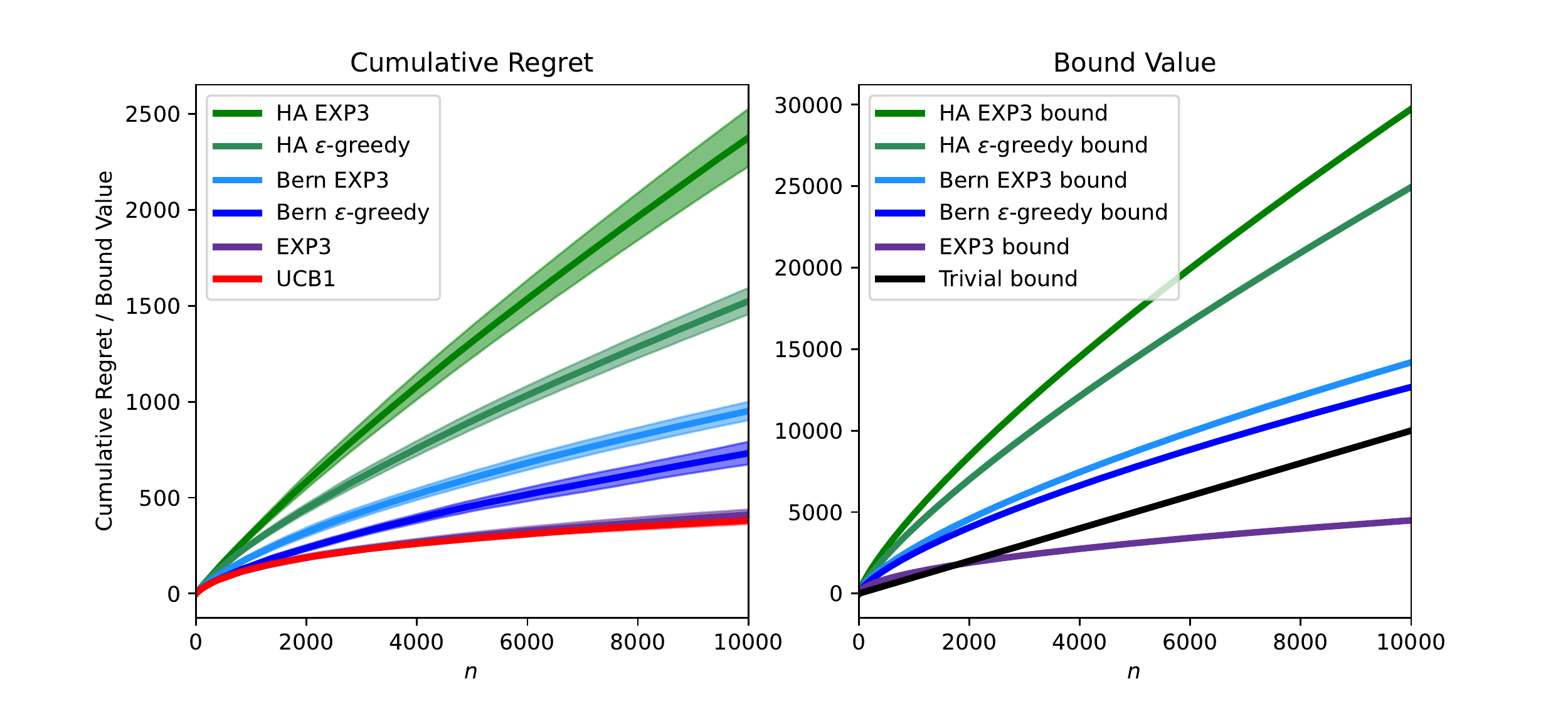}
\caption{Comparison of the MAB algorithms and bounds in the MAB Binary benchmark with $K = 10$. The left plot shows the average cumulative regret plus/minus 1 standard deviation for each algorithm. The right plot shows the cumulative regret bounds, each with $\delta = 0.05$. The EXP3 bound is the cumulative regret bound that would be possible if the improved bound on the variance, suggested by Seldin et al. \cite{seldin2012bern}, was proven. The trivial bound assumes maximum regret at each round.}
\label{fig:mab_regret}
\end{figure}

\begin{experimentmessage}
The PAC-Bayes Hoeffding-Azuma and Bernstein cumulative regret bounds are both loose. If the improved bound on the variance of the IS estimate suggested by Seldin et al. \cite{seldin2012bern} was proven for EXP3, then a much better (though still not really tight) PAC-Bayes cumulative regret bound would be possible.
\end{experimentmessage}

\subsection{Reward Bounds}
\label{sec:reward_comparison}

In this section, we present our observations about the PAC-Bayes reward bounds for the IS and CIS estimates. Since we are not aware of a bound on the bias term in the Efron-Stein WIS bound, we only evaluate it in Appendix \ref{sec:efron_stein_experiment}, assuming the bias is 0. We compare the bounds in the (offline) MAB Binary and CB Binary Linear benchmarks. In each experiment we optimise each bound with respect to the posterior $\rho$ and then report the value of the bound and the expected reward for this $\rho$. This allows us to compare the best possible value of each bound as well as which bound works the best as a learning objective. For details about how we optimise the various bounds with respect to $\rho$ and then evaluate them, see Appendix \ref{sec:opt_and_eval}.

We always use a data set of size $n=1000$ in the MAB Binary benchmark and $n=10000$ in the CB Binary Linear benchmark. Unless stated otherwise, we use $K = 10$ for the MAB benchmark, we use $d=10$ and $K=10$ for the CB benchmark, and the data set is generated using a uniform behaviour policy. In Section \ref{sec:offline_method}, motivated by our observations, we evaluate a new offline PAC-Bayes bandit algorithm in the CB Classification benchmark.

\begin{figure*}
\centering
\includegraphics[width=0.33\textwidth]{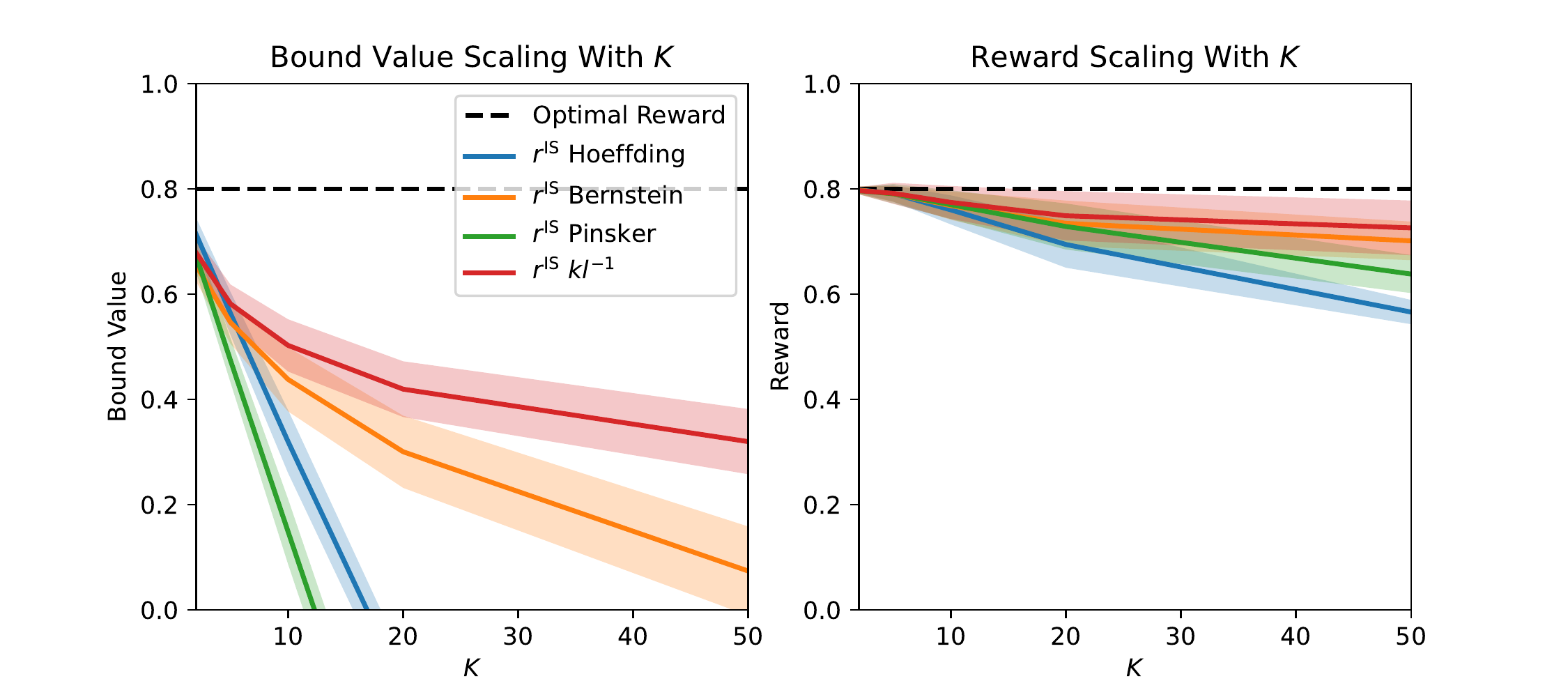}
\includegraphics[width=0.33\textwidth]{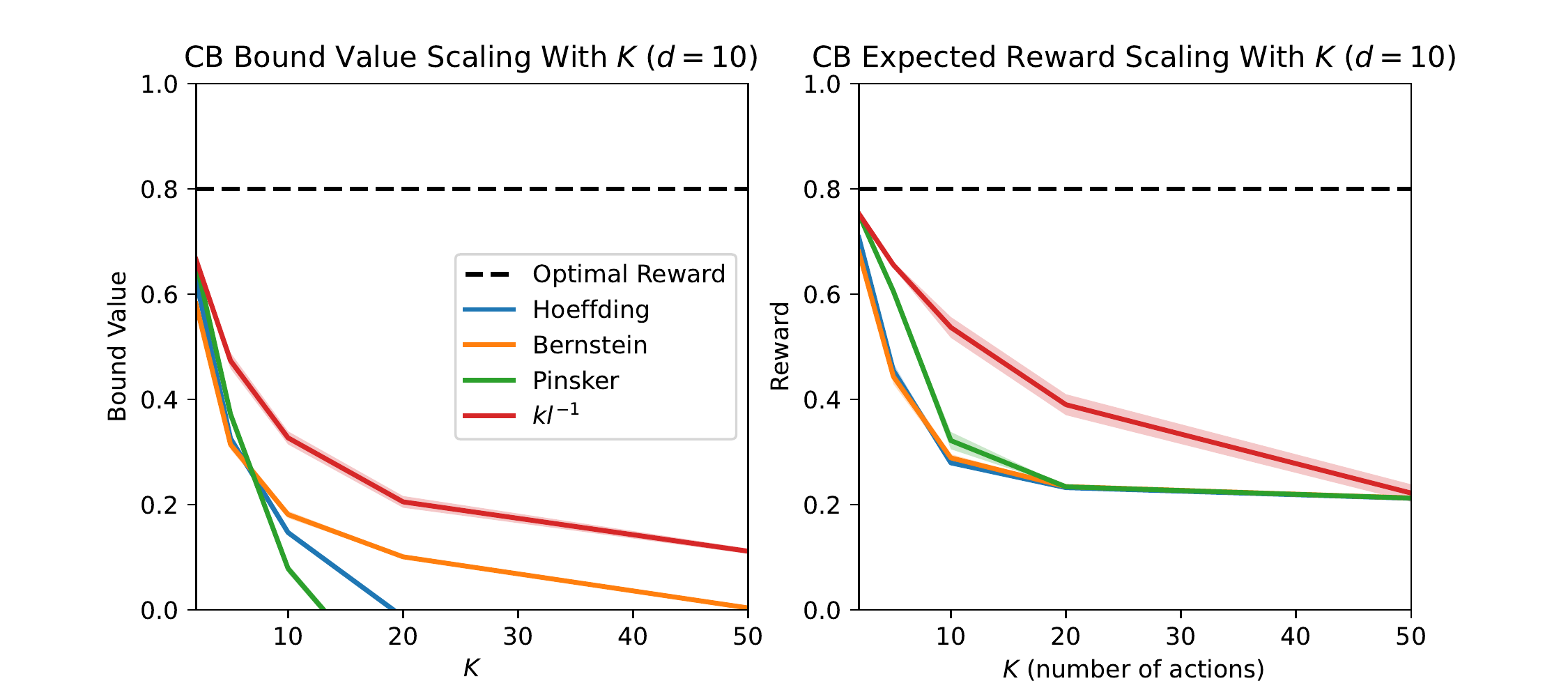}
\includegraphics[width=0.33\textwidth]{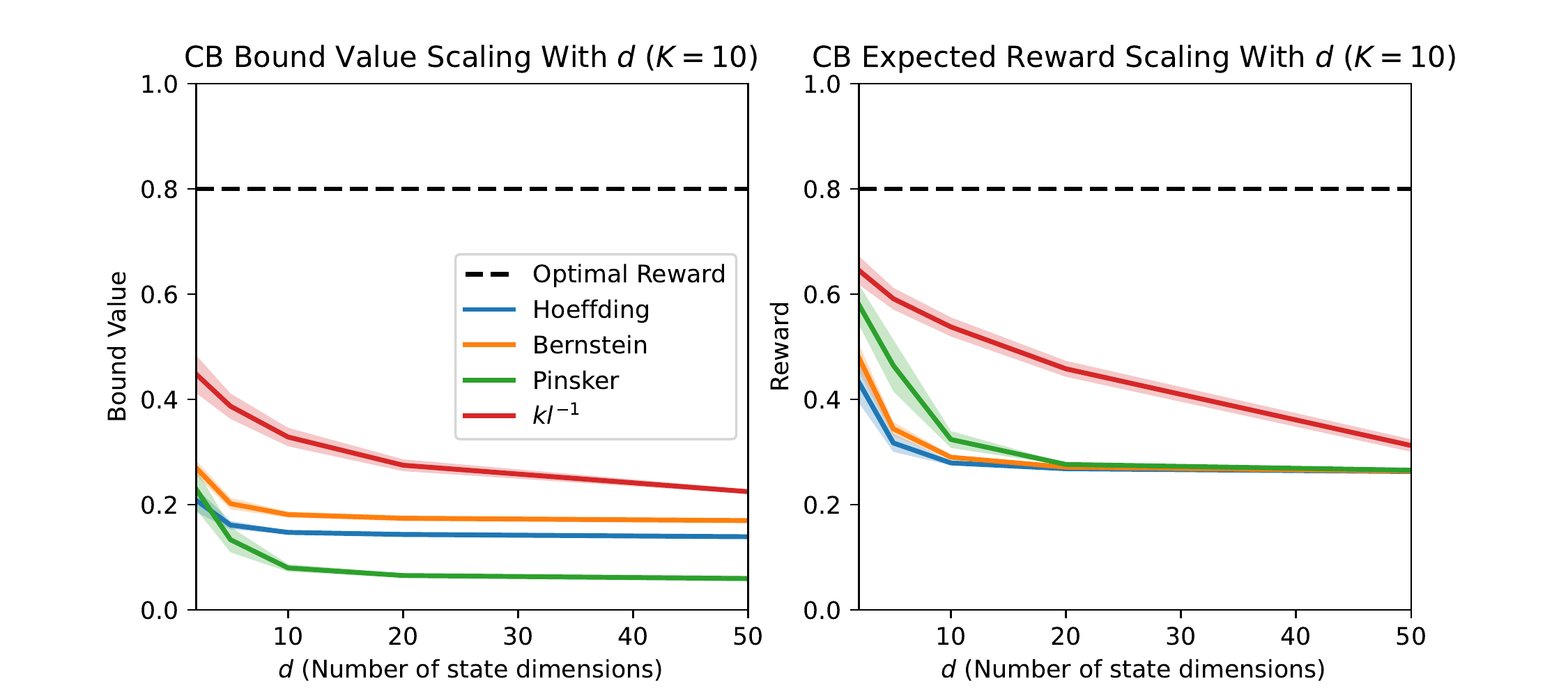}
\caption{Our comparison of the PAC-Bayes reward bounds. (Left) The bound value and expected reward for each bound in the MAB Binary benchmark. The number of actions $K$ varies from 2 to 50 along the $x$ axes. (Middle) The bound value and expected reward for each bound in the CB Binary Linear benchmark. The number of dimensions of the states $d$ is fixed at 10 and the number of actions $K$ varies from 2 to 50 along the $x$ axes. (Right) The bound value and expected reward for each bound in the CB Binary Linear benchmark. $d$ varies from 2 to 50 along the $x$ axes and $K$ is fixed at 10.}
\label{fig:bound_comparison}
\end{figure*}

\subsubsection{Insights About Different Bounds}
\label{sec:bound_comparison}

We first investigate which of the PAC-Bayes bounds available for the IS and CIS estimates is best. We varied the number of actions $K$ and the number of dimensions $d$ of the state vector to investigate how each of the bounds scales with $K$ and $d$. In the MAB benchmark, $K$ varied from $2$ to $50$. In the CB benchmark, we ran the experiment twice. First, $d$ was fixed at 10 and $K$ varied from $2$ to $50$. Then, $K$ was fixed at 10 and $d$ varied from $2$ to $50$.

\begin{experimentmessage}
The PAC-Bayes $kl^{-1}$ bound gives the greatest lower bound on the expected reward. The posterior learned by maximising the $kl^{-1}$ bound achieves the highest expected reward.
\end{experimentmessage}

In the left and middle pairs of plots in Figure \ref{fig:bound_comparison}, we observe that increasing the number of actions causes the bound values to decay rapidly. As one would expect, due to its improved dependence on $\epsilon_n$, the Bernstein bound decays at a much slower rate than the Hoeffding-Azuma and Pinsker bounds. The $kl^{-1}$ bound scales up the best to large numbers of actions. As seen in the rightmost pair of plots in Figure \ref{fig:bound_comparison}, increasing the number of dimensions of the states appeared to have less effect on the bound values. The PAC-Bayes $kl^{-1}$ bound consistently gave the greatest bound values and yielded posteriors with the greatest reward.

\subsubsection{Insights About Clipping}
\label{sec:estimator_comparison}

In this section, we compare the PAC-Bayes $kl^{-1}$ bound for the IS and CIS estimates. Since clipping affects the importance weights, which are determined by the behaviour policy, we test the bounds with several behaviour policies to try and identify if and when clipping is helpful. First, we use a uniform behaviour policy. Next, we use an informative behaviour policy. In the MAB benchmark the informative policy was an $\epsilon$-smoothed Gibbs policy:
\begin{equation*}
b^{\mathrm{inf}}(a) \propto e^{10 R(a)}, \quad \tilde{b}^{\mathrm{inf}}(a) = (1 - K\epsilon)b^{\mathrm{inf}}(a) + \epsilon.
\end{equation*}

In the CB benchmark, the informative behaviour policy was another $\epsilon$-smoothed policy:
\begin{equation*}
b^{\mathrm{inf}}(a|s) \propto e^{\inner{s}{\theta^{*}}_a}, \quad \tilde{b}^{\mathrm{inf}}(a|s) = (1 - K\epsilon)b^{\mathrm{inf}}(a|s) + \epsilon.
\end{equation*}

$\theta^{*}$ is the weight matrix of the unknown linear classifier that generates the rewards. Finally, we use a randomly generated behaviour policy. In the MAB benchmark, the random behaviour policy was an $\epsilon$-smoothed probability vector drawn randomly from a symmetric Dirichlet distribution with $\alpha = 1$. In the CB benchmark, the behaviour policy was an $\epsilon$-smoothed linear softmax policy with a weight matrix $\theta$ drawn randomly from a standard Gaussian distribution. For both the informative and random behaviour policies, we used $\epsilon = 0.01$.

\begin{experimentmessage}
Using the CIS estimate instead of the IS estimate can improve both the bound value and the expected reward of the learned posterior when the behaviour policy is non-uniform.
\end{experimentmessage}

In Figure \ref{fig:clipping} we see that the PAC-Bayes $kl^{-1}$ bound for the IS estimate yields a lower bound value and lower expected reward with the informative behaviour policy than with the uniform behaviour policy. When the behaviour policy is uniform, the bound for the CIS estimate is no better than the bound for the IS estimate. However, when the behaviour policy is non-uniform, and particularly when it is informative, the $kl^{-1}$ bound for the CIS estimate can yield a greater bound value and greater expected reward.

\subsubsection{Insights About Choosing the Prior}
\label{sec:prior_comparison}

In this section, we evaluate the presented methods for choosing the prior by using them to set the prior in a PAC-Bayes bound for the IS estimate. For the prior selection methods that work with any PAC-Bayes bound, we use them with the $kl^{-1}$ bound, since this appeared to be the best in our earlier experiments.

In the MAB benchmark, the bounds we compared are: the $kl^{-1}$ bound with a uniform prior (Theorem \ref{thm:pac_bayes_kl}), the $kl^{-1}$ bound with a prior learned using a subset of the data (Theorem \ref{thm:pac_bayes_kl_subset_bound}), the $kl^{-1}$ bound with a differentially-private prior (Theorem \ref{thm:pac_bayes_kl_dp_bound}), the $kl^{-1}$ Lever bound (Theorem \ref{thm:pac_bayes_kl_lever_bound}), Oneto et al.'s distribution stability bound (Theorem \ref{thm:pac_bayes_dist_stab}), the localised PAC-Bayes Bernstein bound (Theorem \ref{thm:local_pac_bayes_bernstein}) and the PAC-Bayes Hoeffding-Azuma Empirical Gibbs bound (Theorem \ref{thm:pac_bayes_ha_emp_gibbs}). We do not evaluate the $kl$ hypothesis sensitivity bound (Theorem \ref{thm:pac_bayes_kl_hypo_sens_bound}) or the $kl$ London and Sandler bound (Theorem \ref{thm:pac_bayes_kl_london_bound}) because we are not aware of a suitable learning algorithm with known hypothesis sensitivity coefficients for the first or a suitable convex and $\beta$-Lipschitz function $L$ for the second. We compare the same bounds in the CB benchmark, but without the localised PAC-Bayes Bernstein bound or the PAC-Bayes Hoeffding-Azuma Empirical Gibbs bound. This is because we cannot calculate $D_{\mathrm{KL}}(\rho||\mu_{\beta r^{\mathrm{IS}}})$ for the linear softmax policy class. In Appendix \ref{sec:prior_implementation}, we describe how each of these bounds implemented.

\begin{figure*}
\centering
\includegraphics[width=0.33\textwidth]{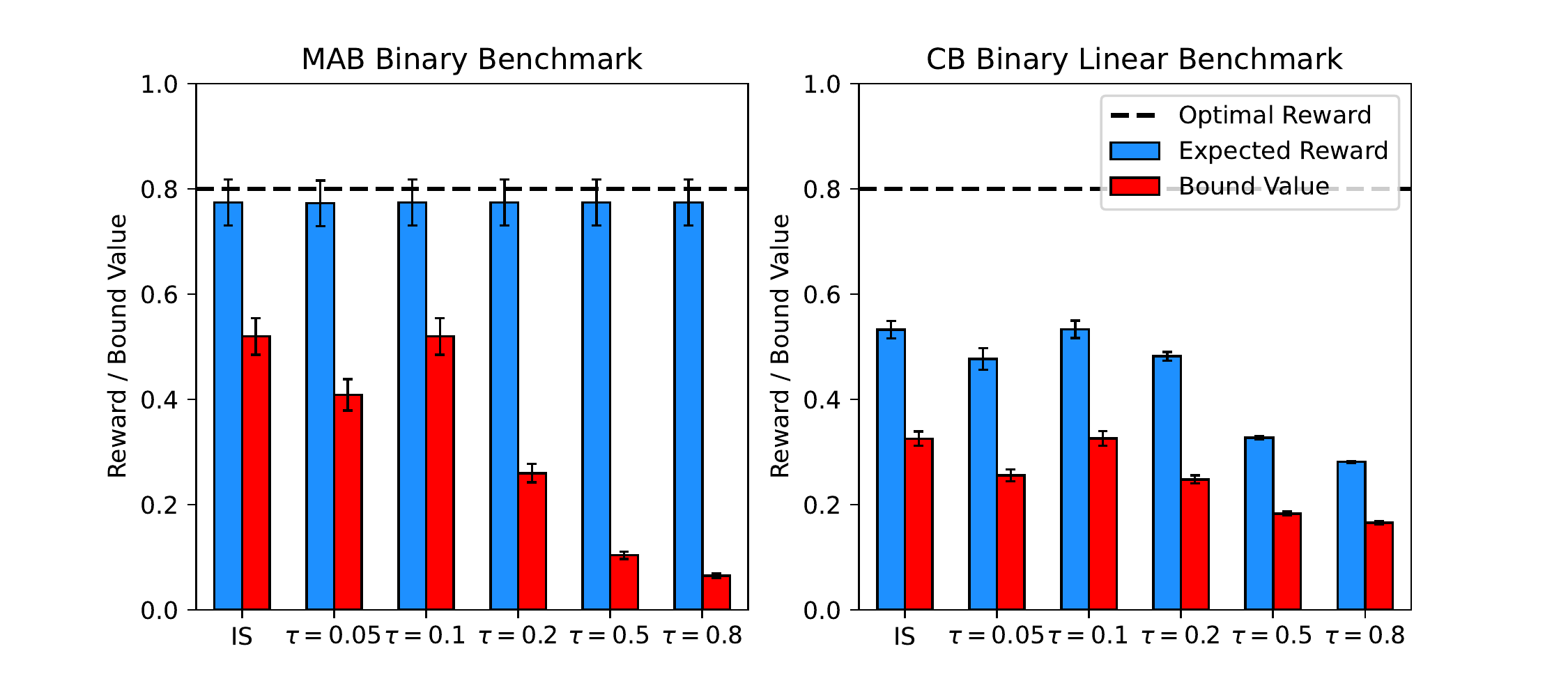}
\includegraphics[width=0.33\textwidth]{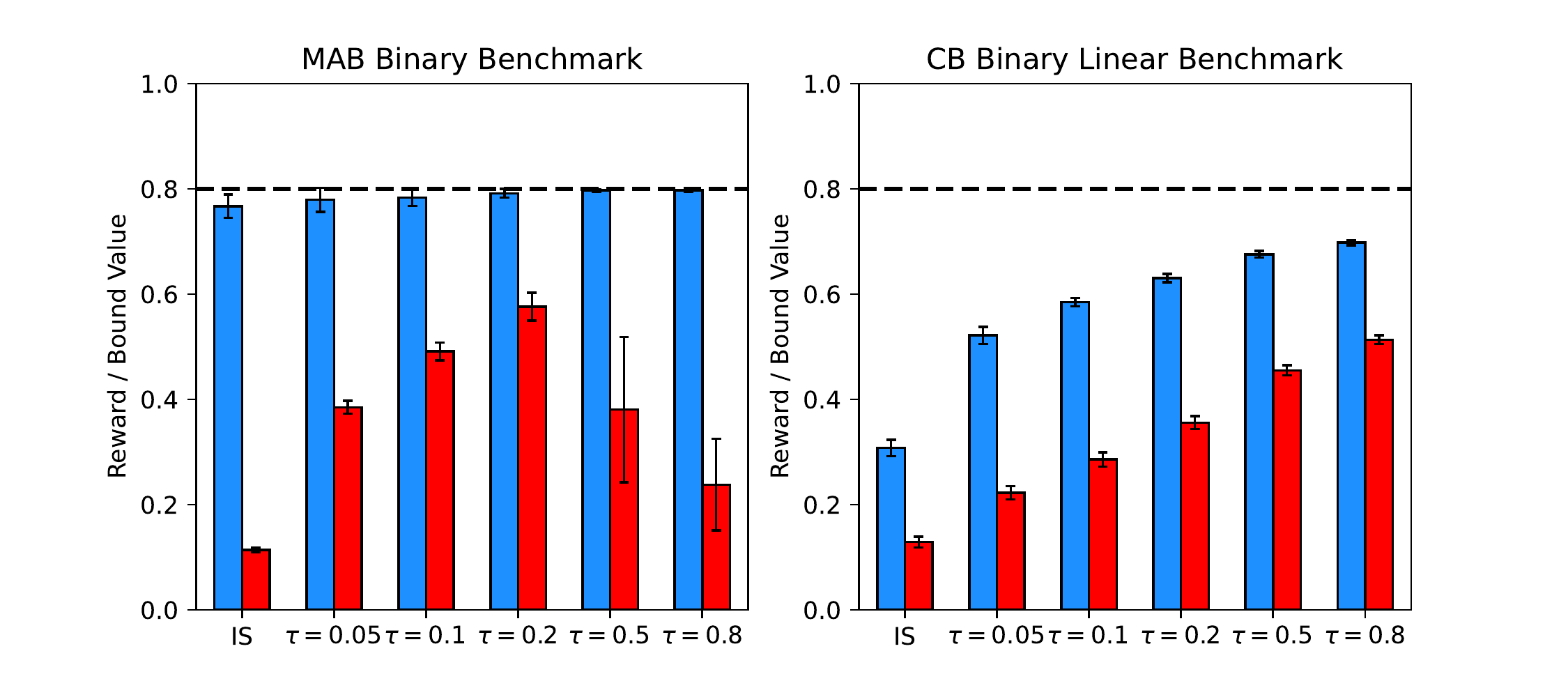}
\includegraphics[width=0.33\textwidth]{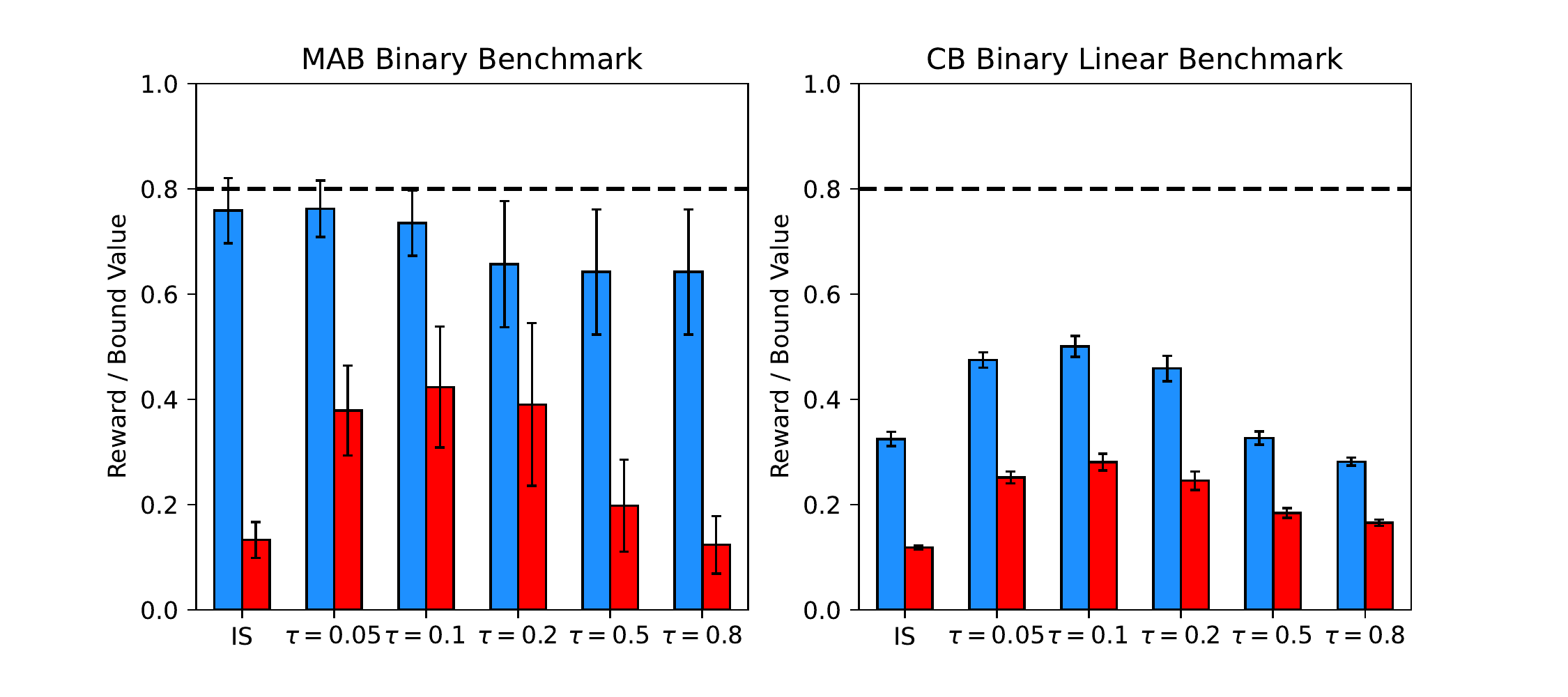}
\caption{Our comparison of the PAC-Bayes $kl^{-1}$ reward bound with the IS estimate and the CIS estimate with several $\tau$'s. (Left) The expected reward (blue) and bound value (red) for each estimate in the MAB and CB benchmarks with a uniform behaviour policy. (Middle) The expected reward and bound value for each estimate in the MAB and CB benchmarks with an informative behaviour policy. (Right) The expected reward and bound value for each estimate in the MAB and CB benchmarks with a random, non-uniform behaviour policy.}
\label{fig:clipping}
\end{figure*}

\begin{experimentmessage}
A data-dependent prior learned using a subset of the data appears to be the best way to set the prior.
\end{experimentmessage}

Figure \ref{fig:priors_and_classification} (left) shows the expected reward and bound values for the bounds we compared. In the MAB benchmark, none of the bounds with data-dependent or distribution-dependent priors achieved higher reward or higher bound values than the $kl^{-1}$ bound with a uniform prior. In this problem, and with a uniform prior, $D_{\mathrm{KL}}(\rho||\mu) \leq \mathrm{ln}(K)$. Since this is already small (relative to $\mathrm{ln}(1/\delta)$), it not so surprising that the more sophisticated priors did not help. The localised Bernstein bound and the Hoeffding-Azuma Empirical Gibbs bound were both greatest when $\beta = 0$. With this choice of $\beta$, the empirical Gibbs prior $\mu_{\beta r^{\mathrm{IS}}}$ is a uniform prior. The distribution stability bound and the Hoeffding-Azuma Empirical Gibbs bound were both vacuous, with average values of $-3.613$ and $-5.608$ respectively.

In the CB benchmark, the $kl^{-1}$ bound with a prior learned from a subset of the data had a greater expected reward and bound value compared to the $kl^{-1}$ bound with a standard Gaussian prior. With the $\eta$-differentially private prior, we found that as soon as $\eta$ is large enough that the prior is informative, the $\eta$-dependent penalty terms become large enough to offset this benefit. The bound value was greatest when $\eta$ was very close to 0, and we observe that the expected reward and bound value for the $\eta$-DP prior and the uninformative prior are almost the same. With the Lever and distribution stability bounds for the Gibbs posterior $\rho_{\gamma}$, we found that when $\gamma$ is large enough for $\rho_{\gamma}$ to have large empirical reward, the bounds on $D_{\mathrm{KL}}(\rho_{\gamma}||\mu_{\gamma})$ are large enough to offset this. Consequently, these two bounds were greatest when $\gamma$ was small, resulting in underfitting, low expected reward and low bound values. The average bound value for the distribution stability bound was -3.807. Our results suggest that using a subset of the data to learn a prior appears to be the best way to set the prior, at least for large enough policy classes.

\subsubsection{Insights About Choosing Bound Parameters}
\label{sec:parameter_comparison}

In Appendix \ref{sec:app_param_exp}, we compare the methods presented in Appendix \ref{sec:app_bound_params} for approximately optimising the parameters of PAC-Bayes bounds. We used each method to set the $\lambda$ parameter of the $r^{\mathrm{IS}}$ PAC-Bayes Bernstein bound. Surprisingly, we found that setting $\lambda$ to a fixed, data-independent value resulted in better bound values than using either sample splitting (Theorem \ref{thm:pac_bayes_bern_subset_bound}) or union bounds (Theorem \ref{thm:pac_bayes_bern_grid_bound}). We explore this result further in \ref{sec:app_param_exp} and find that whenever $n$ is large enough for the Bernstein bound to be non-vacuous for some value of $\lambda$, the minimum of the Bernstein bound with respect to $\lambda$ is flat, which means that a reasonable data-independent $\lambda$ is almost as good as the optimum value.

\subsubsection{A Method For Offline Bandits}
\label{sec:offline_method}

Using the insights gained from the previous experiments, we propose a method for offline contextual bandit problems and we test it in the Contextual Bandit Classification problem where the policy class is a set of neural networks.

For the first step of our method, we use the first half of the training data $D_{1:n/2}$ to learn a diagonal Gaussian prior over the neural network weights $\theta$ by maximising
\begin{equation}
\mathop{\mathbb{E}}_{\theta \sim \mu_{D}}\left[r^{\mathrm{IS}}(\pi_{\theta}, D_{1:n/2})\right] - \beta D_{\mathrm{KL}}(\mu_{D}||\mu),\label{eqn:prior_obj}
\end{equation}

with respect to $\mu_{D}$. $\pi_{\theta}$ is a neural network with weights $\theta$. $\mu$ is a standard Gaussian distribution. To choose $\beta$, we split $D_{1:n/2}$ into a training set $D_{\mathrm{tr}}$ and a validation set $D_{\mathrm{val}}$. We learn diagonal Gaussian priors by maximising Equation \ref{eqn:prior_obj} for $\beta \in \{10^{-k}|k \in \{1, \dots, 6\}\}$. We choose the value of $\beta$ where the resulting prior $\mu_{D}$ maximises $\mathbb{E}_{\theta \sim \mu_{D}}[r^{\mathrm{IS}}(\pi_{\theta}, D_{\mathrm{val}})]$. Next, we learn the clipping parameter $\tau$. With $\mu_{D}$ fixed, and using the first half of the training data, we optimise the following objective with respect to $\tau$:
\begin{equation*}
\frac{1}{\tau}kl^{-1}\left(\tau r^{\mathrm{CIS}}(\mu_{D}, D_{1:n/2}), \frac{\mathrm{ln}(\sqrt{2n}/\delta)}{n/2}\right).
\end{equation*}

This approximates the value of $\tau$ that would be optimal if we were to use the posterior $\rho = \mu_{D}$. Now that we have our data-dependent prior $\mu_{D}$ and data-dependent $\tau$, we learn the posterior by maximising the $kl^{-1}$ bound with respect to $\rho$ and using the second half of the training data.
\begin{equation}
\frac{1}{\tau}kl^{-1}\left(\tau r^{\mathrm{CIS}}(\rho, D_{n/2+1:n}), \frac{D_{\mathrm{KL}}(\rho||\mu_{D})) + \mathrm{ln}(\sqrt{2n}/\delta)}{n/2}\right).\label{eqn:proposed_obj}
\end{equation}

Finally, we evaluate the bound (Equation \ref{eqn:proposed_obj}) at the learned posterior, using the second half of the training data and the data-dependent $\mu_{D}$ and $\tau$.

\begin{figure*}
\centering
\includegraphics[width=0.49\textwidth]{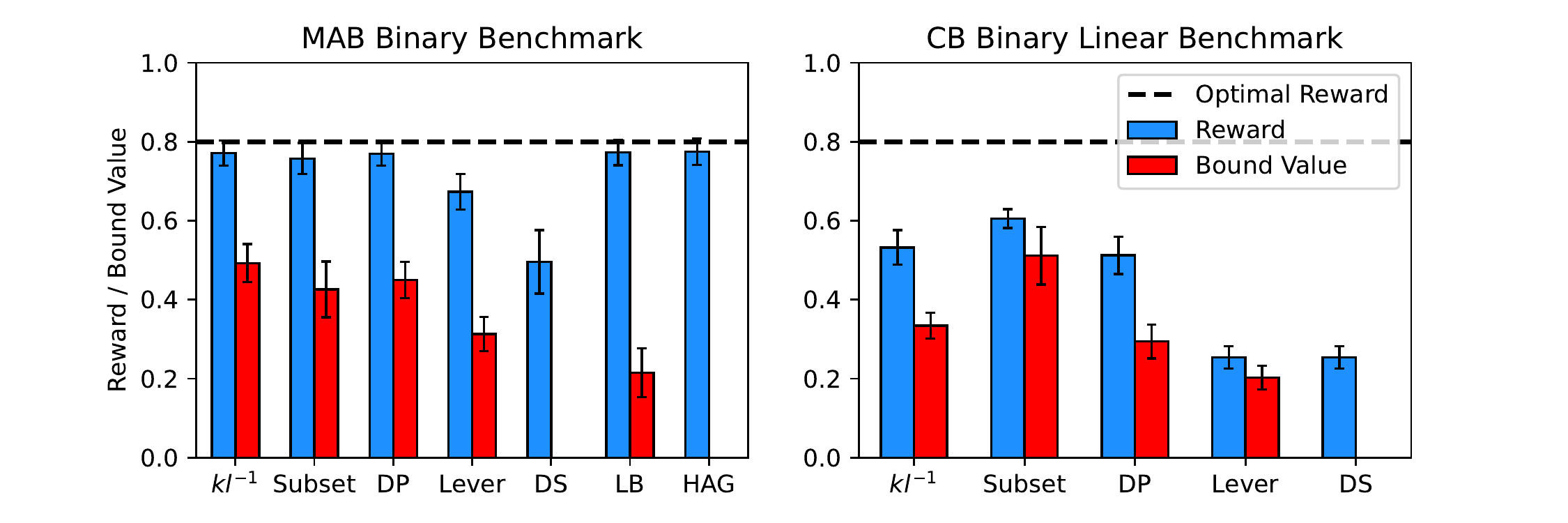}
\includegraphics[width=0.49\textwidth]{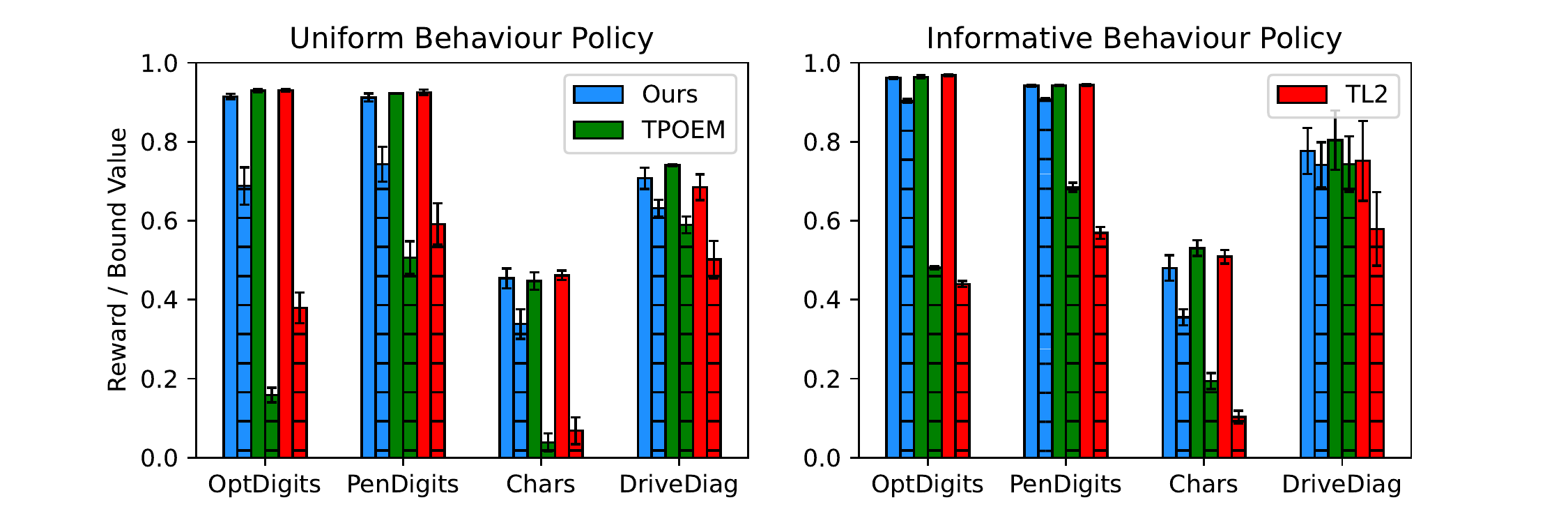}
\caption{(Left) The expected reward (blue) and bound value (red) for each bound in our comparison of methods for choosing the prior. DP is the differentially private prior, DS is the distribution stability bound, LB is the localised Bernstein bound and HAEG is the Hoeffding-Azuma Empirical Gibbs bound. (Right) The expected reward (solid bars) and bound value (striped bars) for our proposed offline bandit algorithm (blue), the TPOEM baseline (green) and the TL2 baseline (red) in the CB classification benchmark.}
\label{fig:priors_and_classification}
\end{figure*}

We compare the expected reward and bound values of our method against two baselines. The first baseline is inspired by the POEM algorithm and PAC bound by Swaminathan and Joachims \cite{swaminathan2015batch}. The POEM PAC bound uses covering numbers to measure the complexity of the policy class. Based on the covering number bounds for neural networks by Anthony and Bartlett \cite{anthony2009neural}, we expect that the original POEM PAC bound is vacuous for the CB Classification benchmark with our neural network policy class. Therefore, for a tougher comparison, we compare our proposed method to a test set bound inspired by the orignal POEM bound. We call this TestPOEM (TPOEM). Like the original POEM algorithm, it uses the sample variance of the CIS estimate to regularise the policy selection. We also compare against a second baseline that is similar to TPOEM, except it uses the L2 norm of the neural network weights to regularise the policy selection.

For TPOEM and TL2, we use $\tau = 1/K$ since in Section \ref{sec:estimator_comparison} we saw that this was the best choice for uniform behaviour policies and a good choice for the non-uniform behaviour policies.

\begin{experimentmessage}
Our proposed PAC-Bayes offline contextual bandit algorithm can learn neural network policies that achieve competitive expected reward and can provide tighter reward bounds than TPOEM and TL2.
\end{experimentmessage}

We test our proposed method, TPOEM and TL2 in the CB Classification benchmark, first with a data set drawn using a uniform behaviour policy and then with a data set drawn using a more informative behaviour policy. For each CB Classification problem, we train a neural network classifier using 10\% of the original classification data set. The $\epsilon$-smoothed class probabilities of these classifiers, with $\epsilon=0.01$, are the action probabilities of the informative behaviour policies. Figure \ref{fig:priors_and_classification} (right) shows the expected reward and bound values for the three methods. When the behaviour policy was uniform, our method (blue) learned policies with competitive expected reward while providing greater bound values than TPOEM (green) and TL2 (red). When the behaviour policy was informative, our method once again learned policies with competitive expected reward while providing greater bound values on all except the drive diagnosis problem, where the bound value for our method and TPOEM were comparable. The bound value for our method on the PenDigits problem was remarkably tight: the expected reward was 0.94 and the bound value was 0.91.

\section{Conclusion}
\label{sec:conclusion}

We have surveyed and empirically evaluated the available PAC-Bayes reward and regret bounds for bandit problems. In this section, we discuss our findings and highlight some open problems.

\subsection{Findings}

The results of our offline bandit experiments suggest that PAC-Bayes bounds are a useful tool for designing offline bandit algorithms with performance guarantees. In Fig. \ref{fig:bound_comparison}, Fig. \ref{fig:clipping} and Fig. \ref{fig:priors_and_classification} (left), we see that the choice of bound, the choice of estimator, and the choice of the prior can each have a large impact on both the performance of the learned policy and the tightness of the performance guarantee. In Fig. \ref{fig:priors_and_classification} (right), we see that a well-chosen bound, estimator and prior yields an offline bandit algorithm with competitive performance and very tight performance guarantees - even when the policy class is a set of neural networks. Similarly good performance guarantees with neural network-based policies would certainly not be possible with algorithms such as POEM \cite{swaminathan2015batch}, which measure the complexity of the policy class with covering numbers.

Our survey yields a less positive picture for existing online bandit algorithms. The cumulative regret bounds presented in Sec. \ref{sec:reg_bounds} had sub-optimal growth rates in $n$ and the algorithms motivated by these bounds performed poorly compared to EXP3 and UCB1. However, we believe that it would be premature to dismiss PAC-Bayes as a tool for designing online bandit algorithms with cumulative regret bounds. Rather, we believe that these less encouraging findings are indicative of PAC-Bayesian bandit algorithms being a topic that deserves further investigation. In Sec. \ref{sec:improved_regret_bounds} and Sec. \ref{sec:beyond_policy_search}, we describe several topics for future work that may lead to PAC-Bayesian online bandit algorithms with improved cumulative regret bounds.

\subsection{Future Research Questions}

\subsubsection{Tighter PAC-Bayes bounds for "better" estimators}

It is known that the WIS estimate often achieves lower mean squared error than the IS estimate \cite{hesterberg1995weighted}. However, the Efron-Stein PAC-Bayes reward bound for the WIS estimate was looser than some of the reward bounds that used the IS estimate (see Fig. \ref{fig:efron_stein}). Whether improved PAC-Bayes bounds can be derived for the WIS estimate may be a key question to answer. In addition, it may be worthwhile to investigate PAC-Bayes bounds for other improved reward estimates, such as the doubly robust estimate \cite{dudik2014doubly}.

\subsubsection{Improved cumulative regret bounds}
\label{sec:improved_regret_bounds}

The PAC-Bayes Bernstein cumulative regret bound from Thm. \ref{thm:is_bernstein_regret} has a sub-optimal growth rate of $\mathcal{O}(n^{2/3}K^{1/3})$ (ignoring log terms) because it uses a loose upper bound on the variance of the IS estimate. In a follow-up paper, Seldin et al. \cite{seldin2013evaluation} used a more sophisticated bound on the variance of the IS estimate to prove a high probability regret bound of order $\mathcal{O}(\sqrt{nK})$ (ignoring log terms) for EXP3. Investigating whether this more-sophisticated variance bound is compatible with PAC-Bayes analysis is one path towards PAC-Bayesian bandit algorithms with improved cumulative regret bounds.

\subsubsection{Beyond policy search}
\label{sec:beyond_policy_search}

Following the literature on PAC-Bayesian bandits, we have focused exclusively on policy search methods, which directly learn a policy from data. However, PAC-Bayes bounds are compatible with other approaches to bandits. We briefly describe two different kinds of bandit algorithms and how PAC-Bayes bounds might be incorporated.

Broadly speaking, oracle-based bandit algorithms, such as Epoch-Greedy \cite{langford2007epoch}, ILOVETOCONBANDIT \cite{agarwal2014taming} and SquareCB \cite{foster2020beyond}, reduce bandit problems to supervised learning problems, such as predicting the expected reward of each action. For example, SquareCB is a meta-algorithm that turns any online regression algorithm into an online contextual bandit algorithm. In addition, if the online regression algorithm has a regret bound for online regression with an optimal growth rate, then the resulting online contextual bandit algorithm enjoys a cumulative regret bound with an optimal growth rate. This is an appealing approach for designing PAC-Bayesian bandit algorithms because it allows us to utilise PAC-Bayesian supervised learning algorithms, which are plentiful. For instance, there are PAC-Bayesian algorithms for online regression problems (e.g. \cite{gerch2013sparsity, haddouche2022online}) that are compatible with SquareCB.

Confidence bounds are a key ingredient of online bandit algorithms that follow the optimism in the face of uncertainty principle (e.g.\cite{auer2002finite}, \cite{dani2008linear}) and offline bandit algorithms that follow the pessimism in the face of uncertainty principle (e.g. \cite{rashidinejad2021bridging}). Upper/lower confidence bounds are estimates of the expected reward for each action that, with high probability, are guaranteed to be above/below the expected reward. In principle, PAC-Bayes bounds could be used to construct confidence bounds suitable for bandits, though we are not aware of any in the literature. We believe that investigation of PAC-Bayesian confidence bounds, as well as bandit algorithms that use them, is a fruitful direction for future work.

%
%

\ifCLASSOPTIONcaptionsoff
  \newpage
\fi

\pagebreak

\appendices

\section{Additional Material}

\subsection{General-Purpose PAC-Bayes Bounds for Martingales}
\label{app:general_martingale_bounds}

In this section, we give an overview of general-purpose PAC-Bayes bounds for martingales. Assume we have a collection of martingales indexed by a set $\Pi$, such that for every $\pi \in \Pi$, $(M_n(\pi)| n \in \mathbb{N})$ is a martingale with respect to another sequence of random variables $(X_n|n \in \mathbb{N})$. Let $(Z_n(\pi)| n \in \mathbb{N})$ be the martingale difference sequence associated with $(M_n(\pi)| n \in \mathbb{N})$, i.e. $M_n(\pi) = \sum_{i=1}^{n}Z_i(\pi)$. Define the predictable quadratic variation and the total quadratic variation of $(M_n(\pi)| n \in \mathbb{N})$ respectively as
\begin{align*}
\langle V \rangle_n(\pi) &= \sum_{i=1}^{n}\mathbb{E}\left[Z_i(\pi)^2|X_1, \dots, X_{n-1}\right],\\
[V]_n(\pi) &= \sum_{i=1}^{n}Z_i(\pi)^2.
\end{align*}

For a distribution $\rho$, let $M_n(\rho) = \mathbb{E}_{\pi \sim \rho}[M_n(\pi)]$, $\langle V \rangle_n(\rho) = \mathbb{E}_{\pi \sim \rho}[\langle V \rangle_n(\pi)]$, and $[V]_n(\rho) = \mathbb{E}_{\pi \sim \rho}[[V]_n(\pi)]$. Seldin et al. \cite{seldin2012mart} proved a generic Hoeffding-Azuma inequality for martingales with bounded differences.

\begin{theorem}[PAC-Bayes Hoeffding-Azuma Inequality \cite{seldin2012mart}]
Suppose the martingale difference sequence $(Z_n(\pi)| n \in \mathbb{N})$ satisfies $\mathbb{P}(Z_i(\pi) \in [a_i, b_i]) = 1$ for all $i = 1, \dots, n$ and all $\pi \in \Pi$. For any $\lambda > 0$, any $\delta \in (0, 1)$ and any probability distribution $\mu \in \mathcal{P}(\Pi)$, with probability at least $1 - \delta$, for all distributions $\rho \in \mathcal{P}(\Pi)$ simultaneously:
\begin{equation*}
M(\rho) \leq \frac{\lambda}{8}\sum_{i=1}^{n}(b_i - a_i)^2 + \frac{D_{\mathrm{KL}}(\rho||\mu) + \mathrm{ln}(1/\delta)}{\lambda}.
\end{equation*}
\label{thm:pb_hoeffding}
\end{theorem}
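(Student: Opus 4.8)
The plan is to derive this bound as a direct instance of the Unified PAC-Bayes Bound (Thm.~\ref{thm:canon}). I would set $A(\pi) = M_n(\pi) = \sum_{i=1}^n Z_i(\pi)$ and choose the variance proxy to be the deterministic constant $B(\pi)^2 = \frac{1}{4}\sum_{i=1}^n (b_i-a_i)^2$, which depends on neither $\pi$ nor the data. With these choices the conclusion of Thm.~\ref{thm:canon} reads exactly $M_n(\rho) \leq \frac{\lambda}{8}\sum_{i=1}^n (b_i-a_i)^2 + (D_{\mathrm{KL}}(\rho||\mu) + \ln(1/\delta))/\lambda$, which is the claimed statement (with $M(\rho)$ denoting $M_n(\rho)$). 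So the entire content of the proof reduces to verifying that this pair $(A(\pi), B(\pi))$ satisfies the canonical assumption in (\ref{eqn:canon_pair}), i.e.\ that $\mathbb{E}[\exp(\lambda M_n(\pi) - \frac{\lambda^2}{8}\sum_{i=1}^n (b_i-a_i)^2)] \leq 1$ for every $\pi \in \Pi$ and every $\lambda > 0$.

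To verify this, the main ingredient is the classical Hoeffding lemma applied conditionally. For each $i$, the martingale difference property gives $\mathbb{E}[Z_i(\pi) \mid X_1,\dots,X_{i-1}] = 0$, and by hypothesis $Z_i(\pi)$ lies almost surely in an interval of length $b_i - a_i$. Hoeffding's lemma then yields the conditional moment-generating-function bound $\mathbb{E}[e^{\lambda Z_i(\pi)} \mid X_1, \dots, X_{i-1}] \leq \exp(\lambda^2 (b_i-a_i)^2/8)$. I would then peel off the factors one at a time using the tower property: writing $e^{\lambda M_n(\pi)} = e^{\lambda M_{n-1}(\pi)} e^{\lambda Z_n(\pi)}$ and conditioning on $X_1, \dots, X_{n-1}$ (on which $M_{n-1}(\pi)$ is measurable), each step contributes a factor $\exp(\lambda^2(b_i-a_i)^2/8)$, so by induction $\mathbb{E}[e^{\lambda M_n(\pi)}] \leq \exp(\frac{\lambda^2}{8}\sum_{i=1}^n (b_i-a_i)^2)$. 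Multiplying through by $\exp(-\frac{\lambda^2}{2}B(\pi)^2) = \exp(-\frac{\lambda^2}{8}\sum_{i=1}^n(b_i-a_i)^2)$ gives the required bound of $1$, establishing (\ref{eqn:canon_pair}).

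The remaining step is purely mechanical: invoke Thm.~\ref{thm:canon} with these choices of $A$ and $B$, whose proof (integrating the canonical inequality against $\mu$, swapping expectations by Tonelli, applying the change-of-measure identity of Lem.~\ref{lem:donsk}, and then Markov's inequality) already packages the whole PAC-Bayes argument. Because $B(\pi)^2$ was chosen $\pi$-independent, the term $\frac{\lambda}{2}\mathbb{E}_{\pi\sim\rho}[B(\pi)^2]$ collapses to $\frac{\lambda}{8}\sum_{i=1}^n(b_i-a_i)^2$ regardless of $\rho$, matching the stated bound.

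I expect the conditional Hoeffding step to be the crux, and the one place where care is needed is the uniformity in $\pi$: the boundedness hypothesis $\mathbb{P}(Z_i(\pi)\in[a_i,b_i])=1$ must hold for every $\pi$ with the \emph{same} intervals $[a_i,b_i]$, which is exactly what makes the exponential-moment bound uniform over $\pi$ and justifies the single deterministic choice of $B(\pi)$ for the whole family. Everything else is a direct substitution into Thm.~\ref{thm:canon}.
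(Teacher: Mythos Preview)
Your proposal is correct and matches the paper's approach: the paper explicitly frames this family of bounds as instances of the unified bound (Thm.~\ref{thm:canon}) with the canonical assumption verified via the classical Hoeffding--Azuma exponential-moment inequality, exactly as you outline. The conditional Hoeffding step and tower-property induction you describe is precisely the standard Hoeffding--Azuma argument the paper invokes.
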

\vspace{-0.4cm}

Seldin et al. \cite{seldin2012mart} also proved a generic PAC-Bayes $kl$ bound for martingale-like sequences with bounded differences.

\begin{theorem}[PAC-Bayes $kl$ Inequality \cite{seldin2012mart}]
Suppose $(Z_n(\pi)| n \in \mathbb{N})$ satisfies $\mathbb{P}(Z_i(\pi) \in [0, 1]) = 1$ and $\mathbb{E}[Z_i(\pi)|X_1, \dots, X_{i-1}] = b(\pi)$ for all $i = 1, \dots, n$ and all $\pi \in \Pi$. Let $b(\rho) = \mathbb{E}_{\pi \sim \rho}[b(\pi)]$. For any $\delta \in (0, 1)$ and any probability distribution $\mu \in \mathcal{P}(\Pi)$, with probability at least $1 - \delta$, for all distributions $\rho \in \mathcal{P}(\Pi)$ simultaneously:
\begin{equation*}
kl\infdivx*{\frac{1}{n}M_n(\rho)}{b(\rho)} \leq \frac{D_{\mathrm{KL}}(\rho||\mu) + \mathrm{ln}(2\sqrt{n}/\delta)}{n}.
\end{equation*}
\label{thm:pac_bayes_kl_general}
\end{theorem}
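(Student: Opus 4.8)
The plan is to follow the classical route to the PAC-Bayes $kl$ bound due to Seeger \cite{seeger2002pac} and Maurer \cite{maurer2004note}, replacing the i.i.d. concentration step by the martingale comparison inequality of Seldin et al. \cite{seldin2012mart}. Write $\hat{p}_n(\pi) = \frac{1}{n}M_n(\pi)$, which lies in $[0,1]$ since each $Z_i(\pi) \in [0,1]$, and note $b(\pi) \in [0,1]$. First I would apply the change-of-measure inequality (Lemma \ref{lem:donsk}) to the function $h(\pi) = n\, kl\infdivx*{\hat{p}_n(\pi)}{b(\pi)}$, which gives the deterministic identity
\begin{equation*}
\sup_{\rho \in \mathcal{P}(\Pi)}\left\{\mathop{\mathbb{E}}_{\pi \sim \rho}\left[n\, kl\infdivx*{\hat{p}_n(\pi)}{b(\pi)}\right] - D_{\mathrm{KL}}(\rho\|\mu)\right\} = \mathrm{ln}\left(\mathop{\mathbb{E}}_{\pi \sim \mu}\left[e^{n\, kl\infdivx*{\hat{p}_n(\pi)}{b(\pi)}}\right]\right).
\end{equation*}

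The main work is to control the right-hand side in expectation over the data. Because $\mu$ is fixed before the data are seen, Tonelli's theorem lets me swap the data expectation with $\mathop{\mathbb{E}}_{\pi \sim \mu}$, reducing the task to bounding, for each fixed $\pi$, the quantity $\mathbb{E}\big[\exp\big(n\, kl\infdivx*{\hat{p}_n(\pi)}{b(\pi)}\big)\big]$. The key structural observation is that $x \mapsto e^{n\, kl(x/n \,\|\, b(\pi))}$ is convex in $x$, since $kl(\cdot\|q)$ is convex and $\exp$ is convex and increasing. As the sequence $(Z_i(\pi))$ has increments in $[0,1]$ with common conditional mean $b(\pi)$, I can invoke the comparison inequality (Lemma~1 of \cite{seldin2012mart}) to dominate this expectation by the same expectation taken over $\sum_{i=1}^{n}B_i$, where the $B_i$ are i.i.d.\ Bernoulli$(b(\pi))$. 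For that i.i.d.\ quantity, Maurer's refinement \cite{maurer2004note} of Seeger's lemma yields $\mathbb{E}\big[\exp\big(n\, kl(\bar{B}_n \,\|\, b(\pi))\big)\big] \leq 2\sqrt{n}$. Since this holds uniformly over $\pi$, taking expectations gives $\mathbb{E}\mathop{\mathbb{E}}_{\pi \sim \mu}[e^{h(\pi)}] \leq 2\sqrt{n}$.

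To finish, I would apply Markov's inequality to the non-negative random variable $\mathop{\mathbb{E}}_{\pi \sim \mu}[e^{h(\pi)}]$: with probability at least $1-\delta$ it is at most $2\sqrt{n}/\delta$. Taking logarithms and combining with the change-of-measure identity yields, simultaneously for all $\rho \in \mathcal{P}(\Pi)$,
\begin{equation*}
\mathop{\mathbb{E}}_{\pi \sim \rho}\left[n\, kl\infdivx*{\hat{p}_n(\pi)}{b(\pi)}\right] \leq D_{\mathrm{KL}}(\rho\|\mu) + \mathrm{ln}(2\sqrt{n}/\delta).
\end{equation*}
The joint convexity of $kl$ then gives, via Jensen's inequality, $kl\infdivx*{\mathop{\mathbb{E}}_{\pi \sim \rho}[\hat{p}_n(\pi)]}{\mathop{\mathbb{E}}_{\pi \sim \rho}[b(\pi)]} \leq \mathop{\mathbb{E}}_{\pi \sim \rho}\big[kl\infdivx*{\hat{p}_n(\pi)}{b(\pi)}\big]$; recognising $\mathop{\mathbb{E}}_{\pi \sim \rho}[\hat{p}_n(\pi)] = \frac{1}{n}M_n(\rho)$ and $\mathop{\mathbb{E}}_{\pi \sim \rho}[b(\pi)] = b(\rho)$ and dividing by $n$ produces the stated bound. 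The main obstacle is the per-$\pi$ moment bound of the second paragraph: in the i.i.d.\ case it is exactly Maurer's lemma, but here it must first be reduced to the i.i.d.\ case, so the delicate step is verifying that the hypotheses of the comparison inequality are met, in particular the convexity of $x \mapsto e^{n\, kl(x/n \,\|\, b(\pi))}$ together with the common-conditional-mean assumption $\mathbb{E}[Z_i(\pi)\,|\,X_1,\dots,X_{i-1}] = b(\pi)$.
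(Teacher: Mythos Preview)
Your proposal is correct and follows essentially the same approach as the paper (and the original reference \cite{seldin2012mart}): the paper explicitly notes, when discussing the specialisation to $r^{\mathrm{IS}}$ in Theorem~\ref{thm:pac_bayes_kl}, that the extension beyond i.i.d.\ data is obtained via the comparison inequality (Lemma~1 of \cite{seldin2012mart}) reducing to independent Bernoulli variables, after which Maurer's $2\sqrt{n}$ bound \cite{maurer2004note} applies, and your change-of-measure plus Markov plus joint-convexity-of-$kl$ steps are the standard wrapper around this core.
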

\vspace{-0.4cm}

Seldin et al. \cite{seldin2012mart} proved a generic PAC-Bayes Bernstein bound for martingales with bounded differences which depends on the predictable quadratic variation ($\langle V \rangle_n(\rho)$) of the martingale mixture $M_n(\rho)$.

\begin{theorem}[PAC-Bayes Bernstein Inequality \cite{seldin2012mart}]
Suppose the martingale difference sequence $(Z_n(\pi)| n \in \mathbb{N})$ satisfies $\mathbb{P}(Z_i(\pi) \in [-b, b]) = 1$ for all $i = 1, \dots, n$ and all $\pi \in \Pi$. For any $\lambda \in [0, 1/b]$, any $\delta \in (0, 1)$ and any probability distribution $\mu \in \mathcal{P}(\Pi)$, with probability at least $1 - \delta$, for all distributions $\rho \in \mathcal{P}(\Pi)$ simultaneously:
\begin{equation*}
M_n(\rho) \leq \lambda(e-2)\langle V \rangle_n(\rho) + \frac{D_{\mathrm{KL}}(\rho||\mu) + \mathrm{ln}(1/\delta)}{\lambda}.
\end{equation*}
\label{thm:pb_bernstein}
\end{theorem}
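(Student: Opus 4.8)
The plan is to obtain this bound as a direct instance of the Unified PAC-Bayes Bound (Thm.~\ref{thm:canon}), so that the only substantive work is verifying the canonical assumption~(\ref{eqn:canon_pair}) for an appropriate pair $(A(\pi), B(\pi))$. Comparing the target inequality term-by-term with the conclusion of Thm.~\ref{thm:canon}, I would set $A(\pi) = M_n(\pi)$, $B(\pi)^2 = 2(e-2)\langle V \rangle_n(\pi)$, and $\Lambda = [0, 1/b]$. Then $\tfrac{\lambda}{2}\mathbb{E}_{\pi \sim \rho}[B(\pi)^2] = \lambda(e-2)\langle V \rangle_n(\rho)$, and the KL-plus-confidence term matches exactly, so the statement of the theorem follows immediately once the canonical assumption has been checked for this choice.

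The heart of the argument is therefore to show, for each fixed $\pi \in \Pi$ and each $\lambda \in (0, 1/b]$, that
\begin{equation*}
\mathbb{E}\left[\exp\left(\lambda M_n(\pi) - \lambda^2 (e-2)\langle V \rangle_n(\pi)\right)\right] \leq 1.
\end{equation*}
I would prove this by the standard Bernstein supermartingale construction. Define $L_m(\pi) = \exp(\lambda M_m(\pi) - \lambda^2(e-2)\langle V \rangle_m(\pi))$ for $m = 0, 1, \dots, n$, noting that the empty sums give $L_0(\pi) = 1$. Writing $M_m(\pi) = M_{m-1}(\pi) + Z_m(\pi)$ and peeling off the $m$-th increment of the predictable quadratic variation, it suffices to establish the one-step supermartingale property $\mathbb{E}[L_m(\pi) \mid X_1, \dots, X_{m-1}] \leq L_{m-1}(\pi)$, since iterating it and taking total expectation then yields $\mathbb{E}[L_n(\pi)] \leq L_0(\pi) = 1$.

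The one-step inequality rests on the elementary bound $e^x \leq 1 + x + (e-2)x^2$, valid for all $x \leq 1$. Since $Z_m(\pi) \in [-b, b]$ and $\lambda \leq 1/b$, the argument $x = \lambda Z_m(\pi)$ satisfies $|x| \leq 1$, so this bound applies. Taking the conditional expectation of $e^{\lambda Z_m(\pi)}$ and using the martingale-difference property $\mathbb{E}[Z_m(\pi) \mid X_1, \dots, X_{m-1}] = 0$ eliminates the linear term, leaving $\mathbb{E}[e^{\lambda Z_m(\pi)} \mid X_1, \dots, X_{m-1}] \leq 1 + (e-2)\lambda^2 \mathbb{E}[Z_m(\pi)^2 \mid X_1, \dots, X_{m-1}] \leq \exp((e-2)\lambda^2 \mathbb{E}[Z_m(\pi)^2 \mid X_1, \dots, X_{m-1}])$, where the last step uses $1 + y \leq e^y$. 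Multiplying back through by the factor $L_{m-1}(\pi)\exp(-\lambda^2(e-2)\mathbb{E}[Z_m(\pi)^2 \mid X_1, \dots, X_{m-1}])$, which is measurable with respect to $X_1, \dots, X_{m-1}$, gives precisely $\mathbb{E}[L_m(\pi) \mid X_1, \dots, X_{m-1}] \leq L_{m-1}(\pi)$, completing the verification of~(\ref{eqn:canon_pair}). The main obstacle, and the only non-mechanical step, is isolating the correct exponential moment inequality together with the constant $(e-2)$, so that the predictable quadratic variation $\langle V \rangle_n$ emerges as the variance proxy; once that is in place, applying Thm.~\ref{thm:canon} is routine.
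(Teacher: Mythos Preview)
Your proposal is correct and follows essentially the same approach as the paper: verify Bernstein's exponential moment inequality for martingales (the paper records this as Lemma~\ref{lem:bernstein}, citing \cite{beygelzimer2011contextual} for the supermartingale argument you spell out), and then invoke the unified framework of Thm.~\ref{thm:canon}. Your identification $A(\pi)=M_n(\pi)$, $B(\pi)^2=2(e-2)\langle V\rangle_n(\pi)$, $\Lambda=[0,1/b]$ is exactly right, and the one-step bound $e^x\le 1+x+(e-2)x^2$ for $x\le 1$ is the correct tool to make the constant $(e-2)$ and the predictable quadratic variation emerge.
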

\vspace{-0.4cm}

Balsubramani \cite{balsubramani2015pac} proved a PAC-Bayes inequality for martingales with bounded differences that is similar to the PAC-Bayes Bernstein Inequality. Balsubramani's bound is tigher when $\langle V \rangle_n(\rho)$ is a lot smaller than $n$. However, it only holds with high probability for sufficiently large $n$.

\begin{theorem}[PAC-Bayes Bernstein Law of Iterated Logarithm (LIL) Inequality \cite{balsubramani2015pac}]
Suppose the martingale difference sequence $(Z_n(\pi)| n \in \mathbb{N})$ satisfies $\mathbb{P}(Z_i(\pi) \in [-e^2, e^2]) = 1$ for all $i = 1, \dots, n$ and all $\pi \in \Pi$. For any $\delta \in (0, 1)$ and any probability distribution $\mu \in \mathcal{P}(\Pi)$, with probability at least $1 - \delta$, the following is true for all $\rho \in \mathcal{P}(\Pi)$ simultaneously. Let $n_0(\rho) = \min\left\{k: 2(e-2)\langle V \rangle_k(\rho) \geq \frac{2}{\lambda_0^2}(\mathrm{ln}(4/\delta) + D_{\mathrm{KL}}(\rho||\mu))\right\}$, where $\lambda_0 = \frac{1}{e^2(1+1/\sqrt{3})}$. For all $n \geq n_0(\rho)$ simultaneously
\begin{align*}
|M_n(\rho)| \leq \frac{2(e-2)}{e^2(1+1/\sqrt{3})}\langle V \rangle_n(\rho),
\end{align*}
and
\begin{align*}
&|M_n(\rho)| \leq \sqrt{6(e-2)\langle V \rangle_n(\rho)}\\
&\times\sqrt{\mathrm{ln}\mathrm{ln}\left(\frac{3(e-2)\langle V \rangle_n(\rho)}{|M_n(\rho)|}\right) + \mathrm{ln}\left(\frac{2}{\delta}\right) + D_{\mathrm{KL}}(\rho||\mu)}.
\end{align*}
\label{thm:pb_bernstein_lil}
\end{theorem}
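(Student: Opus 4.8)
The plan is to build a nonnegative supermartingale for each fixed tilting parameter $\lambda$, pass to the $\rho$-mixture via the change-of-measure inequality, control it uniformly in time with Ville's maximal inequality, and then defeat the unknown optimal $\lambda$ by a geometric peeling argument over a countable grid of values --- the device that manufactures the iterated logarithm.

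First I would recall the exponential process underlying Theorem \ref{thm:pb_bernstein}. For each $\pi$ and each $\lambda$ with $0 < \lambda \le \lambda_0 \le e^{-2}$, the bound $\lambda|Z_i(\pi)| \le \lambda e^2 \le 1$ lets us invoke $e^x \le 1 + x + (e-2)x^2$ for $x \le 1$, which shows that
$$\xi_n^{\lambda}(\pi) = \exp\left(\lambda M_n(\pi) - (e-2)\lambda^2 \langle V \rangle_n(\pi)\right)$$
is a supermartingale in $n$ with $\mathbb{E}[\xi_n^{\lambda}(\pi)] \le 1$; the same holds with $M_n$ replaced by $-M_n$, which will furnish the two-sided (absolute-value) conclusion. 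Averaging over $\mu$, using Tonelli, and applying Lemma \ref{lem:donsk} to $h(\pi) = \lambda M_n(\pi) - (e-2)\lambda^2\langle V\rangle_n(\pi)$ produces a single nonnegative supermartingale $\bar\xi_n^{\lambda} = \mathbb{E}_{\pi\sim\mu}[\xi_n^{\lambda}(\pi)]$ that dominates $\exp(\lambda M_n(\rho) - (e-2)\lambda^2\langle V\rangle_n(\rho) - D_{\mathrm{KL}}(\rho||\mu))$ uniformly over all $\rho$. Ville's maximal inequality (valid since $\bar\xi_0^{\lambda} = 1$) then gives, for each fixed $\lambda$ and budget $\delta'$, with probability at least $1-\delta'$, simultaneously for all $n$ and all $\rho$,
$$\lambda M_n(\rho) \le (e-2)\lambda^2 \langle V\rangle_n(\rho) + D_{\mathrm{KL}}(\rho||\mu) + \ln(1/\delta').$$

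The $\ln\ln$ term arises in the next step, which is where the real work lies. The single-$\lambda$ bound above is loose because its optimiser $\lambda^{\star} \approx \sqrt{(\ln(1/\delta') + D_{\mathrm{KL}})/((e-2)\langle V\rangle_n(\rho))}$ depends on the running variance, which is random and unknown when $\lambda$ must be fixed in advance. I would therefore instantiate the bound on a geometric grid $\lambda_j = \lambda_0(1+1/\sqrt3)^{-j}$, $j = 0,1,2,\dots$, truncated from above at $\lambda_0$ so that every grid point stays inside the admissible region $\lambda \le e^{-2}$, splitting the total failure probability as $\delta = \sum_j \delta_j$ with, say, $\delta_j \propto \delta/(j+1)^2$, and union-bounding. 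For a given realisation of $\langle V\rangle_n(\rho)$ one selects the grid index $j$ whose $\lambda_j$ is closest to $\lambda^{\star}$; substituting and balancing yields a bound of order $\sqrt{(e-2)\langle V\rangle_n(\rho)\,(\ln(1/\delta_j)+D_{\mathrm{KL}})}$, and since the chosen index satisfies $j \asymp \ln(\langle V\rangle_n(\rho)/|M_n(\rho)|)$, the penalty $\ln(1/\delta_j) \asymp \ln(j+1)$ becomes precisely the self-referential $\ln\ln(3(e-2)\langle V\rangle_n(\rho)/|M_n(\rho)|)$ of the statement. Tracking the grid ratio $1+1/\sqrt3$ and the constant $e-2$ through this balancing is what pins down the factor $6(e-2)$ inside the square root.

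Finally, the two displayed conclusions correspond to the two regimes of the grid. The linear-in-variance bound $|M_n(\rho)| \le 2(e-2)\lambda_0\langle V\rangle_n(\rho)$ is simply the $\lambda = \lambda_0$ instance of the time-uniform Bernstein inequality in the regime $n \ge n_0(\rho)$: by the definition of $n_0(\rho)$ one has $(e-2)\lambda_0\langle V\rangle_n(\rho) \ge \frac1{\lambda_0}(\ln(4/\delta)+D_{\mathrm{KL}}(\rho||\mu))$, so the complexity term is absorbed into the variance term and the $\frac1{\lambda_0}$-scaled remainder at most doubles it. The genuine law-of-iterated-logarithm bound then comes from the interior grid points as sketched above. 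I expect the main obstacle to be exactly this peeling step: making the fixed-in-advance $\lambda$-grid approximate the data-dependent optimiser closely enough to recover the sharp $\sqrt{\langle V\rangle_n \ln\ln(\cdots)}$ rate with the stated constants, while keeping the union-bound budget summable and the whole statement uniform over $\rho$ --- the latter being the one place where the PAC-Bayes change of measure must be invoked \emph{before}, rather than after, the union bound over $\lambda$.
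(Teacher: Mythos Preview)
The paper does not supply a proof of this theorem: it is stated in Appendix~\ref{app:general_martingale_bounds} purely as a quotation of Balsubramani~\cite{balsubramani2015pac}, alongside the other general-purpose martingale PAC-Bayes bounds, with no accompanying argument. So there is no ``paper's own proof'' to compare against.

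That said, your sketch is a faithful outline of the argument in \cite{balsubramani2015pac}. The ingredients you list --- the Bernstein exponential supermartingale $\xi_n^{\lambda}(\pi)$, the $\mu$-mixture plus Lemma~\ref{lem:donsk} to obtain uniformity over $\rho$, Ville's maximal inequality for time-uniformity, and a geometric grid in $\lambda$ with a summable $\delta_j$ allocation to manufacture the $\ln\ln$ term --- are exactly the components of Balsubramani's proof, and your identification of the two regimes ($j=0$ giving the linear-in-variance bound once $n \ge n_0(\rho)$, interior grid points giving the LIL rate) is correct. Your closing remark about ordering is also right: the change of measure has to be baked into each $\bar\xi_n^{\lambda}$ \emph{before} the union over the $\lambda$-grid, since otherwise the $\rho$-dependent optimiser of $\lambda$ would break the uniformity over posteriors. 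If you were to flesh this out, the only places requiring real care are (i) verifying that $\bar\xi_n^{\lambda} = \mathbb{E}_{\pi\sim\mu}[\xi_n^{\lambda}(\pi)]$ is itself a supermartingale (Tonelli plus the tower property), and (ii) the bookkeeping that turns the grid index $j \asymp \ln(\lambda_0/\lambda^\star)$ into the displayed self-referential $\ln\ln$ with the constants $6(e-2)$ and $3(e-2)$; but these are mechanical given the sketch you have.
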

\vspace{-0.4cm}

The right-hand-side of this bound contains $M_n(\rho)$. However, either $|M_n(\rho)| \leq 1$ or the iterated logarithm term is upper bounded by $\mathrm{ln}\mathrm{ln}\left(3(e-2)V_n(\rho)\right)$. Finally, Wang et al. \cite{wang2015pac} proved a PAC-Bayes bound for locally square-integrable martingales, which can have unbounded differences.

\begin{theorem}[PAC-Bayes Inequality for Locally Square Integrable Martingales \cite{wang2015pac}]
For any $\lambda \geq 0$, any $\delta \in (0, 1)$ and any probability distribution $\mu \in \mathcal{P}(\Pi)$, with probability at least $1 - \delta$, for all distributions $\rho \in \mathcal{P}(\Pi)$ simultaneously:
\begin{equation*}
M_n(\rho) \leq \frac{\lambda}{2}\left(\frac{[V]_n(\rho)}{3} + \frac{2\langle V \rangle_n(\rho)}{3}\right) + \frac{D_{\mathrm{KL}}(\rho||\mu) + \mathrm{ln}(1/\delta)}{\lambda}.
\end{equation*}
\label{thm:pb_unbound}
\end{theorem}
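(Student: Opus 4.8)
The plan is to obtain this bound as a direct instance of the Unified PAC-Bayes Bound (Thm.~\ref{thm:canon}), so that the entire argument reduces to verifying the canonical assumption (\ref{eqn:canon_pair}) for a suitable pair of random variables. I would set
\[
A(\pi) = M_n(\pi), \qquad B(\pi)^2 = \frac{[V]_n(\pi)}{3} + \frac{2\langle V\rangle_n(\pi)}{3},
\]
with admissible interval $\Lambda = [0,\infty)$. Once (\ref{eqn:canon_pair}) is checked for every $\lambda \ge 0$, Thm.~\ref{thm:canon} yields the stated inequality immediately: the linearity identities $M_n(\rho) = \mathbb{E}_{\pi\sim\rho}[M_n(\pi)]$, $[V]_n(\rho) = \mathbb{E}_{\pi\sim\rho}[[V]_n(\pi)]$ and $\langle V\rangle_n(\rho) = \mathbb{E}_{\pi\sim\rho}[\langle V\rangle_n(\pi)]$ rewrite $\mathbb{E}_{\pi\sim\rho}[B(\pi)^2]$ into the displayed form, while the case $\lambda = 0$ is trivial since the right-hand side is then $+\infty$.

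Everything therefore hinges on establishing, for each fixed $\pi$ and each $\lambda \ge 0$, the bound $\mathbb{E}\left[\exp\left(\lambda M_n(\pi) - \frac{\lambda^2}{2}B(\pi)^2\right)\right] \le 1$. I would prove this by the standard exponential-supermartingale route: define $L_k(\pi) = \exp(\lambda M_k(\pi) - \frac{\lambda^2}{2}B_k(\pi)^2)$ with $B_k(\pi)^2 = \tfrac13[V]_k(\pi) + \tfrac23\langle V\rangle_k(\pi)$, observe $L_0(\pi) = 1$, and show $(L_k(\pi))_k$ is a supermartingale, whence $\mathbb{E}[L_n(\pi)] \le 1$. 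Writing $Z_k = Z_k(\pi)$ and using the increments $[V]_k - [V]_{k-1} = Z_k^2$ and $\langle V\rangle_k - \langle V\rangle_{k-1} = \mathbb{E}[Z_k^2\mid X_1,\dots,X_{k-1}]$, the ratio $L_k/L_{k-1}$ factorises so that the supermartingale property becomes equivalent to the one-step conditional bound
\[
\mathbb{E}\left[\exp\left(\lambda Z_k - \frac{\lambda^2}{6}Z_k^2\right)\Big| X_1,\dots,X_{k-1}\right] \le \exp\left(\frac{\lambda^2}{3}\mathbb{E}[Z_k^2\mid X_1,\dots,X_{k-1}]\right).
\]

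The crux, and the step I expect to be the main obstacle, is the elementary pointwise inequality
\[
e^{x - x^2/6} \le 1 + x + \tfrac{x^2}{3}, \qquad x\in\mathbb{R},
\]
whose constants $\tfrac16$ and $\tfrac13$ are precisely what the $\tfrac13/\tfrac23$ weighting of $[V]_n$ and $\langle V\rangle_n$ is engineered to produce. Substituting $x = \lambda Z_k$, taking the conditional expectation, and using the martingale-difference property $\mathbb{E}[Z_k\mid X_1,\dots,X_{k-1}] = 0$ together with $1+u \le e^u$ collapses the right-hand side to the required exponential of the conditional second moment. Verifying the pointwise inequality is a one-variable calculus check, but a delicate one: the difference of the two sides vanishes to third order at $x=0$, so confirming global non-negativity requires analysing the sign of the derivative of $(1+x+x^2/3)e^{-(x-x^2/6)}$, noting that $1+x+x^2/3 > 0$ for all $x$.

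Finally, since the martingale is only assumed locally square-integrable, its differences may be unbounded and the exponential moments need not be finite a priori. I would therefore run the supermartingale argument along a localising sequence of stopping times $T_m \uparrow \infty$, obtaining $\mathbb{E}[L_{n\wedge T_m}(\pi)] \le 1$, and then recover $\mathbb{E}[L_n(\pi)] \le 1$ via Fatou's lemma, using $L_k \ge 0$ and the fact that for fixed $n$ one has $L_{n\wedge T_m} \to L_n$ almost surely (eventually $T_m > n$). This justifies the canonical assumption, and the Tonelli/change-of-measure/Markov machinery internal to Thm.~\ref{thm:canon} then completes the proof.
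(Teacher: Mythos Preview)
The paper does not give its own proof of this theorem; it simply states the result with a citation to Wang et al.\ \cite{wang2015pac}. However, the paper does assert (Sec.~\ref{sec:unified_pb_bound}) that every PAC-Bayes bound in the survey is a special case of the Unified PAC-Bayes Bound (Thm.~\ref{thm:canon}), so your plan to instantiate Thm.~\ref{thm:canon} with $A(\pi)=M_n(\pi)$ and $B(\pi)^2=\tfrac{1}{3}[V]_n(\pi)+\tfrac{2}{3}\langle V\rangle_n(\pi)$ is exactly the route the paper points to, and your argument is correct.

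Your identification of the pointwise inequality $e^{x-x^2/6}\le 1+x+\tfrac{x^2}{3}$ as the crux is right, and it admits a cleaner verification than the sign analysis you sketch. Since $1+x+x^2/3>0$ for all real $x$ (the discriminant is negative), one may set $\phi(x)=\ln(1+x+x^2/3)-(x-x^2/6)$ and compute directly
\[
\phi'(x)=\frac{x^3}{9\bigl(1+x+x^2/3\bigr)},
\]
which has the sign of $x$. Hence $\phi$ has a global minimum at $0$ with $\phi(0)=0$, proving the inequality for all $x\in\mathbb{R}$.

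Your localisation step is harmless but unnecessary. For every $\lambda\ge 0$ and every real $z$ one has $\lambda z-\tfrac{\lambda^2}{6}z^2\le\tfrac{3}{2}$ (maximise the concave quadratic in $z$), and $\langle V\rangle_k\ge 0$, so $L_k(\pi)\le e^{3k/2}$ almost surely. The process is therefore bounded for each fixed $k$, and the supermartingale computation goes through directly without stopping times or Fatou.
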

\vspace{-0.4cm}

Haddouche and Guedj \cite{haddouche2022supermartingales} proved a time-uniform version of the PAC-Bayes bound in Thm. \ref{thm:pb_unbound}, i.e. with probability at least $1 - \delta$, the inequality holds for \emph{all} $n \geq 1$ simultaneously. Chugg et al. \cite{chugg2023unified} proposed a unified framework for deriving time-uniform PAC-Bayes bounds and used it to prove time-uniform versions of Thm. \ref{thm:pb_hoeffding}, Thm. \ref{thm:pac_bayes_kl_general} and Thm. \ref{thm:pb_bernstein}.

\subsection{PAC-Bayes Reward Bounds With the Weighted Importance Sampling Estimate}
\label{sec:app_wis_estimate}

We present PAC-Bayes bounds for the weighted (or self-normalised) importance sampling (WIS) estimator. The PAC-Bayes bounds presented in this section are only valid when the data are i.i.d.; for example, when the data set is drawn from a fixed behaviour policy. For the rest of Section \ref{sec:app_wis_estimate}, we assume that this is the case. On the bright side, the bounds in this section do not require the importance weights to be bounded or clipped. For MAB problems, the WIS estimator can be defined as:
\begin{equation*}
r^{\mathrm{WIS}}(\pi, D_n) = \frac{\sum_{i=1}^{n}\frac{\pi(a_i)}{b(a_i)}r_i}{\sum_{i=1}^{n}\frac{\pi(a_i)}{b(a_i)}}.
\end{equation*}

The WIS estimate has some pleasing properties. Firstly, when the rewards are bounded between 0 and 1, it always takes values in the range $[0, 1]$, even when the importance weights $\pi(a)/b(a)$ are unbounded. Secondly, it is invariant to constant shifts in the importance weights. Therefore, we only need to know the unnormalised probability mass/density functions of the policies $\pi$ and $b$.

The WIS estimator is biased but consistent, meaning its bias decays to 0 as $n$ tends to infinity. Liu \cite{liu2001monte} shows that the bias decays to 0 with rate $\mathcal{O}(1/n)$, so we can expect it to be close to 0 as long as $n$ is reasonably large. One can obtain PAC-Bayes bounds on the difference between $r^{\mathrm{WIS}}(\rho, D_n)$ and $R(\rho)$ by upper bounding both terms in the following bias-concentration decomposition:
\begin{align}
R(\rho) - r^{\mathrm{WIS}}(\rho, D_n) &= \underbrace{R(\rho) - R^{\mathrm{WIS}}(\rho)}_\text{bias}\label{eqn:wis_bias_conc}\\
&+ \underbrace{R^{\mathrm{WIS}}(\rho) - r^{\mathrm{WIS}}(\rho, D_n)}_\text{concentration}.\nonumber
\end{align}

We are not aware of any empirical upper bounds on this bias term that don't require additional assumptions on the reward distribution $P_R$. Kuzborskij et al. \cite{kuzborskij2021confident} proved a bound on the bias term, although it only holds when the rewards are one-hot; there is always one action with reward 1 and all remaining actions have reward 0.

The concentration term can be bounded using PAC-Bayes bounds. Since the WIS estimate is not a sum of i.i.d. random variables or even the sum of a martingale difference sequence, we cannot use any of the previously seen PAC-Bayes bounds to bound the concentration term. Kuzborskij and Szepesv{\'a}ri \cite{kuzborskij2019efron} derived a very general Efron-Stein (ES) PAC-Bayes bound and showed that it can be used to upper bound the concentration term in Equation \ref{eqn:wis_bias_conc}. This bound contains the semi-empirical ES variance proxy of the WIS estimate. For any real-valued function $f(\pi, D_n)$, the corresponding semi-empirical ES variance proxy is defined as:
\begin{equation*}
V^{\mathrm{ES}}(\pi, D_n) = \sum_{i=1}^{n}\mathop{\mathbb{E}}_{D_n, D_n^{\prime}}\left[\left(f(\pi, D_n) - f(\pi, D_n^{(i)})\right)^2 \bigg| D_{i}\right].
\end{equation*}

$D_n^{\prime}$ is an independently sampled copy of $D_n$. $D_n^{(i)}$ is the data set $D_n$, except the $i$th element is replaced with the $i$th element of $D_n^{\prime}$. For example, in the MAB setting, $(a_i, r_i)$ is replaced by an independent copy $(a_i^{\prime}, r_i^{\prime})$. This variance proxy is semi-empirical since it depends on both the observed data and the distribution of the data. Kuzborskij and Szepesv{\'a}ri \cite{kuzborskij2019efron} derived a PAC-Bayes bound on the absolute difference between $f(\pi, D_n)$ and its expected value $F(\pi) = \mathbb{E}_{D_n}[f(\pi, D_n)]$. When $f = r^{\mathrm{WIS}}$, we obtain the following result.

\begin{theorem}[Efron-Stein PAC-Bayes Bound for $r^{\mathrm{WIS}}$ \cite{kuzborskij2019efron}]
If the data set $D_n$ is drawn from a single, fixed behaviour policy, then for any $y > 0$, any $\delta \in (0, 1)$ and any probability distribution $\mu \in \mathcal{P}(\Pi)$, with probability at least $1 - \delta$, for all distributions $\rho \in \mathcal{P}(\Pi)$ simultaneously:
\begin{align*}
&\left|r^{\mathrm{WIS}}(\rho, D_n) - R^{\mathrm{WIS}}(\rho)\right| \leq \sqrt{2\left(y + V^{\mathrm{ES}}(\rho, D_n)\right)}\\
&\times \sqrt{D_{\mathrm{KL}}(\rho||\mu) + \frac{1}{2}\mathrm{ln}\left(1 + V^{\mathrm{ES}}(\rho, D_n)/y\right) + \mathrm{ln}(1/\delta)}.
\end{align*}
\label{thm:efron_stein_pac_bayes}
\end{theorem}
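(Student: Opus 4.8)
The plan is to recognise this as an instance of the Unified PAC-Bayes Bound (Thm.~\ref{thm:canon}) applied to the concentration term of the decomposition in~\eqref{eqn:wis_bias_conc}, followed by an optimisation of the free multiplier over a grid. I would set $A(\pi) = r^{\mathrm{WIS}}(\pi, D_n) - R^{\mathrm{WIS}}(\pi)$ and $B(\pi)^2 = V^{\mathrm{ES}}(\pi, D_n)$, so that the entire argument reduces to (i) verifying the canonical assumption~\eqref{eqn:canon_pair} for this pair and (ii) converting the resulting bound, which is linear in the multiplier $\lambda$, into the square-root form of the statement. Because the claim is two-sided, I would verify~\eqref{eqn:canon_pair} for both $A(\pi)$ and $-A(\pi)$ --- the Efron--Stein proxy is symmetric in the sign of the differences, so the same $B(\pi)^2$ serves both --- apply Thm.~\ref{thm:canon} to each, and combine them by a union bound over the two signs.

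The technical heart, and the step I expect to be the main obstacle, is the \emph{exponential Efron--Stein inequality}
\[
\mathop{\mathbb{E}}_{D_n}\!\left[\exp\!\left(\lambda\bigl(r^{\mathrm{WIS}}(\pi, D_n) - R^{\mathrm{WIS}}(\pi)\bigr) - \tfrac{\lambda^2}{2}V^{\mathrm{ES}}(\pi, D_n)\right)\right] \leq 1 ,
\]
which is exactly~\eqref{eqn:canon_pair} for the chosen pair. Writing $f(\pi, D_n) = r^{\mathrm{WIS}}(\pi, D_n)$ and $F(\pi) = R^{\mathrm{WIS}}(\pi)$, I would prove it by a Doob-martingale argument: decomposing $f - F = \sum_{i=1}^{n}\Delta_i$ with $\Delta_i = \mathbb{E}[f \mid D_i] - \mathbb{E}[f \mid D_{i-1}]$, the independence of the samples together with a ghost copy $D_n'$ yields the representation $\Delta_i = \mathbb{E}[\,f(\pi, D_n) - f(\pi, D_n^{(i)}) \mid D_i\,]$. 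Using that the WIS estimate is bounded in $[0,1]$, I would bound the conditional moment generating function of each increment by a sub-Gaussian factor whose variance term is the $i$-th summand of $V^{\mathrm{ES}}(\pi, D_n)$; multiplying these factors along the filtration then gives the supermartingale property and the displayed inequality. The delicate point is that the proxy conditions on $D_i$ rather than $D_{i-1}$, so the coordinate-wise conditioning must be arranged carefully; this is also where the weaker hypotheses and sharper constants over Kuzborskij and Szepesv{\'a}ri~\cite{kuzborskij2019efron} enter, since only $[0,1]$-boundedness of $r^{\mathrm{WIS}}$ is used rather than a uniform bound on the increments.

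Granting~\eqref{eqn:canon_pair}, Thm.~\ref{thm:canon} gives, for each fixed $\lambda > 0$ and with probability at least $1 - \delta$ simultaneously over all $\rho$,
\[
r^{\mathrm{WIS}}(\rho, D_n) - R^{\mathrm{WIS}}(\rho) \leq \tfrac{\lambda}{2}V^{\mathrm{ES}}(\rho, D_n) + \frac{D_{\mathrm{KL}}(\rho||\mu) + \mathrm{ln}(1/\delta)}{\lambda} .
\]
The remaining difficulty is that the minimising multiplier $\lambda^\star = \sqrt{2(D_{\mathrm{KL}}(\rho||\mu) + \mathrm{ln}(1/\delta))/V^{\mathrm{ES}}(\rho, D_n)}$ depends on the data-dependent proxy and so cannot be fixed in advance. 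I would handle this with a union bound over a geometric grid of candidate multipliers: spacing the grid so that some point lies within a constant factor of $\lambda^\star$ whenever $V^{\mathrm{ES}}(\rho, D_n)$ exceeds the floor set by $y$, and assigning the grid points summable weights, turns the discretisation cost into the additive penalty $\tfrac{1}{2}\mathrm{ln}(1 + V^{\mathrm{ES}}(\rho, D_n)/y)$ and replaces $V^{\mathrm{ES}}$ by $y + V^{\mathrm{ES}}$ inside the square root (the $+y$ also keeping $\lambda^\star$ finite as $V^{\mathrm{ES}} \to 0$). Substituting the chosen grid point, combining the two signs, and collecting terms yields the stated bound, with the precise grid spacing and weight allocation determining the constants.
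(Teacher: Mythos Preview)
Your overall architecture is right --- verify the exponential Efron--Stein inequality (which is exactly what the paper invokes as its Lemma~\ref{lem:es_conc}, cited from \cite{kuzborskij2019efron}), feed it through the change-of-measure machinery, then remove the dependence on the free multiplier $\lambda$. The divergence from the paper is in that last step, and it affects whether you recover the stated bound with its precise constants.

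The paper does \emph{not} use a geometric grid plus union bound. Instead it uses the method of mixtures: after reaching
\[
\mathop{\mathbb{E}}_{D_n}\!\left[\sup_{\rho}\exp\!\Bigl(\lambda A(\rho,D_n) - \tfrac{\lambda^2}{2}V^{\mathrm{ES}}(\rho,D_n) - D_{\mathrm{KL}}(\rho\|\mu)\Bigr)\right]\le 1,
\]
it multiplies by the Gaussian density $\sqrt{y/(2\pi)}\,e^{-\lambda^2 y/2}$ and integrates over $\lambda\in(-\infty,\infty)$, then swaps the integral inside the supremum. The resulting Gaussian integral evaluates in closed form to
\[
\frac{\sqrt{y}}{\sqrt{y+V^{\mathrm{ES}}(\rho,D_n)}}\exp\!\left(\frac{A(\rho,D_n)^2}{2\bigl(y+V^{\mathrm{ES}}(\rho,D_n)\bigr)}\right),
\]
and a single application of Markov's inequality followed by rearrangement gives exactly the stated form. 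Three features of the statement trace directly to this device: the clean $\tfrac12\ln(1+V^{\mathrm{ES}}/y)$ term is the log of the Gaussian normaliser; the $y+V^{\mathrm{ES}}$ under the first root is the posterior precision; and the two-sided absolute value comes for free because the integral produces $A^2$, so no separate union over signs is needed. A geometric grid would deliver a bound of the same order but with a $(1+c)$-type multiplicative slack and an additive $\ln(\text{grid size})$ rather than the exact $\tfrac12\ln(1+V^{\mathrm{ES}}/y)$, and you would still need the sign union. So your plan yields a valid bound of the right shape, but not the one in the theorem; to match the paper --- and in particular to see where the $\tfrac12$ improves on the $\ln(1/\delta)/2$ of \cite{kuzborskij2019efron} --- replace the grid by the Gaussian mixture over $\lambda$.
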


Theorem \ref{thm:efron_stein_pac_bayes} is actually a slightly tighter version of the second inequality in Theorem 3 of \cite{kuzborskij2019efron} that holds under weaker assumptions. In the original bound, the factor of $1/2$ in front of $\mathrm{ln}\left(1 + V^{\mathrm{ES}}(\rho, D_n)/y\right)$ is replaced with $\mathrm{ln}(1/\delta)/2$, which is larger than $1/2$ when $\delta \leq e^{-1}$. The original bound of Kuzborskij and Szepesv{\'a}ri is only valid when $\delta \leq e^{-2}$, so the bound in Theorem \ref{thm:efron_stein_pac_bayes} is always slightly tighter. Moreover, this bounds holds simultaneously for all distributions $\rho$, whereas in the original bound of Kuzborskij and Szepesv{\'a}ri, $\rho$ must be given by a fixed probability kernel that maps any data set $D_n$ to a distribution $\rho_{D_n}$. We provide a proof of Theorem \ref{thm:efron_stein_pac_bayes} for general funtions $f(\pi, D_n)$ in Appendix \ref{sec:efron_stein_proof}.

The semi-empirical ES variance proxy for $r^{\mathrm{WIS}}(\pi, D_n)$ depends on the unknown reward distribution $P_{R}$, which means that $V^{\mathrm{ES}}(\rho, D_n)$ cannot be computed. However, Kuzborskij and Szepesv{\'a}ri \cite{kuzborskij2019efron} show that it can be upper bounded by a quantity that can be computed without knowledge of $P_{R}$.

\begin{lemma}[$r^{\mathrm{WIS}}$ ES Variance Proxy Upper Bound \cite{kuzborskij2019efron}]
For $f = r^{\mathrm{WIS}}$ and any $\pi \in \Pi$, we have that:
\begin{equation*}
V^{\mathrm{ES}}(\pi, D_n) \leq 2V^{\mathrm{WIS}}(\pi, D_n) = 2\sum_{i=1}^{n}\mathop{\mathbb{E}}_{D_n, D_n^{\prime}}\left[\tilde{w}_{\pi, i}^2 + \tilde{u}_{\pi, i}^2| D_{i}\right],
\end{equation*}

where
\begin{equation*}
\tilde{w}_{\pi, i} = \frac{\frac{\pi(a_i)}{b(a_i)}}{\sum_{j=1}^{n}\frac{\pi(a_j)}{b(a_j)}}, \qquad \tilde{u}_{\pi, i} = \frac{\frac{\pi(a_i^{\prime})}{b(a_i^{\prime})}}{\frac{\pi(a_i^{\prime})}{b(a_i^{\prime})} + \sum_{j \neq i}\frac{\pi(a_j)}{b(a_j)}}.
\end{equation*}
\label{lem:es_var_bound}
\end{lemma}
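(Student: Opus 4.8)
The plan is to exploit the fact that replacing the $i$-th datum in $D_n$ alters only the $i$-th summand of the numerator and denominator of the self-normalised estimate, which lets me express both $r^{\mathrm{WIS}}(\pi, D_n)$ and $r^{\mathrm{WIS}}(\pi, D_n^{(i)})$ as convex combinations that share a common ``leave-$i$-out'' weighted mean. Writing $w_j = \pi(a_j)/b(a_j)$ and $T_i = \sum_{j \neq i} w_j$, I would introduce $\bar{r}_i = \big(\sum_{j \neq i} w_j r_j\big)/T_i$. Since only index $i$ is touched by the replacement, $\bar r_i$ is \emph{identical} in $D_n$ and $D_n^{(i)}$, and a short calculation gives the identities
\begin{align*}
r^{\mathrm{WIS}}(\pi, D_n) &= (1 - \tilde w_{\pi,i})\bar r_i + \tilde w_{\pi,i} r_i,\\
r^{\mathrm{WIS}}(\pi, D_n^{(i)}) &= (1 - \tilde u_{\pi,i})\bar r_i + \tilde u_{\pi,i} r_i',
\end{align*}
where $\tilde w_{\pi,i} = w_i/(T_i + w_i)$ and $\tilde u_{\pi,i} = w_i'/(T_i + w_i')$ are exactly the normalised weights in the lemma statement (with $w_i' = \pi(a_i')/b(a_i')$).

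Subtracting the two identities causes the $\bar r_i$ terms to recombine, leaving the clean two-term difference
\begin{equation*}
r^{\mathrm{WIS}}(\pi, D_n) - r^{\mathrm{WIS}}(\pi, D_n^{(i)}) = \tilde w_{\pi,i}(r_i - \bar r_i) - \tilde u_{\pi,i}(r_i' - \bar r_i).
\end{equation*}
Because $r_i, r_i' \in [0,1]$ and $\bar r_i$ is itself a convex combination of rewards in $[0,1]$, we have $|r_i - \bar r_i| \leq 1$ and $|r_i' - \bar r_i| \leq 1$. Applying $(a - b)^2 \leq 2a^2 + 2b^2$ together with these bounds yields the pointwise estimate $\big(r^{\mathrm{WIS}}(\pi, D_n) - r^{\mathrm{WIS}}(\pi, D_n^{(i)})\big)^2 \leq 2\tilde w_{\pi,i}^2 + 2\tilde u_{\pi,i}^2$. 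Summing over $i$ and taking the conditional expectations $\mathbb{E}_{D_n, D_n'}[\,\cdot \mid D_i]$ appearing in the definition of $V^{\mathrm{ES}}$ then recovers $V^{\mathrm{ES}}(\pi, D_n) \leq 2\sum_{i=1}^n \mathbb{E}_{D_n, D_n'}[\tilde w_{\pi,i}^2 + \tilde u_{\pi,i}^2 \mid D_i] = 2V^{\mathrm{WIS}}(\pi, D_n)$, as claimed. The degenerate case $T_i = 0$ (all other weights vanishing) is handled separately: both estimates then reduce to single rewards, so the difference is $r_i - r_i'$ with square at most $1 \leq 2\tilde w_{\pi,i}^2 + 2\tilde u_{\pi,i}^2 = 4$.

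The hard part will be the very first step, namely recognising that both estimates should be rewritten against the \emph{same} leave-$i$-out mean $\bar r_i$. This is the move that makes the perturbation collapse into the simple form above, as opposed to an unwieldy ratio that is quartic in the importance weights; everything afterwards is just the elementary inequality $(a-b)^2 \leq 2a^2 + 2b^2$ and the reward bounds. I expect no further subtlety, since the bounds $|r_i - \bar r_i| \leq 1$ are exactly what convert the reward-dependent coefficients into the weight-only expressions $\tilde w_{\pi,i}^2$ and $\tilde u_{\pi,i}^2$ that make the right-hand side computable without knowledge of $P_R$.
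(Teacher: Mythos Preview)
The paper does not include its own proof of this lemma; it simply states the result and attributes it to Kuzborskij and Szepesv{\'a}ri \cite{kuzborskij2019efron}. Your argument is correct and is essentially the standard one: rewriting both $r^{\mathrm{WIS}}(\pi,D_n)$ and $r^{\mathrm{WIS}}(\pi,D_n^{(i)})$ as convex combinations against the common leave-$i$-out mean $\bar r_i$, subtracting, and then applying $(a-b)^2\le 2a^2+2b^2$ together with $|r_i-\bar r_i|\le 1$ and $|r_i'-\bar r_i|\le 1$ is exactly the manoeuvre that strips out the reward dependence and leaves only the normalised weights. The edge case $T_i=0$ is handled correctly as well.
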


Though $V^{\mathrm{WIS}}(\pi, D_n)$ is still semi-empirical, it does not depend on the reward distribution $P_{R}$. Therefore, it can be estimated with arbitrary accuracy if $\pi$ and $b$ are known. We can combine the bias-concentration decomposition in Equation \ref{eqn:wis_bias_conc}, the Efron-Stein PAC-Bayes bound in Theorem \ref{thm:efron_stein_pac_bayes} and the bound on the ES variance proxy in Lemma \ref{lem:es_var_bound} to obtain the following PAC-Bayes bound on the expected reward, which holds with probability greater than $1 - \delta$ and for all $\rho \in \mathcal{P}(\Pi)$ simultaneously:
\begin{align}
R(\rho) &\geq r^{\mathrm{WIS}}(\rho, D_n) - \left|R^{\mathrm{WIS}}(\rho) - R(\rho)\right|\label{eqn:pac_bayes_wis}\\
&-\sqrt{2\left(y + 2V^{\mathrm{WIS}}(\rho, D_n)\right)}\nonumber\\
&\times \sqrt{D_{\mathrm{KL}}(\rho||\mu) + \frac{1}{2}\mathrm{ln}\left(1 + \frac{2V^{\mathrm{WIS}}(\rho, D_n)}{y}\right) + \mathrm{ln}(1/\delta)}.\nonumber
\end{align}

In order to use this bound, we would need to upper bound the bias term $\left|R^{\mathrm{WIS}}(\rho) - R(\rho)\right|$. Ingoring the bias term, the rate of this bound in $n$ depends on the values of $V^{\mathrm{WIS}}(\rho, D_n)$ and $y$. At one extreme, when all policies in the support of $\rho$ result in approximately equal importance weights for every action, we have $V^{\mathrm{WIS}}(\rho, D_n) = \mathcal{O}(1/n)$. At the other extreme, when policies in the support of $\rho$ result in one importance weight dominating all the others, we have $V^{\mathrm{WIS}}(\rho, D_n) = \mathcal{O}(1)$. Therefore, $V^{\mathrm{WIS}}(\rho, D_n) = \mathcal{O}(1/n^{\alpha})$ for some $\alpha \in [0, 1]$. If we choose $y = \mathcal{O}(1/n^{\alpha})$, then the bound in Equation \ref{eqn:pac_bayes_wis} has rate $\mathcal{O}(1/n^{\alpha/2})$.

Next, we discuss the ES PAC-Bayes bound for the WIS estimate in the contextual bandit setting. In the CB setting, the WIS estimate can be defined as:
\begin{equation*}
r^{\mathrm{WIS}}(\pi, D_n) = \frac{\sum_{i=1}^{n}\frac{\pi(a_i|s_i)}{b(a_i|s_i)}r_i}{\sum_{i=1}^{n}\frac{\pi(a_i|s_i)}{b(a_i|s_i)}}.
\end{equation*}

The ES PAC-Bayes bound in Theorem \ref{thm:efron_stein_pac_bayes} can still be used and one can derive an equivalent to the bound in Equation \ref{eqn:pac_bayes_wis}. However, the upper bound on the semi-empirical ES variance proxy $V^{\mathrm{WIS}}(\pi, D_n)$, as defined in Lemma \ref{lem:es_var_bound}, now depends on the unknown state distribution $P_S$. To rectify this, one can use an alternative bias-concentration decomposition suggested by Kuzborskij et al. \cite{kuzborskij2021confident}:
\begin{align*}
R(\rho) - r^{\mathrm{WIS}}(\rho, D_n) &= \underbrace{R(\rho) - R(\rho; s_{1:n})}_\text{concentration of contexts}\\
&+ \underbrace{R(\rho; s_{1:n}) - R^{\mathrm{WIS}}(\rho; s_{1:n})}_\text{bias}\\
&+ \underbrace{R^{\mathrm{WIS}}(\rho; s_{1:n}) - r^{\mathrm{WIS}}(\rho, D_n)}_\text{concentration},
\end{align*}

where
\begin{align*}
R(\rho; s_{1:n}) &= \mathop{\mathbb{E}}_{\pi \sim \rho}\left[\frac{1}{n}\sum_{i=1}^{n}\mathop{\mathbb{E}}_{a \sim \pi(\cdot|s_i), r \sim P_R(\cdot|s_i, a)}\left[r\right]\right],\\
R^{\mathrm{WIS}}(\rho; s_{1:n}) &= \mathop{\mathbb{E}}_{\pi \sim \rho}\left[\mathop{\mathbb{E}}_{D_n}\left[r^{\mathrm{WIS}}(\pi, D_n)|s_1, \dots, s_n\right]\right].
\end{align*}

The concentration of contexts term can be bounded by, for example, the PAC-Bayes Hoeffding-Azuma bound. The concentration term can be bounded using a conditional version of the Efron-Stein PAC-Bayes bound in Theorem \ref{thm:efron_stein_pac_bayes}, which holds with high probability over the sampling of $D_n$ given the observed states $s_1, \dots, s_n$. The upper bound $2V^{\mathrm{WIS}}(\pi, D_n)$ on the ES variance proxy, given the observed states, no longer depends on the state distribution, so it can be estimated with knowledge of only $\pi$ and $b$.

Finally, we note that replacing $r^{\mathrm{WIS}}$ in Theorem \ref{thm:efron_stein_pac_bayes} with $r^{\mathrm{IS}}$ or $r^{\mathrm{CIS}}$ would lead to new ES PAC-Bayes bounds for the IS or CIS estimates. However, this has not yet been explored in the literature.

\subsection{Regret Bounds for Contextual Bandits}
\label{sec:app_cb_regret}

We present a PAC-Bayes cumulative regret bound for contextual bandits by Seldin et al. \cite{seldin2011cb}. We consider the case when the set of actions is finite with $K$ elements ($\mathcal{A} = \{1, \dots, K\}$) and the set of states is finite with $N$ elements ($\mathcal{S} = \{1, \dots, N\}$). The policy class $\Pi$ is the set of all deterministic policies, which for this problem is the set of all functions from $\mathcal{S}$ to $\mathcal{A}$, of which there are $K^N$. For any distribution $\rho$ over $\Pi$, there is a corresponding stochastic policy $\rho(a|s)$, where:
\begin{equation*}
\rho(a|s) = \mathop{\mathbb{E}}_{\pi \sim \rho}\left[\mathbb{I}\{\pi(s) = a\}\right].
\end{equation*}

In this setting, the IS reward estimate for a single policy $\pi \in \Pi$ can be defined as:
\begin{equation}
r^{\mathrm{IS}}(\pi, D_n) = \frac{1}{n}\sum_{i=1}^{n}\frac{\mathbb{I}\{a_i = \pi(s_i)\}}{b(a_i|s_i, D_{i-1})}r_i.
\end{equation}

A lower bound $b(a|s, D_{i-1}) \geq \epsilon_n$ for all $D_{i-1}, s, a$ ensures that the importance weights are bounded by $1/\epsilon_n$. Let $n(s) = \sum_{i=1}^{n}\mathbb{I}\{s_i = s\}$ denote the number of times that state $s$ appears in the data set $D_n$. We define the IS reward estimate for a single state and action as:
\begin{equation}
r^{\mathrm{IS}}(s, a, D_n) = \frac{1}{n(s)}\sum_{i=1, \dots, n: s_i = s}\frac{\mathbb{I}\{a_i = a\}}{b(a_i|s_i, D_{i-1})}r_i.\label{eqn:is_est_sa}
\end{equation}

If $n(s) = 0$, then $r^{\mathrm{IS}}(s, a, D_n) = 0$. The expected regret for a policy $\pi$ can be defined as:
\begin{equation*}
\Delta(\pi) = R(\pi^*) - R(\pi),
\end{equation*}

where $\pi^*$ is a policy in $\Pi$ that maximises the expected reward. The IS regret estimate for a policy $\pi$ is defined as:
\begin{equation}
\Delta^{\mathrm{IS}}(\pi, D_n) = r^{\mathrm{IS}}(\pi^*, D_n) - r^{\mathrm{IS}}(\pi, D_n).
\end{equation}

Seldin et al. \cite{seldin2011cb} show that, as in the MAB setting, a martingale compatible with Bernstein's inequality can be constructed from the CB IS regret estimate. Moreover, the average variance of the CB IS regret estimate can also be bounded by $2/\epsilon_n$. Seldin et al. \cite{seldin2011cb} obtain a PAC-Bayes Bernstein bound on the difference between the expected regret and the IS regret estimate.

\begin{theorem}[CB PAC-Bayes Bernstein bound for $\Delta^{\mathrm{IS}}$ \cite{seldin2011cb}]
For any $\delta \in (0, 1]$ and any $c > 1$, simultaneously for all $\rho \in \mathcal{P}(\Pi)$ that satisfy
\begin{equation}
\frac{NI_{\rho}(S;A) + K(\mathrm{ln}(N) + \mathrm{ln}(K)) + \mathrm{ln}(m_n/\delta)}{2(e-2)n} \leq \frac{\epsilon_n}{c^2},\label{eqn:rho_cond}
\end{equation}

with probability at least $1 - \delta$:
\begin{align}
\Delta(\rho) &\leq \Delta^{\mathrm{IS}}(\rho, D_n) + (1+c)\sqrt{2(e-2)}\label{eqn:reg_bernstein_in}\\
&\times\sqrt{\frac{NI_{\rho}(S;A) + K(\mathrm{ln}(N) + \mathrm{ln}(K)) + \mathrm{ln}(m_n/\delta)}{n \epsilon_n}},\nonumber
\end{align}

where $m_n = \mathrm{ln}(\sqrt{(e-2)n/\mathrm{ln}(1/\delta)})/\mathrm{ln}(c)$, and for all $\rho$ that do not satisfy \ref{eqn:rho_cond}, with the same probability:
\begin{align}
\Delta(\rho) &\leq \Delta^{\mathrm{IS}}(\rho, D_n)\label{eqn:reg_bernstein_out}\\
&+ \frac{2\left(NI_{\rho}(S;A) + K(\mathrm{ln}(N) + \mathrm{ln}(K)) + \mathrm{ln}(m_n/\delta)\right)}{n \epsilon_n}.\nonumber
\end{align}

\label{thm:cb_is_reg_pac_bayes_bernstein}
\end{theorem}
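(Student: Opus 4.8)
The plan is to combine the generic PAC-Bayes Bernstein inequality for martingales (Theorem \ref{thm:pb_bernstein}) with a mutual-information bound on the KL divergence over the structured policy class $\Pi = \mathcal{A}^{\mathcal{S}}$, and then to discretise the free parameter $\lambda$ so that a near-optimal value may be chosen after seeing the data. First I would fix an optimal policy $\pi^*$ and, for each deterministic $\pi \in \Pi$, define the per-round difference
\[
Z_i(\pi) = \Delta(\pi) - \left(\frac{\mathbb{I}\{a_i = \pi^*(s_i)\}}{b(a_i|s_i, D_{i-1})} - \frac{\mathbb{I}\{a_i = \pi(s_i)\}}{b(a_i|s_i, D_{i-1})}\right)r_i.
\]
Conditioning on $D_{i-1}$ and $s_i$ and using the importance-sampling identity gives $\mathbb{E}[Z_i(\pi)\mid D_{i-1}] = 0$, so $(Z_i(\pi))$ is a martingale difference sequence with $M_n(\pi) = \sum_{i=1}^n Z_i(\pi) = n\big(\Delta(\pi) - \Delta^{\mathrm{IS}}(\pi, D_n)\big)$. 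Since $r_i \in [0,1]$ and the importance weights are bounded by $1/\epsilon_n$, the differences lie in $[-1/\epsilon_n, 1/\epsilon_n]$, so Theorem \ref{thm:pb_bernstein} applies with $b = 1/\epsilon_n$ and any $\lambda \in [0, \epsilon_n]$. Invoking it and dividing by $n$ yields, for all $\rho$ simultaneously,
\[
\Delta(\rho) - \Delta^{\mathrm{IS}}(\rho, D_n) \leq \frac{\lambda(e-2)}{n}\langle V \rangle_n(\rho) + \frac{D_{\mathrm{KL}}(\rho||\mu) + \mathrm{ln}(1/\delta)}{n\lambda},
\]
into which I would substitute the variance bound $\langle V \rangle_n(\rho) \leq 2n/\epsilon_n$, the contextual analogue (due to Seldin et al. \cite{seldin2011cb}) of the MAB estimate $V^{\mathrm{IS}}(a, D_n) \le 2/\epsilon_n$.

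The next step is to replace $D_{\mathrm{KL}}(\rho||\mu)$ by $N I_{\rho}(S;A) + K(\mathrm{ln}(N) + \mathrm{ln}(K))$. Following the Seldin--Tishby co-clustering technique \cite{seldin2010pac}, I would take $\mu$ to be a hierarchical prior over $\Pi = \mathcal{A}^{\mathcal{S}}$ that first samples a marginal action distribution on a suitable grid and then samples an action for each state independently, and I would restrict attention to product posteriors determined by their marginals $\rho(a|s)$. For such $\rho$ the divergence decomposes as $\sum_{s} D_{\mathrm{KL}}(\rho(\cdot|s)\,||\,\bar\rho) = N I_{\rho}(S;A)$, taking $S$ uniform over the $N$ states, plus a model-description penalty of order $K(\mathrm{ln}(N) + \mathrm{ln}(K))$ for encoding the common marginal $\bar\rho$.

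Finally, the $\lambda$ balancing the two terms is data dependent, so I would apply Theorem \ref{thm:pb_bernstein} on a geometric grid of $m_n = \mathrm{ln}(\cdots)/\mathrm{ln}(c)$ values of $\lambda$ spanning $(0, \epsilon_n]$ and take a union bound, which turns $\mathrm{ln}(1/\delta)$ into $\mathrm{ln}(m_n/\delta)$. Choosing the grid point nearest the unconstrained optimum $\lambda^* = \sqrt{(D_{\mathrm{KL}} + \mathrm{ln}(m_n/\delta))\,\epsilon_n / (2(e-2)n)}$ gives the square-root bound (\ref{eqn:reg_bernstein_in}), with the factor $(1+c)$ absorbing the discretisation slack. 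This choice is admissible exactly when $\lambda^* \leq \epsilon_n/c$, and squaring that inequality and inserting the KL and variance bounds reproduces precisely condition (\ref{eqn:rho_cond}). For the remaining $\rho$, the optimum lies outside $[0,\epsilon_n]$, so I would instead take the largest admissible $\lambda$, which yields the slower linear bound (\ref{eqn:reg_bernstein_out}).

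The main obstacle is the KL-to-mutual-information step: pinning down the constants $N$ and $K(\mathrm{ln}(N)+\mathrm{ln}(K))$ requires a careful construction of the hierarchical prior and a discretisation of the marginal action distribution, and the two-regime $\lambda$ optimisation must be matched cleanly to the grid so that the factor $(1+c)$ in (\ref{eqn:reg_bernstein_in}) and the constant $2$ in (\ref{eqn:reg_bernstein_out}) emerge with the right dependence on $c$.
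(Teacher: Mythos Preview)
Your proposal is correct and follows essentially the same route that the paper sketches (and that \cite{seldin2011cb} carries out): construct the regret martingale, apply the PAC-Bayes Bernstein inequality (Theorem~\ref{thm:pb_bernstein}) with the variance bound $\langle V\rangle_n \le 2n/\epsilon_n$, replace $D_{\mathrm{KL}}(\rho\|\mu)$ by the Seldin--Tishby mutual-information bound \cite{seldin2010pac}, and optimise $\lambda$ over a geometric grid via a union bound to get the $(1+c)$ factor and the two-regime split at condition~\eqref{eqn:rho_cond}. The paper does not give a self-contained proof beyond naming these same ingredients, so your level of detail already exceeds what the paper provides.
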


In this PAC-Bayes Bernstein bound, the KL divergence penalty has been replaced with $I_{\rho}(S;A)$, which is the mutual information between states and actions under the policy $\rho(a|s)$. Let $\bar{\rho}(a) = (1/N)\sum_{s}\rho(a|s)$ denote the marginal distribution over $\mathcal{A}$ that corresponds to $\rho(a|s)$ and a uniform distribution over $\mathcal{S}$. Then $I_{\rho}(S;A)$ is defined as:
\begin{equation}
I_{\rho}(S;A) = \frac{1}{N}\sum_{s, a}\rho(a|s)\mathrm{ln}\left(\frac{\rho(a|s)}{\bar{\rho}(a)}\right).
\end{equation}

As shown by Seldin and Tishby \cite{seldin2010pac}, there exists a distribution $\mu$ over $\Pi$ such that for every $\rho$ over $\Pi$:
\begin{equation*}
D_{\mathrm{KL}}(\rho||\mu) \leq NI_{\rho}(S;A) + K\mathrm{ln}(N) + K\mathrm{ln}(K).
\end{equation*}

We could have also choosen $\mu$ to be a uniform prior, in which case $D_{\mathrm{KL}}(\rho||\mu) \leq N\mathrm{ln}(K)$. However, $I_{\rho}(S;A) \leq \mathrm{ln}(K)$, so when the number of states $N$ is much larger than the number of actions $K$, we have $NI_{\rho}(S;A) + K\mathrm{ln}(N) + K\mathrm{ln}(K) \leq N\mathrm{ln}(K)$. Seldin et al. \cite{seldin2011cb} derive a cumulative regret bound for a family of contextual bandit algorithms. Let $\rho_n(a)$ be an arbitrary distribution over $\mathcal{A}$. Let $\tilde{\rho}_n^{\mathrm{exp}}(a|s)$ denote the following smoothed Gibbs policy:
\begin{align}
\rho_n^{\mathrm{exp}}(a|s) &\propto \rho_n(a)e^{\gamma_n r^{\mathrm{IS}}(s, a, D_n)},\label{eqn:exp4}\\
\tilde{\rho}_n^{\mathrm{exp}}(a|s) &= (1 - K\epsilon_{n+1})\rho_n^{\mathrm{exp}}(a|s) + \epsilon_{n+1}.\nonumber
\end{align}

Using the same regret decomposition as in Equation \ref{eqn:regret_decomposition}, one can obtain a per-round regret bound for playing $\tilde{\rho}_n^{\mathrm{exp}}$. Seldin et al. \cite{seldin2011cb} show that $\Delta^{\mathrm{IS}}(\rho_n^{\mathrm{exp}}, D_n) \leq \mathrm{ln}(1/\epsilon_{n+1})/\gamma_n$, and that $R(\rho_n^{\mathrm{exp}}) - R(\tilde{\rho}_n^{\mathrm{exp}}) \leq K\epsilon_{n+1}$ also holds in the CB setting. If the PAC-Bayes Bernstein bound from Theorem \ref{thm:cb_is_reg_pac_bayes_bernstein} is used to bound $\Delta(\rho_n^{\mathrm{exp}}) - \Delta^{\mathrm{IS}}(\rho_n^{\mathrm{exp}}, D_n)$, then we obtain the following per-round regret bound.

\begin{theorem}[CB PAC-Bayes Bernstein per-round regret bound \cite{seldin2011cb}]
For any $\delta \in (0, 1]$ and any $c > 1$, with probability at least $1 - \delta$, for all policies $\rho^{\mathrm{exp}}$ that satisfy Equation \ref{eqn:rho_cond}, the expected per-round regret $\Delta(\tilde{\rho}_n^{\mathrm{exp}})$ is bounded by:
\begin{align*}
\Delta&(\tilde{\rho}_n^{\mathrm{exp}}) \leq (1+c)\sqrt{2(e-2)}\\
&\times\sqrt{\frac{NI_{\rho_n^{\mathrm{exp}}}(S;A) + K(\mathrm{ln}(N) + \mathrm{ln}(K)) + \mathrm{ln}(2m_n/\delta)}{n \epsilon_n}}\\
&+ \frac{\mathrm{ln}(\epsilon_{n+1})}{\gamma_n} + K\epsilon_{n+1},
\end{align*}

and for all $\rho^{\mathrm{exp}}$ that do not satisfy Equation \ref{eqn:rho_cond}, with the same probability:
\begin{align*}
\Delta(\tilde{\rho}_n^{\mathrm{exp}}) &\leq \frac{2NI_{\rho_n^{\mathrm{exp}}}(S;A) + K(\mathrm{ln}(N) + \mathrm{ln}(K)) + \mathrm{ln}(2m_n/\delta)}{n \epsilon_n}\\
&+ \frac{\mathrm{ln}(\epsilon_{n+1})}{\gamma_n} + K\epsilon_{n+1}.
\end{align*}
\label{thm:is_cb_bernstein_regret}
\end{theorem}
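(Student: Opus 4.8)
The plan is to transfer the high-probability control of Theorem~\ref{thm:cb_is_reg_pac_bayes_bernstein} through the regret decomposition of Equation~\ref{eqn:regret_decomposition}, instantiated at the smoothed Gibbs policy $\tilde{\rho}_n^{\mathrm{exp}}$. Writing the identity
\begin{align*}
\Delta(\tilde{\rho}_n^{\mathrm{exp}}) &= \left(\Delta(\rho_n^{\mathrm{exp}}) - \Delta^{\mathrm{IS}}(\rho_n^{\mathrm{exp}}, D_n)\right) + \Delta^{\mathrm{IS}}(\rho_n^{\mathrm{exp}}, D_n)\\
&\quad + \left(R(\rho_n^{\mathrm{exp}}) - R(\tilde{\rho}_n^{\mathrm{exp}})\right),
\end{align*}
I would bound the three groups separately: the first is precisely the expected-minus-empirical regret gap controlled by the CB PAC-Bayes Bernstein bound, while the second and third are governed by the two deterministic facts quoted from Seldin et al.\ \cite{seldin2011cb}, namely $\Delta^{\mathrm{IS}}(\rho_n^{\mathrm{exp}}, D_n) \leq \mathrm{ln}(1/\epsilon_{n+1})/\gamma_n$ and $R(\rho_n^{\mathrm{exp}}) - R(\tilde{\rho}_n^{\mathrm{exp}}) \leq K\epsilon_{n+1}$.

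First I would apply Theorem~\ref{thm:cb_is_reg_pac_bayes_bernstein} to the posterior $\rho_n^{\mathrm{exp}}$ with confidence parameter $\delta/2$ in place of $\delta$; this is exactly what turns the $\mathrm{ln}(m_n/\delta)$ appearing there into the $\mathrm{ln}(2m_n/\delta)$ of the present statement, while keeping the failure probability at most $\delta$. Because Theorem~\ref{thm:cb_is_reg_pac_bayes_bernstein} already furnishes both Equation~\ref{eqn:reg_bernstein_in} (on the event that $\rho_n^{\mathrm{exp}}$ satisfies the condition in Equation~\ref{eqn:rho_cond}) and Equation~\ref{eqn:reg_bernstein_out} (otherwise) on a single high-probability event, I obtain both regimes of the conclusion simultaneously: the $(1+c)\sqrt{2(e-2)}$ square-root term in the first case and the linear $2(\cdots)/(n\epsilon_n)$ term in the second.

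Next I would add the two deterministic contributions. Since the bounds $\mathrm{ln}(1/\epsilon_{n+1})/\gamma_n$ and $K\epsilon_{n+1}$ hold pointwise for every realisation of $D_n$, they consume no probability budget, so adding them to the high-probability bound on the first group leaves the overall statement valid with probability at least $1-\delta$. Substituting into the decomposition and collecting terms reproduces the two displayed inequalities, the $\Delta^{\mathrm{IS}}$ contribution supplying the $\mathrm{ln}(1/\epsilon_{n+1})/\gamma_n$ term and the smoothing gap supplying the $K\epsilon_{n+1}$ term.

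Given that the two deterministic facts are inherited from \cite{seldin2011cb}, the remaining work is essentially bookkeeping, and the main subtlety is the probability accounting: one must check that a single application of Theorem~\ref{thm:cb_is_reg_pac_bayes_bernstein} (with the halved confidence) suffices to cover both regimes of $\rho_n^{\mathrm{exp}}$ and that the deterministic terms do not interact with the random event. For a fully self-contained argument, the genuinely nontrivial ingredient would be the empirical-regret fact $\Delta^{\mathrm{IS}}(\rho_n^{\mathrm{exp}}, D_n) \leq \mathrm{ln}(1/\epsilon_{n+1})/\gamma_n$: this is the standard exponential-weights / log-partition estimate applied state by state to the forecaster $\rho_n^{\mathrm{exp}}(a|s) \propto \rho_n(a)e^{\gamma_n r^{\mathrm{IS}}(s,a,D_n)}$ of Equation~\ref{eqn:exp4}, where the effective comparator size per state is controlled through the exploration floor $\epsilon_{n+1}$ rather than through $K$.
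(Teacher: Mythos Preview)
Your proposal is correct and follows essentially the same approach as the paper: the paper explicitly says that one uses the regret decomposition in Equation~\ref{eqn:regret_decomposition}, bounds $\Delta(\rho_n^{\mathrm{exp}}) - \Delta^{\mathrm{IS}}(\rho_n^{\mathrm{exp}}, D_n)$ via Theorem~\ref{thm:cb_is_reg_pac_bayes_bernstein}, and plugs in the two deterministic facts $\Delta^{\mathrm{IS}}(\rho_n^{\mathrm{exp}}, D_n) \leq \mathrm{ln}(1/\epsilon_{n+1})/\gamma_n$ and $R(\rho_n^{\mathrm{exp}}) - R(\tilde{\rho}_n^{\mathrm{exp}}) \leq K\epsilon_{n+1}$ from \cite{seldin2011cb}. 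Your observation that applying Theorem~\ref{thm:cb_is_reg_pac_bayes_bernstein} at confidence $\delta/2$ accounts for the $\mathrm{ln}(2m_n/\delta)$ in the final statement is the right bookkeeping.
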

\vspace{-0.4cm}

If $\epsilon_{n} = n^{-1/3}K^{-1/3}N^{1/3}$, then this gives a cumulative regret bound of order $\mathcal{O}(n^{2/3}K^{2/3}N^{1/3})$, ignoring log terms. If we were to upper bound $D_{\mathrm{KL}}(\rho||\mu)$ by $N\mathrm{ln}(K)$ instead of the mutual information, then choosing $\epsilon_{n} = n^{-1/3}K^{-2/3}N^{1/3}$ would give a cumulative regret bound of order $\mathcal{O}(n^{2/3}K^{1/3}N^{1/3})$ ignoring log terms. Unfortunately, both of these bounds have sub-optimal scaling with $n$. For example, ignoring log terms, the EXP4.P algorithm of Beygelzimer et al. \cite{beygelzimer2011contextual} has cumulative regret bounded by $\mathcal{O}(\sqrt{nKN})$ in this problem.

\subsection{Optimising Bound Parameters}
\label{sec:app_bound_params}

Many PAC-Bayes bounds contain parameters that must be set before observing the data, such as $\lambda$ in the PAC-Bayes Bernstein bound in Theorem \ref{thm:is_bernstein}. We would like to be able to choose optimal values of these parameters. However, the optimal values are usually data-dependent. For example, the optimal $\lambda$ for the PAC-Bayes Bernstein bound from Theorem \ref{thm:is_bernstein}, using $V^{\mathrm{IS}}(\rho, D_n) \leq 1/\epsilon_n$, is:
\begin{align}
\lambda^{*} &= \argmin_{\lambda}\left\{\frac{\lambda(e-2)}{n\epsilon_n} + \frac{D_{\mathrm{KL}}(\rho||\mu) + \mathrm{ln}(1/\delta)}{\lambda}\right\}\nonumber\\
&= \sqrt{\frac{n\epsilon_n\left(D_{\mathrm{KL}}(\rho||\mu) + \mathrm{ln}(1/\delta)\right)}{e-2}}\label{eqn:optimal_lambda}
\end{align}

Since $\rho$ is data-dependent, $\lambda^*$ is as well. With this choice of $\lambda$, we would obtain the following (invalid) bound:
\begin{equation}
R(\rho) \geq r^{\mathrm{IS}}(\rho, D_n) - 2\sqrt{\frac{(e-2)\left(D_{\mathrm{KL}}(\rho||\mu) + \mathrm{ln}(1/\delta)\right)}{n\epsilon_n}}
\label{eqn:ideal_lamb}
\end{equation}

In this section we present methods for approximately optimising parameters of PAC-Bayes bounds, using the PAC-Bayes Bernstein bound as an example. We compare how close each of them is to the bound in Equation \ref{eqn:ideal_lamb}.

\subsubsection{Sample Splitting}

One approach is to split the data set into subsets of equal size $D_n = D_{1:n/2} \cup D_{n/2:n}$. The first subset is used to find a good value for $\lambda$. For example, we can approximate $\lambda^*$ by $\hat{\lambda}$:
\begin{align*}
\hat{\lambda} = \argmax_{\lambda}\bigg\{\max_{\rho}\bigg\{&r^{\mathrm{IS}}(\rho, D_{1:n/2}) - \frac{\lambda 2(e-2)}{n\epsilon_n}\\
&- \frac{D_{\mathrm{KL}}(\rho||\mu) + \mathrm{ln}(1/\delta)}{\lambda}\bigg\}\bigg\}.
\end{align*}

Since the Bernstein bound holds only for $\lambda \in (0, n\epsilon_n]$, we should take the minimum of $\hat{\lambda}$ and $(n/2)\epsilon_n$. The bound is then evaluated on the second subset with $\lambda = \hat{\lambda}$. Since $\hat{\lambda}$ does not depend on $D_{n/2:n}$, this yields a valid bound.

\begin{theorem}[PAC-Bayes Bernstein Bound with a Subset $\lambda$]
For any $\delta \in (0, 1)$, any prior $\mu \in \mathcal{P}(\Pi)$ and $\tilde{\lambda} = \min(\hat{\lambda}, (n/2)\epsilon_n)$, with probability at least $1 - \delta$ and for all $\rho \in \mathcal{P}(\Pi)$ simultaneously:
\begin{equation*}
R(\rho) \geq r^{\mathrm{IS}}(\rho, D_{n/2:n}) - \frac{\tilde{\lambda}2(e-2)}{n\epsilon_n} - \frac{D_{\mathrm{KL}}(\rho||\mu) + \mathrm{ln}(1/\delta)}{\tilde{\lambda}}.
\end{equation*}
\label{thm:pac_bayes_bern_subset_bound}
\end{theorem}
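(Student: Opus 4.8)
The plan is to exploit the sample-splitting structure directly: since $\tilde{\lambda}$ is a measurable function of the first half $D_{1:n/2}$ alone, it behaves as a data-independent constant from the viewpoint of the second half $D_{n/2:n}$, so the ordinary PAC-Bayes Bernstein bound (Theorem \ref{thm:is_bernstein}) can be invoked on $D_{n/2:n}$ with $\lambda = \tilde{\lambda}$ held fixed.

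First I would condition on an arbitrary realization of $D_{1:n/2}$. Under this conditioning, $\hat{\lambda}$ and hence $\tilde{\lambda} = \min(\hat{\lambda}, (n/2)\epsilon_n)$ are fixed constants. The key observation is that the martingale-difference structure underlying Theorem \ref{thm:is_bernstein} is preserved for the second half: for each $i > n/2$ and each $\pi$, the centered importance-weighted reward $\frac{\pi(a_i)}{b(a_i|D_{i-1})}r_i - R(\pi)$ has conditional mean zero given $D_{i-1}$, and since $D_{1:n/2} \subseteq D_{i-1}$, this remains a martingale-difference sequence after we further condition on $D_{1:n/2}$. Consequently the canonical assumption in Equation \ref{eqn:canon_pair}, which powers Theorem \ref{thm:canon} and thus Theorem \ref{thm:is_bernstein}, continues to hold conditionally, with $A(\pi)$ built from $r^{\mathrm{IS}}(\pi, D_{n/2:n})$ and $B(\pi)^2$ the associated variance term.

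Next I would apply Theorem \ref{thm:is_bernstein} to the second half, which contains $n/2$ samples, using the fixed value $\lambda = \tilde{\lambda}$. The admissibility requirement $\lambda \in [0, (n/2)\epsilon_n]$ that the bound imposes on a data set of size $n/2$ is exactly what the truncation $\tilde{\lambda} = \min(\hat{\lambda}, (n/2)\epsilon_n)$ enforces. Substituting the variance bound $V^{\mathrm{IS}}(\rho, D_{n/2:n}) \leq 1/\epsilon_n$ and noting the sample size $n/2$ turns the variance penalty into $\frac{\tilde{\lambda}(e-2)}{(n/2)\epsilon_n} = \frac{2\tilde{\lambda}(e-2)}{n\epsilon_n}$, matching the stated constant. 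This yields the claimed inequality conditionally, with probability at least $1-\delta$ over the draw of $D_{n/2:n}$.

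Finally, since this conditional statement holds with probability at least $1-\delta$ for every fixed realization of $D_{1:n/2}$, integrating the conditional probability over $D_{1:n/2}$ (the law of total probability) recovers the unconditional guarantee. I expect the main obstacle to be the bookkeeping in the conditioning step: for dependent bandit data one must check carefully that conditioning on the first half genuinely preserves both the conditional-mean-zero property and the per-term variance bound for the second half, so that the Bernstein machinery applies verbatim with $\tilde{\lambda}$ treated as a constant. Once this is in place, the remainder is a routine substitution into Theorem \ref{thm:is_bernstein}.
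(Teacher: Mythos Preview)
Your proposal is correct and matches the paper's approach: the paper's justification is the single sentence ``Since $\hat{\lambda}$ does not depend on $D_{n/2:n}$, this yields a valid bound,'' and your conditioning-plus-total-probability argument is precisely the rigorous unpacking of that sentence. Your treatment is in fact more careful than the paper's, explicitly verifying that the truncation enforces the admissibility constraint $\lambda \in [0,(n/2)\epsilon_n]$ and that the martingale-difference structure survives conditioning on $D_{1:n/2}$.
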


If $\hat{\lambda}$ is an accurate approximation of $\lambda^*$, and $\hat{\lambda} \leq (n/2)\epsilon_n$, then the PAC-Bayes Bernstein bound evaluated on $D_{n/2:n}$ and with $\lambda = \hat{\lambda}$ is approximately:
\begin{equation*}
R(\rho) \geq r^{\mathrm{IS}}(\rho, D_{n/2:n}) - 2\sqrt{2}\sqrt{\frac{(e-2)\left(D_{\mathrm{KL}}(\rho||\mu) + \mathrm{ln}(1/\delta)\right)}{n\epsilon_n}}.
\end{equation*}

Compared to the bound in Equation \ref{eqn:ideal_lamb}, this bound has a factor of $\sqrt{2}$ in front of the penalty term because it is evaluated using half as many samples.

\subsubsection{Union Bounds and Grids}
\label{sec:union_grid}

Another approach is to define a grid of parameter values, and then use the union bound to obtain a PAC-Bayes bound that holds simultaneously for all values in the grid with high probability. Suppose we choose the following grid $\Lambda = \{\lambda_1, \dots, \lambda_m\}$ and that $\sum_{i=1}^{m}\delta_i = \delta$. We have that for each $i$, with probability at least $1 - \delta_i$:
\begin{equation*}
R(\rho) \geq r^{\mathrm{IS}}(\rho, D_n) - \frac{\lambda_i(e-2)}{n\epsilon_n} - \frac{D_{\mathrm{KL}}(\rho||\mu) + \mathrm{ln}(1/\delta_i)}{\lambda_i}.
\end{equation*}

By a union bound argument, this bound holds for all $\lambda_i \in \Lambda$ simultaneously with probability at least $1 - \delta$. This allows us to choose the best $\lambda \in \Lambda$ after observing the data.

We may also optimise $\lambda$ over a continuous interval. For example, say we want the PAC-Bayes bound to hold with high probability for all $\lambda$ in the interval $[a, b]$ simultaneously, where $0 < a \leq b \leq n\epsilon_n$. We can specify a geometric grid $\Lambda = \{c^ka| k \in \mathbb{N}\} \cap [a, b]$, where $c > 1$. The number of elements in $\Lambda$ is no more than $\mathrm{log}_{c}(b/a) = \mathrm{ln}(b/a)/\mathrm{ln}(c)$. Using the union bound once more, and with $\delta_i = \frac{\mathrm{ln}(b/a)/\mathrm{ln}(c)}{\delta}$, the PAC-Bayes bound holds for all $\lambda \in \Lambda$ with probability at least $1 - \delta$. For any $\lambda \in [a, b]$, there exists a $\lambda^{\prime} \in \Lambda$ with $\lambda^{\prime} \leq \lambda \leq c\lambda^{\prime}$. We can evaluate the bound at this $\lambda^{\prime}$ and then upper bound the terms containing $\lambda^{\prime}$ with terms containing $\lambda$. We then have that with probability at least $1 - \delta$:

\begin{align}
R(\rho) &\geq r^{\mathrm{IS}}(\rho, D_n) - \min_{\lambda \in [a, b]}\bigg\{\frac{\lambda(e-2)}{n\epsilon_n}\label{eqn:lamb_geom}\\
&+ \frac{c\left(D_{\mathrm{KL}}(\rho||\mu) + \mathrm{ln}\left(\frac{\mathrm{ln}(b/a)/\mathrm{ln}(c)}{\delta}\right)\right)}{\lambda}\bigg\}.\nonumber
\end{align}

If the value of $\lambda$ that optimises the bound in Equation \ref{eqn:lamb_geom} is in $[a, b]$, then the bound can be rewritten as:
\begin{align*}
R(\rho) &\geq r^{\mathrm{IS}}(\rho, D_n) \\
&- (1 + c)\sqrt{\frac{(e-2)\left(D_{\mathrm{KL}}(\rho||\mu) + \mathrm{ln}\left(\frac{\mathrm{ln}(b/a)/\mathrm{ln}(c)}{\delta}\right)\right)}{n\epsilon_n}}.
\end{align*}

This bound is the same as the bound in Equation \ref{eqn:ideal_lamb}, except that there is a factor of $1 + c$ instead of 2 in front of the KL divergence penalty and $\mathrm{ln}(1/\delta)$ has been replaced with $\mathrm{ln}(\frac{\mathrm{ln}(b/a)/\mathrm{ln}(c)}{\delta}))$.

For best results we need to choose $a$ and $b$ such that the optimal $\lambda$ is in $[a, b]$ but small enough that $\mathrm{ln}(b/a)$ is not too large. We should choose $c$ to be small enough that $1 + c$ is close to 2, but large enough that $1/\mathrm{ln}(c)$ is small. To choose a suitable $a$ and $b$, we can lower and upper bound any data-dependent terms in the equation for the optimal $\lambda^*$, such as $D_{\mathrm{KL}}(\rho||\mu)$ in Equation \ref{eqn:optimal_lambda}. With $a = \sqrt{n \epsilon_n \mathrm{ln}(1/\delta)/(e-2)}$ and $b = n\epsilon_n$, and following Seldin et al. \cite{seldin2012mart}, one can obtain the following theorem.

\begin{theorem}[PAC-Bayes Bernstein Bound with a Geometric $\lambda$ Grid \cite{seldin2012mart}]
For any $\delta \in (0, 1)$, any prior $\mu \in \mathcal{P}(\Pi)$ and any $c > 1$, with probability at least $1 - \delta$, simultaneously for all $\rho \in \mathcal{P}(\Pi)$ that satisfy:
\begin{equation*}
\sqrt{\frac{D_{\mathrm{KL}}(\rho||\mu) + \mathrm{ln}(\nu/\delta)}{n(e-2)V^{\mathrm{IS}}(\rho, D_n)}} \leq \epsilon_n,
\end{equation*}

we have:
\begin{equation*}
R(\rho) \geq r^{\mathrm{IS}}(\rho, D_n) - (1 + c)\sqrt{\frac{(e-2)\left(D_{\mathrm{KL}}(\rho||\mu) + \mathrm{ln}(\nu/\delta)\right)}{n\epsilon_n}},
\end{equation*}

and for all other $\rho \in \mathcal{P}(\Pi)$ with the same probability, we have:
\begin{equation*}
R(\rho) \geq r^{\mathrm{IS}}(\rho, D_n) - 2\frac{D_{\mathrm{KL}}(\rho||\mu) + \mathrm{ln}(\nu/\delta)}{n\epsilon_n},
\end{equation*}

where $\nu = \mathrm{ln}(\sqrt{n\epsilon_n (e-2)/\mathrm{ln}(1/\delta)})/\mathrm{ln}(c)$.
\label{thm:pac_bayes_bern_grid_bound}
\end{theorem}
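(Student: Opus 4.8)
The plan is to follow the union-bound-over-a-grid recipe sketched just before the statement, starting from the PAC-Bayes Bernstein bound of Thm.~\ref{thm:is_bernstein} \emph{before} its variance is weakened to $1/\epsilon_n$. First I would fix a geometric grid anchored at the right endpoint, $\Lambda = \{c^{-k}n\epsilon_n : k \in \mathbb{N}_0\} \cap [a, n\epsilon_n]$ with $a = \sqrt{n\epsilon_n\mathrm{ln}(1/\delta)/(e-2)}$, so that $b = n\epsilon_n$ is itself a grid point and $|\Lambda| \le \mathrm{ln}(b/a)/\mathrm{ln}(c) = \nu$. Invoking Thm.~\ref{thm:is_bernstein} with confidence $\delta_i = \delta/|\Lambda|$ for each $\lambda_i \in \Lambda$ and taking a union bound, I get that with probability at least $1-\delta$, simultaneously for all $\rho$ and all $\lambda_i \in \Lambda$, $R(\rho) \ge r^{\mathrm{IS}}(\rho,D_n) - \lambda_i(e-2)V^{\mathrm{IS}}(\rho,D_n)/n - (K + \mathrm{ln}(\nu/\delta))/\lambda_i$, where $K = D_{\mathrm{KL}}(\rho\|\mu)$ and I have used $|\Lambda| \le \nu$ to replace $\mathrm{ln}(1/\delta_i)$ by the larger $\mathrm{ln}(\nu/\delta)$ (which only loosens the lower bound, so it stays valid).

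Next I would introduce the unconstrained minimiser of the penalty, $\lambda^{\circ} = \sqrt{n(K + \mathrm{ln}(\nu/\delta))/((e-2)V^{\mathrm{IS}}(\rho,D_n))}$, and note that the hypothesis of the theorem is \emph{exactly} the assertion $\lambda^{\circ} \le n\epsilon_n$: squaring the condition $\sqrt{(K+\mathrm{ln}(\nu/\delta))/(n(e-2)V^{\mathrm{IS}}(\rho,D_n))} \le \epsilon_n$ rearranges term-by-term into $\lambda^{\circ} \le n\epsilon_n$. The choice of $a$ secures the matching lower endpoint: using $V^{\mathrm{IS}}(\rho,D_n)\le 1/\epsilon_n$ together with $K \ge 0$ gives $\lambda^{\circ} \ge \sqrt{n\epsilon_n\mathrm{ln}(\nu/\delta)/(e-2)} \ge a$. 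Hence, for every $\rho$ obeying the hypothesis, the optimiser $\lambda^{\circ}$ lands inside $[a, n\epsilon_n]$, the interval covered by the grid.

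For such $\rho$ (first case) I would select the grid point $\lambda' \in \Lambda$ with $\lambda' \le \lambda^{\circ} \le c\lambda'$ (guaranteed by the geometric spacing), evaluate the union bound at $\lambda'$, and then replace $\lambda'$ by $\lambda^{\circ}$ monotonically in each penalty term: since $\lambda' \le \lambda^{\circ}$ the variance term only increases under this enlargement, and since $\lambda' \ge \lambda^{\circ}/c$ the $K$-term increases by at most the factor $c$. Substituting the closed form of $\lambda^{\circ}$ makes the two resulting terms equal up to that factor $c$, collapsing them into $(1+c)\sqrt{(e-2)V^{\mathrm{IS}}(\rho,D_n)(K+\mathrm{ln}(\nu/\delta))/n}$; a final application of $V^{\mathrm{IS}}(\rho,D_n)\le 1/\epsilon_n$ produces the first displayed bound.

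For $\rho$ failing the hypothesis (second case) we have $\lambda^{\circ} > n\epsilon_n$, so the penalty is still decreasing at the right endpoint and the best admissible choice is the grid point $\lambda = n\epsilon_n$ itself. Evaluating there yields a variance term $\epsilon_n(e-2)V^{\mathrm{IS}}(\rho,D_n)$ and a $K$-term $(K+\mathrm{ln}(\nu/\delta))/(n\epsilon_n)$; the reversed inequality $\lambda^{\circ} > n\epsilon_n$ is equivalent to $(e-2)V^{\mathrm{IS}}(\rho,D_n) < (K+\mathrm{ln}(\nu/\delta))/(n\epsilon_n^2)$, which bounds the variance term by the same $(K+\mathrm{ln}(\nu/\delta))/(n\epsilon_n)$, and summing the two gives the factor-$2$ bound. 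The main obstacle I anticipate is bookkeeping rather than conceptual: aligning the grid endpoints, the cardinality estimate $|\Lambda|\le\nu$, and the convention that $b=n\epsilon_n$ is a genuine grid point, so that the second regime yields the clean constant $2$ (a weaker $1+c$ appears if the endpoint is only reached up to the factor $c$), together with confirming that $\lambda^{\circ}$ always lies in $[a,n\epsilon_n]$ in the first regime so that the discretisation step is legitimate.
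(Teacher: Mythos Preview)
Your proposal is correct and follows essentially the same geometric-grid union-bound approach the paper sketches (and attributes to Seldin et al.): fix the grid on $[a,b]$ with $a=\sqrt{n\epsilon_n\ln(1/\delta)/(e-2)}$ and $b=n\epsilon_n$, union-bound over its $\le\nu$ points, sandwich the unconstrained optimiser between neighbouring grid points to pick up the factor $(1+c)$, and fall back to the endpoint $\lambda=n\epsilon_n$ when the optimiser overshoots. Your only deviation is cosmetic---anchoring the grid at $b$ rather than $a$, and keeping $V^{\mathrm{IS}}(\rho,D_n)$ explicit until the final weakening to $1/\epsilon_n$ (which is indeed what is needed to recover the variance-dependent condition in the theorem statement, even though the paper's displayed sketch already substitutes $1/\epsilon_n$).
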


We believe that Langford and Caruana \cite{langford2002not} were the first to use a geometric grid. This approach can be extended to infinite (but countable) grids, which allows us to optimise $\lambda$ over an interval $[a, \infty)$. For example, see \cite{catoni2007pac} or \cite{seldin2012mart}. One can use the same techniques to optimise the clipping parameter $\tau$ in any of the PAC-Bayes bounds for the CIS estimate from Section \ref{sec:rew_cis_estimate}. London and Sandler \cite{london2019bayesian} provided a version of their risk bound (Theorem \ref{thm:cis_pac_bayes_risk}) where $\tau$ can be optimised over the interval $(0, 1)$.

\section{Proofs}

\subsection{$M_n^{\mathrm{IS}}(\pi)$ is a martingale}
\label{sec:xis_mart_proof}

\begin{lemma}
The sequence $\{X^{\mathrm{IS}}_i(\pi)\}_{i=1}^{n}$ defined as:
\begin{equation*}
X^{\mathrm{IS}}_i(\pi) = \frac{\pi(a_i)}{b(a_i|D_{i-1})}r_i - R(\pi),
\end{equation*}
is a martingale difference sequence with respect to $\{(a_i, r_i)\}_{i=1}^{n}$. Moreover, if the importance weights $\pi(a)/b(a|D_{i-1})$ are uniformly bounded above by $1/\epsilon_n$, then each $X^{\mathrm{IS}}_i(\pi)$ is uniformly bounded in the range $[-R(\pi), 1/\epsilon_n - R(\pi)]$, and the sum of the sequence is:
\begin{equation*}
\sum_{i=1}^{n}X^{\mathrm{IS}}_i(\pi) = n(r^{\mathrm{IS}}(\pi, D_n) - R(\pi)).
\end{equation*}
\label{lem:xis_mart}
\end{lemma}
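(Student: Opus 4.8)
The plan is to verify the three assertions separately, with only the martingale-difference identity requiring any real work; the boundedness and the summation formula are direct consequences of the definitions. Throughout I would abbreviate by $\mathcal{F}_{i-1}$ the information carried by $(a_1,r_1),\dots,(a_{i-1},r_{i-1})$, equivalently by $D_{i-1}$, and recall from Section~\ref{sec:multi_bandit_prob} that, given $D_{i-1}$, the action $a_i$ is drawn from $b(\cdot|D_{i-1})$ and then $r_i$ from $P_R(\cdot|a_i)$. I would also write $\bar r(a) := \mathbb{E}_{r\sim P_R(\cdot|a)}[r]$ for the mean reward of action $a$, so that $R(\pi)=\mathbb{E}_{a\sim\pi}[\bar r(a)]$ by \eqref{eqn:exp_rew}.

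The core step is to show the martingale-difference property $\mathbb{E}[X^{\mathrm{IS}}_i(\pi)\mid\mathcal{F}_{i-1}]=0$. Since $b(\cdot|D_{i-1})$ is $\mathcal{F}_{i-1}$-measurable, I would condition on $\mathcal{F}_{i-1}$ and apply the tower property, integrating first over $r_i$ (which replaces $r_i$ by $\bar r(a_i)$) and then over $a_i\sim b(\cdot|D_{i-1})$. The key point is that the importance weight $\pi(a_i)/b(a_i|D_{i-1})$ re-weights the sampling law $b(\cdot|D_{i-1})$ into $\pi$, a change of measure that gives
\begin{equation*}
\mathbb{E}\left[\frac{\pi(a_i)}{b(a_i|D_{i-1})}r_i\,\Big|\,\mathcal{F}_{i-1}\right] = \mathbb{E}_{a\sim b(\cdot|D_{i-1})}\left[\frac{\pi(a)}{b(a|D_{i-1})}\bar r(a)\right] = \mathbb{E}_{a\sim\pi}[\bar r(a)] = R(\pi).
\end{equation*}
Subtracting the deterministic constant $R(\pi)$ then yields conditional mean zero. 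I would briefly note integrability: the importance-weighted reward is nonnegative with conditional (hence unconditional) expectation $R(\pi)\le 1$, so it is integrable and $X^{\mathrm{IS}}_i(\pi)$ has unconditional mean zero, even without invoking the importance-weight bound.

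The remaining claims are immediate. For boundedness, under the assumption $\pi(a_i)/b(a_i|D_{i-1})\in[0,1/\epsilon_n]$ together with $r_i\in[0,1]$, the product lies in $[0,1/\epsilon_n]$, so $X^{\mathrm{IS}}_i(\pi)\in[-R(\pi),\,1/\epsilon_n-R(\pi)]$. For the partial sum, expanding the definition and using $\sum_{i=1}^{n}\frac{\pi(a_i)}{b(a_i|D_{i-1})}r_i = n\,r^{\mathrm{IS}}(\pi,D_n)$ from \eqref{eqn:is_est} gives $\sum_{i=1}^{n}X^{\mathrm{IS}}_i(\pi)=n\big(r^{\mathrm{IS}}(\pi,D_n)-R(\pi)\big)$, which also identifies $M^{\mathrm{IS}}_n(\pi)$ of \eqref{eqn:is_mart_ex} as the associated partial-sum martingale.

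The only place needing care — and thus the ``main obstacle,'' although it is really careful bookkeeping rather than difficulty — is the nested conditional expectation in the second step: one must treat $b(\cdot|D_{i-1})$ as a random but $\mathcal{F}_{i-1}$-measurable law, split the expectation correctly into the integration over $a_i$ followed by $r_i\mid a_i$, and verify the change-of-measure cancellation (which also implicitly requires $b(\cdot|D_{i-1})$ to dominate $\pi$, as is guaranteed by the importance-weight bound). Once that identity is established, all three conclusions follow.
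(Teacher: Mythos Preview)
Your proposal is correct and follows essentially the same approach as the paper: both verify the martingale-difference property by conditioning on $D_{i-1}$, using the importance-weight change of measure to turn the $b$-expectation into a $\pi$-expectation and obtain $R(\pi)-R(\pi)=0$, and both dispatch the boundedness and summation claims directly from the definitions. Your added remarks on integrability and on $b(\cdot|D_{i-1})$ dominating $\pi$ are sound refinements but not a different route.
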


Since $M_n^{\mathrm{IS}}(\pi) = \sum_{i=1}^{n}X^{\mathrm{IS}}_i(\pi)$, Lemma \ref{lem:xis_mart} shows that $M_n^{\mathrm{IS}}(\pi)$ is a martingale.

\begin{proof}[Proof of Lemma \ref{lem:xis_mart}]
We first verify that, for any $\pi \in \Pi$, $\{X^{\mathrm{IS}}_i(\pi)\}_{i=1}^{n}$ is a martingale difference sequence with respect to $\{(a_i, r_i)\}_{i=1}^{n}$:
\begin{align*}
\mathop{\mathbb{E}}&\left[X_i^{\mathrm{IS}}(\pi)\bigg|D_{i-1}\right] = \mathop{\mathbb{E}}_{\substack{a_i \sim b(\cdot|D_{i-1}),\\ r_i \sim P_R(\cdot|a_i)}}\left[\frac{\pi(a_i)}{b(a_i|D_{i-1})}r_i - R(\pi)\right],\\
&= \mathop{\mathbb{E}}_{a_i \sim b(\cdot|D_{i-1})}\left[\frac{\pi(a_i)}{b(a_i|D_{i-1})}\mathop{\mathbb{E}}_{r_i \sim P_R(\cdot|a_i)}[r_i]\right] - R(\pi),\\
&= \mathop{\mathbb{E}}_{a_i \sim \pi(\cdot)}\left[\mathop{\mathbb{E}}_{r_i \sim P_R(\cdot|a_i)}[r_i]\right] - R(\pi),\\
&= R(\pi) - R(\pi) = 0.
\end{align*}

Next, we verify that each $X^{\mathrm{IS}}_i(\pi)$ is bounded in the interval $[-R(\pi), 1/\epsilon_n - R(\pi)]$. If the importance weights $\pi(a)/b(a|D_{i-1})$ are uniformly bounded above by $1/\epsilon_n$ and the rewards are bounded in $[0, 1]$, then for any $i$, $(\pi(a_i)/b(a_i|D_{i-1}))r_i \in [0, 1/\epsilon_n]$. Therefore, $X^{\mathrm{IS}}_i(\pi) = (\pi(a_i)/b(a_i|D_{i-1}))r_i - R(\pi) \in [-R(\pi), 1/\epsilon_n - R(\pi)]$.

Finally, we verify that $\{X^{\mathrm{IS}}_i(\pi)\}_{i=1}^{n}$ sums to $n(r^{\mathrm{IS}}(\pi, D_n) - R(\pi))$:
\begin{align*}
\sum_{i=1}^{n}X^{\mathrm{IS}}_i(\pi) &= \sum_{i=1}^{n}\left(\frac{\pi(a_i)}{b(a_i|D_{i-1})}r_i - R(\pi)\right),\\
&= n\left(\frac{1}{n}\sum_{i=1}^{n}\frac{\pi(a_i)}{b(a_i|D_{i-1})}r_i - R(\pi)\right),\\
&= n\left(r^{\mathrm{IS}}(\pi, D_n) - R(\pi)\right).
\end{align*} 
\end{proof}

\subsection{Bias of the CIS estimate}
\label{sec:cis_bias_proof}

\begin{lemma}[Bias of the CIS estimate]
The CIS estimate is biased to underestimate the expected reward:
\begin{equation}
R^{\mathrm{CIS}}(\rho) \leq R(\rho).\label{eqn:mab_cis_bias}
\end{equation}
\label{lem:cis_bias}
\end{lemma}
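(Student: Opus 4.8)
The plan is to establish the inequality first for a single fixed policy $\pi$ and then integrate over $\pi \sim \rho$. The core observation is that clipping can only decrease a nonnegative summand: since the rewards lie in $[0,1]$ and $\min(w, 1/\tau) \le w$ for every $w \ge 0$, each clipped term is dominated pointwise by the corresponding unclipped term,
\begin{equation*}
\min\left(\frac{\pi(a_i)}{b(a_i|D_{i-1})}, \frac{1}{\tau}\right)r_i \le \frac{\pi(a_i)}{b(a_i|D_{i-1})}r_i .
\end{equation*}
Taking expectations preserves this inequality, so $R^{\mathrm{CIS}}(\pi) \le \mathbb{E}_{D_n}[r^{\mathrm{IS}}(\pi, D_n)]$, and the right-hand side is exactly $R(\pi)$ by the unbiasedness of the IS estimate.

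First I would deal with the fact that the data need not be i.i.d.\ and the behaviour policy $b(\cdot|D_{i-1})$ may depend adaptively on the past. For each index $i$, I would condition on $D_{i-1}$ and take expectations over $a_i \sim b(\cdot|D_{i-1})$ and $r_i \sim P_R(\cdot|a_i)$. Exactly as in the martingale computation of Lemma \ref{lem:xis_mart}, the importance weight cancels the behaviour density, giving $\mathbb{E}[\tfrac{\pi(a_i)}{b(a_i|D_{i-1})}r_i \mid D_{i-1}] = \mathbb{E}_{a_i \sim \pi(\cdot)}[\mathbb{E}_{r_i}[r_i]] = R(\pi)$. Combining this with the pointwise clipping inequality yields $\mathbb{E}[\min(\cdot)\,r_i \mid D_{i-1}] \le R(\pi)$ for every $i$, and the tower property removes the conditioning. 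Averaging these $n$ bounds gives $R^{\mathrm{CIS}}(\pi) \le R(\pi)$.

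Finally I would integrate over the posterior: since $R^{\mathrm{CIS}}(\rho) = \mathbb{E}_{\pi \sim \rho}[R^{\mathrm{CIS}}(\pi)]$ and $R(\rho) = \mathbb{E}_{\pi \sim \rho}[R(\pi)]$, monotonicity of expectation immediately delivers $R^{\mathrm{CIS}}(\rho) \le R(\rho)$, which is the claim.

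The argument is essentially elementary, and the only point requiring real care is the dependence structure of the data. Because $b(\cdot|D_{i-1})$ is itself random and depends on the history, the per-step identity $\mathbb{E}[\tfrac{\pi(a_i)}{b(a_i|D_{i-1})}r_i \mid D_{i-1}] = R(\pi)$ must be justified conditionally and then lifted by iterated expectation, rather than being read off from an i.i.d.\ assumption. I expect this bookkeeping, rather than any genuine mathematical difficulty, to be the main thing to get right.
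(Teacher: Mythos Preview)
Your proposal is correct and follows essentially the same approach as the paper: both use the pointwise inequality $\min(w,1/\tau)r_i \le w\,r_i$, establish $\mathbb{E}[\tfrac{\pi(a_i)}{b(a_i|D_{i-1})}r_i \mid D_{i-1}] = R(\pi)$ via the importance-weight cancellation, average over $i$ to get $R^{\mathrm{CIS}}(\pi)\le R(\pi)$, and then integrate over $\pi\sim\rho$. Your explicit mention of the tower property to handle the adaptive behaviour policies is exactly the bookkeeping the paper performs implicitly.
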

\vspace{-0.4cm}

\begin{proof}[Proof of Lemma \ref{lem:cis_bias}]
First, we show that $r^{\mathrm{IS}}(\pi, D_n)$ is an unbiased estimate of $R(\pi)$. For any $\pi \in \Pi$, any $i \in 1, \dots, n$, and any history $D_{i-1}$, we have that:
\begin{align*}
\mathop{\mathbb{E}}&_{\substack{a_i \sim b(\cdot|D_{i-1}) \\ r_i \sim P_R(\cdot|a_i)}}\left[\frac{\pi(a_i)}{b(a_i|D_{i-1})}r_i\right]\\
&= \mathop{\mathbb{E}}_{a_i \sim b(\cdot|D_{i-1})}\left[\frac{\pi(a_i)}{b(a_i|D_{i-1})}\mathop{\mathbb{E}}_{r_i \sim P_R(\cdot|a_i)}[r_i]\right],\\
&= \mathop{\mathbb{E}}_{a_i \sim \pi(\cdot)}\left[\mathop{\mathbb{E}}_{r_i \sim P_R(\cdot|a_i)}[r_i]\right] = R(\pi).
\end{align*}

Therefore:
\begin{align*}
\mathop{\mathbb{E}}_{D_n}\left[r^{\mathrm{IS}}(\pi, D_n)\right] &= \mathop{\mathbb{E}}_{D_n}\left[\frac{1}{n}\sum_{i=1}^{n}\frac{\pi(a_i)}{b(a_i|D_{i-1})}r_i\right],\\
&= \frac{1}{n}\sum_{i=1}^{n}R(\pi) = R(\pi).
\end{align*}

Now, we have that:
\begin{align*}
\mathop{\mathbb{E}}_{D_n}\left[r^{\mathrm{CIS}}(\pi, D_n)\right] &= \mathop{\mathbb{E}}_{D_n}\left[\frac{1}{n}\sum_{i=1}^{n}\min\left(\frac{\pi(a_i)}{b(a_i|D_{i-1})}, \frac{1}{\tau}\right)r_i\right],\\
&\leq \mathop{\mathbb{E}}_{D_n}\left[\frac{1}{n}\sum_{i=1}^{n}\frac{\pi(a_i)}{b(a_i|D_{i-1})}r_i\right],\\
&= \mathop{\mathbb{E}}_{D_n}\left[r^{\mathrm{IS}}(\pi, D_n)\right] = R(\pi).
\end{align*}

Taking the expected value $\mathbb{E}_{\pi \sim \rho}[\cdot]$ of both sides yields the statement of the lemma. The proof for the CB case is the same.
\end{proof}

\subsection{Proof of Theorem \ref{thm:cis_pac_bayes_ha}}
\label{sec:cis_ha_proof}

First we state and prove a one-sided version of the Hoeffding-Azuma inequality for supermartingale difference sequences. If, in our basic definition of a martingale (Definition \ref{def:mart}), instead of the martingale property, we have $\mathbb{E}[M_n|X_1, \dots, X_{n-1}] \leq M_{n-1}$ for all $n \in \mathbb{N}$, then we call $(M_n|n \in \mathbb{N})$ a supermartingale.

\begin{lemma}[One-Sided Hoeffding-Azuma inequality]
Let $X_1, \dots, X_n$ be a supermartingale difference sequence (meaning $\mathbb{E}[X_i|X_1, \dots, X_{i-1}] \leq 0$ for $i = 1, \dots, n$) where each $X_i$ is bounded in the interval $[a, b]$. Then for any $\lambda \geq 0$:
\begin{equation*}
\mathop{\mathbb{E}}_{X_1, \dots, X_n}\left[e^{\lambda\sum_{i=1}^{n}X_i}\right] \leq e^{\frac{n\lambda^2(b-a)^2}{8}}.
\end{equation*}
\label{lem:one_side_hoeffding}
\end{lemma}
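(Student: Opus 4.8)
The plan is to reduce the bound on the joint exponential moment to a sequence of single-step conditional bounds, each controlled by Hoeffding's lemma, and then to peel off the variables one at a time using the tower property of conditional expectation.

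First I would establish the single-step estimate: conditioned on the past $X_1, \dots, X_{i-1}$, the variable $X_i$ lies in $[a,b]$ and has nonpositive conditional mean, so I claim
\begin{equation*}
\mathop{\mathbb{E}}\left[e^{\lambda X_i}\,\middle|\,X_1, \dots, X_{i-1}\right] \leq e^{\frac{\lambda^2(b-a)^2}{8}}.
\end{equation*}
To get this I would invoke the classical Hoeffding lemma, which states that a zero-mean random variable supported on an interval of length $b-a$ has exponential moment at most $e^{\lambda^2(b-a)^2/8}$. The one-sided (supermartingale) case does not give a zero conditional mean, only a nonpositive one, so the key manoeuvre is to center: write $m_i = \mathbb{E}[X_i \mid X_1,\dots,X_{i-1}] \leq 0$ and apply Hoeffding's lemma to the centered variable $X_i - m_i$, which is zero-mean and lies in an interval still of length $b-a$. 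This yields $\mathbb{E}[e^{\lambda(X_i - m_i)}\mid \cdot] \leq e^{\lambda^2(b-a)^2/8}$, and since $\lambda \geq 0$ and $m_i \leq 0$ we have $e^{\lambda m_i} \leq 1$, so multiplying through preserves the inequality in the desired direction.

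With the per-step bound in hand, I would run a backward induction. Writing $S_k = \sum_{i=1}^{k} X_i$ and conditioning on $X_1, \dots, X_{n-1}$ (noting $S_{n-1}$ is measurable with respect to the conditioning),
\begin{equation*}
\mathop{\mathbb{E}}\left[e^{\lambda S_n}\right] = \mathop{\mathbb{E}}\left[e^{\lambda S_{n-1}}\mathop{\mathbb{E}}\left[e^{\lambda X_n}\,\middle|\,X_1, \dots, X_{n-1}\right]\right] \leq e^{\frac{\lambda^2(b-a)^2}{8}}\mathop{\mathbb{E}}\left[e^{\lambda S_{n-1}}\right].
\end{equation*}
Iterating this $n$ times strips off one factor of $e^{\lambda^2(b-a)^2/8}$ per step and leaves $\mathbb{E}[e^{\lambda S_0}] = 1$, giving exactly the claimed bound $e^{n\lambda^2(b-a)^2/8}$.

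The main obstacle, and the only genuinely delicate point, is the sign handling in the single-step estimate: because the hypothesis is $\mathbb{E}[X_i\mid\cdot]\leq 0$ rather than $=0$, one cannot apply Hoeffding's lemma directly, and it is essential that $\lambda\geq 0$ so that the centering factor $e^{\lambda m_i}$ works in our favour. Everything else is a routine application of the tower property, so I would keep that part brief and concentrate on stating the centering argument cleanly.
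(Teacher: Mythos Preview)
Your proposal is correct and matches the paper's proof essentially step for step: both establish the per-step bound via Hoeffding's lemma together with $\lambda \geq 0$ and $\mathbb{E}[X_i\mid\cdot]\leq 0$, then peel off variables by the tower property. The only cosmetic difference is that the paper invokes the form of Hoeffding's lemma $\mathbb{E}[e^{\lambda X}]\leq e^{\lambda\mathbb{E}[X]+\lambda^2(b-a)^2/8}$ directly, whereas you center first and apply the zero-mean version---these are equivalent.
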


\begin{proof}[Proof of Lemma \ref{lem:one_side_hoeffding}]

First, by Hoeffding's Lemma (see, for example, Lemma A.1 of \cite{cesa2006prediction}), for any random variable bounded in the interval $[a, b]$ and any $\lambda \in \mathbb{R}$:
\begin{equation*}
\mathbb{E}\left[e^{\lambda X}\right] \leq e^{\lambda \mathbb{E}[X] + \frac{\lambda^2}{8}(b-a)^2}.
\end{equation*}

Now, for any $\lambda \geq 0$, we have that:
\begin{align*}
\mathop{\mathbb{E}}_{X_1, \dots, X_n}\left[e^{\lambda\sum_{i=1}^{n}X_i}\right] &= \mathop{\mathbb{E}}_{X_1, \dots, X_n}\left[\prod_{i=1}^{n}e^{\lambda X_i}\right],\\
&= \mathop{\mathbb{E}}_{X_1, \dots, X_{n-1}}\left[\mathop{\mathbb{E}}_{X_n}\condexp*{\prod_{i=1}^{n}e^{\lambda X_i}}{X_1, \dots, X_{n-1}}\right],\\
&\leq \mathop{\mathbb{E}}_{X_1, \dots, X_{n-1}}\bigg[e^{\lambda\mathop{\mathbb{E}}_{X_n}[X_n|X_1, \dots, X_{n-1}]},\\
&\qquad\qquad\qquad \times e^{\frac{\lambda^2}{8}(b - a)^2}\prod_{i=1}^{n-1}e^{\lambda X_i}\bigg],\\
&\leq e^{\frac{\lambda^2}{8}(b - a)^2}\mathop{\mathbb{E}}_{X_1, \dots, X_{n-1}}\left[e^{\lambda\sum_{i=1}^{n-1}X_i}\right].
\end{align*}

By iterating the above steps, we obtain:
\begin{equation*}
\mathop{\mathbb{E}}_{X_1, \dots, X_n}\left[e^{\lambda\sum_{i=1}^{n}X_i}\right] \leq \prod_{i=1}^{n}e^{\frac{\lambda^2(b - a)^2}{8}} = e^{\frac{n\lambda^2(b-a)^2}{8}}.
\end{equation*}
\end{proof}

Next, we show that the sequence $\{Y_i^{\mathrm{CIS}}(\pi)\}_{i=1}^{n}$, defined as:
\begin{equation*}
Y_i^{\mathrm{CIS}}(\pi) = \min\left(\frac{\pi(a_i)}{b(a_i|D_{i-1})}, \frac{1}{\tau}\right)r_i - R(\pi),
\end{equation*}

is a supermartingale difference sequence with respect to $\{(a_i, r_i)\}_{i=1}^{n}$, and that each term $Y_i^{\mathrm{CIS}}(\pi)$ is bounded in the interval $[-R(\pi), 1/\tau - R(\pi)]$. First, we have:
\begin{align*}
\mathop{\mathbb{E}}&\left[Y_i^{\mathrm{CIS}}(\pi)\bigg|D_{i-1}\right] = \mathop{\mathbb{E}}_{\substack{a_i \sim b(\cdot|D_{i-1}), \\ r_i \sim P_R(\cdot|a_i)}}\left[Y_i^{\mathrm{CIS}}(\pi)\right],\\
&= \mathop{\mathbb{E}}_{\substack{a_i \sim b(\cdot|D_{i-1}), \\ r_i \sim P_R(\cdot|a_i)}}\left[\min\left(\frac{\pi(a_i)}{b(a_i|D_{i-1})}, \frac{1}{\tau}\right)r_i - R(\pi)\right],\\
&\leq \mathop{\mathbb{E}}_{\substack{a_i \sim b(\cdot|D_{i-1}), \\ r_i \sim P_R(\cdot|a_i)}}\left[\frac{\pi(a_i)}{b(a_i|D_{i-1})}r_i - R(\pi)\right],\\
&= \mathop{\mathbb{E}}_{a_i \sim b(\cdot|D_{i-1})}\left[\frac{\pi(a_i)}{b(a_i|D_{i-1})}\mathop{\mathbb{E}}_{r_i \sim P_R(\cdot|a_i)}[r_i]\right] - R(\pi),\\
&= \mathop{\mathbb{E}}_{a_i \sim \pi(\cdot)}\left[\mathop{\mathbb{E}}_{r_i \sim P_R(\cdot|a_i)}[r_i]\right] - R(\pi),\\
&= R(\pi) - R(\pi) = 0.
\end{align*}

Since $\min\left(\pi(a_i)/b(a_i|D_{i-1}), 1/\tau\right)r_i \in [0, 1/\tau]$, we have $Y_i^{\mathrm{CIS}}(\pi) \in [-R(\pi), 1/\tau - R(\pi)]$. Therefore, the sequence $\{Y_i^{\mathrm{CIS}}(\pi)\}_{i=1}^{n}$ is compatible with the one-sided Hoeffding-Azuma inequality in Lemma \ref{lem:one_side_hoeffding}, with $a = -R(\pi)$ and $b = 1/\tau - R(\pi)$.

To prove Theorem \ref{thm:cis_pac_bayes_ha}, we can follow the steps taken in the proof of Theorem \ref{thm:ex_bound}, except with $h(\pi) = (\lambda/n)\sum_{i=1}^{n}Y_i^{\mathrm{CIS}}(\pi) = \lambda(r^{\mathrm{CIS}}(\pi, D_n) - R(\pi))$ and using the one-sided Hoeffding-Azuma inequality in Lemma \ref{lem:one_side_hoeffding} to upper bound $\mathbb{E}_{D_n}[\mathrm{exp}(\lambda(r^{\mathrm{CIS}}(\pi, D_n) - R(\pi)))]$.

\subsection{Variance of the CIS estimate}
\label{sec:cis_var_proof}

\begin{lemma}[Variance of the CIS estimate]
The average variance of the CIS estimate (both the MAB and CB versions) satisfies
\begin{equation*}
V^{\mathrm{CIS}}(\pi, D_n) \leq \frac{1}{\tau}.
\end{equation*}
\label{lem:cis_var_mab}
\end{lemma}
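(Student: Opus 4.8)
The plan is to bound each per-round term in the definition of $V^{\mathrm{CIS}}(\pi, D_n)$ separately and then average. Fix an index $i$ and condition on the history $D_{i-1}$. Writing $c_i = \min(\pi(a_i')/b(a_i'|D_{i-1}), 1/\tau)$ for the clipped importance weight and $W_i = c_i r_i'$ for the clipped weighted reward, the centering term in $X_i^{\mathrm{CIS}}(\pi)$ is exactly $\mathbb{E}[W_i]$, so $X_i^{\mathrm{CIS}}(\pi) = W_i - \mathbb{E}[W_i]$ is the centered version of $W_i$. Hence $\mathbb{E}[(X_i^{\mathrm{CIS}}(\pi))^2] = \mathrm{Var}(W_i) \leq \mathbb{E}[W_i^2]$, and it suffices to show $\mathbb{E}[W_i^2] \leq 1/\tau$.

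For the second moment I would exploit the two structural facts that make clipping work: the clipped weight satisfies $c_i \leq 1/\tau$ by construction, and the rewards satisfy $(r_i')^2 \leq r_i'$ because $\mathcal{R} \subseteq [0,1]$. Combining these gives the pointwise bound $W_i^2 = c_i^2 (r_i')^2 \leq (1/\tau)\, c_i r_i' = (1/\tau) W_i$, so that $\mathbb{E}[W_i^2] \leq (1/\tau)\,\mathbb{E}[W_i]$. It then remains to control $\mathbb{E}[W_i]$: since clipping only decreases the weight, $W_i = c_i r_i' \leq (\pi(a_i')/b(a_i'|D_{i-1}))\, r_i'$, and taking the conditional expectation reproduces exactly the unbiasedness computation used in the proof of Lemma~\ref{lem:cis_bias}, giving $\mathbb{E}[W_i] \leq R(\pi) \leq 1$, where the last inequality again uses that rewards lie in $[0,1]$. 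Assembling the pieces yields $\mathbb{E}[(X_i^{\mathrm{CIS}}(\pi))^2] \leq 1/\tau$ for every $i$, and averaging over $i = 1, \dots, n$ gives the claim. The contextual bandit case is identical, with $\pi(a_i')/b(a_i'|D_{i-1})$ replaced throughout by $\pi(a_i'|s_i)/b(a_i'|s_i, D_{i-1})$.

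There is no genuine obstacle here; the only thing to get right is the order in which the two bounds are applied. The naive route bounds $\mathbb{E}[W_i^2]$ by replacing both factors of $W_i^2$ with their maxima, which yields only the weaker $1/\tau^2$. The improvement to $1/\tau$ comes precisely from spending just one factor of $1/\tau$ on the clipped weight and recycling the remaining factor of $r_i'$, via $(r_i')^2 \leq r_i'$, to recover a single copy of $W_i$ whose expectation is then controlled by $R(\pi) \leq 1$. I would also remark that the step $\mathrm{Var}(W_i) \leq \mathbb{E}[W_i^2]$ is already lossless enough for the stated bound, so no variance-specific cancellation is required.
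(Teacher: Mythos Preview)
Your proof is correct and follows essentially the same route as the paper: drop the squared-mean term, spend one factor of $1/\tau$ on the clipped weight, and use the importance-weight identity to reduce the remaining factor to something bounded by $1$. The only cosmetic difference is that the paper bounds $(r_i')^2 \leq 1$ and then uses $\mathbb{E}_{a \sim b}[\pi(a)/b(a)] = 1$, whereas you bound $(r_i')^2 \leq r_i'$ and then use $\mathbb{E}[W_i] \leq R(\pi) \leq 1$; your intermediate bound $(1/\tau)R(\pi)$ is marginally sharper but the final statement is the same.
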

\vspace{-0.4cm}

\begin{proof}[Proof of Lemma \ref{lem:cis_var_mab}] We let
\begin{align}
X_i^{\mathrm{CIS}}(\pi) &= \min\left(\frac{\pi(a_i)}{b(a_i|D_{i-1})}, \frac{1}{\tau}\right)r_i\label{eqn:xcis_mart}\\
&- \mathop{\mathbb{E}}_{\substack{a_i \sim b(\cdot|D_{i-1}),\\ r_i \sim p_R(\cdot|a_i)}}\left[\min\left(\frac{\pi(a_i)}{b(a_i|D_{i-1})}, \frac{1}{\tau}\right)r_i\right].\nonumber
\end{align}

To bound $V^{\mathrm{CIS}}(\pi, D_n)$, we use that fact that the rewards are bounded in the interval $[0, 1]$.

\begin{align*}
V^{\mathrm{CIS}}&(\pi, D_n) = \frac{1}{n}\sum_{i=1}^{n}\mathop{\mathbb{E}}_{\substack{a_i \sim b(\cdot|D_{i-1}) \\ r_i \sim p_R(\cdot|a_i)}}\left[\left(X_i^{\mathrm{CIS}}(\pi)\right)^2\right],\\
&= \frac{1}{n}\sum_{i=1}^{n}\mathop{\mathbb{E}}_{\substack{a_i \sim b(\cdot|D_{i-1}) \\ r_i \sim p_R(\cdot|a_i)}}\left[\min\left(\frac{\pi(a_i)}{b(a_i|D_{i-1})}, \frac{1}{\tau}\right)^2{r_i}^2\right],\\
&- \frac{1}{n}\sum_{i=1}^{n}\mathop{\mathbb{E}}_{\substack{a_i \sim b(\cdot|D_{i-1}) \\ r_i \sim p_R(\cdot|a_i)}}\left[\min\left(\frac{\pi(a_i)}{b(a_i|D_{i-1})}, \frac{1}{\tau}\right){r_i}\right]^2,\\
&\leq \frac{1}{n}\sum_{i=1}^{n}\mathop{\mathbb{E}}_{a_i \sim b(\cdot|D_{i-1})}\left[\min\left(\frac{\pi(a_i)}{b(a_i|D_{i-1})}, \frac{1}{\tau}\right)^2\right],\\
&\leq \frac{1}{n}\sum_{i=1}^{n}\frac{1}{\tau}\mathop{\mathbb{E}}_{a_i \sim b(\cdot|D_{i-1})}\left[\frac{\pi(a_i)}{b(a_i|D_{i-1})}\right],\\
&= \frac{1}{n}\sum_{i=1}^{n}\frac{1}{\tau} = \frac{1}{\tau}.
\end{align*}
\end{proof}

\subsection{Proof of Theorem \ref{thm:cis_pac_bayes_bernstein}}
\label{sec:cis_bern_proof}

First, we state Bernstein's inequality for martingales.

\begin{lemma}[Bernstein's inequality]
Let $X_1, \dots, X_n$ be a martingale difference sequence where each $X_i$ is bounded in the interval $[-b, b]$, for some $b > 0$. Then for all $\lambda \in [0, 1/b]$:

\begin{equation*}
\mathop{\mathbb{E}}_{X_1, \dots, X_n}\left[e^{\lambda\sum_{i=1}^{n}X_i - (e-2)\lambda^2\sum_{i=1}^{n}\mathbb{E}_{X_i}[X_i^2|X_1, \dots, X_{i-1}]}\right] \leq 1.
\end{equation*}
\label{lem:bernstein}
\end{lemma}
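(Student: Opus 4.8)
The plan is to mimic the iterated-conditioning argument already used for the one-sided Hoeffding-Azuma inequality in Lemma \ref{lem:one_side_hoeffding}, replacing Hoeffding's Lemma with an elementary pointwise bound tailored to produce the constant $e-2$. The crux is the scalar inequality $e^{x} \leq 1 + x + (e-2)x^2$, valid for all $x \leq 1$, which I would establish by noting that $x \mapsto (e^x - 1 - x)/x^2$ is increasing and equals $e-2$ at $x = 1$. Since $\lambda \in [0, 1/b]$ and each $X_i \in [-b, b]$, we have $\lambda X_i \leq \lambda b \leq 1$, so this inequality applies with $x = \lambda X_i$, giving $e^{\lambda X_i} \leq 1 + \lambda X_i + (e-2)\lambda^2 X_i^2$ pointwise.

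Next I would take the conditional expectation given $X_1, \dots, X_{i-1}$. The martingale difference property kills the linear term, since $\mathbb{E}[X_i \mid X_1, \dots, X_{i-1}] = 0$, leaving $\mathbb{E}[e^{\lambda X_i} \mid X_1, \dots, X_{i-1}] \leq 1 + (e-2)\lambda^2 \mathbb{E}[X_i^2 \mid X_1, \dots, X_{i-1}]$. Applying $1 + y \leq e^{y}$, and using that $\mathbb{E}[X_i^2 \mid X_1, \dots, X_{i-1}]$ is itself $X_1, \dots, X_{i-1}$-measurable, this rearranges to
\begin{equation*}
\mathbb{E}\left[e^{\lambda X_i - (e-2)\lambda^2 \mathbb{E}[X_i^2 \mid X_1, \dots, X_{i-1}]} \,\big|\, X_1, \dots, X_{i-1}\right] \leq 1.
\end{equation*}

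Finally, I would set $S_k = \exp\!\left(\lambda \sum_{i=1}^{k} X_i - (e-2)\lambda^2 \sum_{i=1}^{k}\mathbb{E}[X_i^2 \mid X_1, \dots, X_{i-1}]\right)$ and peel off the last factor by conditioning on $X_1, \dots, X_{n-1}$: the first $n-1$ summands are measurable and factor out of the conditional expectation, so the display above yields $\mathbb{E}[S_n \mid X_1, \dots, X_{n-1}] \leq S_{n-1}$, hence $\mathbb{E}[S_n] \leq \mathbb{E}[S_{n-1}]$. Iterating down to $S_0 = 1$ gives the claim. The only genuinely delicate point is the scalar inequality with its sharp constant $e-2$; everything after that is the same telescoping conditioning already carried out in the proof of Lemma \ref{lem:one_side_hoeffding}, so I would reuse that structure essentially verbatim.
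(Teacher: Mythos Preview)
Your proposal is correct and is the standard proof of this martingale Bernstein inequality. The paper does not actually supply its own argument for Lemma~\ref{lem:bernstein}; it simply defers to Theorem~1 of \cite{beygelzimer2011contextual}, whose proof is precisely the one you outline: the pointwise bound $e^{x} \le 1 + x + (e-2)x^{2}$ for $x \le 1$, followed by conditioning and the $1+y \le e^{y}$ step, then the same iterated-expectation telescoping you already carried out in Lemma~\ref{lem:one_side_hoeffding}. Nothing further is needed.
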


For a proof, see Theorem 1 of \cite{beygelzimer2011contextual}. Next, we show that the sequence $\{X_i^{\mathrm{CIS}}(\pi)\}_{i=1}^{n}$, defined in Equation \ref{eqn:xcis_mart}, is a martingale difference sequence with respect to $\{(a_i, r_i)\}_{i=1}^{n}$, and that each term is bounded in the interval $[-1/\tau, 1/\tau]$. For any $\tau \in (0, 1]$, we have:
\begin{align*}
\mathop{\mathbb{E}}&\left[X_i^{\mathrm{CIS}}(\pi)\bigg|D_{i-1}\right] = \mathop{\mathbb{E}}_{\substack{a_i \sim b(\cdot|D_{i-1}), \\ r_i \sim P_R(\cdot|a_i)}}\left[X_i^{\mathrm{CIS}}(\pi)\right]\\
&= \mathop{\mathbb{E}}_{\substack{a_i \sim b(\cdot|D_{i-1}), \\ r_i \sim P_R(\cdot|a_i)}}\bigg[\min\left(\frac{\pi(a_i)}{b(a_i|D_{i-1})}, \frac{1}{\tau}\right)r_i\\
&\qquad \qquad- \mathop{\mathbb{E}}_{\substack{a_i \sim b(\cdot|D_{i-1}),\\ r_i \sim p_R(\cdot|a_i)}}\left[\min\left(\frac{\pi(a_i)}{b(a_i|D_{i-1})}, \frac{1}{\tau}\right)r_i\right]\bigg],\\
&= 0.
\end{align*}

For any $i \in \{1, \dots, n\}$, we have:
\begin{align*}
0 &\leq \mathop{\mathbb{E}}_{\substack{a_i \sim b(\cdot|D_{i-1}),\\ r_i \sim p_R(\cdot|a_i)}}\left[\min\left(\frac{\pi(a_i)}{b(a_i|D_{i-1})}, \frac{1}{\tau}\right)r_i\right]\\
&\leq \mathop{\mathbb{E}}_{\substack{a_i \sim b(\cdot|D_{i-1}),\\ r_i \sim p_R(\cdot|a_i)}}\left[\frac{\pi(a_i)}{b(a_i|D_{i-1})}r_i\right]\\
&= R(\pi) \leq 1.
\end{align*}

Since $\min\left(\pi(a_i)/b(a_i|D_{i-1}), 1/\tau\right)r_i \in [0, 1/\tau]$, we have $X_i^{\mathrm{CIS}}(\pi) \in [-1, 1/\tau] \subseteq [-1/\tau, 1/\tau]$. Also, we have that :
\begin{equation*}
\sum_{i=1}^{n}\mathop{\mathbb{E}}\left[(X_i^{\mathrm{CIS}}(\pi))^2|D_{i-1}\right] = nV^{\mathrm{CIS}}(\pi, D_n)
\end{equation*}

To prove Theorem \ref{thm:cis_pac_bayes_bernstein}, we can follow the steps taken in the proof of Theorem \ref{thm:ex_bound}, except with:
\begin{align*}
h(\pi) &= \lambda\sum_{i=1}^{n}X_i^{\mathrm{CIS}}(\pi) - \lambda^2(e-2)\sum_{i=1}^{n}\mathop{\mathbb{E}}\left[(X_i^{\mathrm{CIS}}(\pi))^2|D_{i-1}\right],\\
&\geq n\lambda\left(r^{\mathrm{CIS}}(\pi, D_n) - R(\pi)\right) - n\lambda^2(e-2)V^{\mathrm{CIS}}(\pi, D_n),
\end{align*}

and using Bernstein's inequality in Lemma \ref{lem:bernstein}.

\subsection{Proof of the Efron-Stein PAC-Bayes Bound (Theorem \ref{thm:efron_stein_pac_bayes})}
\label{sec:efron_stein_proof}

First, we recall the statement of the theorem. Let $f(\pi, D_n)$ be a real-valued function, let $F(\pi) = \mathbb{E}_{D_n}[f(\pi, D_n)]$ denote its expected value and let $V^{\mathrm{ES}}(\pi, D_n)$ be its semi-empirical Efron-Stein variance proxy. If the data set $D_n$ consists of independent random variables, then for any distribution $\mu$ over $\Pi$, any $y > 0$ and any $\delta \in (0, 1)$, with probability at least $1 - \delta$ (over the sampling of $D_n$), we have that for all distributions $\rho$ over $\Pi$:

\begin{align*}
&\left|f(\rho, D_n) - F(\rho)\right| \leq \sqrt{2\left(y + V^{\mathrm{ES}}(\rho, D_n)\right)}\\
&\times \sqrt{D_{\mathrm{KL}}(\rho||\mu) + \frac{1}{2}\mathrm{ln}\left(1 + V^{\mathrm{ES}}(\rho, D_n)/y\right) + \mathrm{ln}(1/\delta)}.
\end{align*}

In the proof, we use some technical lemmas. First, note that the Change of Measure Inequality in Lemma \ref{lem:donsk} can be stated as follows. For any measurable function $h: \Pi \to \mathbb{R}$ and any probability distribution $\mu \in \mathcal{P}(\Pi)$, such that $\mathbb{E}_{\pi \sim \mu}[e^{h(\pi)}] < \infty$, we have:

\begin{equation}
\sup_{\rho \in \mathcal{P}(\Pi)}\left\{\mathop{\mathbb{E}}_{\pi \sim \rho}\left[h(\pi)\right] - D_{\mathrm{KL}}(\rho||\mu)\right\} = \mathrm{ln}\left(\mathop{\mathbb{E}}_{\pi \sim \mu}\left[e^{h(\pi)}\right]\right).\label{eqn:alt_donsk}
\end{equation}

The first technical lemma is an Efron-Stein concentration inequality by Kuzborskij and Szepesv{\'a}ri \cite{kuzborskij2019efron}.

\begin{lemma}[Efron-Stein concentration inequality \cite{kuzborskij2019efron}]
Let $D_n = \{Z_i\}_{i=1}^{n}$ be a collection of independent random variables. Then for any $\pi \in \Pi$ and any $\lambda \in \mathbb{R}$:

\begin{equation*}
\mathop{\mathbb{E}}_{D_n}\left[e^{\lambda(f(\pi, D_n) - \mathbb{E}_{D_n}[f(\pi, D_n)]) - \frac{\lambda^2}{2}V^{\mathrm{ES}}(\pi, D_n)}\right] \leq 1.
\end{equation*}
\label{lem:es_conc}
\end{lemma}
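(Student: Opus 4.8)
The plan is to prove the inequality by a Doob martingale decomposition of $f(\pi, D_n) - F(\pi)$ along the coordinates $Z_1, \ldots, Z_n$, combined with a one-coordinate exponential-moment estimate that telescopes. Fix $\pi$ and write $\mathcal{F}_i = \sigma(Z_1, \ldots, Z_i)$, $E_i[\cdot] = \mathbb{E}[\cdot \mid \mathcal{F}_i]$, $f^{(i)} = f(\pi, D_n^{(i)})$, and $\Delta_i = E_i[f] - E_{i-1}[f]$, so that $f - F = \sum_{i=1}^n \Delta_i$ with $\mathbb{E}[\Delta_i \mid \mathcal{F}_{i-1}] = 0$. Writing $v_i = \mathbb{E}[(f - f^{(i)})^2 \mid \mathcal{F}_i]$ for the $i$-th term of the Efron--Stein proxy (so $V^{\mathrm{ES}} = \sum_i v_i$), the goal becomes $\mathbb{E}[\exp(\lambda \sum_i \Delta_i - \tfrac{\lambda^2}{2}\sum_i v_i)] \le 1$. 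Since each $\Delta_j$ and $v_j$ with $j \le i-1$ is $\mathcal{F}_{i-1}$-measurable, I would peel the expectation off one coordinate at a time via the tower rule, so that it suffices to show the per-coordinate bound
\[
\mathbb{E}\!\left[e^{\lambda \Delta_i - \frac{\lambda^2}{2} v_i} \,\middle|\, \mathcal{F}_{i-1}\right] \le 1 \qquad (i = 1, \ldots, n),
\]
after which the product telescopes to give the claim.

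First I would record the standard representation of the martingale increment through an independent copy. Letting $\psi(z) = \mathbb{E}_{Z_{>i}}[f \mid \mathcal{F}_{i-1}, Z_i = z]$ and using independence to write $E_{i-1}[f] = E_i[f^{(i)}]$, one gets $\Delta_i = \psi(Z_i) - \bar\psi$ with $\bar\psi = \mathbb{E}[\psi(Z_i) \mid \mathcal{F}_{i-1}]$; set $S = \Delta_i$ and $\sigma^2 = \mathbb{E}[S^2 \mid \mathcal{F}_{i-1}]$. The crucial structural step is a double application of Jensen's inequality to $v_i$: conditioning on $(Z_i, Z_i')$ and applying Jensen over the shared future $Z_{>i}$ gives $\mathbb{E}_{Z_{>i}}[(f - f^{(i)})^2] \ge (\psi(Z_i) - \psi(Z_i'))^2$, and then averaging over the independent copy $Z_i'$ yields $v_i \ge \mathbb{E}_{Z_i'}[(\psi(Z_i) - \psi(Z_i'))^2] = S^2 + \sigma^2$. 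Retaining the full term $S^2 + \sigma^2$ rather than the weaker $v_i \ge S^2$ is essential, so I would flag this averaging as the load-bearing part.

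With $v_i \ge S^2 + \sigma^2$ pointwise and the negative sign of the proxy in the exponent, the per-coordinate bound reduces (after pulling out $e^{-\lambda^2\sigma^2/2}$) to
\[
\mathbb{E}\!\left[e^{\lambda S - \frac{\lambda^2}{2}S^2} \,\middle|\, \mathcal{F}_{i-1}\right] \le e^{\frac{\lambda^2 \sigma^2}{2}}.
\]
I would establish this from the elementary pointwise inequality $e^{u - u^2/2} \le 1 + u + \tfrac{u^2}{2}$, valid for all $u \in \mathbb{R}$: taking $u = \lambda S$, using $\mathbb{E}[S \mid \mathcal{F}_{i-1}] = 0$ and $\mathbb{E}[S^2 \mid \mathcal{F}_{i-1}] = \sigma^2$, and bounding $1 + \tfrac{\lambda^2\sigma^2}{2} \le e^{\lambda^2\sigma^2/2}$ closes the step. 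The pointwise inequality itself I would verify by showing $\phi(u) = 1 + u + \tfrac{u^2}{2} - e^{u - u^2/2}$ satisfies $\phi(0) = \phi'(0) = 0$ and is convex, which follows since $\phi''(u) = 1 - (u^2 - 2u)e^{u - u^2/2}$ and the subtracted term is maximised at $u = 1 \pm \sqrt{3}$ with value $2/e < 1$.

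The main obstacle I anticipate is the second paragraph: correctly identifying that the Efron--Stein proxy dominates $S^2 + \sigma^2$ and not merely $\Delta_i^2 = S^2$. The naive bound $v_i \ge \Delta_i^2$ is useless here, because $\mathbb{E}[e^{\lambda S - \frac{\lambda^2}{2}S^2}] \le 1$ fails for general centered $S$; it is precisely the additional variance $\sigma^2$ surviving the average over the independent copy $Z_i'$ that makes the per-coordinate estimate true. Keeping careful track of which variables are integrated at each conditional expectation, and verifying measurability so that the tower-rule peeling is legitimate, is the delicate accounting the proof must get right.
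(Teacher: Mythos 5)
Your proof is correct, and since the paper states Lemma \ref{lem:es_conc} without proof (importing it from Kuzborskij and Szepesv\'ari \cite{kuzborskij2019efron}), your argument is a faithful reconstruction of the proof in that source: Doob martingale decomposition, the Jensen/ghost-sample step giving $v_i \geq \Delta_i^2 + \mathbb{E}[\Delta_i^2 \mid \mathcal{F}_{i-1}]$ pointwise, and a de la Pe\~na-type per-coordinate bound (the canonical assumption of Section 10.2 of \cite{pena2009self}, which the survey itself invokes in Sec. \ref{sec:unified_pb_bound}). You also correctly flag the load-bearing point that the proxy must dominate $S^2 + \sigma^2$ rather than merely $S^2$, and your verification of the elementary inequality $e^{u - u^2/2} \leq 1 + u + u^2/2$ via convexity (the maximum of $(u^2 - 2u)e^{u - u^2/2}$ being $2/e < 1$, attained at $u = 1 \pm \sqrt{3}$) checks out.
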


The second technical lemma allows us to swap the order of a supremum and an exponentiation.

\begin{lemma}
For any set $A \subseteq \mathbb{R}$ where $\sup(A)$ exists, we have that:

\begin{equation*}
e^{\sup(A)} = \sup\left(e^A\right),
\end{equation*}

where $e^A = \{e^a|a \in A\}$.\label{lem:exp_sup}
\end{lemma}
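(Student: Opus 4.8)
The plan is to prove the identity by establishing the two inequalities $\sup(e^A) \le e^{\sup A}$ and $\sup(e^A) \ge e^{\sup A}$ separately, relying only on the strict monotonicity of the exponential function and of its inverse, the natural logarithm. Write $s = \sup(A)$, which exists as a real number by hypothesis; in particular $A$ is nonempty and bounded above.

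First I would establish the upper bound. Since $\exp$ is increasing and $a \le s$ for every $a \in A$, we have $e^a \le e^s$ for every $a \in A$. Thus $e^s$ is an upper bound of the set $e^A = \{e^a : a \in A\}$. This simultaneously shows that $\sup(e^A)$ exists as a real number and that $\sup(e^A) \le e^s$.

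For the reverse inequality, I would argue that no real number strictly below $e^s$ can be an upper bound of $e^A$. Suppose $u$ is an upper bound of $e^A$ with $u < e^s$. Because $A$ is nonempty, $e^A$ contains some $e^a > 0$, forcing $u > 0$, so $\ln u$ is well-defined. Applying the increasing function $\ln$ to $e^a \le u$ gives $a = \ln(e^a) \le \ln u$ for every $a \in A$, so $\ln u$ is an upper bound of $A$; yet $\ln u < \ln(e^s) = s$, contradicting the fact that $s$ is the \emph{least} upper bound of $A$. Hence every upper bound of $e^A$ is at least $e^s$, i.e. $\sup(e^A) \ge e^s$. Combining the two inequalities yields $\sup(e^A) = e^s = e^{\sup(A)}$.

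This argument is entirely elementary, so there is no substantial obstacle; the only points requiring a moment's care are confirming that $\sup(e^A)$ exists as a real number (which the upper-bound step supplies) and that $\ln u$ is legitimate, which holds since the positivity of the exponential makes every element of $e^A$—and therefore any of its upper bounds—strictly positive. An equivalent route avoiding the logarithm would invoke continuity of $\exp$: choosing $a \in A$ with $a > s - \varepsilon$ for each $\varepsilon > 0$ gives $\sup(e^A) \ge e^{a} > e^{s-\varepsilon}$, and letting $\varepsilon \to 0^{+}$ yields $\sup(e^A) \ge e^s$ by continuity.
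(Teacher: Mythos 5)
Your proof is correct and follows essentially the same elementary monotonicity argument as the paper: the paper rules out $\sup(e^A) < e^{\sup(A)}$ by writing the smaller bound as $e^{\alpha-\epsilon}$ (implicitly your logarithm step) and rules out $\sup(e^A) > e^{\sup(A)}$ by exhibiting $a > \alpha$, which matches your two inequalities. Your version is slightly more careful in explicitly establishing that $\sup(e^A)$ exists and that upper bounds of $e^A$ are positive before taking logarithms, but the substance is identical.
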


\begin{proof}[Proof of Lemma \ref{lem:exp_sup}]
Let $\sup(A) = \alpha$. Suppose that $\sup\left(e^A\right) < e^{\alpha}$. Then there exists an $\epsilon > 0$ such that $\sup\left(e^A\right) \leq e^{\alpha - \epsilon}$. Therefore, for all $a \in A$, $e^a \leq e^{\alpha - \epsilon}$, and so $\alpha - \epsilon$ is an upper bound on $A$. This is a contradiction since $\alpha$ is the least upper bound on $A$.

Now, suppose that $\sup\left(e^A\right) > e^{\alpha}$. This means that $e^{\alpha}$ is not an upper bound for $e^{A}$. Therefore, there must exist an $a \in A$, where $e^a > e^{\alpha}$, and so $a > \alpha$. This is a contradiction, since $\sup(A) = \alpha$. Therefore we must have $e^{\sup(A)} = \sup\left(e^A\right)$.
\end{proof}

The third technical lemma allows us to upper bound the supremum of an integral by the integral of the supremum.

\begin{lemma}
For any function $g:\mathcal{P}(\Pi) \times \mathbb{R} \to \mathbb{R}$, such that $\sup_{\rho \in \mathcal{P}(\Pi)}\{g(\rho, \lambda)\} < \infty$ for all $\lambda \in \mathbb{R}$, we have that:

\begin{equation*}
\sup_{\rho \in \mathcal{P}(\Pi)}\left\{\int_{-\infty}^{\infty}g(\rho, \lambda)\mathrm{d}\lambda\right\} \leq \int_{-\infty}^{\infty}\sup_{\rho \in \mathcal{P}(\Pi)}\{g(\rho, \lambda)\}\mathrm{d}\lambda.
\end{equation*}
\label{lem:sup_int}
\end{lemma}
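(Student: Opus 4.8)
The plan is to reduce the statement to an elementary pointwise inequality followed by two applications of the defining property of the supremum. First I would fix an arbitrary $\rho \in \mathcal{P}(\Pi)$. By the definition of the supremum, for every $\lambda \in \mathbb{R}$ we have the pointwise bound $g(\rho, \lambda) \leq \sup_{\rho^{\prime} \in \mathcal{P}(\Pi)}\{g(\rho^{\prime}, \lambda)\}$, and the hypothesis guarantees that the right-hand side is finite for each $\lambda$.

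Next I would integrate this pointwise inequality over $\lambda \in (-\infty, \infty)$. By monotonicity of the integral, this yields
\begin{equation*}
\int_{-\infty}^{\infty} g(\rho, \lambda)\,\mathrm{d}\lambda \leq \int_{-\infty}^{\infty} \sup_{\rho^{\prime} \in \mathcal{P}(\Pi)}\{g(\rho^{\prime}, \lambda)\}\,\mathrm{d}\lambda.
\end{equation*}
The crucial observation is that the right-hand side no longer depends on $\rho$, so it is a single quantity that upper bounds $\int_{-\infty}^{\infty} g(\rho, \lambda)\,\mathrm{d}\lambda$ for \emph{every} choice of $\rho \in \mathcal{P}(\Pi)$. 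Since a quantity that bounds a family of numbers also bounds their supremum, taking the supremum over $\rho$ on the left recovers exactly the claimed inequality.

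The main obstacle is not the inequality itself, which is routine, but the well-definedness of the objects involved: for the right-hand integral to make sense and for monotonicity of the integral to apply, the map $\lambda \mapsto \sup_{\rho \in \mathcal{P}(\Pi)}\{g(\rho, \lambda)\}$ must be measurable, which is not automatic when the index set $\mathcal{P}(\Pi)$ is uncountable. In the intended application within the Efron--Stein proof, this supremum arises from the change-of-measure identity in Equation \ref{eqn:alt_donsk}, where it equals a log-moment-generating function that is manifestly measurable (indeed continuous) in $\lambda$, so measurability is not a concern there. Alternatively, one may treat both integrals as Lebesgue integrals valued in $[-\infty, +\infty]$ and allow the right-hand side to equal $+\infty$, in which case the inequality holds trivially whenever the supremum function fails to be integrable. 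I would state this caveat and then carry out the two-line argument above.
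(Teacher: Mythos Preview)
Your proposal is correct and follows essentially the same approach as the paper's own proof: pointwise domination $g(\rho,\lambda)\le\sup_{\rho'}g(\rho',\lambda)$, integrate using monotonicity, then take the supremum over $\rho$ on the left since the right-hand side is $\rho$-independent. Your additional remarks on measurability of the pointwise supremum go beyond what the paper addresses, but they are a fair caveat and do not change the argument.
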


\begin{proof}[Proof of Lemma \ref{lem:sup_int}]
For every $\rho \in \mathcal{P}(\Pi)$ and $\lambda \in \mathbb{R}$:

\begin{equation*}
g(\rho, \lambda) \leq \sup_{\rho \in \mathcal{P}(\Pi)}\{g(\rho, \lambda)\}.
\end{equation*}

Therefore, for every $\rho \in \mathcal{P}(\Pi)$:

\begin{equation*}
\int_{-\infty}^{\infty}g(\rho, \lambda)\mathrm{d}\lambda \leq \int_{-\infty}^{\infty}\sup_{\rho \in \mathcal{P}(\Pi)}\{g(\rho, \lambda)\}\mathrm{d}\lambda.
\end{equation*}

Therefore, we have that:

\begin{equation*}
\sup_{\rho \in \mathcal{P}(\Pi)}\left\{\int_{-\infty}^{\infty}g(\rho, \lambda)\mathrm{d}\lambda\right\} \leq \int_{-\infty}^{\infty}\sup_{\rho \in \mathcal{P}(\Pi)}\{g(\rho, \lambda)\}\mathrm{d}\lambda.
\end{equation*}
\end{proof}

\begin{proof}[Proof of Theorem \ref{thm:efron_stein_pac_bayes}]
Throughout the proof, let $A(\pi, D_n) = f(\pi, D_n) - F(\pi)$. Also, let $A(\rho, D_n) = \mathbb{E}_{\pi \sim \rho}[A(\pi, D_n)]$ and let $V^{\mathrm{ES}}(\rho, D_n) = \mathbb{E}_{\pi \sim \rho}[V^{\mathrm{ES}}(\pi, D_n)]$.

Using the Change of Measure Inequality (Equation \ref{eqn:alt_donsk}), we have that, for all $\lambda \in \mathbb{R}$:

\begin{align*}
&\sup_{\rho \in \mathcal{P}(\Pi)}\left\{\lambda A(\rho, D_n) - \frac{\lambda^2}{2}V^{\mathrm{ES}}(\rho, D_n) - D_{\mathrm{KL}}(\rho||\mu)\right\}\\
&= \mathrm{ln}\left(\mathop{\mathbb{E}}_{\pi \sim \mu}\left[e^{\lambda A(\pi, D_n) - \frac{\lambda^2}{2}V^{\mathrm{ES}}(\pi, D_n)}\right]\right).
\end{align*}

We exponentiate and then take expected values (over $D_n$) of both sides to obtain:

\begin{align*}
&\mathop{\mathbb{E}}_{D_n}\left[e^{\sup_{\rho \in \mathcal{P}(\Pi)}\left\{\lambda A(\rho, D_n) - \frac{\lambda^2}{2}V^{\mathrm{ES}}(\rho, D_n) - D_{\mathrm{KL}}(\rho||\mu)\right\}}\right]\\
&= \mathop{\mathbb{E}}_{D_n}\left[\mathop{\mathbb{E}}_{\pi \sim \mu}\left[e^{\lambda A(\pi, D_n) - \frac{\lambda^2}{2}V^{\mathrm{ES}}(\pi, D_n)}\right]\right].
\end{align*}

We use Tonelli's theorem to swap the order of the expectations on the right-hand-side. Then we use the Efron-Stein Concentration Inequality in Lemma \ref{lem:es_conc} to obtain:

\begin{equation*}
\mathop{\mathbb{E}}_{D_n}\left[e^{\sup_{\rho \in \mathcal{P}(\Pi)}\left\{\lambda A(\rho, D_n) - \frac{\lambda^2}{2}V^{\mathrm{ES}}(\rho, D_n) - D_{\mathrm{KL}}(\rho||\mu)\right\}}\right] \leq 1.
\end{equation*}

Now, let $h(\rho, \mu, \lambda, D_n) = \lambda A(\rho, D_n) - \frac{\lambda^2}{2}V^{\mathrm{ES}}(\rho, D_n) - D_{\mathrm{KL}}(\rho||\mu)$. We use Lemma \ref{lem:exp_sup}, multiply both sides by $(y/\sqrt{2\pi})e^{-\frac{\lambda^2y^2}{2}}$ for $y > 0$, and then integrate w.r.t. $\lambda$ from $-\infty$ to $\infty$:

\begin{equation*}
\int_{-\infty}^{\infty}\frac{y}{\sqrt{2\pi}}e^{-\frac{\lambda^2y^2}{2}}\mathop{\mathbb{E}}_{D_n}\left[\sup_{\rho \in \mathcal{P}(\Pi)}\left\{e^{h(\rho, \mu, \lambda, D_n)}\right\}\right]\mathrm{d}\lambda \leq 1.
\end{equation*}

We use Fubini's theorem to swap the order of the integral and the expected value.

\begin{equation*}
\mathop{\mathbb{E}}_{D_n}\left[\int_{-\infty}^{\infty}\frac{y}{\sqrt{2\pi}}e^{-\frac{\lambda^2y^2}{2}}\sup_{\rho \in \mathcal{P}(\Pi)}\left\{e^{h(\rho, \mu, \lambda, D_n)}\right\}\mathrm{d}\lambda\right] \leq 1.
\end{equation*}

Using Lemma \ref{lem:sup_int}, we can move the integral inside the supremum.

\begin{equation*}
\mathop{\mathbb{E}}_{D_n}\left[\sup_{\rho \in \mathcal{P}(\Pi)}\left\{\int_{-\infty}^{\infty}\frac{y}{\sqrt{2\pi}}e^{-\frac{\lambda^2y^2}{2}}e^{h(\rho, \mu, \lambda, D_n)}\mathrm{d}\lambda\right\}\right] \leq 1.
\end{equation*}

We can now calculate the integral by rearranging the integrand to get a Gaussian density function.

\begin{align*}
\mathop{\mathbb{E}}_{D_n}\bigg[\sup_{\rho \in \mathcal{P}(\Pi)}\bigg\{&\frac{y}{\sqrt{y^2 + V^{\mathrm{ES}}(\rho, D_n)}}\\
&\times e^{\frac{A(\rho, D_n)^2}{2\left(y^2 + V^{\mathrm{ES}}(\rho, D_n)\right)} - D_{\mathrm{KL}}(\rho||\mu)}\bigg\}\bigg] \leq 1.
\end{align*}

This holds for all $\mu \in \mathcal{P}(\Pi)$ and $y > 0$. Now, we fix $\mu$ and $y$, and then use Markov's inequality. With probability at least $1 - \delta$, and for all $\rho \in \mathcal{P}(\Pi)$ simultaneously:

\begin{equation*}
\frac{y}{\sqrt{y^2 + V^{\mathrm{ES}}(\rho, D_n)}}e^{\frac{A(\rho, D_n)^2}{2\left(y^2 + V^{\mathrm{ES}}(\rho, D_n)\right)} - D_{\mathrm{KL}}(\rho||\mu)} \leq 1/\delta.
\end{equation*}

We can rearrange this inequality to obtain the following inequality, that holds with the same probability:

\begin{align*}
&\left|A(\rho, D_n)\right| \leq \sqrt{2\left(y^2 + V^{\mathrm{ES}}(\rho, D_n)\right)}\\
&\times \sqrt{D_{\mathrm{KL}}(\rho||\mu) + \frac{1}{2}\mathrm{ln}\left(1 + V^{\mathrm{ES}}(\rho, D_n)/y^2\right) + \mathrm{ln}(1/\delta)}.
\end{align*}

Since $y$ was an arbitrary positive number, we can replace $y^2$ with $y$ to recover the statement of the Theorem.
\end{proof}

\subsection{Proof for the Localised PAC-Bayes Bernstein Bound (Theorem \ref{thm:local_pac_bayes_bernstein})}
\label{sec:local_bern_proof}

This proof follows steps in Section 1.3.4. of \cite{catoni2007pac}. First, we recall the statement of the Theorem. For any $\lambda \in [0, n\epsilon_n]$, any $\beta$ satisfying $0 \leq \beta < \lambda$, any $\delta \in (0, 1)$ and any probability distribution $\mu \in \mathcal{P}(\Pi)$, with probability at least $1 - \delta$ and for all distributions $\rho \in \mathcal{P}(\Pi)$ simultaneously:

\begin{align*}
R(\rho) &\geq r^{\mathrm{IS}}(\rho, D_n) - \frac{(\lambda^2 + \beta^2)(e-2)}{(\lambda - \beta)n\epsilon_n}\\
&- \frac{D_{\mathrm{KL}}(\rho||\mu_{\beta r^{\mathrm{IS}}}) + 2\mathrm{ln}(1/\delta)}{\lambda - \beta}.
\end{align*}

Also, recall that $\mu_{\beta R}$ and $\mu_{\beta r^{\mathrm{IS}}}$ are Gibbs distributions defined as:

\begin{equation*}
\mu_{\beta R}(\pi) = \frac{\mu(\pi)e^{\beta R(\pi)}}{\mathop{\mathbb{E}}_{\pi \sim \mu}\left[e^{\beta R(\pi)}\right]}, \quad \mu_{\beta r^{\mathrm{IS}}}(\pi) = \frac{\mu(\pi)e^{\beta r^{\mathrm{IS}}(\pi, D_n)}}{\mathop{\mathbb{E}}_{\pi \sim \mu}\left[e^{\beta r^{\mathrm{IS}}(\pi, D_n)}\right]}.
\end{equation*}

\begin{proof}[Proof of Theorem \ref{thm:local_pac_bayes_bernstein}]
As an intermediate step in proving the original PAC-Bayes Bernstein bound, we have that for all $\lambda \in [-n\epsilon_n, n\epsilon_n]$:

\begin{align}
\mathop{\mathbb{E}}_{D_n}&\left[\mathrm{exp}\left(\sup_{\rho \in \mathcal{P}(\Pi)}\left\{\lambda r^{\mathrm{IS}}(\rho, D_n) - \lambda R(\rho) - D_{\mathrm{KL}}(\rho||\mu_{\beta R})\right\}\right)\right]\nonumber\\
&\leq \mathrm{exp}\left(\frac{\lambda^2(e-2)}{n\epsilon_n}\right).\label{eqn:local_be1}
\end{align}

Next, we attempt to find a relationship between $D_{\mathrm{KL}}(\rho||\mu_{\beta R})$ and $D_{\mathrm{KL}}(\rho||\mu_{\beta r^{\mathrm{IS}}})$. We have that:

\begin{align*}
D_{\mathrm{KL}}(\rho||\mu_{\beta R}) &= \mathop{\mathbb{E}}_{\pi \sim \rho}\left[\mathrm{ln}\left(\frac{\rho(\pi)}{\mu_{\beta R}(\pi)}\right)\right],\\
&= \mathop{\mathbb{E}}_{\pi \sim \rho}\left[\mathrm{ln}\left(\frac{\rho(\pi)}{\mu_{\beta r^{\mathrm{IS}}}(\pi)}\frac{\mu_{\beta r^{\mathrm{IS}}}(\pi)}{\mu_{\beta R}(\pi)}\right)\right],\\
&= D_{\mathrm{KL}}(\rho||\mu_{\beta r^{\mathrm{IS}}}) + \beta\mathop{\mathbb{E}}_{\pi \sim \rho}\left[r^{\mathrm{IS}}(\pi, D_n) - R(\pi)\right]\\
&+ \mathrm{ln}\left(\mathop{\mathbb{E}}_{\pi \sim \mu}\left[e^{\beta R(\pi)}\right]\right) - \mathrm{ln}\left(\mathop{\mathbb{E}}_{\pi \sim \mu}\left[e^{\beta r^{\mathrm{IS}}(\pi, D_n)}\right]\right).
\end{align*}

The last tool we need is a bound on $\mathrm{ln}(\mathbb{E}_{\pi \sim \mu}[e^{\beta R(\pi)}]) - \mathrm{ln}(\mathbb{E}_{\pi \sim \mu}[e^{\beta r^{\mathrm{IS}}(\pi, D_n)}])$. Using two applications of the Change of Measure Inequaity (Equation \ref{eqn:alt_donsk}), we have that:

\begin{align*}
\mathop{\mathbb{E}}_{D_n}&\left[\mathrm{exp}\left(\mathrm{ln}\left(\mathop{\mathbb{E}}_{\pi \sim \mu}\left[e^{\beta R(\pi)}\right]\right) - \mathrm{ln}\left(\mathop{\mathbb{E}}_{\pi \sim \mu}\left[e^{\beta r^{\mathrm{IS}}(\pi, D_n)}\right]\right)\right)\right]\\
&= \mathop{\mathbb{E}}_{D_n}\bigg[\mathrm{exp}\bigg(\mathrm{ln}\left(\mathop{\mathbb{E}}_{\pi \sim \mu}\left[e^{\beta R(\pi)}\right]\right)\\
&\qquad\quad + \inf_{\rho \in \mathcal{P}(\Pi)}\left\{-\beta r^{\mathrm{IS}}(\rho, D_n) + D_{\mathrm{KL}}(\rho||\mu)\right\}\bigg)\bigg],\\
&\leq \mathop{\mathbb{E}}_{D_n}\bigg[\mathrm{exp}\bigg(\mathrm{ln}\left(\mathop{\mathbb{E}}_{\pi \sim \mu}\left[e^{\beta R(\pi)}\right]\right)\\
&\qquad\quad -\beta r^{\mathrm{IS}}(\mu_{\beta R}, D_n) + D_{\mathrm{KL}}(\mu_{\beta R}||\mu)\bigg)\bigg],\\
&= \mathop{\mathbb{E}}_{D_n}\bigg[\mathrm{exp}\bigg(\beta R(\mu_{\beta R}) - D_{\mathrm{KL}}(\mu_{\beta R}||\mu)\\
&\qquad\quad -\beta r^{\mathrm{IS}}(\mu_{\beta R}, D_n) + D_{\mathrm{KL}}(\mu_{\beta R}||\mu)\bigg)\bigg],\\
&= \mathop{\mathbb{E}}_{D_n}\bigg[\mathrm{exp}\bigg(\beta R(\mu_{\beta R}) -\beta r^{\mathrm{IS}}(\mu_{\beta R}, D_n)\bigg)\bigg],\\
&\leq \mathrm{exp}\left(\frac{\beta^2(e-2)}{n \epsilon_n}\right).
\end{align*}

The final inequality is obtained by using Equation \ref{eqn:local_be1} with $\lambda = -\beta$. This inequality and the one in Equation \ref{eqn:local_be1} can be combined using the Cauchy-Schwarz inequality:

\begin{align*}
&\mathop{\mathbb{E}}_{D_n}\bigg[\mathrm{exp}\bigg(\frac{1}{2}\sup_{\rho \in \mathcal{P}(\Pi)}\bigg\{(\lambda - \beta)(r^{\mathrm{IS}}(\rho, D_n) - R(\rho))\\
&\qquad\qquad\qquad\qquad\quad - D_{\mathrm{KL}}(\rho||\mu_{\beta r^{\mathrm{IS}}})\bigg\}\bigg)\bigg]\\
&= \mathop{\mathbb{E}}_{D_n}\bigg[\mathrm{exp}\bigg(\frac{1}{2}\sup_{\rho \in \mathcal{P}(\Pi)}\bigg\{\lambda(r^{\mathrm{IS}}(\rho, D_n) - R(\rho))\\
&\qquad\qquad\qquad\qquad\qquad\;\ - D_{\mathrm{KL}}(\rho||\mu_{\beta R})\bigg\}\bigg)\\
&\times \mathrm{exp}\bigg(\frac{1}{2}\left[\mathrm{ln}\left(\mathop{\mathbb{E}}_{\pi \sim \mu}\left[e^{\beta R(\pi)}\right]\right) - \mathrm{ln}\left(\mathop{\mathbb{E}}_{\pi \sim \mu}\left[e^{\beta r^{\mathrm{IS}}(\pi, D_n)}\right]\right)\right]\bigg)\bigg],\\
&\leq \mathop{\mathbb{E}}_{D_n}\bigg[\mathrm{exp}\bigg(\sup_{\rho \in \mathcal{P}(\Pi)}\bigg\{\lambda(r^{\mathrm{IS}}(\rho, D_n) - R(\rho))\\
&\qquad\qquad\qquad\qquad\quad\;\ - D_{\mathrm{KL}}(\rho||\mu_{\beta R})\bigg\}\bigg)\bigg]^{1/2}\\
&\times \mathop{\mathbb{E}}_{D_n}\bigg[\mathrm{exp}\bigg(\mathrm{ln}\left(\mathop{\mathbb{E}}_{\pi \sim \mu}\left[e^{\beta R(\pi)}\right]\right) - \mathrm{ln}\left(\mathop{\mathbb{E}}_{\pi \sim \mu}\left[e^{\beta r^{\mathrm{IS}}(\pi, D_n)}\right]\right)\bigg)\bigg]^{1/2},\\
&\leq \mathrm{exp}\left(\frac{\lambda^2(e-2)}{n\epsilon_n}\right)^{1/2}\mathrm{exp}\left(\frac{\beta^2(e-2)}{n\epsilon_n}\right)^{1/2},\\
&= \mathrm{exp}\left(\frac{(\lambda^2 + \beta^2)(e-2)}{2n\epsilon_n}\right).
\end{align*}

We use Markov's inequality and then rearrange the result to obtain the statement of the Theorem.
\end{proof}

\section{Further Information About The Experiments}

\subsection{Details About Classification Data Sets}
\label{sec:class_data_sets}

The four data sets (see Table \ref{tab:data}) came from either OpenML (OptDigits, PenDigits and Chars) or the UCI Machine Learning Repository (DriveDiag). In the UCI Repository, the DriveDiag data set can be found under its full name: "Dataset for Sensorless Drive Diagnosis Data Set".

\begin{table}[H]
\centering
\begin{tabular}{lcccc}
\toprule
Name & OptDigits & PenDigits & Chars & DriveDiag\\
\toprule
OpenML ID & $28$ & $32$ & $1459$ & n/a\\
Size ($n$) & $4496$ & $8793$ & $8174$ & $46807$\\
Input dim ($d$) & $64$ & $16$ & $7$ & $48$\\
Classes ($K$) & $10$ & $10$ & $10$ & $11$\\
\bottomrule
\end{tabular}
\caption{The OpenML ID number, size, input dimensionality and number of classes for all the data sets we use in the CB Classification benchmark.}
\label{tab:data}
\end{table}

In Table \ref{tab:data}, the reported size of each data set is approximately 80\% of the size of the original data set. At the start of each repetition of each experiment, we perform a random 80:20 split. We use 80\% of the data to generate the training data for the offline bandit problem, learn a policy and then evaluate the reward bound. The remaining 20\% of the data are used to estimate the expected reward of the learned policy. Therefore, the reported data set size reflects the number of examples used to learn a policy and evaluate a lower bound on the expected reward.

\subsection{Details About Bound Optimisation and Evaluation}
\label{sec:opt_and_eval}

In the MAB benchmark, we allow $\rho$ to be any distribution over $\mathcal{A}$, so each $\rho$ is an element of the standard $K$-simplex. Unless stated otherwise, $\mu$ is a uniform prior. For bounds where the optimal $\rho$ is a Gibbs posterior (Hoeffding-Azuma and Bernstein), we use the optimal Gibbs posterior. For bounds where the optimal $\rho$ is not known in closed-form (Pinsker and $kl^{-1}$), we optimise them by gradient ascent. As shown by Reeb et al. \cite{reeb2018learning}, the derivatives of $kl^{-1}$ can be calculated by differentiating the identity $kl(p||kl^{-1}(p, b)) = b$.

In the CB benchmark, we restrict $\rho$ to be a diagonal Gaussian distribution $\mathcal{N}(\bs{m}, \bs{\sigma}^2I)$, where $\bs{m}$ and $\bs{\sigma}$ are $d \times K$-dimensional vectors, over the weight matrix of the linear softmax policy. Unless stated otherwise, $\mu$ is a standard Gaussian prior over the weights. We optimise each bound with respect to $\rho$ by stochastic gradient ascent, using the local reparameterisation trick \cite{kingma2015local} to calculate stochastic gradients. For the bounds that are linear in the reward estimate (e.g. $\mathbb{E}_{\theta \sim \mathcal{N}(\bs{m}, \bs{\sigma}^2I)}[r^{\mathrm{IS}}(\pi_{\theta}, D_n)]$), this procedure will converge to the mean and variance parameters that maximise the bound. However, when we approximate $\mathbb{E}_{\theta \sim \mathcal{N}(\bs{m}, \bs{\sigma}^2I)}[r^{\mathrm{IS}}(\pi_{\theta}, D_n)]$ with a single sample in the $kl^{-1}$ bound, this procedure will converge to the mean and variance parameters that maximise:
\begin{align*}
\frac{1}{\epsilon_n}\mathop{\mathbb{E}}_{\theta \sim \mathcal{N}(\bs{m}, \bs{\sigma}^2I)}\bigg[&kl^{-1}\bigg(\epsilon_n r^{\mathrm{IS}}(\pi_{\theta}, D_n),\\ &\frac{D_{\mathrm{KL}}(\mathcal{N}(\bs{m}, \bs{\sigma}^2I)||\mu) + \mathrm{ln}(2\sqrt{n}/\delta)}{n}\bigg)\bigg],
\end{align*}

which may not be the mean and variance parameters that maximise the $kl^{-1}$ bound. Nevertheless, the resulting approximately optimal Gaussian posterior still results in a valid bound. In all our experiments, this approximation appeared to work well.

For the PAC-Bayes Hoeffding-Azuma and Bernstein bounds, we set $\lambda$ to the (data-independent) value that would be optimal if $D_{\mathrm{KL}}(\rho||\mu) = 0$. In the MAB benchmark we can calculate $\mathbb{E}_{\pi \sim \rho}[r^{\mathrm{IS}}(\pi, D_n)]$ exactly. In the CB benchmark, when evaluating the bound value, we approximate $\mathbb{E}_{\pi \sim \rho}[r^{\mathrm{IS}}(\pi, D_n)]$ by averaging over 100 samples from $\rho$.

\subsection{Details About Implementation of the Priors}
\label{sec:prior_implementation}

We tested the sample splitting prior with a value of $m = n/2$, meaning half the data were used to learn a prior and the other half were used to optimise and evaluate a PAC-Bayes bound. To learn priors from the subset $D_{1:m}$ of the data, we first split $D_{1:m}$ into training data $D_{\mathrm{tr}}$ and validation data $D_{\mathrm{val}}$.

In the MAB benchmark, we used $D_{\mathrm{tr}}$ to calculate an empirical Gibbs prior $\mu_{\beta r^{\mathrm{IS}}}(\pi) \propto \mu(\pi)\mathrm{exp}(\beta \sqrt{n_{\mathrm{tr}}} r^{\mathrm{IS}}(\pi, D_{\mathrm{tr}}))$ for each $\beta$ in a grid. We selected the value of $\beta$ where $r^{\mathrm{IS}}(\mu_{\beta r^{\mathrm{IS}}}, D_{\mathrm{val}})$ was the greatest and then calculated a final empirical Gibbs prior with this $\beta$, and using all the data in $D_{1:m}$. $\mu$ was a uniform prior and we used the grid $\beta \in \{1, 5, 10\}$.

In the CB Linear benchmark, we followed the same procedure, but with some small modifications. We approximated $\mu_{\beta r^{\mathrm{IS}}}$ with a diagonal Gaussian for each $\beta$ in the grid $\{10, 100, 1000\}$. $\mu$ was a standard Gaussian prior over the weights of the linear softmax policy.

When using differentially private priors in the MAB benchmark, we used priors of the form $\mu_{\bs{w}}(a) = \mathrm{exp}(\bs{w}_a)/\mathrm{exp}(\sum_{a^{\prime} = 1}^{K}\bs{w}_{a^{\prime}})$, parameterised by $\bs{w} \in \mathbb{R}^{K}$. $\bs{w}_a$ is the $a$th element of $\bs{w}$. We used Preconditioned Stochastic Gradient Langevin Dynamics (PSGLD) \cite{li2016pre} to draw $\bs{w}$ from the Gibbs distribution with density proportional to $p(\bs{w})\mathrm{exp}(\lambda \mathbb{E}_{a \sim \mu_{\bs{w}}}[r^{\mathrm{IS}}(a, D_n)])$. $p(\bs{w})$ was a standard Gaussian. Due to Corollary 5.2 of \cite{dziugaite2018data}, $\mu_{\bs{w}}$ is $2\lambda/(n\epsilon_n)$-differentially private with this choice of $\bs{w}$.

In the CB Linear benchmark, we learned Gaussian priors $\mu_{\bs{w}}(\theta) = \mathcal{N}(\bs{w}, I)$ over the weight matrix $\theta$ of the linear softmax policy $\pi_{\theta}$. We used PSGLD to draw $\bs{w}$ from the distribution with density proportional to $p(\bs{w})\mathrm{exp}(\lambda r^{\mathrm{IS}}(\pi_{\bs{w}}, D_n))$. $p(\bs{w})$ was a standard Gaussian, and $\mu_{\bs{w}}$ is $2\lambda/(n\epsilon_n)$-differentially private with this choice of $\bs{w}$.

In both benchmarks we drew $\bs{w}$'s from Gibbs distributions with $\lambda \in \{0.1\sqrt{n \epsilon_n}, 0.5\sqrt{n \epsilon_n}, \sqrt{n \epsilon_n}\}$ and used the one that gave the best bound value, which we justify with the union bound.

To evaluate the $kl^{-1}$ Lever bound and the distribution stability bound we need to calculate or sample from the Gibbs posterior $\rho_{\gamma}$. In the MAB benchmark, we can calculate $\rho_{\gamma}$ in closed-form. In the CB benchmark, we drew samples from $\rho_{\gamma}$ using PSGLD and approximated $\mathbb{E}_{\pi \sim \rho_{\gamma}}[r^{\mathrm{IS}}(\pi, D_n)]$ by averages over 100 samples from $\rho_{\gamma}$. In both the MAB and CB benchmarks, and for both bounds, we evaluated the bounds for several Gibbs posteriors with $\gamma \in \{0.1\sqrt{n \epsilon_n}, 0.5\sqrt{n \epsilon_n}, \sqrt{n \epsilon_n}\}$ and used the ones that gave the best bound values.

Using the Change of Measure Inequality in Lemma \ref{lem:donsk}, the optimal posterior for the localised PAC-Bayes Bernstein bound is the Gibbs distribution $\rho_{\lambda}$, regardless of the value of $\beta$. We evaluated the bound at $\lambda \in \{\tilde{\lambda}, 1.5\tilde{\lambda}, 2.0\tilde{\lambda}\}$, where $\tilde{\lambda} = \sqrt{2n\epsilon_n\mathrm{ln}(1/\delta)/(e-2)}$ is the optimal value of $\lambda$ when $\beta = 0$ and $D_{\mathrm{KL}}(\rho||\mu_{\beta r^{\mathrm{IS}}}) = 0$. For each $\lambda$, we evaluated the bound with $\beta \in \{0, \lambda/4, \lambda/2\}$. We used the $(\lambda, \beta)$ pair that gave the best bound value. The optimal posterior for the PAC-Bayes Hoeffding-Azuma bound with the empirical Gibbs prior is also the Gibbs distribution $\rho_{\lambda}$. We evaluated this bound with $\lambda = \sqrt{2n(\epsilon_n^2\mathrm{ln}(1/\delta) + 2)}$ and $\beta \in \{0, \lambda/4, \lambda/2\}$. This value of $\lambda$ is approximately the optimal value when $D_{\mathrm{KL}}(\rho||\mu_{\beta r^{\mathrm{IS}}}) = 0$.

\subsection{Description of the TPOEM and TL2 Baselines}
\label{sec:tpoem_and_tl2}

Like the original POEM algorithm \cite{swaminathan2015batch}, TPOEM uses the sample variance of the CIS reward estimate to regularise the policy selection. Its objective function is:

\begin{equation}
r^{\mathrm{CIS}}(\pi_{\theta}, D_n) - \beta\sqrt{\frac{v^{\mathrm{CIS}}(\pi_{\theta}, D_n)}{n}},\label{eqn:tpoem_obj}
\end{equation}

where

\begin{align*}
v^{\mathrm{CIS}}(\pi_{\theta}, D_n) = \frac{1}{n-1}\sum_{i=1}^{n}\bigg(&\mathrm{min}\left(\frac{\pi_{\theta}(a_i|s_i)}{b(a_i|s_i)}, \frac{1}{\tau}\right)r_i\\
&- r^{\mathrm{CIS}}(\pi_{\theta}, D_n)\bigg)^2,
\end{align*}

is the sample variance of the CIS estimate. We split the data set into training data $D_{\mathrm{tr}}$ and validation data $D_{\mathrm{val}}$ such that $D_{\mathrm{tr}}$ contains four times the number of samples in $D_{\mathrm{val}}$. We maximise Equation \ref{eqn:tpoem_obj} with respect to the weights $\theta$ using the training data, and for each $\beta \in \{10^{-k}|k \in \{0, \dots, 5\}\}$. This gives us a set of policies $\Pi_{\Theta}^{\beta}$ with 6 elements (one for each $\beta$). Using the validation data, we evaluate the following bound which is essentially a simpler version of the original POEM bound that only holds for finite policy classes.

\begin{equation}
r^{\mathrm{CIS}}(\pi_{\theta}, D_{\mathrm{val}}) - \sqrt{\frac{2v^{\mathrm{CIS}}(\pi_{\theta}, D_n)\mathrm{ln}(2|\Pi_{\Theta}^{\beta}|/\delta)}{n_{\mathrm{val}}}} - \frac{7\mathrm{ln}(2|\Pi_{\Theta}^{\beta}|/\delta)}{\tau(n_{\mathrm{val}}-1)},\label{eqn:tpoem_bound}
\end{equation}

where $n_{\mathrm{val}} = |D_{\mathrm{val}}|$ is the number of examples in the validation data set. We choose the policy $\pi_{\theta} \in \Pi_{\Theta}^{\beta}$ that maximises the bound in Equation \ref{eqn:tpoem_bound}. The TL2 baseline uses the L2 norm of the neural network weights to regularise the policy selection. It uses the objective function:

\begin{equation}
r^{\mathrm{CIS}}(\pi_{\theta}, D_n) - \beta\norm{\theta}_2^2,\label{eqn:tl2_obj}
\end{equation}

As with TPOEM, we split the data set into $D_{\mathrm{tr}}$ and $D_{\mathrm{val}}$ with the same relative sizes. We maximise Equation \ref{eqn:tl2_obj} with respect to $\theta$ using $D_n = D_{\mathrm{tr}}$ and for each $\beta \in \{10^{-k}|k \in \{1, \dots, 6\}\}$. This gives us a set of policies $\Pi_{\Theta}^{\beta}$ with 6 elements. Using $D_{\mathrm{val}}$, we evaluate a PAC bound based on the Hoeffding-Azuma inequality.

\begin{equation}
r^{\mathrm{CIS}}(\pi_{\theta}, D_{\mathrm{val}}) - \frac{1}{\tau}\sqrt{\frac{\mathrm{ln}(|\Pi_{\Theta}^{\beta}|/\delta)}{2n_{\mathrm{val}}}},\label{eqn:tl2_bound}
\end{equation}

We choose the policy $\pi_{\theta} \in \Pi_{\Theta}^{\beta}$ that maximises the bound in Equation \ref{eqn:tl2_bound}.

\section{Additional Experiments}

\subsection{Experiments With The Efron-Stein PAC-Bayes Bound for Weighted Importance Sampling}
\label{sec:efron_stein_experiment}

We compare the Efron-Stein (ES) PAC-Bayes bound for the $r^{\mathrm{WIS}}$ estimate (in Equation \ref{eqn:pac_bayes_wis}) against the Hoeffding-Azuma (Theorem \ref{thm:ex_bound}), $kl^{-1}$ (Equation \ref{eqn:pac_bayes_kl_inv}), Pinsker (Equation \ref{eqn:pac_bayes_pinsker}), and  Bernstein (Equation \ref{eqn:pac_bayes_bern_eps}) PAC-Bayes bounds for the $r^{\mathrm{IS}}$ estimate.

We compare the bounds in the offline MAB Binary benchmark, in which the policy class is the set of all deterministic policies (i.e. the set of actions). As in our experiments in Section \ref{sec:bound_comparison}, we optimise each bound with respect to the posterior $\rho$ and then report the value of the bound and the expected reward for this $\rho$. Details about how we optimise the bounds for the $r^{\mathrm{IS}}$ estimate with respect to $\rho$ and then evaluate them can be found in Appendix \ref{sec:opt_and_eval}.

For convenience, we re-state the RHS of the ES PAC-Bayes bound for the $r^{\mathrm{WIS}}$ estimate from Equation \ref{eqn:pac_bayes_wis}:
\begin{align*}
&r^{\mathrm{WIS}}(\rho, D_n) - \left|R^{\mathrm{WIS}}(\rho) - R(\rho)\right|\\
&-\sqrt{2\left(y + 2V^{\mathrm{WIS}}(\rho, D_n)\right)}\nonumber\\
&\times \sqrt{D_{\mathrm{KL}}(\rho||\mu) + \frac{1}{2}\mathrm{ln}\left(1 + \frac{2V^{\mathrm{WIS}}(\rho, D_n)}{y}\right) + \mathrm{ln}(1/\delta)}.\nonumber
\end{align*}

Also, recall that $V^{\mathrm{WIS}}(\pi, D_n)$ was defined as:
\begin{equation*}
V^{\mathrm{WIS}}(\pi, D_n) = \sum_{i=1}^{n}\mathop{\mathbb{E}}_{D_n, D_n^{\prime}}\left[\tilde{w}_{\pi, i}^2 + \tilde{u}_{\pi, i}^2| D_{i}\right],
\end{equation*}

where
\begin{equation*}
\tilde{w}_{\pi, i} = \frac{\frac{\pi(a_i)}{b(a_i)}}{\sum_{j=1}^{n}\frac{\pi(a_j)}{b(a_j)}}, \qquad \tilde{u}_{\pi, i} = \frac{\frac{\pi(a_i^{\prime})}{b(a_i^{\prime})}}{\frac{\pi(a_i^{\prime})}{b(a_i^{\prime})} + \sum_{j \neq i}\frac{\pi(a_j)}{b(a_j)}}.
\end{equation*}

$a^{\prime}_i$ is an independently sampled copy of $a_i$. To evaluate the ES PAC-Bayes bound, we first assume that the bias $\left|R^{\mathrm{WIS}}(\rho) - R(\rho)\right|$ is always equal to 0. Since the reward distribution $P_R$ in the MAB benchmark is actually known, we could estimate $\left|R^{\mathrm{WIS}}(\rho) - R(\rho)\right|$ to arbitrary accuracy to check whether this is a reasonable assumption. Some rough estimates suggest that in the MAB benchmark with $n=1000$, the bias of the WIS estimate is approximately $10^{-9}$ or smaller.

\begin{figure}[H]
\centering
\includegraphics[width=1.0\columnwidth]{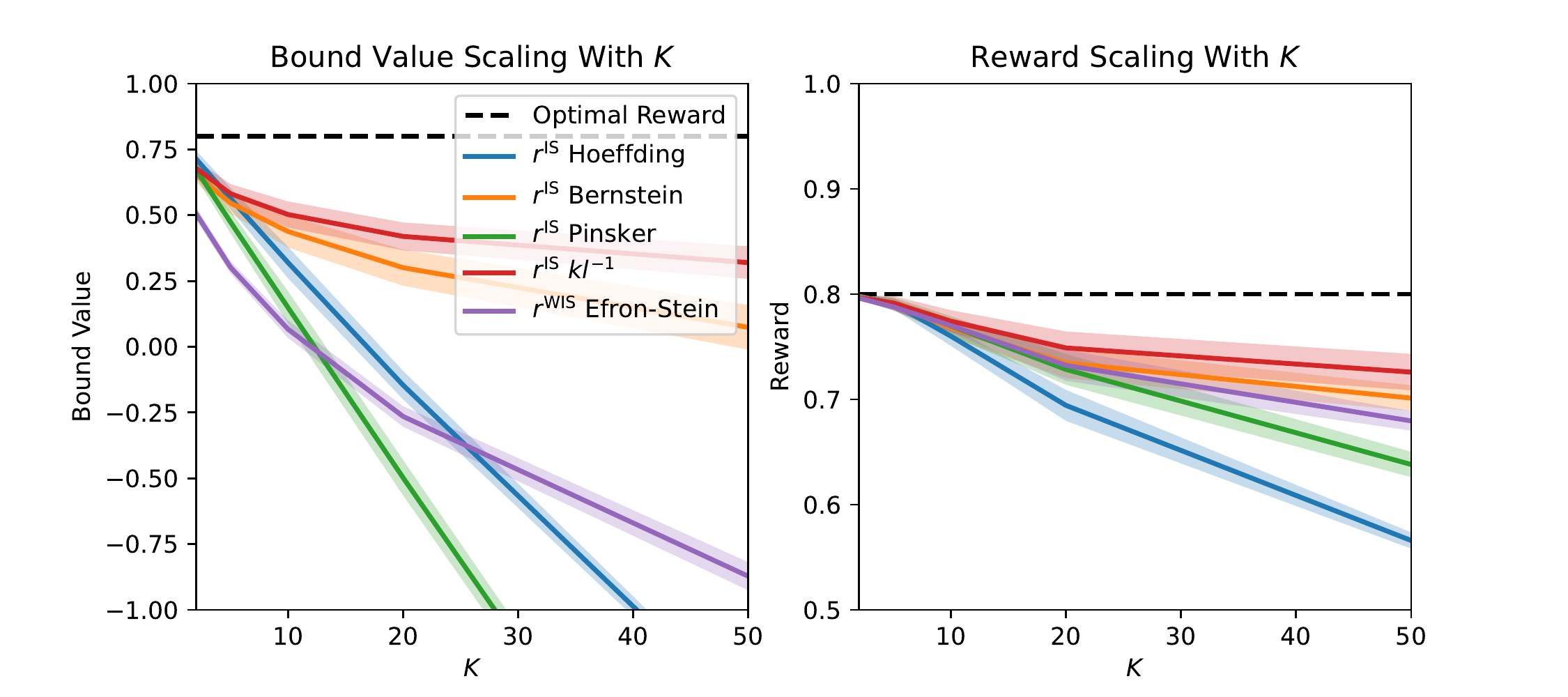}
\caption{The bound value (left) and expected reward (right) for the Efron-Stein WIS bound and each of the IS bounds in the MAB Binary benchmark. The number of actions $K$ varies from 2 to 50 along the $x$ axes.}
\label{fig:efron_stein}
\end{figure}

Next, we replace the semi-empirical ES variance proxy $V^{\mathrm{WIS}}(\pi, D_n)$ with a fully empirical estimate. For $i$ in $\{1, \dots, n\}$, we draw $m=1000$ actions $\{a^{\prime}_{ik}\}_{k=1}^{m}$ and another $m$ actions $\{a^{\prime\prime}_{ik}\}_{k=1}^{m}$ from the behaviour policy $b$. $\{a^{\prime}_{ik}\}_{k=1}^{m}$ for $i = 1, \dots, n$ are 1000 draws of the ghost sample $D_n^{\prime}$ and $\{a^{\prime\prime}_{ik}\}_{k=1}^{m}$ for $i = 1, \dots, n$ are 1000 re-draws of the original sample $D_n$. Then for each policy $\pi \in \Pi$, we calculate:
\begin{align*}
\widehat{w}_{\pi, i, k} &= \frac{\frac{\pi(a_i)}{b(a_i)}}{\sum_{j=1}^{i}\frac{\pi(a_j)}{b(a_j)} + \sum_{j=i+1}^{n}\frac{\pi(a_{jk}^{\prime\prime})}{b(a_{jk}^{\prime\prime})}},\\
\widehat{u}_{\pi, i, k} &= \frac{\frac{\pi(a_{ik}^{\prime})}{b(a_{ik}^{\prime})}}{\frac{\pi(a_{ik}^{\prime})}{b(a_{ik}^{\prime})} + \sum_{j=1}^{i-1}\frac{\pi(a_j)}{b(a_j)} + \sum_{j=i+1}^{n}\frac{\pi(a_{jk}^{\prime\prime})}{b(a_{jk}^{\prime\prime})}}.
\end{align*}

In the ES PAC-Bayes bound, we replace $V^{\mathrm{WIS}}(\pi, D_n)$ with the estimate:
\begin{equation*}
\widehat{V}^{\mathrm{WIS}}(\pi, D_n) = \sum_{i=1}^{n}\frac{1}{m}\sum_{k=1}^{m}\widehat{w}_{\pi, i, k}^2 + \widehat{u}_{\pi, i, k}^2
\end{equation*}

Note that we only have to calculate $\widehat{V}^{\mathrm{WIS}}(\pi, D_n)$ once before we optimise the ES PAC-Bayes bound with respect to $\rho$. Since, in this problem, the policy class $\Pi$ is finite with $K$ elements, calculating $\widehat{V}^{\mathrm{WIS}}(\pi, D_n)$ for every $\pi \in \Pi$ is possible. However, this would obviously not be possible for infinite policy classes. Strictly speaking, we should replace $V^{\mathrm{WIS}}(\pi, D_n)$ with an upper bound rather than an estimate to obtain a valid bound, as is done in \cite{kuzborskij2021confident}. Using an estimate rather than an upper bound results in a favourable evaluation of the ES PAC-Bayes bound.

We always use a data set of size $n=1000$, and the data set is generated using a uniform behaviour policy. We varied the number of actions $K$ from $2$ to $50$ to investigate how the bounds compare in MAB problems with different numbers of actions.

Figure \ref{fig:efron_stein} shows the bound value (left) and the expected reward (right) for each of the bounds we compared at each $K$. In the left plot in Figure \ref{fig:efron_stein}, we observe that the value of the Efron-Stein WIS bound is the lowest for $K \leq 10$. As $K$ increases above 10, the Efron-Stein WIS bound overtakes both the IS Pinsker and IS Hoeffding-Azuma bounds. However, for $K \geq 10$, the Efron-Stein WIS bound is vacuous (i.e. less than 0). On the bright side, the Efron-Stein WIS bound appears to work well as a learning objective. In the right plot of Figure \ref{fig:efron_stein}, we see that the policy learned by maximising the Efron-Stein WIS bound achieves close to the highest expected reward.

\subsection{Insights About Choosing Bound Parameters}
\label{sec:app_param_exp}

We compare the methods presented in Appendix \ref{sec:app_bound_params} for approximately optimising PAC-Bayes bounds with respect to their parameters. We use each method to set the $\lambda$ parameter of the $r^{\mathrm{IS}}$ PAC-Bayes Bernstein bound.

In both the MAB and CB benchmarks, we compare the Bernstein bound with $\lambda$ learned using a subset of the data (Theorem \ref{thm:pac_bayes_bern_subset_bound}) and the Bernstein bound with $\lambda$ optimised over a geometric grid (Theorem \ref{thm:pac_bayes_bern_grid_bound}). We compare the grid bound with several choices of the grid parameter $c \in \{1.1, 1.2, 1.5\}$. We also compare against some baselines: the $kl^{-1}$ bound (Theorem \ref{thm:pac_bayes_kl}), the Bernstein bound with a fixed value of $\lambda$ (Theorem \ref{thm:is_bernstein}) and the idealised Bernstein bound with the optimal choice of $\lambda$ (Equation \ref{eqn:ideal_lamb}). For the fixed value of $\lambda$, we use $\lambda = \sqrt{n\epsilon_n\mathrm{ln}(1/\delta)/(e-2)}$, which is equal to the optimal value when $D_{\mathrm{KL}}(\rho||\mu) = 0$. The $kl^{-1}$ bound represents the best parameter-free bound, the Bernstein bound with a fixed $\lambda$ represents a naive choice of $\lambda$, and the idealised Bernstein bound is the best bound we could hope to achieve by optimising $\lambda$.

\begin{figure}[H]
\centering
\includegraphics[width=1.0\columnwidth]{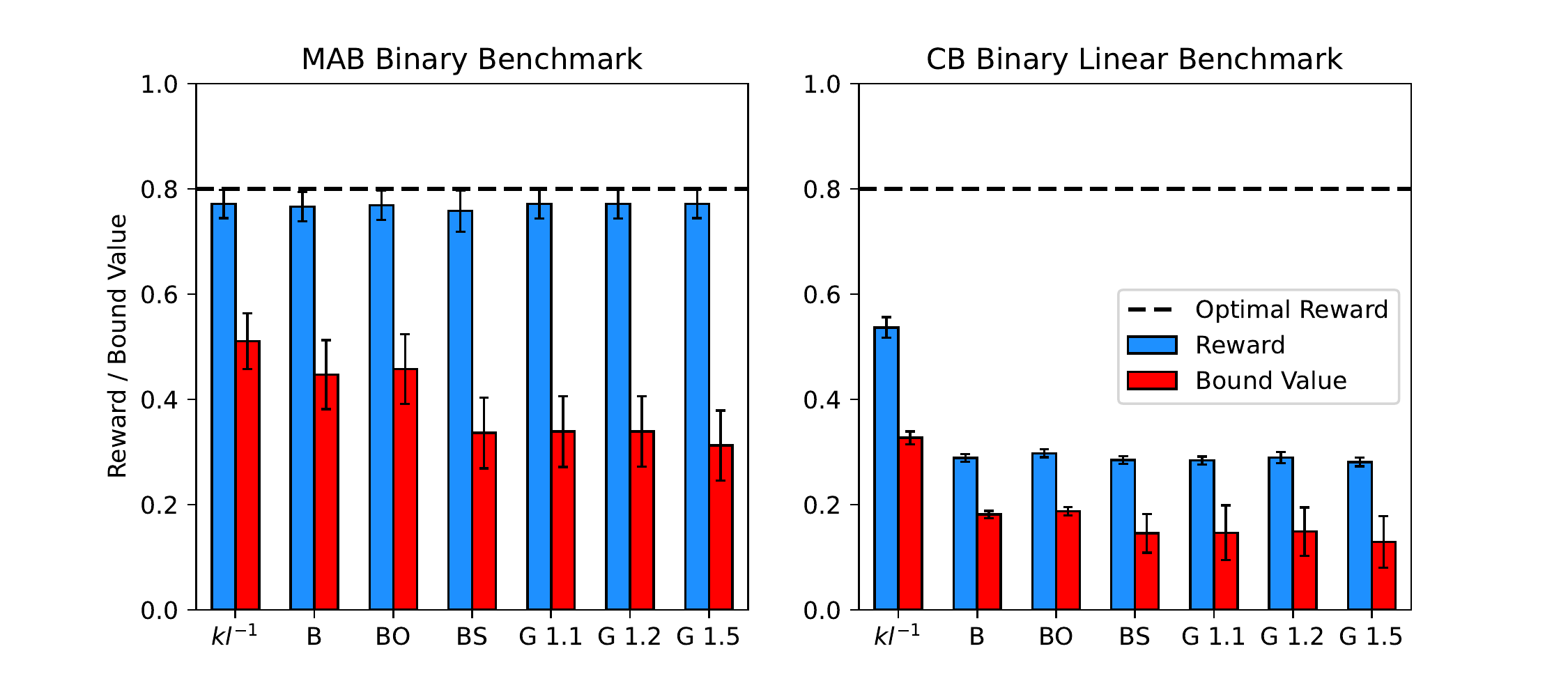}
\caption{The expected reward (blue) and bound value (red) in our comparison of the methods for choosing the $\lambda$ parameter of the PAC-Bayes Bernstein bound. B is the Bernstein bound with a fixed choice of $\lambda$, BO is the (invalid) Bernstein bound with the optimal $\lambda$, BS is the Bernstein bound with $\lambda$ learned using a subset of the data and G 1.1, G 1.2 and G 1.5 are the Bernstein bound with $\lambda$ optimised over a geometric grid with $c=1.1$, $c=1.2$ and $c=1.5$.}
\label{fig:params}
\end{figure}

In Figure \ref{fig:params}, we can see that in both the MAB and CB benchmarks, the sample splitting $\lambda$ and the grid $\lambda$'s all yield almost identical expected reward and bound values. We find that both methods of approximately optimising the Bernstein bound w.r.t. $\lambda$ give worse bound values than the fixed data-independent choice of $\lambda$. Surprisingly, the fixed $\lambda$ is almost as good as the optimal $\lambda$.

Now, we briefly explore why the fixed value of $\lambda$ was almost as good as the optimal value. Both the PAC-Bayes Hoeffding-Azuma and PAC-Bayes Bernstein bounds can be written in the form:

\begin{equation*}
R(\rho) \geq r(\rho, D_n) - a\lambda - \frac{D_{\mathrm{KL}}(\rho||\mu) + \mathrm{ln}(1/\delta)}{\lambda}
\end{equation*}

For bounds of this form, the optimal $\lambda$ is the one that minimises:

\begin{equation*}
f(\lambda) = a\lambda + (D_{\mathrm{KL}}(\rho||\mu) + \mathrm{ln}(1/\delta))/\lambda
\end{equation*}

One can verify that $\lambda^* = \sqrt{(D_{\mathrm{KL}}(\rho||\mu) + \mathrm{ln}(1/\delta))/a}$. We found that the value of the PAC-Bayes Bernstein bound at $\hat{\lambda} = \sqrt{\mathrm{ln}(1/\delta)/a}$ was almost the same as the bound value at $\lambda^*$. It can be shown that the second derivative of $f$ evaluated at $\lambda^*$ is:

\begin{equation*}
f^{\prime\prime}(\lambda^*) = \frac{2a^{3/2}}{\sqrt{D_{\mathrm{KL}}(\rho||\mu) + \mathrm{ln}(1/\delta)}}
\end{equation*}

When $a$ is close to 0, $f^{\prime\prime}(\lambda^*)$ will also be close to 0. Therefore, we can expect $f(\lambda)$ to be almost constant in the neighbourhood of $\lambda^*$ when $a$ is near 0. For the PAC-Bayes Bernstein $r^{\mathrm{IS}}$ bound, $a = (e-2)/n \epsilon_n$. In the MAB Binary benchmark considered in Section \ref{sec:parameter_comparison}, we had $n = 1000$ and $\epsilon_n = 0.1$, so $a \approx 0.000718$. This may explain why the Bernstein bound value at $\hat{\lambda}$ was close to the Bernstein bound value at $\lambda^*$.

In Figure \ref{fig:lambda}, we plot $f(\lambda)$ for the PAC-Bayes Bernstein bound. We set $n \in \{100, 1000, 10000\}$, and to match our earlier experiment in the MAB Binary benchmark, we set $\epsilon_n=0.1$, $\delta = 0.05$ and $D_{\mathrm{KL}}(\rho||\mu) = \mathrm{ln}(K)$, with $K = 10$. This is the maximum value of the KL divergence, which means the difference between $\hat{\lambda}$ and $\lambda^*$ is maximised.

\begin{figure}[H]
\centering
\includegraphics[width=1.0\columnwidth]{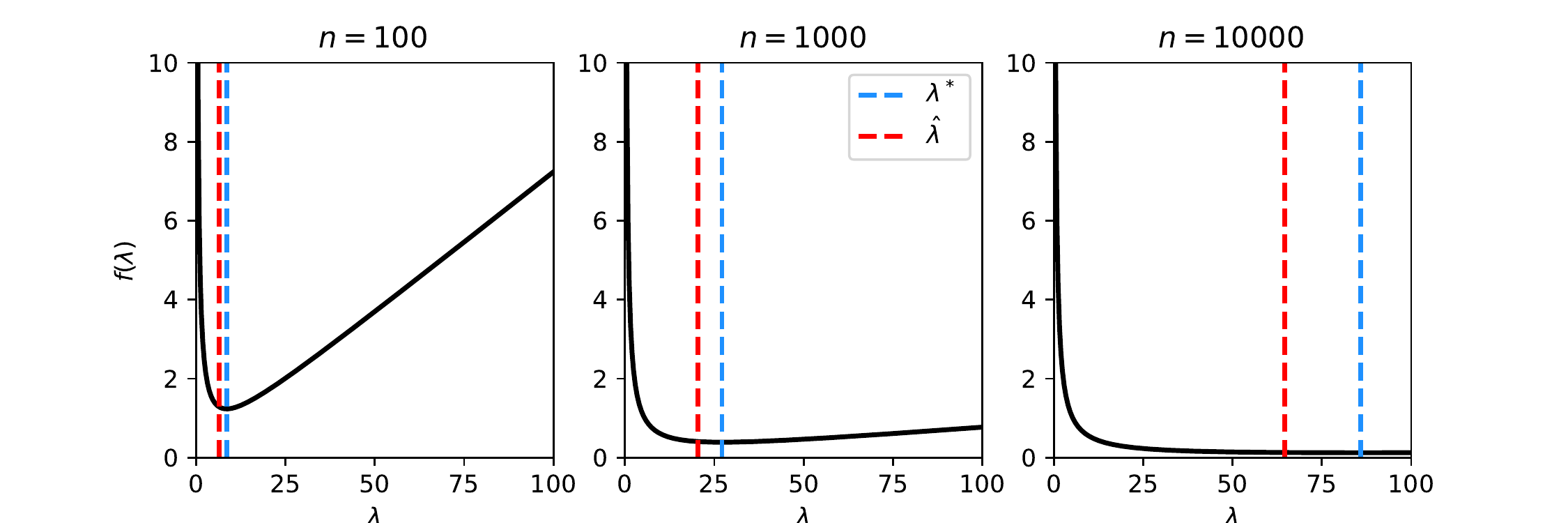}
\caption{$f(\lambda)$ for $a = (e-2)/(n \epsilon_n)$ with $\epsilon_n = 0.1$, $\delta = 0.05$ and $D_{\mathrm{KL}}(\rho||\mu) = \mathrm{ln}(K)$. $n$ is equal to 100 (left), 1000 (middle) and 10000 (right).}
\label{fig:lambda}
\end{figure}

In Figure \ref{fig:lambda}, we see that as $n$ increases (and $a$ decreases) $f(\lambda)$ becomes almost constant in the neighbourhood of $\lambda^*$. The value of $f(\lambda)$ at $\hat{\lambda}$ and $\lambda^*$ is almost the same even for $n = 100$.

\end{document}